\theoremstyle{plain}
\newtheorem{theorem}{Theorem}[section]
\newtheorem{lemma}[theorem]{Lemma}
\newtheorem{corollary}[theorem]{Corollary}
\theoremstyle{definition}
\newtheorem{definition}[theorem]{Definition}
\newtheorem{assumption}[theorem]{Assumption}
\theoremstyle{remark}
\newtheorem{remark}[theorem]{Remark}
\icmltitlerunning{Towards Optimal Adversarial Robust Q-learning with Bellman Infinity-error}
\begin{document}

\twocolumn[
\icmltitle{Towards Optimal Adversarial Robust Q-learning with Bellman Infinity-error}



\icmlsetsymbol{equal}{*}

\begin{icmlauthorlist}
\icmlauthor{Haoran Li}{yyy}
\icmlauthor{Zicheng Zhang}{yyy}
\icmlauthor{Wang Luo}{yyy}
\icmlauthor{Congying Han}{yyy}
\icmlauthor{Yudong Hu}{yyy}
\icmlauthor{Tiande Guo}{yyy}
\icmlauthor{Shichen Liao}{yyy}
\end{icmlauthorlist}

\icmlaffiliation{yyy}{School of Mathematical Sciences, University of Chinese Academy of Sciences, Beijing, China}

\icmlcorrespondingauthor{Congying Han}{hancy@ucas.ac.cn}

\icmlkeywords{Machine Learning, ICML}

\vskip 0.3in
]



\printAffiliationsAndNotice{}  

\begin{abstract}
Establishing robust policies is essential to counter attacks or disturbances affecting deep reinforcement learning~(DRL) agents. 
 Recent studies explore state-adversarial robustness and suggest the potential lack of an optimal robust policy~(ORP), posing challenges in setting strict robustness constraints. 
 This work further investigates ORP:  
 At first, we introduce a consistency assumption of policy (CAP) stating that optimal actions in the Markov decision process remain consistent with minor perturbations, supported by empirical and theoretical evidence.
 Building upon CAP, we crucially prove the existence of a deterministic and stationary ORP that aligns with the Bellman optimal policy. Furthermore, we illustrate the necessity of $L^{\infty}$-norm when minimizing Bellman error to attain ORP.  This finding clarifies the vulnerability of prior DRL algorithms that target the Bellman optimal policy with $L^{1}$-norm and motivates us to train a Consistent Adversarial Robust Deep Q-Network (CAR-DQN) by minimizing a surrogate of Bellman Infinity-error.  
 The top-tier performance of CAR-DQN across various benchmarks validates its practical effectiveness and
reinforces the soundness of our theoretical analysis.
Our code is available at \href{https://github.com/leoranlmia/CAR-DQN}{https://github.com/leoranlmia/CAR-DQN}.
\end{abstract}

\section{Introduction} \label{sec: intro}
Deep reinforcement learning (DRL) has shown remarkable success in solving intricate problems~\cite{mnih2015human, lillicrap2015continuous, silver2016mastering} and holds promise across diverse practical domains~\cite{ibarz2021train, kiran2021deep, yu2021reinforcement, zheng2018drn}. 
Nevertheless, subtle perturbations in state observations can severely degrade well-trained DRL agents~\cite{huang2017adversarial, behzadan2017vulnerability, lin2017tactics, ilahi2021challenges}, which limits their trustworthy real-world deployment and emphasizes the crucial need for developing robust DRL algorithms against adversarial attacks.


Pioneering work by \citet{zhang2020robust} introduced the state-adversarial paradigm in DRL by formulating a modified Markov decision process (MDP), termed SA-MDP. Here, the underlying true state remains invariant while the observed state is subjected to disturbances.
 They also highlighted the uncertain existence of an optimal robust policy (ORP) within SA-MDP, indicating a potential conflict between robustness and optimal policy. Consequently, existing methods built upon SA-MDP often seek a trade-off between robust and optimal policies through various regularizations~\cite{oikarinen2021robust, liang2022efficient} or adversarial training~\cite{zhang2021robust, sun2021strongest}.  While enhancing robustness, these methods lack theoretical guarantees and completely neglect the study of ORP.

In this paper, we primarily focus on investigating ORP within SA-MDP.  We suppose that only a few exceptional states lack an ORP, thus it is key for theoretical clarity to eliminate these states. 
Therefore, our work starts with a consistency assumption of policy (CAP), where {optimal actions of \textit{all} states within the MDP exhibit consistency despite adversarial disturbance}. This implies that, from a decision-making perspective, adversaries cannot alter the essence of state observations, which we call the intrinsic state. Despite seeming implausible, we support the rationality of CAP through theoretical analysis and empirical experiments against strong adversarial attacks like FGSM \cite{goodfellow2014explaining} and PGD \cite{madry2017towards}, showcasing that the state set violating CAP is nearly empty.

Building upon CAP, we demonstrate that there always exists a stationary and deterministic adversarial ORP coinciding with the Bellman optimal policy, derived from the Bellman optimality equations. Remarkably, this objective has been widely employed in previous DRL algorithms \cite{silver2014deterministic, schulman2015trust, wang2016dueling, mnih2016asynchronous, schulman2017proximal} to maximize the natural returns, lacking robust capabilities. Hence, our findings incredibly unveil that \textit{the Bellman optimal policy doubles as the ORP}, and improving the robustness of DRL agents need not compromise their performance in natural environments. This insight holds significant value for establishing DRL agents with Bellman optimality equations in real-world scenarios where strong adversarial attacks are relatively rare.

In pursuit of ORP, we delve into understanding \textit{why conventional DRL algorithms, which target the Bellman optimal policy, fail to guarantee adversarial robustness.}
By analyzing the theoretical properties of $\| Q_\theta - Q^* \|_p$ and Bellman error $\| \mathcal{T}_{B} Q_{\theta} - Q_{\theta} \|_p$ across diverse Banach spaces,
we identify the substantial impact of the parameter $p$ on adversarial robustness. Specifically, achieving ORP corresponds to minimizing the Bellman Infinity-error, \textit{i.e.}, $p = \infty$, whereas conventional algorithms are typically linked to $p=1$. 
 To address the computational challenges arising from $L^{\infty}$-norm, we propose the Consistent Adversarial Robust Deep Q-Network (CAR-DQN), utilizing a surrogate objective of Bellman Infinity-error for robust policy learning. 

To summarize, our paper makes the following contributions:
\begin{itemize}[leftmargin=0em, itemindent=2em,topsep=0em]
\setlength{\itemsep}{1pt}
\setlength{\parsep}{1pt}
\setlength{\parskip}{1pt}
    \item[(\textbf{1}).] We propose the rational CAP for SA-MDP, confirm the existence of deterministic and stationary ORP, and demonstrate its strict alignment with the Bellman optimal policy, which is a significant advancement over prior research.
    \item[(\textbf{2}).] We underscore the necessity of employing the $L^{\infty}$-norm to minimize Bellman error for theoretical ORP attainment. This stands in contrast to conventional DRL algorithms, which lack robustness due to the use of an $L^{1}$-norm.
    \item[(\textbf{3}).] We devise CAR-DQN solely utilizing a surrogate objective based on the $L^{\infty}$-norm to learn both natural return and robustness. We conduct comparative evaluations of CAR-DQN against state-of-the-art approaches across various benchmarks, validating its practical effectiveness and reinforcing our theoretical soundness.
\end{itemize}

\section{Related Work} \label{sec: preli}

\textbf{Adversarial attacks on DRL agents.}
The seminal work of \citet{huang2017adversarial} first exposed the vulnerability of DRL agents to FGSM attacks~\cite{goodfellow2014explaining} on state observations in Atari games. Subsequently, \citet{lin2017tactics, kos2017delving} introduced limited-steps attacks to deceive DRL policies. \citet{pattanaik2017robust} employed a critic action-value function and gradient descent to degrade DRL performance. 
Additionally, \citet{behzadan2017vulnerability} proposed black-box attacks on DQN and verified the transferability of adversarial examples, while \citet{inkawhich2019snooping} showed that even a constrained adversary with access only to action and reward signals could launch highly effective and damaging attacks. \citet{kiourti2020trojdrl, wang2021backdoorl, bharti2022provable, guo2023policycleanse} investigated backdoor attacks in reinforcement learning.
Besides, \citet{gleave2019adversarial} have studied the adversarial policy within multi-agent scenarios, and \citet{zhang2021robust, sun2021strongest} developed learned adversaries by training attackers as RL agents. \citet{lu2023adversarial} suggested an adversarial cheap talk setting and trained an adversary through meta-learning. 
\citet{korkmaz2023adversarial} analyzed adversarial directions in the Arcade Learning Environment, and found that even state-of-the-art robust agents~\cite{zhang2020robust, oikarinen2021robust} are susceptible to policy-independent sensitivity directions.

\textbf{Robust discrete action for DRL agents.}
Earlier works like \citet{kos2017delving, behzadan2017whatever} incorporated adversarial states into the replay buffer during training in Atari environments, leading to limited robustness. \citet{fischer2019online} proposed to separate DQN architecture into a $Q$ network and a policy network, and robustly trained the policy network with generated adversarial states and provably robust bounds. 
Recently, \citet{zhang2020robust} characterized state-adversarial RL as SA-MDP, and revealed the potential non-existence of ORP. They addressed the challenge by considering a balance between robustness and natural returns through a KL-based regularization. 
\citet{oikarinen2021robust} controlled robustness certification bounds to minimize the overlap between perturbed $Q$ values of the current action and others.
\citet{liang2022efficient} estimated the worst-case value estimation and combined it with the classic Temporal Difference (TD)-target, resulting in higher training efficiency compared to prior methods. 
The latest work by \citet{nie2023improve} built the DRL architecture upon SortNet~\cite{zhang2022rethinking}, enabling global Lipschitz continuity, thus reducing the need for training extra attackers or finding adversaries.
\citet{he2023robust} proposed robust multi-agent Q-learning to solve the robust equilibrium in discrete state and action two-player games. \citet{bukharin2023robust} considered a sub-optimal Lipschitz policy in smooth environments and extended the robustness regularization~\cite{shen2020deep, zhang2020robust} to multi-agent settings. 
Prior methods heuristically constrain local smoothness or invariance to achieve commendable robustness, while compromising natural performance. 
In contrast, our approach seeks optimal robust policies with strict theoretical guarantees, simultaneously improving natural and robust performance.

\section{Preliminaries}
\textbf{Markov decision process (MDP)} is defined by a tuple $\left( \mathcal{S}, \mathcal{A}, r, \mathbb{P}, \gamma, \mu_0 \right)$, where $\mathcal{S}$ represents the state space, $\mathcal{A}$ denotes the action space, $r: \mathcal{S} \times \mathcal{A} \rightarrow \mathbb{R}$ is the reward function, $\mathbb{P}: \mathcal{S} \times \mathcal{A} \rightarrow \Delta\left(\mathcal{S}\right)$ stands for the transition dynamics where $\Delta\left(\mathcal{S}\right)$ is the probability space over $\mathcal{S}$, $\gamma \in [0,1)$ represents the discount factor, and $\mu_0 \in \Delta\left( \mathcal{S} \right)$ is the initial state distribution. 
In this paper, we consider the setting where the continuous state space $\mathcal{S} \subset \mathbb{R}^d$ is a compact set and the action space $\mathcal{A}$ is a finite set.
Given an MDP, we define the state value function $V^\pi(s) = \mathbb{E}_{\pi,\mathbb{P}}\left[ \sum_{t=0}^{\infty} \gamma^t r(s_t,a_t) | s_0=s\right]$ and the $Q$ (or action-value) function $Q^\pi(s,a) = \mathbb{E}_{\pi,\mathbb{P}}\left[ \sum_{t=0}^{\infty} \gamma^t r(s_t,a_t) | s_0=s, a_0=a\right]$ for every policy $\pi$.
MDPs exhibit a notable property: there exists a stationary, deterministic policy that maximizes both $V^\pi(s)$ and $Q^\pi(s, a)$ for all $s\in\mathcal{S}$ and $a\in\mathcal{A}$. Additionally, the optimal $Q$ function $Q^*(s,a)=\sup_{\pi\in\Pi}Q^\pi(s,a)$, satisfies the Bellman optimality
equation
\begin{equation} \label{eq: bellman optimality equation}\notag
    Q^*(s, a)=r(s, a)+\gamma \mathbb{E}_{s^{\prime} \sim \mathbb{P}(\cdot \mid s, a)}\left[\max _{a^{\prime} \in \mathcal{A}} Q^*\left(s^{\prime}, a^{\prime}\right)\right] .
\end{equation}

\textbf{State-Adversarial MDP (SA-MDP)} allows an adversary $\nu$ 
to perturb the observed state $s$ into $s_\nu \in B(s)$, where $B(s)$ is the adversary perturbation set. Let $\pi\circ \nu$ denote the policy under perturbations, the adversarial value and $Q$ functions are
$V^{\pi\circ \nu}(s) = \mathbb{E}_{\pi\circ \nu,\mathbb{P}}\left[ \sum_{t=0}^{\infty} \gamma^t r(s_t,a_t) | s_0=s\right]$ and $Q^{\pi\circ \nu}(s,a) = \mathbb{E}_{\pi\circ \nu,\mathbb{P}}\left[ \sum_{t=0}^{\infty} \gamma^t r(s_t,a_t) | s_0=s, a_0=a\right]$, respectively. There always exists a strongest adversary $\nu^*(\pi) = \mathop{\arg\min}_\nu V^{\pi\circ \nu}$ for any policy $\pi$. An optimal robust policy (ORP) $\pi^*$ should satisfy $V^{\pi^*\circ \nu^*(\pi^*)}(s) = \max_\pi V^{\pi\circ \nu^*(\pi)}(s)$ for all $s\in \mathcal{S}$.

\section{Optimal Adversarial Robustness} 

In this section, we delve into the exploration of ORP within SA-MDP. While \citet{zhang2020robust} noted that ORP does not necessarily exist for a general adversary, we discover that only a few states lack ORP, and the set of these abnormal states has a measure close to zero in complicated tasks. For theoretical clarity, we first propose a consistency assumption of policy (CAP) to eliminate these states. 
Then, we devise a novel consistent adversarial
robust operator $\mathcal{T}_{car}$ for computing the adversarial $Q$ function, and under CAP, we identify its fixed point as exactly $Q^*$, thereby proving the existence of a deterministic and stationary ORP.


\subsection{Consistency Assumption of Policy}

Given a general adversary, we observe the true state $s$ and the perturbed observation $s_\nu$ have the same optimal action in practice. Some examples are shown in Figure \ref{fig:intrisic state} and Appendix \ref{app: instrinsic state}. This enlightens Bellman optimal policy is robust, motivating us to consider how the adversary affects the optimal action in theory. We assume that the adversary perturbation set is a $\epsilon$-neighbourhood $B_\epsilon(s) = \left\{ \|s^\prime - s\| \le \epsilon \right\}$ for convenience of description and first define the intrinsic state neighborhood where the optimal action is consistent.
\begin{definition}[Intrinsic State Neighborhood]
    \label{defn: intrinsic state neighborhood}
    Given an SA-MDP, we define the intrinsic state $\epsilon$-neighbourhood for any state $s$ as 
\begin{equation}\notag
    \begin{aligned}
        B^*_\epsilon(s) := &\left\{ s^{\prime} \in \mathcal{S} | s^\prime \in B_\epsilon(s), \right.\\ 
        & \mathop{\arg\max}_a Q^*(s^{\prime},a) =   \mathop{\arg\max}_a Q^*(s,a) \}.
    \end{aligned}
\end{equation} 
\end{definition}
Further, we characterize the states where the state neighborhood is distinct from the intrinsic one and find their little in real environments, which lays the groundwork for the consistency assumption of policy we develop later.
\begin{theorem}[Rationality of CAP] \label{thm: consistency assumption reasonable}
     Given $\epsilon>0$, let $\mathcal{S}_{nu} = \left\{ s\in\mathcal{S} | \mathop{\arg\max}_a Q^*(s,a) \text{ is not a singleton} \right\}$, and $ \mathcal{S}_{nin} = \left\{ s\in\mathcal{S} |  B_\epsilon(s) \neq B^*_\epsilon(s) \right\}$. If $Q^*(\cdot,a)$ is continuous almost everywhere in $\mathcal{S}$ for all $ a\in\mathcal{A}$, we have that $\mathcal{S}_{nin} \subseteq \mathcal{S}_{nu} \cup \mathcal{S}_{0} + B_\epsilon $, where $\mathcal{S}_{0}$ is a zero measure set. 
\end{theorem}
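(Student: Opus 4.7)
The plan is a one-dimensional slicing argument. First set $\mathcal{S}_0 := \bigcup_{a \in \mathcal{A}} D_a$, where $D_a$ is the discontinuity set of $Q^*(\cdot, a)$. Since $\mathcal{A}$ is finite and each $D_a$ is null by hypothesis, $\mathcal{S}_0$ has Lebesgue measure zero, making it the natural candidate for the zero-measure set in the conclusion.

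Let $s \in \mathcal{S}_{nin}$. If $s \in \mathcal{S}_{nu} \cup \mathcal{S}_0$ we are done, so assume $\mathop{\arg\max}_a Q^*(s,a) = \{a^*\}$ is a singleton and every $Q^*(\cdot, a)$ is continuous at $s$. By definition of $\mathcal{S}_{nin}$ there is a witness $s' \in B_\epsilon(s)$ with $\mathop{\arg\max}_a Q^*(s', a) \ne \{a^*\}$; if $s' \in \mathcal{S}_{nu}$ then $s \in \mathcal{S}_{nu} + B_\epsilon$, so I further reduce to the case where $\mathop{\arg\max}_a Q^*(s',a)$ is a singleton $\{a'\}$ with $a' \ne a^*$. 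Parametrise the segment $\sigma(t) := (1-t)s + ts'$ and set
\begin{equation*}
t^* \;:=\; \inf\!\left\{ t \in [0,1] : a^* \notin \mathop{\arg\max}_a Q^*(\sigma(t),a) \right\}.
\end{equation*}
The set contains $t = 1$ and is bounded, so $t^* \in [0,1]$; continuity at $s$ together with the strict gap $Q^*(s,a^*) > Q^*(s,a)$ for $a \ne a^*$ keeps $\{a^*\}$ the unique maximiser on a neighbourhood of $s$, hence $t^* > 0$.

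The key claim is $\sigma(t^*) \in \mathcal{S}_{nu} \cup \mathcal{S}_0$. Assume every $Q^*(\cdot, a)$ is continuous at $\sigma(t^*)$; I will show the $\arg\max$ cannot be a singleton. For every $t < t^*$ one has $Q^*(\sigma(t), a^*) \ge Q^*(\sigma(t), a)$ for all $a$, so passing to the left limit gives $Q^*(\sigma(t^*), a^*) \ge Q^*(\sigma(t^*), a)$, whence $a^*$ lies in the $\arg\max$ at $\sigma(t^*)$. If that $\arg\max$ were the singleton $\{a^*\}$, strictness of the gap would extend by continuity and finiteness of $\mathcal{A}$ to a full open neighbourhood on which $a^*$ is still the unique maximiser — contradicting the infimum characterisation of $t^*$ when $t^* < 1$, and contradicting $\mathop{\arg\max}$ at $\sigma(1) = s'$ being $\{a'\}$ when $t^* = 1$. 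Hence $\sigma(t^*) \in \mathcal{S}_{nu} \cup \mathcal{S}_0$, and $\|\sigma(t^*) - s\| = t^*\|s' - s\| \le \epsilon$ yields $s \in (\mathcal{S}_{nu} \cup \mathcal{S}_0) + B_\epsilon$.

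The main obstacle is this last claim: ruling out $a^*$ being the unique maximiser at $\sigma(t^*)$ under continuity is what prevents a naive intermediate-value argument from working, since IVT only delivers equality of two $Q^*$-values, not that they are simultaneously maximal over the finite action set. The critical tool is propagating a pointwise strict inequality to a neighbourhood using both a.e.\ continuity of each $Q^*(\cdot, a)$ and finiteness of $\mathcal{A}$; this is the only step where both hypotheses are used in concert, and it is what forces the conclusion into the thickened set $(\mathcal{S}_{nu} \cup \mathcal{S}_0) + B_\epsilon$ rather than a larger exceptional set.
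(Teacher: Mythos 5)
Your proof is correct and rests on the same mechanism as the paper's own argument: after disposing of the case where some point of $B_\epsilon(s)$ already lies in $\mathcal{S}_{nu}$, one locates a ``first'' point between $s$ and the witness $s'$ at which the optimal action ceases to be uniquely $a^*$, and shows that such a transition point must be either a tie point or a discontinuity point --- the shared key lemma being that a strict gap $Q^*(\cdot,a^*)>Q^*(\cdot,a)$ together with continuity of every $Q^*(\cdot,a)$ propagates uniqueness of the maximiser to a full neighbourhood. The realisations differ in two minor ways. First, the paper takes $s_1$ to be ``the point closest to $s$ with a different optimal action'' and shows $s_1\in\mathcal{S}_n\cap\mathcal{S}^{\prime}$ (its smaller choice of $\mathcal{S}_0$), whereas you take the infimum $t^*$ along the segment $[s,s']$ and land in $\mathcal{S}_{nu}\cup\bigcup_{a}D_a$; since the theorem only asks that $\mathcal{S}_0$ be null, either choice is acceptable. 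Second, your version is cleaner on a point the paper glosses over --- the infimum of a nonempty bounded set of reals always exists, while the paper's ``closest point'' need not be attained --- but it needs something the paper's does not: $Q^*(\sigma(t),\cdot)$ must be defined for every $t$, i.e.\ the segment $[s,s']$ must lie in $\mathcal{S}$. The paper assumes only that $\mathcal{S}$ is compact, not convex, so to be airtight you should either add convexity of $\mathcal{S}$ (harmless in the intended pixel-observation setting where $\mathcal{S}=[0,1]^d$) or fall back on a metric/closure argument in place of the straight segment.
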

Actually, $\mathcal{S}_{nu}$ is also close to an empty set in most practical complex environments, and $\mathcal{S}_{0}$ is a set of special and rare discontinuous points of $Q^*$ (as shown in our proof). Theorem \ref{thm: consistency assumption reasonable} essentially shows, for complicated tasks, $\mathcal{S}_{nin}$ is a quite small set and the magnitude of $\mu(\mathcal{S}_{nin})$ is around $O(\epsilon^d)$, where $\mu(A)$ represents the measure of set $A$ and $d$ is the dimension of state space. The Corollary in Appendix~\ref{app: reasonable of C assumption} illustrates better conclusions with stronger conditions.

Motivated by Theorem~\ref{thm: consistency assumption reasonable} and the above analysis, we assume that all states have a consistent intrinsic state neighborhood.

\begin{assumption}[Consistency Assumption of Policy (CAP)] 

\label{ass: consistency assumption}

    For all $s\in\mathcal{S}$, its adversary $\epsilon$-perturbation set is the same as the intrinsic state $\epsilon$-neighbourhood, \textit{i.e.},
    $B_\epsilon(s) = B^*_\epsilon(s)$.
\end{assumption}

\begin{figure}
    \centering
\includegraphics[width=\columnwidth]{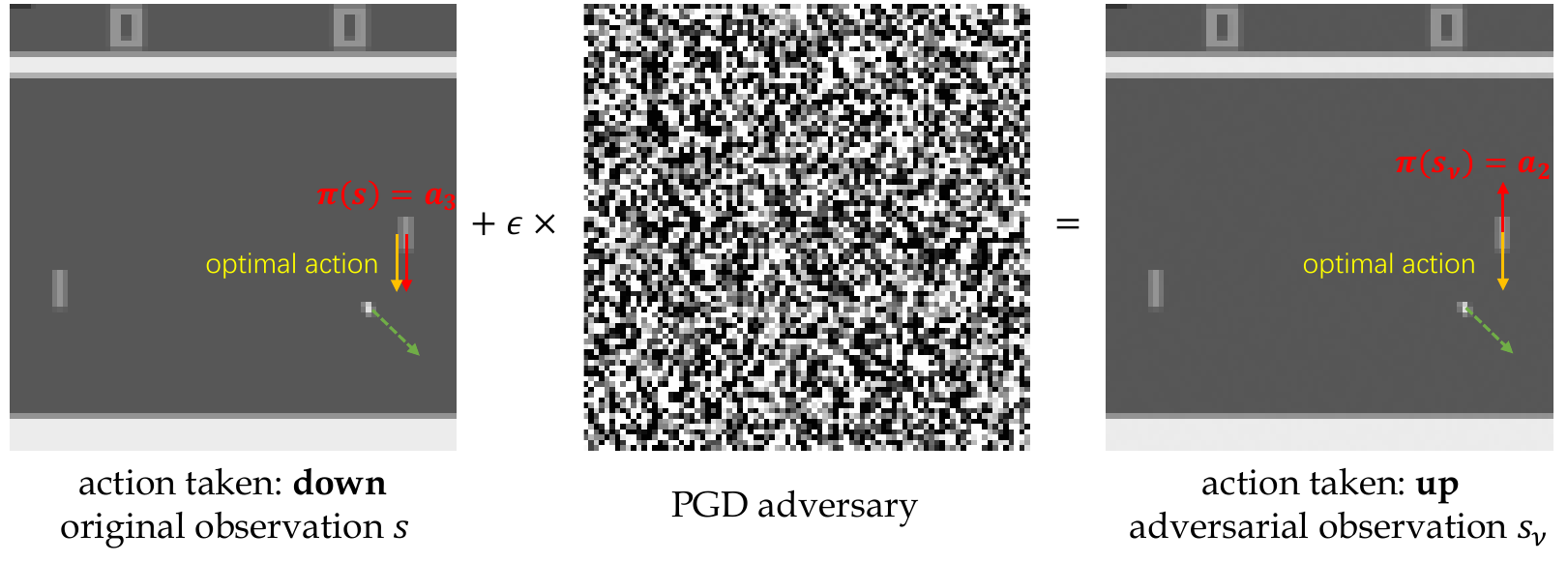} 
    \vspace{-1.7em}
    \caption{An example of state adversary in DQN.   While the adversary disrupts the policy performed by DQN, it does not impact the optimal action dictated by the Bellman optimal policy. This observation prompts the study of two key issues: whether the Bellman optimal policy serves as the ORP, and why vanilla DQN trained with Bellman error fails to ensure robustness.
    }
    \label{fig:intrisic state}
\end{figure}

\subsection{Consistent Optimal Robust Policy}

To establish the relation between the optimal $Q$ function before and after the perturbation,
we propose a consistent adversarial robust (CAR) operator.

\begin{definition}[CAR Operator $\mathcal{T}_{car}$]\label{thm: equivalence}
    Given an SA-MDP, the CAR operator is   $\mathcal{T}_{car}: L^p\left( \mathcal{S}\times\mathcal{A} \right) \rightarrow L^p\left( \mathcal{S}\times\mathcal{A} \right)$,
    \begin{equation} \notag
        \begin{aligned}
            &\left( \mathcal{T}_{car} Q \right) (s,a) = r(s,a) \\
            &+ \gamma \mathbb{E}_{ s^\prime \sim \mathbb{P}(\cdot|s,a)} \left[ \min _{s^\prime_\nu \in B_\epsilon(s^\prime)} Q \left(s^\prime,\mathop{\arg\max}_{a_{s^\prime_\nu}} Q\left(s^\prime_\nu, a_{s^\prime_\nu}\right)\right) \right].
        \end{aligned}
    \end{equation}
    
\end{definition}
 
Although $\mathcal{T}_{car}$ is not contractive (see Appendix \ref{app: not a contraction}), Theorem \ref{thm: fixed point} shows that under CAP, $\mathcal{T}_{car}$ has a fixed point, which corresponds to the optimal adversarial $Q$ function. 

\begin{theorem}[Relation between $Q^*$ and $Q^{\pi^*\circ \nu^*(\pi^*)}$]\label{thm: fixed point} \
        \begin{itemize}[leftmargin=0em, itemindent=2em,topsep=-0.1em,parsep=-0.1em]
            \item[(\textbf{1}).] If the optimal adversarial $Q$ function $Q^{\pi^*\circ \nu^*(\pi^*)}$ exists for all $s\in\mathcal{S}$ and $a\in\mathcal{A}$, then it is the fixed point of $\mathcal{T}_{car}$.
            \item[(\textbf{2}).] If the CAP holds, then $Q^*$ is the fixed point of $\mathcal{T}_{car}$ and it is also the optimal adversarial $Q$ function, \textit{i.e.}, $Q^*(s,a) = Q^{\pi^*\circ \nu^*(\pi^*)}(s,a)$, for all $s\in\mathcal{S}$ and $a\in\mathcal{A}$.
        \end{itemize}
    
\end{theorem}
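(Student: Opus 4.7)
The plan is to handle the two claims separately; in each case I will write down the Bellman-type recursion satisfied by the target $Q$ function and match it term-by-term to the definition of $\mathcal{T}_{car}$.

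\emph{Part (1).} I would begin from the standard decomposition $Q^{\pi^*\circ\nu^*(\pi^*)}(s,a) = r(s,a) + \gamma\,\mathbb{E}_{s'\sim\mathbb{P}(\cdot\mid s,a)}[V^{\pi^*\circ\nu^*(\pi^*)}(s')]$, which holds by definition of the adversarial $Q$ function. The strongest adversary for $\pi^*$ selects at each $s'$ an observation $s'_\nu\in B_\epsilon(s')$ minimizing $Q^{\pi^*\circ\nu^*(\pi^*)}(s',\pi^*(s'_\nu))$, giving the factorization $V^{\pi^*\circ\nu^*(\pi^*)}(s') = \min_{s'_\nu\in B_\epsilon(s')} Q^{\pi^*\circ\nu^*(\pi^*)}(s',\pi^*(s'_\nu))$. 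The key step is then to argue that the optimal robust policy acts greedily with respect to its own adversarial $Q$ function at the observed state, i.e.\ $\pi^*(s'_\nu)\in\arg\max_{a}Q^{\pi^*\circ\nu^*(\pi^*)}(s'_\nu,a)$. Substituting this identity into the recursion yields precisely $(\mathcal{T}_{car}Q^{\pi^*\circ\nu^*(\pi^*)})(s,a)$, so $Q^{\pi^*\circ\nu^*(\pi^*)}$ is a fixed point.

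\emph{Part (2).} I would plug $Q^*$ directly into $\mathcal{T}_{car}$. Under Assumption~\ref{ass: consistency assumption}, for each $s'$ the map $s'_\nu\mapsto\arg\max_a Q^*(s'_\nu,a)$ is constant on $B_\epsilon(s')$ and equals $\arg\max_a Q^*(s',a)$. Therefore the inner $\min_{s'_\nu}$ in the CAR operator is attained trivially, the bracketed term reduces to $\max_{a'}Q^*(s',a')$, and the Bellman optimality equation immediately gives $(\mathcal{T}_{car}Q^*)(s,a)=Q^*(s,a)$. To upgrade this fixed-point identity to $Q^* = Q^{\pi^*\circ\nu^*(\pi^*)}$, I would exhibit the deterministic stationary Bellman-greedy policy $\pi_B(s)\in\arg\max_a Q^*(s,a)$. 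CAP guarantees $\pi_B(s_\nu)=\pi_B(s)$ for every $s_\nu\in B_\epsilon(s)$, so $\pi_B$ is entirely unaffected by any admissible adversary, which yields $V^{\pi_B\circ\nu}(s) = V^{\pi_B}(s) = V^*(s)$ for every $\nu$ and $s$. Since $V^*(s)\geq\max_\pi\min_\nu V^{\pi\circ\nu}(s)$ trivially, $\pi_B$ attains this max-min and is therefore a deterministic stationary ORP, and the associated optimal adversarial $Q$ function coincides with $Q^*$.

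\emph{Main obstacle.} The delicate step is the greedy-selection claim in part (1): since $\nu^*$ depends on $\pi^*$, a local change of $\pi^*$ at a single observation can in principle shift the adversary globally, so a naive one-step policy-improvement argument is not immediately valid. I plan to handle this by first freezing the strongest adversary $\nu^*(\pi^*)$ and invoking standard policy improvement against this fixed adversary to show that any non-greedy choice can only strictly increase return, then using the minimax identity $V^{\pi^*\circ\nu^*(\pi^*)}(s)=\max_\pi\min_\nu V^{\pi\circ\nu}(s)$ to conclude that such an improvement must in fact be vacuous, yielding the required identity $\pi^*(s_\nu)\in\arg\max_a Q^{\pi^*\circ\nu^*(\pi^*)}(s_\nu,a)$. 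Once this is in hand, part (1) becomes a direct substitution, and part (2) follows cleanly from CAP together with the Bellman optimality equation.
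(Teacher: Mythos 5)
Your proposal is correct in substance, and the two halves relate to the paper's proof differently. For part (2), your fixed-point verification (CAP makes $\arg\max_a Q^*(s'_\nu,a)$ constant on $B_\epsilon(s')$, so the inner $\min$ collapses and the Bellman optimality equation closes the loop) is exactly the paper's calculation. But your upgrade to $Q^*=Q^{\pi^*\circ\nu^*(\pi^*)}$ is a genuinely different and more elementary route: you observe that CAP makes the Bellman-greedy policy adversary-invariant, so $V^{\pi_B\circ\nu}=V^{\pi_B}=V^*$ for every $\nu$, and combine this with the trivial bound $\min_\nu V^{\pi\circ\nu}\le V^{\pi}\le V^*$. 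The paper instead defines $\pi$ and $\nu$ by $\arg\max$/$\arg\min$, writes $\mathcal{T}_{car}Q^*=r+\gamma\mathcal{L}^{\pi\circ\nu}Q^*$, inverts $I-\gamma\mathcal{L}^{\pi\circ\nu}$ to identify $Q^*=Q^{\pi\circ\nu}$, and then proves optimality against an arbitrary $\pi'$ via a Neumann-series positivity lemma for $(I-\gamma\mathcal{L}^{\pi'\circ\nu^*(\pi')})^{-1}$. Your argument is shorter and avoids the operator machinery; the paper's buys the explicit $Q$-function comparison $Q^*\ge Q^{\pi'\circ\nu^*(\pi')}$ pointwise in $(s,a)$ directly, which you would still need to note follows from the value-function statement via one application of the Bellman equation for fixed $\pi,\nu$.

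For part (1) your approach is essentially the paper's: decompose via the strongest-adversary Bellman equation $V^{\pi^*\circ\nu^*(\pi^*)}(s)=\min_{s_\nu\in B_\epsilon(s)}\mathbb{E}_{a\sim\pi^*(\cdot|s_\nu)}Q^{\pi^*\circ\nu^*(\pi^*)}(s,a)$ and then identify $\pi^*(s_\nu)$ with $\arg\max_a Q^{\pi^*\circ\nu^*(\pi^*)}(s_\nu,a)$. You are right that this greedy identification is the crux; the paper performs it silently inside a chain of equalities (the passage from $\min_\nu\max_\pi\mathbb{E}_{a\sim\pi(\cdot|\nu(s))}Q_0(s,a)$ to the CAR form). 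Be aware, though, that your proposed repair via ``standard policy improvement against the frozen adversary'' is not immediately valid: once $\nu^*(\pi^*)$ is frozen, the agent still acts on the observation $\nu(s)$ rather than on $s$, so the frozen problem is partially observed and a one-step greedy swap at a single observation can affect the return at every true state mapping into that observation. This is the same gap the paper leaves open, so you are not below its standard, but the minimax-identity step you sketch would need to be carried out carefully before part (1) is fully rigorous.
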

\begin{remark}
    Note that the min and max operations in Definition~\ref{thm: equivalence} are not a normal minimax problem because the minimum and maximum objectives are different. It is a bilevel optimization problem. The min and max can not swapped under the general setting. However, they can swap when $\mathop{\arg\max}_{a_{s^\prime_\nu}} Q\left(s^\prime_\nu, a_{s^\prime_\nu}\right)$ is a singleton for all $s^\prime_\nu \in B_\epsilon(s^\prime)$, which is a mild condition in our training. Further, we validate that $Q^*$ is also the fixed point of the operator after swapping.
\end{remark}
We have demonstrated the convergence of $\mathcal{T}_{car}$ in a smooth environment (see Appendix \ref{app:convergence}), stating the fixed point iterations of $\mathcal{T}_{car}$ at least converge to a sub-optimal solution close to $Q^*$.
On this basis, it can be derived from Theorem \ref{thm: fixed point} that the greedy policy $\pi^*(s):=\mathop{\arg\max}_a Q^*(s,a)$, for all $s\in \mathcal{S}$, is exactly the ORP.

\begin{corollary}[Existence of ORP]
    If the CAP holds, there exists a deterministic and stationary policy $\pi^*$ which satisfies $V^{\pi^*\circ \nu^*(\pi^*)}(s) \ge V^{\pi\circ \nu^*(\pi)}(s)$ and $Q^{\pi^*\circ \nu^*(\pi^*)}(s, a) \ge Q^{\pi\circ \nu^*(\pi)}(s, a)$, for all $\pi\in\Pi$, $s\in\mathcal{S}$ and $a\in\mathcal{A}$.
\end{corollary}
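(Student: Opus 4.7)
The plan is to use Theorem~3.2 as the main engine: once we know $Q^{\pi^* \circ \nu^*(\pi^*)} = Q^*$ under CAP, the comparison with arbitrary policies $\pi$ reduces to the standard optimality of $Q^*$ in the non-adversarial MDP together with the observation that the adversary can never help. I would take $\pi^*(s) := \arg\max_a Q^*(s,a)$, which is well-defined (one representative suffices; under CAP the argmax is essentially unique on a set of full measure anyway), and is manifestly deterministic and stationary. This is the candidate ORP.

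Next I would establish the two inequality chains. By Theorem~3.2~(2), under CAP we have $Q^{\pi^* \circ \nu^*(\pi^*)}(s,a) = Q^*(s,a)$ for every $(s,a)$. For any other policy $\pi$, the key observation is that the identity perturbation $\nu_{\mathrm{id}}(s) = s$ lies in the admissible adversary class, because $s \in B_\epsilon(s)$. Since $\nu^*(\pi)$ is defined as the value-minimizing adversary against $\pi$, it follows that
\begin{equation*}
V^{\pi \circ \nu^*(\pi)}(s) \;\le\; V^{\pi \circ \nu_{\mathrm{id}}}(s) \;=\; V^{\pi}(s),
\end{equation*}
and analogously for the $Q$-functions. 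Combined with the textbook fact that $Q^{\pi}(s,a) \le Q^*(s,a)$ and $V^{\pi}(s) \le V^*(s)$ for every $\pi$ in the underlying MDP, we obtain
\begin{equation*}
Q^{\pi \circ \nu^*(\pi)}(s,a) \;\le\; Q^{\pi}(s,a) \;\le\; Q^*(s,a) \;=\; Q^{\pi^* \circ \nu^*(\pi^*)}(s,a),
\end{equation*}
and the same chain for $V$. This gives exactly the two inequalities in the statement.

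There is no real obstacle here, since all the heavy lifting was done in Theorem~3.2; the only subtlety is verifying that $\nu_{\mathrm{id}}$ is an admissible adversary so that $\nu^*(\pi)$ can only decrease value. As long as the adversary class is defined through perturbation sets $B_\epsilon(s)$ that contain $s$, which is the convention used throughout the paper, the argument goes through cleanly. Finally, I would remark that $\pi^*$ is independent of the adversary: it is the Bellman-greedy policy computed from $Q^*$, so the same deterministic stationary policy serves as ORP simultaneously for every $s$ and every competitor $\pi$.
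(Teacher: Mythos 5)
Your proof is correct, but it takes a genuinely different route from the paper's. The paper proves the corollary by explicitly constructing the greedy pair $\pi^*(s)=\mathop{\arg\max}_a Q^*(s,a)$ and $\nu^*(s)=\mathop{\arg\min}_{s_\nu\in B_\epsilon(s)}Q^*(s,\mathop{\arg\max}_{a}Q^*(s_\nu,a))$, showing $\mathcal{T}_{car}Q^* = r+\gamma\mathcal{L}^{\pi^*\circ\nu^*}Q^*$ so that $Q^*=(I-\gamma\mathcal{L}^{\pi^*\circ\nu^*})^{-1}r=Q^{\pi^*\circ\nu^*}$, and then obtaining the comparison $Q^*\ge Q^{\pi\circ\nu^*(\pi)}$ from the operator identity $Q^*-Q^{\pi\circ\nu^*(\pi)}=\gamma(I-\gamma\mathcal{L}^{\pi\circ\nu^*(\pi)})^{-1}(\mathcal{L}^{\pi^*\circ\nu^*}-\mathcal{L}^{\pi\circ\nu^*(\pi)})Q^{\pi^*\circ\nu^*}$ together with positivity of the Neumann series $(I-\gamma\mathcal{L})^{-1}=\sum_t\gamma^t\mathcal{L}^t$. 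You instead treat Theorem~3.2(2) as a black box giving $Q^{\pi^*\circ\nu^*(\pi^*)}=Q^*$ and close the gap with the elementary sandwich $Q^{\pi\circ\nu^*(\pi)}\le Q^{\pi\circ\nu_{\mathrm{id}}}=Q^{\pi}\le Q^*$, which is valid because $s\in B_\epsilon(s)$ makes the identity adversary admissible and because the strongest adversary minimizes $V^{\pi\circ\nu}$ pointwise, hence also $Q^{\pi\circ\nu}(s,a)=r(s,a)+\gamma\mathbb{E}_{s'}V^{\pi\circ\nu}(s')$ pointwise. Your argument is shorter and avoids re-running the operator machinery; the paper's buys a self-contained derivation that does not appeal to the non-adversarial optimality of $Q^*$. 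One step you gloss over: for the $V$-inequality you need $V^{\pi^*\circ\nu^*(\pi^*)}(s)=\max_a Q^*(s,a)$, and since $V^{\pi^*\circ\nu^*(\pi^*)}(s)=Q^{\pi^*\circ\nu^*(\pi^*)}(s,\pi^*(\nu^*(s;\pi^*)))$ is evaluated at the action chosen from the \emph{perturbed} state, this requires one more invocation of CAP to guarantee $\mathop{\arg\max}_a Q^*(s_\nu,a)=\mathop{\arg\max}_a Q^*(s,a)$; without it the chain for $V$ does not close. This is easily filled but should be made explicit.
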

The above theorems indicate that under the CAP, the ORP against the strongest adversary is actually the Bellman optimal policy derived from the Bellman optimality equations. This also suggests that the objectives in natural and adversarial environments are aligned, which is supported by our experiment results in Sec.~\ref{sec: Comparison}.

\section{Policy Robustness under Bellman $p$-error}
As the Bellman optimal policy is ORP, we further consider the reasons for the vulnerability of DRL agents: Although former methods, such as Q-learning, essentially take the Bellman optimal policy as a goal, why do they exhibit rather poor robustness? We approach this issue by examining the stability of policy across diverse Banach spaces.

Let $Q_\theta$ denote a parameterized $Q$ function. The value-based RL training theoretically requires minimizing  $\left\|Q_\theta - Q^*\right\|_{\mathcal{B}}$, where $\mathcal{B}$ is a Banach space. 
In practice, it is hard to make the distance between $Q_\theta$ and $Q^*$ vanish due to some limitations, such as the characterization capabilities of neural networks and the convergence of optimization algorithms. Therefore, we analyze the properties of $Q_\theta$ when the $\left\|Q_\theta - Q^*\right\|_{\mathcal{B}}$ is a small positive value on different spaces $\mathcal{B}$. 

\subsection{Necessity of $L^\infty$-norm for Adversarial Robustness}

We study the adversarial robustness of the greedy policy derived from $Q$ when $0 < \left\|Q - Q^*\right\|_{p}=\delta \ll 1$ for different $L^p$ spaces.
Given a function $Q$ and perturbation budget $\epsilon$, let $\mathcal{S}^Q_{sub} = \{ s | Q^*(s,\mathop{\arg\max}_a Q(s, a)) < \max_a Q^*(s, a) \}$ denote the set of states where the greedy policy according to $Q$ is suboptimal, and $\mathcal{S}^Q_{adv}$ denote the set of states in whose $\epsilon$-neighbourhood there exists the adversarial state, \textit{i.e.},
\begin{equation}\notag
    \begin{aligned}
        \mathcal{S}^Q_{adv} = &\{ s | \exists s_\nu \in B_\epsilon(s),\\
        &\text{s.t. } Q^*(s,\mathop{\arg\max}_a Q(s_\nu,a)) < \max_a Q^*(s,a) \}.        
    \end{aligned}
\end{equation}

\begin{theorem}[Necessity of $L^\infty$-norm]\label{thm:necessity of infty norm}
    There exists an MDP instance such that the following statements hold. 
        \begin{itemize}[leftmargin=0em, itemindent=2em,topsep=-0.1em,parsep=-0.1em]
            \item[(\textbf{1}).] For any $1\le p<\infty$ and $\delta>0$, there exists a function $Q\in L^p\left( \mathcal{S}\times\mathcal{A} \right)$ satisfying $\|Q-Q^*\|_{p} \leq \delta$ such that $\mu\left(\mathcal{S}^Q_{sub}\right) = O(\delta)$ yet $\mu\left( \mathcal{S}^Q_{adv} \right) =\mu \left(\mathcal{S}\right)$.
            \item[(\textbf{2}).] There exists a $\bar{\delta}>0$ such that for any $0< \delta \le \bar{\delta}$, and any function $Q\in L^\infty\left( \mathcal{S}\times\mathcal{A} \right)$ satisfying $\|Q-Q^*\|_{\infty} \leq  \delta$, we have that $\mu\left(\mathcal{S}^Q_{sub}\right) = O(\delta)$ and $\mu\left( \mathcal{S}^Q_{adv} \right) = 2\epsilon + O\left( \delta \right)$.
        \end{itemize}
\end{theorem}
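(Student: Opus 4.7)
My plan is to exhibit a single, very simple MDP that witnesses both parts of the theorem, and then analyze part (2) using a general action-gap argument and part (1) using an explicit ``lattice of flip-bumps'' construction. A convenient instance is $\mathcal{S}=[0,1]\subset\mathbb{R}$, $\mathcal{A}=\{1,2\}$, discount $\gamma=0$, with rewards $r(s,1)=s$, $r(s,2)=\tfrac12$. Since $\gamma=0$, $Q^*(s,a)=r(s,a)$ directly, so $Q^*(s,1)-Q^*(s,2)=s-\tfrac12$, and the action gap $\Delta(s):=|Q^*(s,1)-Q^*(s,2)|=|s-\tfrac12|$ is linear. The optimal action is $1$ on $(\tfrac12,1]$ and $2$ on $[0,\tfrac12)$, so any deviation from optimality can be read off from $\Delta$.

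\textbf{Part (2).} I would first establish the elementary fact that if $\|Q-Q^*\|_\infty\le\delta$ and $\Delta(s)>2\delta$, then the $Q$-greedy action at $s$ coincides with $a^*(s)$, since for $a\neq a^*(s)$ one has $Q(s,a^*(s))-Q(s,a)\ge\Delta(s)-2\delta>0$. Specializing to our MDP, this gives $\mathcal{S}^Q_{sub}\subseteq\{s:|s-\tfrac12|\le2\delta\}$, of measure at most $4\delta$. Next, if $|s-\tfrac12|>\epsilon+2\delta$, every $s_\nu\in B_\epsilon(s)$ satisfies $\Delta(s_\nu)>2\delta$ and lies strictly on the same side of $\tfrac12$ as $s$, so the $Q$-greedy action at $s_\nu$ equals $a^*(s_\nu)=a^*(s)$; hence $\mathcal{S}^Q_{adv}\subseteq\{s:|s-\tfrac12|\le\epsilon+2\delta\}$ with measure $\le 2\epsilon+4\delta$. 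A matching lower bound follows by choosing, for each $s$ with $|s-\tfrac12|<\epsilon-2\delta$, an explicit $s_\nu\in B_\epsilon(s)$ strictly across $\tfrac12$ with $\Delta(s_\nu)>2\delta$, so the $Q$-greedy policy must return the ``wrong'' action for $s$. Picking $\bar\delta$ small enough to keep these regions non-degenerate finishes this part.

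\textbf{Part (1).} Fix $1\le p<\infty$ and $\delta\in(0,1]$. Choose centers $s_i=(2i-1)\epsilon\in[0,1]$ for $i=1,\dots,N$ with $N\asymp 1/\epsilon$, and set $Q=Q^*$ outside $\bigcup_i B_r(s_i)$; inside each $B_r(s_i)$, add a bump to the currently $Q^*$-suboptimal action of height $C_i$ chosen just above $\Delta(s_i)+r$, so that the $Q$-argmax flips throughout $B_r(s_i)$. Because $C_i\le 1$ uniformly, $\|Q-Q^*\|_p^p\lesssim N\cdot 2r\cdot 1\lesssim r/\epsilon$, so choosing $r\asymp\epsilon\delta^p$ yields $\|Q-Q^*\|_p\le\delta$. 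The suboptimal set $\bigcup_i B_r(s_i)$ then has measure $2Nr\lesssim\delta^p\le\delta$. Since consecutive centers are $2\epsilon$ apart, $\bigcup_i B_\epsilon(s_i)=[0,1]$, and for almost every $s$ the nearest $s_i$ lies on the same side of $\tfrac12$ as $s$; the flipped $Q$-greedy action at $s_\nu=s_i$ therefore disagrees with $a^*(s)$, placing $s$ in $\mathcal{S}^Q_{adv}$. Thus $\mu(\mathcal{S}^Q_{adv})=\mu(\mathcal{S})$.

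\textbf{Main obstacle.} The real challenge is part (1): one must simultaneously calibrate the bump heights to dominate the local gap $\Delta(s_i)$ (so an argmax flip actually happens) while keeping $\|Q-Q^*\|_p\le\delta$ uniformly in $p$; the dependence on $p$ forces $r\asymp\epsilon\delta^p$, which becomes exponentially tighter as $p\to\infty$ and illuminates exactly why the construction must break down precisely in the $L^\infty$ regime. Handling centers near $\tfrac12$ also deserves care, but since the only target there is full measure of $\mathcal{S}^Q_{adv}$ (not exact coverage), one can either split the lattice by side of $\tfrac12$ or discard a measure-zero neighborhood of $\tfrac12$. Part (2), by contrast, reduces to the clean two-line action-gap bound and presents no serious obstacle.
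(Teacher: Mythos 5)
Your proposal is correct and follows essentially the same strategy as the paper: a one-dimensional MDP whose optimal action gap grows linearly away from a single decision boundary, an action-gap argument for part (2), and an $\epsilon$-dense lattice of narrow argmax-flipping perturbations with vanishing $L^p$ mass for part (1). Your $\gamma=0$ bandit makes $Q^*$ immediate, whereas the paper uses an MDP with left/right transition dynamics and devotes most of its proof to computing $Q^*$ and establishing the gap $|Q^*(s,a_2)-Q^*(s,a_1)|>2k|s|$; your bounded bumps of width $\asymp\epsilon\delta^p$ play the same role as the paper's $2n$ dips of depth $n^{1/p}$ and width $\delta^p/n^2$ spaced $1/n<\epsilon$ apart. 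One small correction: the set of states whose nearest lattice center lies on the wrong side of $\tfrac12$ can have measure up to $\epsilon$, not zero, so of your two proposed fixes only the first (splitting the lattice by side of $\tfrac12$) is viable --- ``discarding a measure-zero neighborhood'' is not, since a neighborhood has positive measure and the claim $\mu(\mathcal{S}^Q_{adv})=\mu(\mathcal{S})$ requires full measure.
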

The first statement indicates that when $\|Q-Q^*\|_p$ is a small value for $1\le p<\infty$, there always exist adversarial examples near almost all states, resulting in quite poor robustness, while the policy might exhibit excellent performance in a natural environment without adversarial attacks.
This observation sheds light on the vulnerability of DRL agents, aligning with findings across various studies~\cite{huang2017adversarial, ilahi2021challenges}. Importantly, the second statement points out that through minimizing $\|Q-Q^*\|$ in the $L^\infty$-norm space, we can avoid the vulnerability and attain a policy with both natural and robust capabilities. This inspires to optimize $\|Q_{\theta}-Q^*\|_{\infty}$ in DRL algorithms. Intuitive examples of Theorem \ref{thm:necessity of infty norm} are shown in Figure \ref{fig:example}.


\begin{figure}[!t]
\centering
\includegraphics[width=0.99\columnwidth]{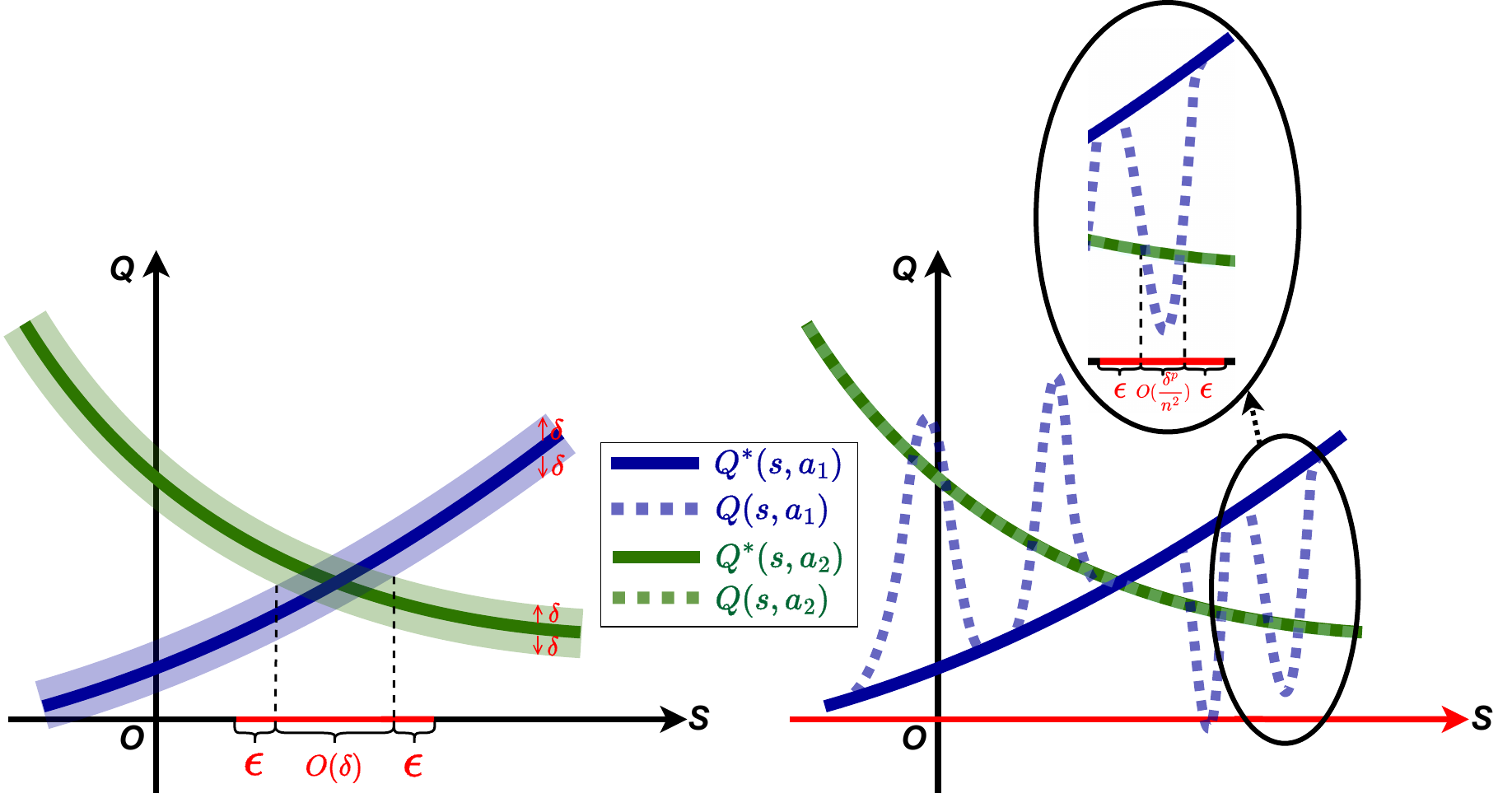}
\vspace{-1em}
\caption{
Examples of adversarial robustness for $Q$ satisfying $\|Q-Q^*\|_p\le\delta$.
Given a perturbation radius $\epsilon$, the red line represents the set $\mathcal{S}_{adv}^Q$, in which states have adversarial states. The left panel depicts the case of $p=\infty$, where all $Q$ is distributed in the shadow area with the measure of $\mathcal{S}_{adv}^Q$ being a small value $2 \epsilon + O\left( \delta \right)$. 
 The right panel shows that for $1\le p<\infty$, there always exists $Q$ such that $\mathcal{S}_{adv}^Q = \mathcal{S}$, indicating poor robustness.
 }
 \label{fig:example}
\end{figure}
\subsection{Stability of Bellman Optimality Equations}

Unfortunately, it is infeasible to measure $\|Q_{\theta}-Q^*\|$ within a practical DRL procedure due to the unknown nature of $Q^*$. Most methods train $Q_\theta$ via optimizing the Bellman error $\| \mathcal{T}_B Q_\theta - Q_\theta\|_{\mathcal{B}'}$, where $\mathcal{T}_B$ is the Bellman optimal operator  
$$\left(\mathcal{T}_B Q\right) (s, a)=r(s, a)+\gamma \mathbb{E}_{s^{\prime} \sim \mathbb{P}(\cdot \mid s, a)}\left[\max _{a^{\prime} \in \mathcal{A}} Q\left(s^{\prime}, a^{\prime}\right)\right] .$$
Similar to Theorem~\ref{thm:necessity of infty norm}, we need to figure out which Banach space $\mathcal{B}^\prime$ is the best to train DRL agents which can keep the fewest adversarial states. 
To analyze this issue, we introduce a concept of functional equations stability drawing on relevant research about physics-informed neural networks for partial differential equations~\cite{wang20222}.

\begin{definition}[Stability of Functional Equations]\label{def: stability}
    Given two Banach spaces $\mathcal{B}_1$ and $\mathcal{B}_2$, if there exist $\delta>0$ and $C>0$ such that for all $Q\in \mathcal{B}_1 \cap \mathcal{B}_2$ satisfying $\|\mathcal{T}Q - Q\|_{\mathcal{B}_1} < \delta$, we have that $\|Q - Q^*\|_{\mathcal{B}_2} < C \|\mathcal{T}Q - Q\|_{\mathcal{B}_1}$, where $Q^*$ is the exact solution of this functional equation, then we say a nonlinear functional equation $\mathcal{T}Q = Q$ is $\left( \mathcal{B}_1, \mathcal{B}_2 \right)$-stable. For simplicity, we call that functional $\mathcal{T}$ is $\left( \mathcal{B}_1, \mathcal{B}_2 \right)$-stable.
\end{definition}

The above Definition implies that  if there exists a space $\mathcal{B}^\prime$ such that  $\mathcal{T}_{B}$ is $\left(  \mathcal{B}^\prime,  L^\infty\left( \mathcal{S}\times\mathcal{A} \right) \right)$-stable, 
then $\left\|Q_\theta - Q^*\right\|_{\infty}$ will be controlled when minimizing the Bellman error in $\mathcal{B}^\prime$ space, making DRL agents robust according to  Theorem~\ref{thm:necessity of infty norm}.(2).  
The following theorems illustrate the conditions that affect the stability of $\mathcal{T}_{B}$ and guide for selecting a suitable $\mathcal{B}^\prime$.


\begin{theorem}[Stable Properties of $\mathcal{T}_{B}$ in $L^p$ Spaces]\label{thm: stability} \
\begin{itemize}[leftmargin=0em, itemindent=2em,topsep=-0.1em,parsep=-0.1em]
            \item[(\textbf{1}).] 
    There exists an MDP such that for all $1 \le q < p\le \infty$, the Bellman optimality equations $\mathcal{T}_B Q = Q$ is not $\left(  L^q\left( \mathcal{S}\times\mathcal{A} \right),  L^p \left( \mathcal{S}\times\mathcal{A} \right) \right)$-stable.
            \item[(\textbf{2}).] For any MDP, if the following conditions hold: 
    \begin{equation}\label{eq: stability condition} \notag
        \begin{aligned}
            &C_{\mathbb{P},p}< \frac{1}{\gamma};\quad p \le q \le \infty
            ; \\
            &p \ge \max\left\{1, \frac{\log \left( \left| \mathcal{A}\right|\right) + \log \left( \mu\left( \mathcal{S} \right) \right)}{\log \frac{1}{\gamma C_{\mathbb{P},p}} } \right\},
        \end{aligned}
    \end{equation}
     where $C_{\mathbb{P},p}:= \sup_{(s,a)\in\mathcal{S}\times \mathcal{A}} \left\| \mathbb{P}(\cdot \mid s, a) \right\|_{L^{\frac{p}{p-1}}\left( \mathcal{S} \right)}$, then we have that the Bellman optimality equations $\mathcal{T}_B Q = Q$ is  $\left(  L^q\left( \mathcal{S}\times\mathcal{A} \right),  L^p\left( \mathcal{S}\times\mathcal{A} \right) \right)$-stable.
        \end{itemize}
\end{theorem}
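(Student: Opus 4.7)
The plan is to treat the two parts separately: part (1) is a sharp counterexample via a bump function in a trivial MDP, while part (2) is a contraction-plus-embedding argument.

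For part (1), I would take $\mathcal{S} \subset \mathbb{R}^d$ compact, any finite $\mathcal{A}$, zero reward, and self-loop dynamics $\mathbb{P}(\cdot \mid s,a) = \delta_s$, so that $Q^* \equiv 0$ and for any $Q(s,a) = f(s)$ the Bellman error simplifies to $\mathcal{T}_B Q - Q = (\gamma - 1) f$. Setting $f = h \cdot \mathbf{1}_A$ with $\mu(A) = m$ gives
\[
\|\mathcal{T}_B Q - Q\|_q = (1-\gamma)|\mathcal{A}|^{1/q} h\, m^{1/q}, \qquad \|Q - Q^*\|_p = |\mathcal{A}|^{1/p} h\, m^{1/p}.
\]
Choosing $h = m^{-1/p}$ (or any fixed $h$ if $p = \infty$) and letting $m \to 0^+$ keeps the $L^p$-error bounded below while forcing the $L^q$-error to zero, since $1/q - 1/p > 0$ whenever $q < p$; the ratio $\|Q - Q^*\|_p / \|\mathcal{T}_B Q - Q\|_q$ therefore diverges, contradicting any putative stability constants.

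For part (2), I would start from the identity $Q - Q^* = (Q - \mathcal{T}_B Q) + (\mathcal{T}_B Q - \mathcal{T}_B Q^*)$ (using $\mathcal{T}_B Q^* = Q^*$), so the triangle inequality gives $\|Q - Q^*\|_p \le \|\mathcal{T}_B Q - Q\|_p + \|\mathcal{T}_B Q - \mathcal{T}_B Q^*\|_p$. The crux is an $L^p$-contraction bound
\[
\|\mathcal{T}_B Q - \mathcal{T}_B Q^*\|_p \le \kappa \|Q - Q^*\|_p, \qquad \kappa := \gamma\, C_{\mathbb{P},p}\,(|\mathcal{A}|\mu(\mathcal{S}))^{1/p}.
\]
I would derive it pointwise from $|\mathcal{T}_B Q(s,a) - \mathcal{T}_B Q^*(s,a)| \le \gamma \int \mathbb{P}(s' \mid s,a)\, g(s')\, ds'$ with $g(s') := \max_{a'} |Q(s',a') - Q^*(s',a')|$; Hölder's inequality on the conjugate pair $(p, p/(p-1))$ bounds the integral by $C_{\mathbb{P},p}\|g\|_{L^p(\mathcal{S})}$, and $\max_{a'}|\cdot|^p \le \sum_{a'}|\cdot|^p$ gives $\|g\|_{L^p(\mathcal{S})} \le \|Q - Q^*\|_{L^p(\mathcal{S}\times\mathcal{A})}$; raising to the $p$th power and integrating over $(s,a)$ produces the volume factor $|\mathcal{A}|\mu(\mathcal{S})$. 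The hypotheses $C_{\mathbb{P},p} < 1/\gamma$ and $p \ge \log(|\mathcal{A}|\mu(\mathcal{S})) / \log(1/(\gamma C_{\mathbb{P},p}))$ are exactly what make $\kappa < 1$, which yields $\|Q - Q^*\|_p \le (1-\kappa)^{-1}\|\mathcal{T}_B Q - Q\|_p$. A final embedding $\|\cdot\|_p \le (|\mathcal{A}|\mu(\mathcal{S}))^{1/p - 1/q}\|\cdot\|_q$, valid on a finite-measure space whenever $p \le q$, upgrades this to $\|Q - Q^*\|_p \le C \|\mathcal{T}_B Q - Q\|_q$ with an explicit stability constant, establishing $(L^q, L^p)$-stability.

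The main obstacle I expect is navigating the interplay between the Bellman operator, which is naturally a $\gamma$-contraction only in $L^\infty$, and the weaker $L^p$-geometry: it is precisely the Hölder step that trades transition-kernel regularity (encoded by $C_{\mathbb{P},p}$) for an $L^p$-contraction, at the cost of inflating the constant by the volume factor $(|\mathcal{A}|\mu(\mathcal{S}))^{1/p}$. Taming this factor is what forces the lower bound on $p$ in the hypothesis. I also anticipate needing a brief separate argument at the endpoint $p = \infty$, where $(|\mathcal{A}|\mu(\mathcal{S}))^{1/p} = 1$ and $C_{\mathbb{P},\infty} = 1$, so the bound degenerates to the classical sup-norm contraction; and a small subtlety at the boundary $\kappa = 1$, which should be handled either by interpreting the stated threshold as strict or by exploiting the strict inequality $C_{\mathbb{P},p} < 1/\gamma$ to obtain the required gap.
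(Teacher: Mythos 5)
Your part (2) is essentially the paper's own argument: the decomposition $Q-Q^*=(Q-\mathcal{T}_BQ)+(\mathcal{T}_BQ-\mathcal{T}_BQ^*)$ is, up to sign, the paper's identity $f=-w+\mathcal{L}_0(Q^*+w)-\mathcal{L}_0Q^*$, and the subsequent steps coincide exactly — the pointwise bound via $|\max f - \max g|\le\max|f-g|$, H\"older against $\|\mathbb{P}(\cdot\mid s,a)\|_{L^{p/(p-1)}}$, the $(|\mathcal{A}|\mu(\mathcal{S}))^{1/p}$ volume factor from integrating over $(s,a)$ together with $\|\cdot\|_{\ell^\infty(\mathcal{A})}\le\|\cdot\|_{\ell^p(\mathcal{A})}$, and the final embedding $\|f\|_p\le\mu(\mathcal{S}\times\mathcal{A})^{1/p-1/q}\|f\\|_q$. (The boundary case $\kappa=1$ you flag is real: the stated lower bound on $p$ only forces $\gamma C_{\mathbb{P},p}(|\mathcal{A}|\mu(\mathcal{S}))^{1/p}\le 1$, and the paper's denominator has the same potential degeneracy, so you are not worse off.) Part (1), however, is a genuinely different and considerably more elementary route. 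The paper builds a non-trivial two-action MDP on $[-1,1]$ with deterministic $\pm 0.1$ shift dynamics and rewards $r(s,a_i)=k_is$, proves the optimal policy, computes $Q^*$, and plants a triangular bump of height $h$ on $Q(\cdot,a_2)$ over $(0,\epsilon)$; the point of that construction is that the bump's Bellman residual is confined to a few intervals of total measure $O(\epsilon)$ with magnitude $O(h)$, giving $\|\mathcal{T}_BQ-Q\|_q\le 3h\epsilon^{1/q}$ versus $\|Q-Q^*\|_p\ge h(\epsilon/2)^{1/p}$. Your self-loop, zero-reward MDP collapses this entirely: $\mathcal{T}_BQ-Q=(\gamma-1)(Q-Q^*)$ pointwise, so instability reduces to the classical failure of the reverse embedding between $L^q$ and $L^p$ on indicator functions of shrinking support, and the computation is a one-liner. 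Both are valid (the paper itself uses Dirac transition kernels, so degenerate dynamics are clearly admissible); yours isolates the true mechanism — norm incomparability as the support shrinks — at the cost of using an MDP where the residual is trivially proportional to the error, whereas the paper's example demonstrates that the same instability survives in a non-degenerate MDP with genuinely coupled dynamics, which is rhetorically stronger even if logically no more is needed for the existential claim.
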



We note that $\lim_{p\rightarrow\infty} C_{\mathbb{P},p} =1 < \frac{1}{\gamma}$ and thus the first condition holds when $p$ is large enough. The second condition suggests it is available for stability that $q$ is larger than $p$, and the last condition reveals that $p$ is relevant to the size of the state and action spaces.  Further, we have that $\mathcal{T}_{B}$ is $\left(L^\infty\left( \mathcal{S}\times\mathcal{A} \right),  L^\infty\left( \mathcal{S}\times\mathcal{A} \right) \right)$-stable and thus
we can optimize DRL agents in space $\mathcal{B}^\prime = L^\infty\left( \mathcal{S}\times\mathcal{A} \right)$  for adversarial robustness. Moreover, $\mathcal{B}^\prime$ cannot be $L^q\left( \mathcal{S}\times\mathcal{A} \right)$ for all $1\le q < \infty$ according to Theorem \ref{thm: stability}.(1).

\section{Consistent Adversarial Robust DQN}

Our theoretical analysis has revealed the feasibility of training a deep Q-network (DQN) by minimizing the Bellman error in $L^{\infty}$ space to achieve the ORP. However, the exact computation of the $L^{\infty}$-norm is intractable because of the unknown environment and continuous state space. Therefore, we introduce a surrogate objective based on the $L^{\infty}$-norm and present the Consistent Adversarial Robust deep Q-network (CAR-DQN), enhancing both the natural and robust performance of agents.

\subsection{Stability of Deep Q-learning}

Define the state-action visitation distribution as
$$d_{\mu_0}^\pi(s,a)=\mathbb{E}_{s_0\sim \mu_0}  (1-\gamma) \sum_{t=0}^{\infty} \gamma^t \operatorname{Pr}^\pi(s_t=s, a_t=a|s_0).$$
Deep Q-learning algorithms, \textit{e.g.}, DQN, leverage the following objective due to interactions with the environment:
\begin{equation}\notag
    \begin{aligned}
    \mathcal{L}(Q_{\theta};\pi_{\theta})=\mathbb{E}_{(s,a)\sim d_{\mu_0}^{\pi_{\theta}}} \left| \mathcal{T}_B Q_{\theta}(s,a) - Q_{\theta}(s,a)  \right|. 
\end{aligned}
\end{equation}
The former theoretical analysis of functional equations stability can be extended to $\mathcal{L}(Q_{\theta};\pi_{\theta})$ by integrating sampling probability into a seminorm. 

\begin{definition}[$(p,d_{\mu_0}^\pi)$-seminorm]
    Given a policy $\pi$,  $f:\mathcal{S}\times\mathcal{A}\rightarrow \mathbb{R}$ and $1\le p\le\infty$, if $d_{\mu_0}^\pi$ is a probability density function, we define the $(p,d_{\mu_0}^\pi)$-seminorm as the following, which satisfies the absolute homogeneity and triangle inequality:
    \begin{equation}\notag
        \begin{aligned}
            \left\| f \right\|_{p,d_{\mu_0}^\pi} :=
            (\int_{(s,a)\in\mathcal{S}\times\mathcal{A}} \left| d_{\mu_0}^\pi(s,a)  f(s,a) \right|^p d \mu(s,a))^{\frac{1}{p}}.
        \end{aligned}
    \end{equation}
\end{definition}

We note that $(p,d_{\mu_0}^\pi)$-seminorm will be upgraded to a norm, if $d_{\mu_0}^\pi(s,a)>0$ almost everywhere for  $(s,a)\in\mathcal{S}\times\mathcal{A}$. The deep Q-learning objective can be streamlined as $\mathcal{L}(Q_{\theta};\pi_{\theta}) = \left\|\mathcal{T}_B Q_{\theta} - Q_{\theta} \right\|_{1,d_{\mu_0}^{\pi_{\theta}}}$ based on the definition. Similar to Theorem~\ref{thm: stability}, we prove this objective cannot ensure robustness, while $(\infty,d_{\mu_0}^\pi)$-norm is necessary for adversarial robustness (see Appendix \ref{app: Stability of DQN: the Good}).
\begin{theorem}
     In the practical DQN procedure, the Bellman optimality equations $\mathcal{T}_B Q = Q$ is $(  L^{\infty,d_{\mu_0}^\pi}( \mathcal{S}\times\mathcal{A}),  L^p( \mathcal{S}\times\mathcal{A}))$-stable for all $1\le p \le \infty$, while it is not $(  L^{q,d_{\mu_0}^\pi}( \mathcal{S}\times\mathcal{A}),  L^p( \mathcal{S}\times\mathcal{A}))$-stable for all $1 \le q < p\le \infty$. 
\end{theorem}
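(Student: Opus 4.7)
The plan is to mirror Theorem~\ref{thm: stability}, replacing the unweighted $L^q$-norm on the residual side with the weighted $(q, d_{\mu_0}^\pi)$-seminorm. The positive half reduces to the existing $L^\infty$-contraction analysis for $\mathcal{T}_B$, and the negative half directly recycles the counterexample used to prove Theorem~\ref{thm: stability}(1).

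For the stability direction I will exploit the remark immediately after the definition of $\|\cdot\|_{p,d_{\mu_0}^\pi}$: in the practical DQN setting the visitation density is positive almost everywhere, so $\|\cdot\|_{\infty,d_{\mu_0}^{\pi_\theta}}$ upgrades to a genuine norm comparable to $\|\cdot\|_\infty$ via the essential infimum $c_\pi := \operatorname{ess\,inf} d_{\mu_0}^{\pi_\theta}$. Chaining this comparison with the textbook $\gamma$-contraction bound $\|Q-Q^*\|_\infty \le (1-\gamma)^{-1}\|\mathcal{T}_B Q - Q\|_\infty$ and the finite-measure embedding $\|f\|_p \le \mu(\mathcal{S}\times\mathcal{A})^{1/p}\|f\|_\infty$ (available because $\mathcal{S}$ is compact and $\mathcal{A}$ is finite) yields the required estimate $\|Q-Q^*\|_p \le C \|\mathcal{T}_B Q - Q\|_{\infty,d_{\mu_0}^\pi}$ with explicit constant $C = \mu(\mathcal{S}\times\mathcal{A})^{1/p}/\bigl(c_\pi(1-\gamma)\bigr)$ for every $1\le p\le\infty$.

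For the instability direction I will take the same MDP and counterexample sequence $Q_n = Q^* + \epsilon_n \mathbf{1}_{A_n}$ as in the proof of Theorem~\ref{thm: stability}(1), with $\mu(A_n)\to 0$ and $\epsilon_n$ calibrated so that $\|\mathcal{T}_B Q_n - Q_n\|_q\to 0$ while $\|Q_n - Q^*\|_p$ stays bounded below, exploiting the gap $q<p$ through the height–width scaling $\epsilon_n \mu(A_n)^{1/r}$. Since $d_{\mu_0}^{\pi_\theta}$ is essentially bounded above in the setting of interest, the pointwise inequality $|d\cdot f|\le \|d\|_\infty |f|$ gives $\|\mathcal{T}_B Q_n - Q_n\|_{q,d_{\mu_0}^\pi}\le \|d_{\mu_0}^{\pi_\theta}\|_\infty\|\mathcal{T}_B Q_n - Q_n\|_q \to 0$, which together with the lower bound on $\|Q_n - Q^*\|_p$ contradicts $(L^{q,d_{\mu_0}^\pi},L^p)$-stability.

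The main obstacle is justifying the positivity condition $c_\pi>0$ on which the stability half leans: generically $d_{\mu_0}^{\pi_\theta}$ only satisfies the a.e.\ positivity flagged after the definition, without a uniform lower bound. The cleanest resolution I expect is either to invoke an exploration/ergodicity hypothesis implicit in the phrase \emph{practical DQN procedure}, or to split $\mathcal{S}\times\mathcal{A}$ into a high-density region treated by the weighted estimate and a low-density remainder controlled by a direct re-application of Theorem~\ref{thm: stability}(2); choosing the split so that the two contributions assemble into a single finite constant $C$ is the delicate step of the argument.
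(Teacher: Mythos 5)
Your proposal is correct, and the two halves relate to the paper's proof in different ways. The instability half is essentially identical to the paper's: the paper takes the MDP and the tall--narrow perturbation from Theorem~\ref{thm: stability}(1) (a trapezoidal bump on $Q(\cdot,a_2)$ rather than a literal indicator, but that is cosmetic since you are citing the existing construction) and transfers it to the weighted seminorm via exactly the bound $\|\mathcal{T}_BQ-Q\|_{q,d_{\mu_0}^\pi}\le M_{d_{\mu_0}^\pi}\|\mathcal{T}_BQ-Q\|_{q}$ that you invoke. For the stability half your route is genuinely more elementary: chaining the sup-norm $\gamma$-contraction of $\mathcal{T}_B$ with $\|f\|_{\infty}\le c_\pi^{-1}\|f\|_{\infty,d_{\mu_0}^\pi}$ and the finite-measure embedding $\|f\|_{p}\le\mu(\mathcal{S}\times\mathcal{A})^{1/p}\|f\|_{\infty}$ covers every $1\le p\le\infty$ in one stroke with the clean constant $C=\mu(\mathcal{S}\times\mathcal{A})^{1/p}/\bigl(c_\pi(1-\gamma)\bigr)$. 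The paper instead specializes its general weighted stability theorem (the weighted analogue of Theorem~\ref{thm: stability}(2)), which carries the conditions $C_{\mathbb{P},p}<1/\gamma$ and a lower bound on $p$ involving $\log(|\mathcal{A}|\mu(\mathcal{S}))$, so it directly yields the claim only for sufficiently large $p$; passing to all $p$ then requires the same finite-measure embedding you use, a step the corollary leaves implicit. What the paper's heavier machinery buys is the more general $(L^{q,d_{\mu_0}^\pi},L^p)$ statement for finite $q\ge p$, which the corollary does not actually need. Finally, the obstacle you flag---needing $c_\pi=\inf d_{\mu_0}^\pi>0$ rather than mere a.e.\ positivity---is resolved in the paper exactly by your option (a): it is folded into the phrase ``practical DQN procedure'' via a remark that $\epsilon$-greedy exploration visits every state--action pair with positive probability, so no high/low-density splitting is needed (and the paper's own justification of the uniform lower bound is no more rigorous than an assumption).
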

We also investigate the stability when $d_{\mu_0}^\pi$ is a probability mass function in Appendix \ref{app:seminorm} and \ref{app:Stability of DQN: the Bad}.

\subsection{Consistent Adversarial Robust Objective}

Inspired by the theoretical analysis, we propose to train robust DQNs with
$\mathcal{L}_{car}(\theta) = \| \mathcal{T}_B Q_{\theta} - Q_{\theta}\|_{\infty,d^{\pi_{\theta}}_{\mu_0}}$, which is equal to (see Appendix \ref{app:derive car-dqn})
$$\sup_{(s,a)\in\mathcal{S}\times\mathcal{A}} d_{\mu_0}^{\pi_\theta}(s,a) \max_{s_\nu \in B_\epsilon(s)}  \left| \mathcal{T}_{B} Q_\theta(s_\nu,a) - Q_\theta(s_\nu,a) \right|,$$
where $\pi_\theta$ is a behavior policy related to $Q_\theta$ and it is usually $\epsilon$-greedy policy derived from $Q_\theta$. Since interactions with the environment in an SA-MDP happen based on the true state $s$ rather than $s_\nu$, $\mathcal{T}_{B}Q_\theta(s_\nu, a)$ cannot be directly estimated. Thus, we exploit $\mathcal{T}_{B}Q_\theta(s,a)$ to substituted it, attaining a surrogate objective $\mathcal{L}_{car}^{train}(\theta)$:
$$\sup_{(s,a)\in\mathcal{S}\times\mathcal{A}} d_{\mu_0}^{\pi_\theta}(s,a) \max_{s_\nu \in B_\epsilon(s)}  \left| \mathcal{T}_{B} Q_\theta(s,a) - Q_\theta(s_\nu,a) \right|,$$
which can bound $\mathcal{L}_{car}$, especially in smooth environments.
Denote $\mathcal{L}_{car}^{diff}(\theta)$ as the following:
$$\sup_{(s,a)\in\mathcal{S}\times\mathcal{A}} d_{\mu_0}^{\pi_\theta}(s,a) \max_{s_\nu \in B_\epsilon(s)} \left| \mathcal{T}_{B} Q_\theta(s_\nu,a) - \mathcal{T}_{B}Q_\theta(s,a) \right|.$$

\begin{theorem}[Bounding $\mathcal{L}_{car}$  with  $\mathcal{L}_{car}^{train}$]\label{thm:bound car objective} \
We have that
    \begin{equation}\notag
        \left| \mathcal{L}_{car}^{train}(\theta) -  \mathcal{L}_{car}^{diff}(\theta)\right| \le \mathcal{L}_{car}(\theta) \le \mathcal{L}_{car}^{train}(\theta) + \mathcal{L}_{car}^{diff}(\theta).
    \end{equation}    
Further, suppose the environment is $\left(L_r, L_{\mathbb{P}}\right)$-smooth and suppose $Q_\theta$ and $r$ are uniformly bounded, i.e. $\exists\ M_Q,M_r >0$ such that $\left|Q_\theta(s,a)\right| \le M_Q,\ \left|r(s,a)\right| \le M_r\ \forall s\in\mathcal{S}, a\in\mathcal{A}$. If $M:=\sup_{\theta,(s,a)\in\mathcal{S}\times\mathcal{A}} d_{\mu_0}^{\pi_\theta}(s,a) <\infty$, then we have
    \begin{equation}\notag
        \mathcal{L}_{car}^{diff}(\theta) \le C_{\mathcal{T}_{B}} \epsilon,
    \end{equation}
    where $C_{\mathcal{T}_{B}}=L_{\mathcal{T}_{B}} M$, $L_{\mathcal{T}_{B}} =  L_r + \gamma C_Q L_{\mathbb{P}}$ and $C_Q = \max\left\{ M_Q, \frac{M_r}{1-\gamma} \right\}$. The definition of $\left(L_r, L_{\mathbb{P}}\right)$-smooth environment is shown in Appendix \ref{app:convergence}.

\end{theorem}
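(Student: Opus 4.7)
The plan is to treat the two parts of the theorem separately. The first inequality is purely an application of the (forward and reverse) triangle inequality in the pointwise absolute value, combined with the fact that the density $d_{\mu_0}^{\pi_\theta}$ is nonnegative so that the weighted supremum behaves subadditively. The second inequality is a Lipschitz bound on $\mathcal{T}_B$ under the smoothness hypothesis on $r$ and $\mathbb{P}$, controlled uniformly by the boundedness of $Q_\theta$ and of $d_{\mu_0}^{\pi_\theta}$.

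For the first part, I would fix $(s,a)$ and $s_\nu \in B_\epsilon(s)$ and write, by the triangle inequality,
\begin{equation}\notag
\bigl|\mathcal{T}_B Q_\theta(s_\nu,a) - Q_\theta(s_\nu,a)\bigr|
\le \bigl|\mathcal{T}_B Q_\theta(s_\nu,a) - \mathcal{T}_B Q_\theta(s,a)\bigr| + \bigl|\mathcal{T}_B Q_\theta(s,a) - Q_\theta(s_\nu,a)\bigr|,
\end{equation}
and its two reverse triangle analogues
\begin{equation}\notag
\bigl|\mathcal{T}_B Q_\theta(s,a) - Q_\theta(s_\nu,a)\bigr|
\le \bigl|\mathcal{T}_B Q_\theta(s_\nu,a) - \mathcal{T}_B Q_\theta(s,a)\bigr| + \bigl|\mathcal{T}_B Q_\theta(s_\nu,a) - Q_\theta(s_\nu,a)\bigr|,
\end{equation}
\begin{equation}\notag
\bigl|\mathcal{T}_B Q_\theta(s_\nu,a) - \mathcal{T}_B Q_\theta(s,a)\bigr|
\le \bigl|\mathcal{T}_B Q_\theta(s_\nu,a) - Q_\theta(s_\nu,a)\bigr| + \bigl|\mathcal{T}_B Q_\theta(s,a) - Q_\theta(s_\nu,a)\bigr|.
\end{equation}
Taking $\max_{s_\nu \in B_\epsilon(s)}$ of both sides distributes inside the sum (since $\max(f+g) \le \max f + \max g$), then multiplying by $d_{\mu_0}^{\pi_\theta}(s,a) \ge 0$ preserves the bound, and finally taking $\sup_{(s,a)}$ again distributes over sums. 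This immediately yields
$\mathcal{L}_{car}(\theta) \le \mathcal{L}_{car}^{train}(\theta) + \mathcal{L}_{car}^{diff}(\theta)$,
$\mathcal{L}_{car}^{train}(\theta) \le \mathcal{L}_{car}(\theta) + \mathcal{L}_{car}^{diff}(\theta)$, and
$\mathcal{L}_{car}^{diff}(\theta) \le \mathcal{L}_{car}(\theta) + \mathcal{L}_{car}^{train}(\theta)$,
the latter two giving the absolute-value lower bound.

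For the second part I would establish a Lipschitz bound on $s \mapsto \mathcal{T}_B Q_\theta(s,a)$ with constant $L_{\mathcal{T}_B} = L_r + \gamma C_Q L_{\mathbb{P}}$. Splitting $\mathcal{T}_B Q_\theta(s_\nu,a) - \mathcal{T}_B Q_\theta(s,a)$ into the reward contribution and the transition contribution, the reward part is bounded by $L_r \|s_\nu - s\| \le L_r \epsilon$ from the $L_r$-Lipschitzness of $r$. For the transition part, the integrand $\max_{a'} Q_\theta(\cdot, a')$ is uniformly bounded by $C_Q$: either by the hypothesis $|Q_\theta|\le M_Q$, or by the standard $M_r/(1-\gamma)$ sup-norm bound that also controls the target, whichever is relevant in the subsequent use; hence
\begin{equation}\notag
\gamma\bigl|\mathbb{E}_{s'\sim\mathbb{P}(\cdot|s_\nu,a)}\!\max_{a'} Q_\theta(s',a') - \mathbb{E}_{s'\sim\mathbb{P}(\cdot|s,a)}\!\max_{a'} Q_\theta(s',a')\bigr|
\le \gamma C_Q \,\|\mathbb{P}(\cdot|s_\nu,a)-\mathbb{P}(\cdot|s,a)\|_{\mathrm{TV}} \le \gamma C_Q L_{\mathbb{P}} \epsilon,
\end{equation}
by the $L_{\mathbb{P}}$-Lipschitzness of $\mathbb{P}$ (in the sense stipulated by the $(L_r,L_{\mathbb{P}})$-smooth definition referenced in Appendix~\ref{app:convergence}). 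Combining and taking $\max_{s_\nu \in B_\epsilon(s)}$ gives the pointwise estimate $L_{\mathcal{T}_B}\epsilon$; multiplying by $d_{\mu_0}^{\pi_\theta}(s,a)$, which is uniformly bounded by $M$, and taking $\sup_{(s,a)}$ then yields $\mathcal{L}_{car}^{diff}(\theta) \le M L_{\mathcal{T}_B}\epsilon = C_{\mathcal{T}_B}\epsilon$.

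The main obstacle is the transition term in the Lipschitz bound: one must verify that the precise notion of $L_{\mathbb{P}}$-smoothness adopted in Appendix~\ref{app:convergence} is strong enough to dominate the expectation difference by $L_{\mathbb{P}}\|s_\nu-s\|$ times a uniform bound on $\max_{a'}Q_\theta$, and one must justify the choice $C_Q = \max\{M_Q, M_r/(1-\gamma)\}$ as a valid such bound (the $M_r/(1-\gamma)$ branch would be used to cover also the target iterate $\mathcal{T}_B Q_\theta$ if it were to appear inside the expectation instead of $Q_\theta$). Apart from this bookkeeping, the remaining steps are routine triangle inequalities and distribution of $\sup$/$\max$ over sums.
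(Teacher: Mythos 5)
Your proposal is correct and follows essentially the same route as the paper's proof: the sandwich bound is obtained from pointwise triangle inequalities among the three quantities $\mathcal{T}_B Q_\theta(s_\nu,a)-Q_\theta(s_\nu,a)$, $\mathcal{T}_B Q_\theta(s,a)-Q_\theta(s_\nu,a)$, and $\mathcal{T}_B Q_\theta(s_\nu,a)-\mathcal{T}_B Q_\theta(s,a)$ pushed through $\max_{s_\nu}$ and the weighted supremum, and the bound on $\mathcal{L}_{car}^{diff}(\theta)$ follows from the Lipschitz estimate $|\mathcal{T}_B Q_\theta(s_\nu,a)-\mathcal{T}_B Q_\theta(s,a)|\le (L_r+\gamma C_Q L_{\mathbb{P}})\|s_\nu-s\|$, which the paper's $L^1$ form of transition smoothness and the bound $M_Q\le C_Q$ on $\max_{a'}Q_\theta$ deliver exactly as you anticipate. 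The only cosmetic difference is that you derive the lower bound from two symmetric forward triangle inequalities, whereas the paper uses a reverse triangle inequality combined with its lemma $|\sup f-\sup g|\le\sup|f-g|$; both are valid.
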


Theorem \ref{thm:bound car objective} suggests that $\mathcal{L}_{car}^{train}(\theta)$ is a proper surrogate objective from the optimization perspective. It also provides an insight into potential instability during robust training:
If $\mathcal{L}_{car}^{train}(\theta)$ is minimized to a small value yet less than $\mathcal{L}_{car}^{diff}(\theta)$, $\mathcal{L}_{car}(\theta)$ may tend to increase or overfit.


In implementation, to fully utilize each sample in the batch, we derive the soft version $\mathcal{L}_{car}^{soft}(\theta)$ of the CAR objective (derivation seen in Appendix \ref{app:derive car-dqn}):
\begin{align}\notag
    &\sum_{i\in \mathcal{\left|B\right|}} \alpha_i \max_{s_\nu \in B_\epsilon(s_i)} \left|r_i + \gamma \max_{a^\prime} Q_{\bar{\theta}}(s_i^\prime,a^\prime) - Q_{\theta}(s_\nu,a_i)  \right|, \\
    &\text{where} \ \alpha_i = \frac{e^{\frac{1}{\lambda} \max_{s_\nu } \left|r_i + \gamma \max_{a^\prime} Q_{\bar{\theta}}(s_i^\prime,a^\prime) - Q_{\theta}(s_\nu,a_i)  \right|}}{\sum_{i\in \mathcal{\left|B\right|}} e^{\frac{1}{\lambda} \max_{s_\nu } \left|r_i + \gamma \max_{a^\prime} Q_{\bar{\theta}}(s_i^\prime,a^\prime) - Q_{\theta}(s_\nu,a_i)  \right|}}. \notag
\end{align}
$\mathcal{B}$ represents a batch of transition pairs sampled from the replay buffer. $\bar{\theta}$ is the parameter of target network and $\lambda$ is the coefficient to control the level of softness.

\section{Experiments}\label{sec:exp}

In this section, we conduct extensive comparison and ablation experiments to validate the rationality of our theoretical analysis and the effectiveness of CAR-DQN. 
\begin{table*}[!t]
\caption{Average episode rewards $\pm$ standard error of the mean over 50 episodes on baselines and CAR-DQN. The best results of the algorithm with the same type of solver are highlighted in bold. CAR-DQN (PGD) outperforms SA-DQN (PGD) in all metrics and achieves remarkably better robustness ($110\%$ higher reward) on RoadRunner. CAR-DQN (cov) outperforms baselines in a majority of cases.}
\label{table: compare}
\vskip 0.15in
\resizebox{\textwidth}{!}{%
\begin{tabular}{cc|cccc||cccc}
\hline\hline
\multicolumn{2}{c|}{\multirow{3}{*}{Model}}                                                                         & \multicolumn{4}{c||}{\textbf{Pong}}                                                                                                                                    & \multicolumn{4}{c}{\textbf{BankHeist}}                                                                                                                                    \\ \cline{3-10} 
\multicolumn{2}{c|}{}                                                                                               & \multicolumn{1}{c|}{\multirow{2}{*}{\begin{tabular}[c]{@{}c@{}}Natural\\ Reward\end{tabular}}} & PGD                     & MinBest                 & ACR     & \multicolumn{1}{c|}{\multirow{2}{*}{\begin{tabular}[c]{@{}c@{}}Natural\\ Reward\end{tabular}}} & PGD                       & MinBest                   & ACR     \\ \cline{4-6} \cline{8-10} 
\multicolumn{2}{c|}{}                                                                                               & \multicolumn{1}{c|}{}                                                                          & \multicolumn{3}{c||}{$\epsilon=1/255$}                       & \multicolumn{1}{c|}{}                                                                          & \multicolumn{3}{c}{$\epsilon=1/255$}                            \\ \hline
\multicolumn{1}{c|}{Standard}                                                                      & DQN            & \multicolumn{1}{c|}{$21.0 \pm 0.0$}                                                            & $-21.0 \pm 0.0$         & $-21.0 \pm 0.0$         & $0$     & \multicolumn{1}{c|}{$1317.2 \pm 4.2$}                                                          & $22.2 \pm 1.9$            & $0.0 \pm 0.0$             & $0$     \\ \hline
\multicolumn{1}{c|}{\multirow{2}{*}{PGD}}                                                          & SA-DQN         & \multicolumn{1}{c|}{$21.0 \pm 0.0$}                                                            & $21.0 \pm 0.0$          & $21.0 \pm 0.0$          & $0$     & \multicolumn{1}{c|}{$1248.8 \pm 1.4$}                                                          & $965.8 \pm 35.9$          & $1118.0 \pm 6.3$          & $0$     \\
\multicolumn{1}{c|}{}                                                                              & CAR-DQN (Ours) & \multicolumn{1}{c|}{$21.0 \pm 0.0$}                                                            & $21.0 \pm 0.0$          & $21.0 \pm 0.0$          & $0$     & \multicolumn{1}{c|}{\textbf{$\bf 1307.0 \pm 6.1$}}                                                 & \textbf{$\bf 1243.2 \pm 7.4$} & \textbf{$\bf 1242.6 \pm 8.4$} & $0$     \\ \hline
\multicolumn{1}{c|}{\multirow{4}{*}{\begin{tabular}[c]{@{}c@{}}Convex \\ Relaxation\end{tabular}}} & SA-DQN         & \multicolumn{1}{c|}{$21.0 \pm 0.0$}                                                            & $21.0 \pm 0.0$          & $21.0 \pm 0.0$          & $1.000$ & \multicolumn{1}{c|}{$1236.0 \pm 1.4$}                                                          & $1232.2 \pm 2.5$          & $1232.2 \pm 2.5$          & $0.991$ \\
\multicolumn{1}{c|}{}                                                                              & RADIAL-DQN     & \multicolumn{1}{c|}{$21.0 \pm 0.0$}                                                            & $21.0 \pm 0.0$          & $21.0 \pm 0.0$          & $0.898$ & \multicolumn{1}{c|}{$1341.8 \pm 3.8$}                                                          & $1341.8 \pm 3.8$          & $1341.8 \pm 3.8$          & $0.982$ \\
\multicolumn{1}{c|}{}                                                                              & WocaR-DQN      & \multicolumn{1}{c|}{$21.0 \pm 0.0$}                                                            & $21.0 \pm 0.0$          & $21.0 \pm 0.0$          & $0.979$ & \multicolumn{1}{c|}{$1315.0 \pm 6.1$}                                                          & $1312.0 \pm 6.1$          & $1312.0 \pm 6.1$          & $0.987$ \\
\multicolumn{1}{c|}{}                                                                              & CAR-DQN (Ours) & \multicolumn{1}{c|}{$21.0 \pm 0.0$}                                                            & $21.0 \pm 0.0$          & $21.0 \pm 0.0$          & $0.986$ & \multicolumn{1}{c|}{\textbf{$\bf 1349.6 \pm 3.0$}}                                                 & \textbf{$\bf 1347.6 \pm 3.6$} & \textbf{$\bf 1347.4 \pm 3.6$} & $0.974$ \\ \hline
\hline
\multicolumn{2}{c|}{\multirow{3}{*}{Model}}                                                                         & \multicolumn{4}{c||}{\textbf{Freeway}}                                                                                                                                 & \multicolumn{4}{c}{\textbf{RoadRunner}}                                                                                                                                   \\ \cline{3-10} 
\multicolumn{2}{c|}{}                                                                                               & \multicolumn{1}{c|}{\multirow{2}{*}{\begin{tabular}[c]{@{}c@{}}Natural\\ Reward\end{tabular}}} & PGD                     & MinBest                 & ACR     & \multicolumn{1}{c|}{\multirow{2}{*}{\begin{tabular}[c]{@{}c@{}}Natural\\ Reward\end{tabular}}} & PGD                       & MinBest                   & ACR     \\ \cline{4-6} \cline{8-10} 
\multicolumn{2}{c|}{}                                                                                               & \multicolumn{1}{c|}{}                                                                          & \multicolumn{3}{c||}{$\epsilon=1/255$}                       & \multicolumn{1}{c|}{}                                                                          & \multicolumn{3}{c}{$\epsilon=1/255$}                            \\ \hline
\multicolumn{1}{c|}{Standard}                                                                      & DQN            & \multicolumn{1}{c|}{$33.9 \pm 0.0$}                                                            & $0.0 \pm 0.0$           & $0.0 \pm 0.0$           & $0$     & \multicolumn{1}{c|}{$41492 \pm 903$}                                                           & $0 \pm 0$                 & $0 \pm 0$                 & $0$     \\ \hline
\multicolumn{1}{c|}{\multirow{2}{*}{PGD}}                                                          & SA-DQN         & \multicolumn{1}{c|}{$33.6 \pm 0.1$}                                                            & $23.4 \pm 0.2$          & $21.1 \pm 0.2$          & $0.250$ & \multicolumn{1}{c|}{$33380 \pm 611$}                                                           & $20482 \pm 1087$          & $24632 \pm 812$           & $0$     \\
\multicolumn{1}{c|}{}                                                                              & CAR-DQN (Ours) & \multicolumn{1}{c|}{\textbf{$\bf 34.0 \pm 0.0$}}                                                   & \textbf{$\bf 33.7 \pm 0.1$} & \textbf{$\bf 33.7 \pm 0.1$} & $0$     & \multicolumn{1}{c|}{\textbf{$\bf 49700 \pm 1015$}}                                                 & \textbf{$\bf 43286 \pm 801$}  & \textbf{$\bf 48908 \pm 1107$} & $0$     \\ \hline
\multicolumn{1}{c|}{\multirow{4}{*}{\begin{tabular}[c]{@{}c@{}}Convex \\ Relaxation\end{tabular}}} & SA-DQN         & \multicolumn{1}{c|}{$30.0 \pm 0.0$}                                                            & $30.0 \pm 0.0$          & $30.0 \pm 0.0$          & $1.000$ & \multicolumn{1}{c|}{$46372 \pm 882$}                                                           & $44960\pm 1152$           & $45226 \pm 1102$          & $0.819$ \\
\multicolumn{1}{c|}{}                                                                              & RADIAL-DQN     & \multicolumn{1}{c|}{$33.1 \pm 0.1$}                                                            & \textbf{$\bf 33.3 \pm 0.1$} & \textbf{$\bf 33.3 \pm 0.1$} & $0.998$ & \multicolumn{1}{c|}{$46224\pm 1133$}                                                           & $45990 \pm 1112$          & $46082 \pm 1128$          & $0.994$ \\
\multicolumn{1}{c|}{}                                                                              & WocaR-DQN      & \multicolumn{1}{c|}{$30.8 \pm 0.1$}                                                            & $31.0 \pm 0.0$          & $31.0 \pm 0.0$          & $0.992$ & \multicolumn{1}{c|}{$43686 \pm 1608$}                                                          & $45636 \pm 706$           & $45636 \pm 706$           & $0.956$ \\
\multicolumn{1}{c|}{}                                                                              & CAR-DQN (Ours) & \multicolumn{1}{c|}{\textbf{$\bf 33.2 \pm 0.1$}}                                                   & $33.2 \pm 0.1$          & $33.2 \pm 0.1$          & $0.981$ & \multicolumn{1}{c|}{\textbf{$\bf 49398 \pm 1106$}}                                                 & \textbf{$\bf 49456 \pm 992$}  & \textbf{$\bf 47526 \pm 1132$} & $0.760$ \\ \hline
\end{tabular}%
}
\end{table*}

\subsection{Implementation details}

\textbf{Environments.}\quad
 Following recent works~\cite{zhang2020robust, oikarinen2021robust, liang2022efficient},
we conduct experiments on four Atari video games~\cite{brockman2016openai}, including Pong, Freeway, BankHeist, and RoadRunner. These environments are characterized by high-dimensional pixel inputs and discrete action spaces. We pre-process the input images into $84\times 84$ grayscale images and normalize the pixel values to the range $[0,1]$. In each environment, agents execute an action every 4 frames, skipping the other frames without frame stacking. All rewards are clipped to the range $[-1,1]$.

\begin{figure}[t]
    \centering
    \begin{subfigure}
        \centering
\includegraphics[width=0.23\textwidth]{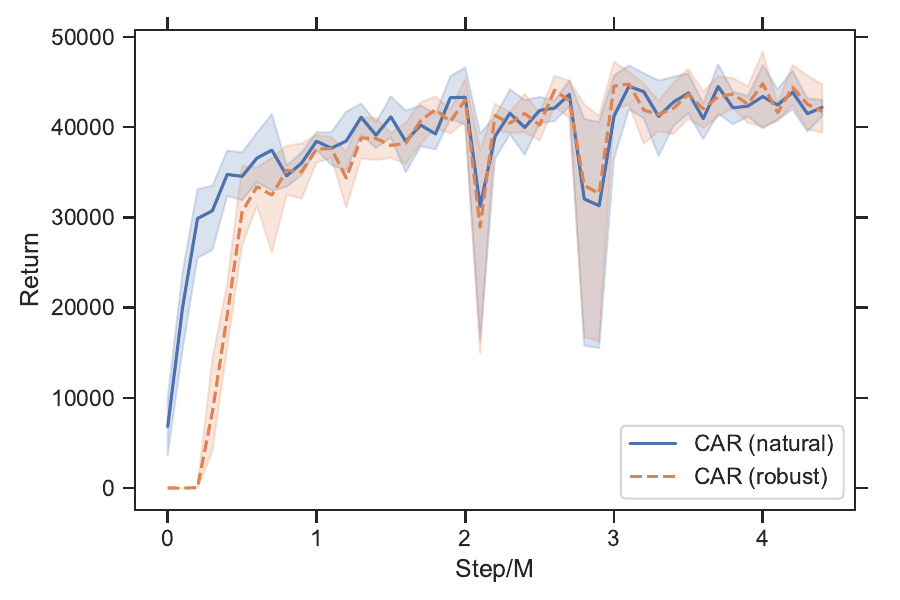} 
    \end{subfigure}
    \begin{subfigure}
        \centering
\includegraphics[width=0.23\textwidth]{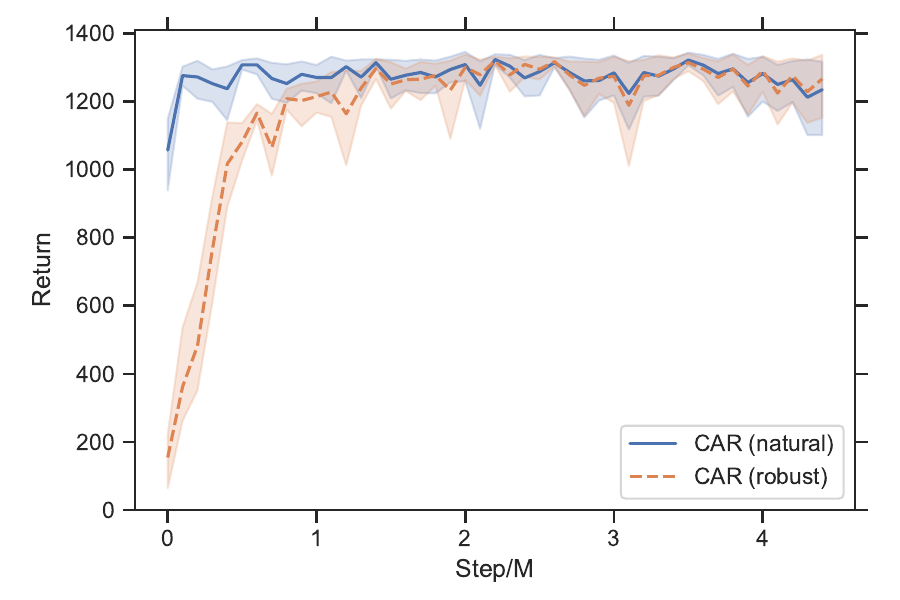}
    \end{subfigure}
    \\
    \vskip -0.1in
    \begin{minipage}{0.23\textwidth}
        \centering
        \quad \scriptsize{RoadRunner} 
    \end{minipage}
    \begin{minipage}{0.23\textwidth}
        \centering
        \quad \scriptsize{BankHeist}
    \end{minipage}
    \\
    \vskip 0.05in
    \begin{subfigure}
        \centering
\includegraphics[width=0.23\textwidth]{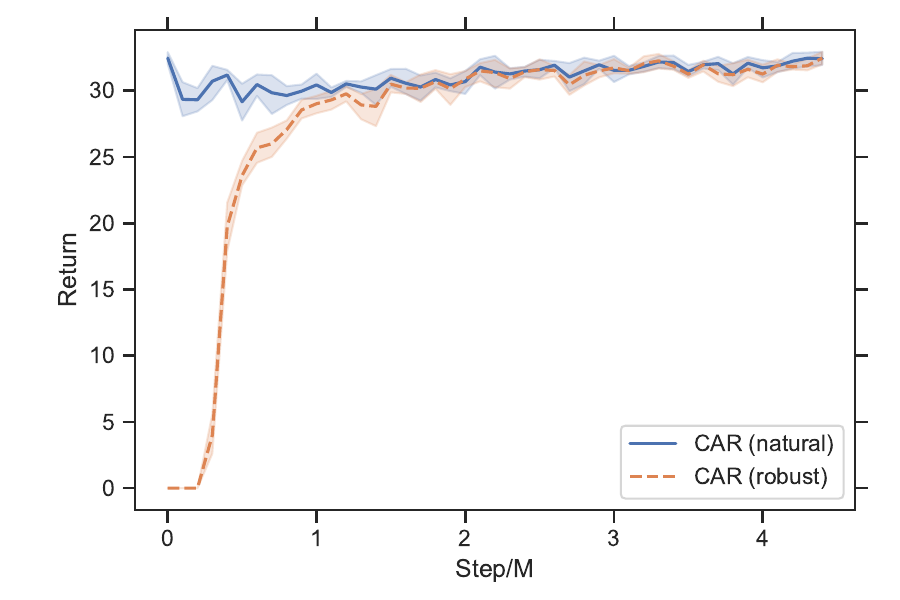}
    \end{subfigure}
    \begin{subfigure}
        \centering
\includegraphics[width=0.23\textwidth]{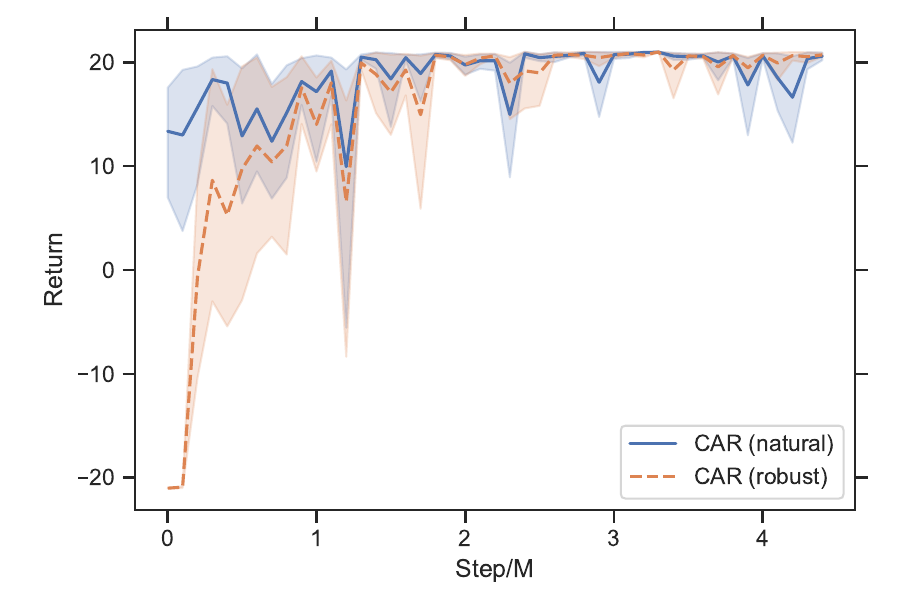}
    \end{subfigure}
    \\
    \vskip -0.1in
    \begin{minipage}{0.23\textwidth}
        \centering
        \quad \scriptsize{Freeway}
    \end{minipage}
    \begin{minipage}{0.23\textwidth}
        \centering
        \setlength{\parindent}{0.5em}
        \quad \scriptsize{Pong}
    \end{minipage}
    \vskip -0.05in
    \caption{Episode rewards of CAR-DQN agents with and without 10-step PGD attacks on 4 Atari games and 5 random seeds. As evidenced by the overlap of the two curves,  CAR-DQN achieves the consistency between Bellman optimal policy and ORP. }
    \label{fig:consitency}
    \vskip -0.1in
\end{figure}

\textbf{Baselines.}\quad
We compare CAR-DQN with several state-of-the-art robust training methods. SA-DQN~\cite{zhang2020robust} incorporates a KL-based regularization and solves the inner maximization problem using PGD~\cite{madry2017towards} and CROWN-IBP~\cite{zhang2019towards}, respectively. RADIAL-DQN~\cite{oikarinen2021robust} applies adversarial regularizations based on robustness verification bounds computed by IBP~\cite{gowal2018effectiveness}. We utilize the official code of SA-DQN and RADIAL-DQN and replicate WocaR-DQN, as its official implementation uses a different environment wrapper compared to SA-DQN and RADIAL.

\textbf{Evaluations.}\quad
We evaluate the robustness of agents using three metrics on Atari games: (1) episode return under a 10-steps untargeted PGD attack~\cite{madry2017towards}, (2) episode return under MinBest~\cite{huang2017adversarial}, both of which minimize the probability of selecting the learned best action, and (3) Action Certification Rate (ACR)~\cite{zhang2020robust}. ACR uses relaxation bounds to estimate the percentage of frames where the learned best action is guaranteed to remain unchanged during rollouts under attacks.


\textbf{CAR-DQN.}\quad
We implement CAR-DQN based on Double Dueling DQN~\cite{van2016deep,wang2016dueling} and train all baselines and CAR-DQN for 4.5 million steps, based on the same standard model released by~\citet{zhang2020robust}, which has been trained for 6 million steps. We increase the attack $\epsilon$ from $0$ to $1/255$ in the first 4 million steps using the same smoothed schedule as in~\citet{zhang2020robust, oikarinen2021robust, liang2022efficient}, and then continue training with a fixed $\epsilon$  for the remaining 0.5 million steps. We use Huber loss to replace the absolute value function and separately apply classic gradient-based methods (PGD) and cheap convex relaxation (IBP) to resolve the inner optimization in $\mathcal{L}_{car}^{soft}(\theta)$.
For CAR-DQN with PGD solver, hyper-parameters are the same as SA-DQN~\cite{zhang2020robust}. For CAR-DQN with IBP solver, we update the target network every $2000$ steps, and set learning rate as $1.25\times 10^{-4}$, batch size as $32$, exploration $\epsilon_{exp}$-end as $0.01$, soft coefficient $\lambda=1.0$ and discount factor as $0.99$. We use a replay buffer with a capacity of $2\times 10^{5}$ and Adam optimizer~\cite{kingma2014adam} with $\beta_1=0.9$ and $\beta_2=0.999$.

\subsection{Comparison Results}\label{sec: Comparison}

\begin{figure*}[!t]
    \centering
    \begin{minipage}{0.238\textwidth}
        \centering
        \quad \scriptsize{RoadRunner}
    \end{minipage}
    \begin{minipage}{0.238\textwidth}
        \centering
        \quad \scriptsize{BankHeist}
    \end{minipage}
    \begin{minipage}{0.238\textwidth}
        \centering
        \setlength{\parindent}{0.5em}
        \quad \scriptsize{Freeway}
    \end{minipage}
    \begin{minipage}{0.238\textwidth}
        \centering
        \setlength{\parindent}{1em}
        \quad \scriptsize{Pong}
    \end{minipage}
    \\
    \rotatebox{90}{\scriptsize{\qquad \qquad \quad Robust}}
    \begin{subfigure}
        \centering
        \includegraphics[width=0.238\textwidth]{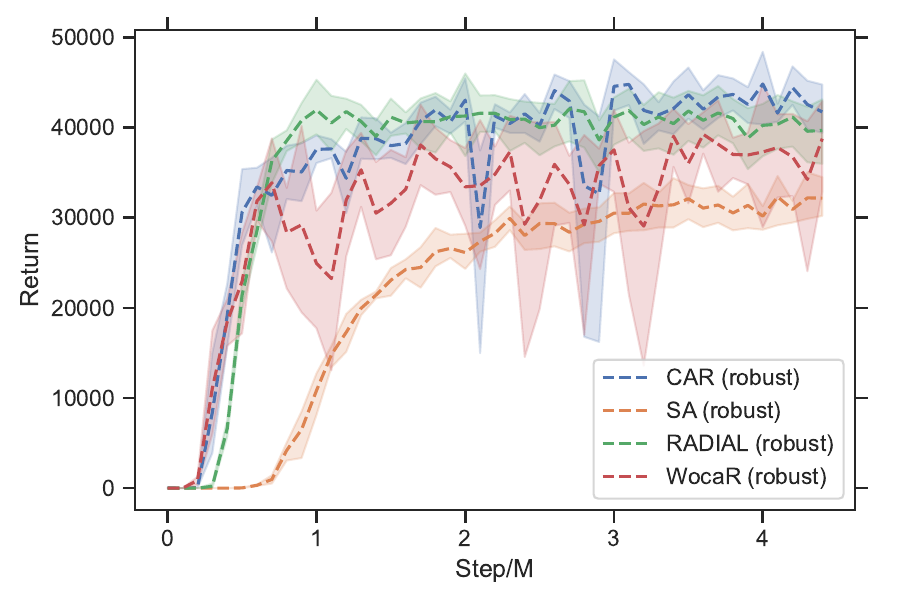}
    \end{subfigure}
    \begin{subfigure}
        \centering
        \includegraphics[width=0.238\textwidth]{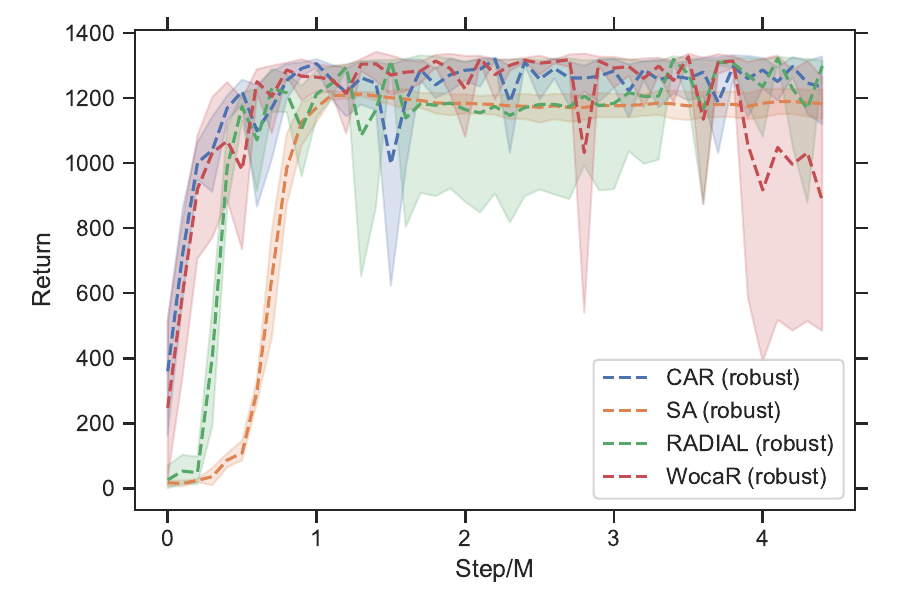}
    \end{subfigure}
    \begin{subfigure}
        \centering
        \includegraphics[width=0.238\textwidth]{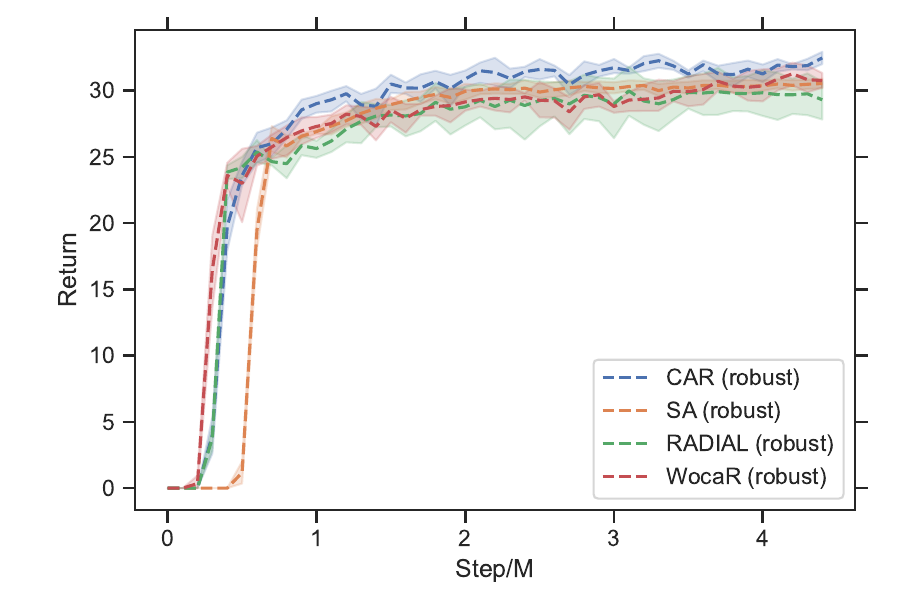}
    \end{subfigure}
    \begin{subfigure}
        \centering
        \includegraphics[width=0.238\textwidth]{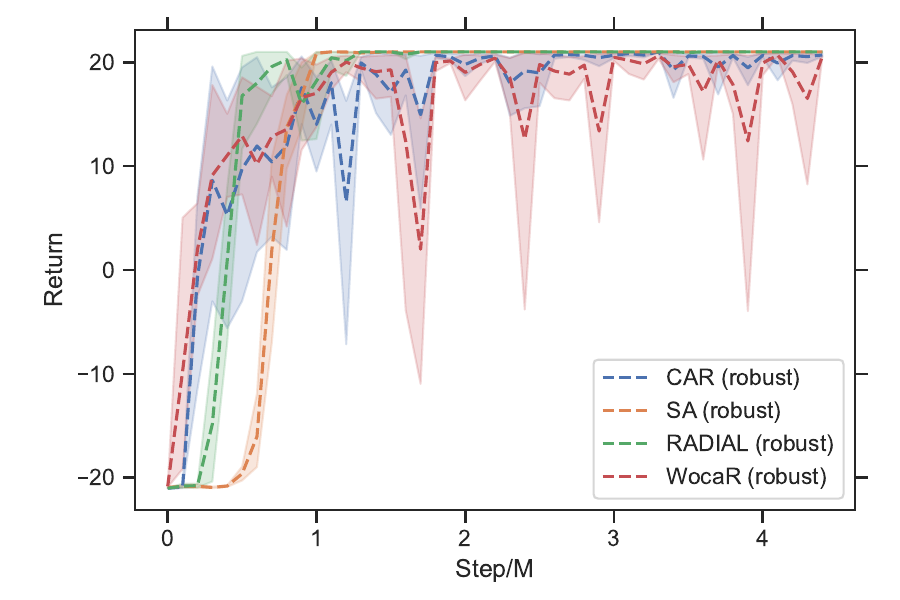}
    \end{subfigure}\\
    \rotatebox{90}{\scriptsize{\qquad \qquad \quad Natural}}
    \begin{subfigure}
        \centering
        \includegraphics[width=0.238\textwidth]{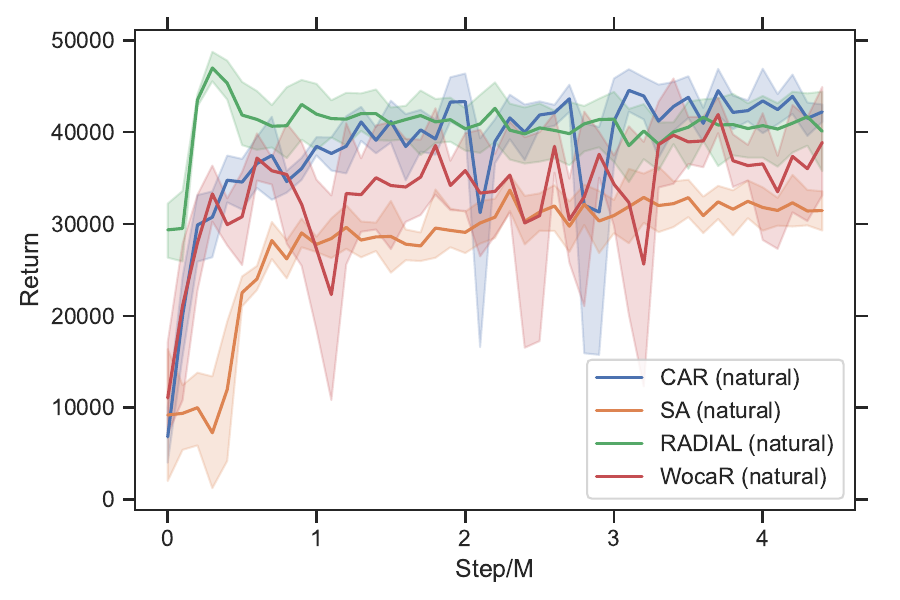}
    \end{subfigure}
    \begin{subfigure}
        \centering
        \includegraphics[width=0.238\textwidth]{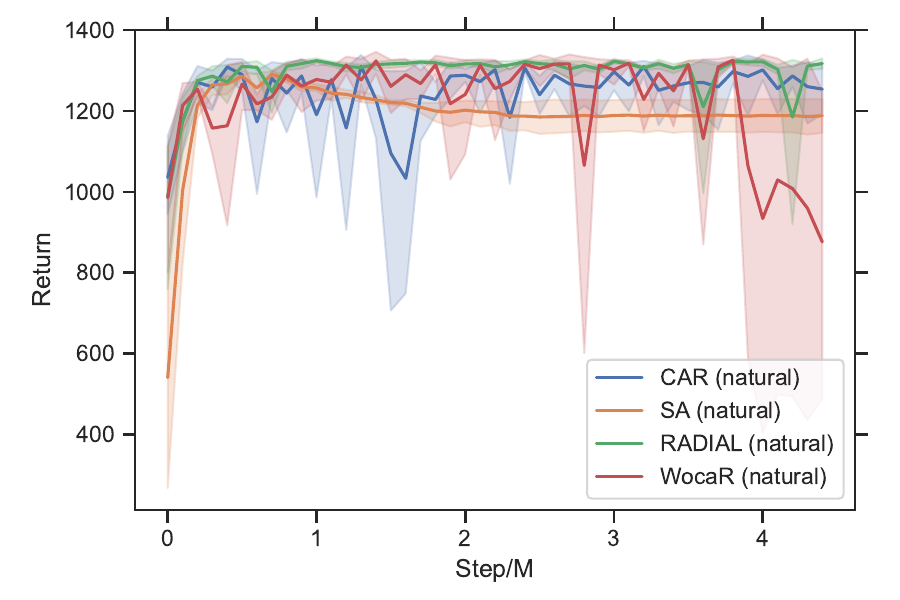}
    \end{subfigure}
    \begin{subfigure}
        \centering
        \includegraphics[width=0.238\textwidth]{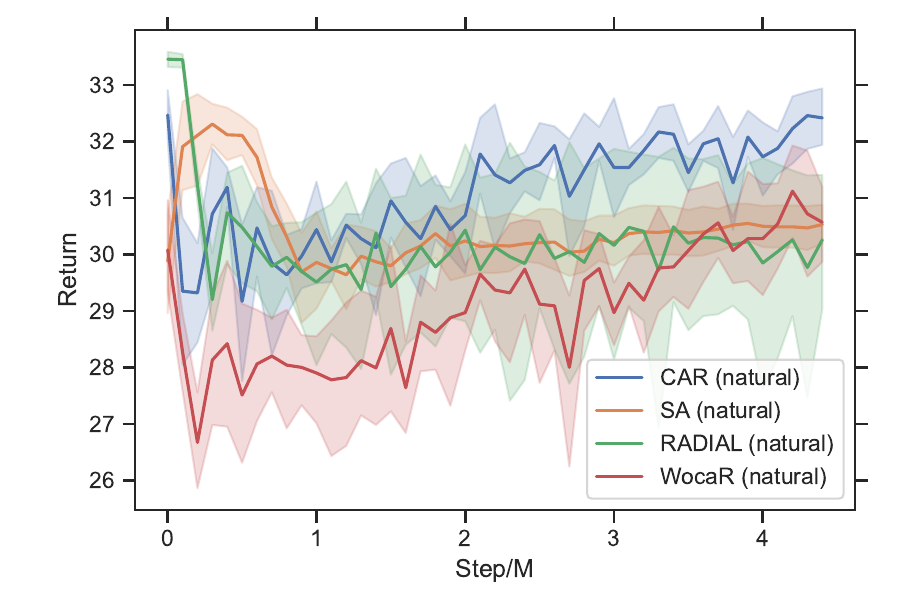}
    \end{subfigure}
    \begin{subfigure}
        \centering
        \includegraphics[width=0.238\textwidth]{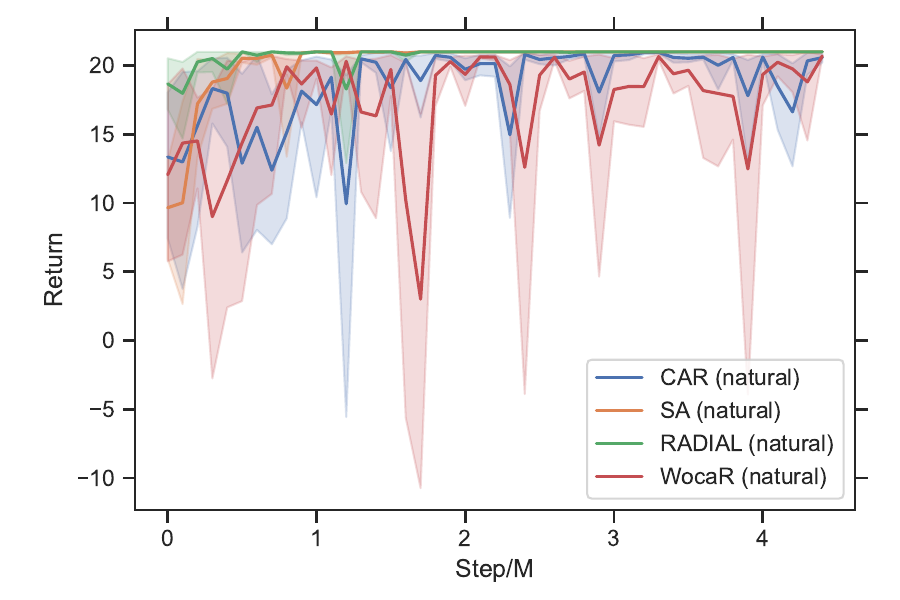}
    \end{subfigure}
    \vspace{-2em}
    \caption{Episode rewards of baselines and CAR-DQN with and without PGD attacks on 4 Atari games.
    Shaded regions are computed over 5 random seeds. CAR-DQN demonstrates superior natural and robust performance in all environments.
    }
    \label{fig: natural and robustness rewards during training}
    \vskip -0.05in
\end{figure*}

\textbf{Evaluation on benchmarks.}\quad Table \ref{table: compare} presents the natural and robust performance, with all agents trained and attacked using a perturbation radius of $\epsilon=1/255$. More results and discussion are provided in Appendix \ref{app: add exp} and \ref{app: exp with increasing perturbation radius}. Notably, CAR-DQN agents exhibit superior performance compared to baselines in the most challenging RoadRunner environment, achieving significant improvements in both natural and robust rewards. In the other three games, CAR-DQN can well match the performance of the baselines. Our proposed loss function coupled with the PGD solver, achieves a remarkable return of around 45000 on the RoadRunner environment, outperforming the SA-DQN with the PGD approach. It also attains $60\%$ higher robust rewards under the MinBest attack on the Freeway game.
In these two solvers, we observe that PGD exhibits relatively weaker robust performance compared to the convex relaxation, especially failing to ensure the ACR computed with relaxation bounds. This discrepancy can be attributed to that the PGD solver offers a lower bound surrogate function of the loss, while the IBP solver gives an upper bound.

\textbf{Consistency in natural and PGD attack returns.} \quad
Figure~\ref{fig:consitency} records the natural and PGD attack returns of CAR-DQN agents during training, showcasing a strong alignment between natural performance and robustness across all environments. This alignment validates our theory that the ORP is consistent with the Bellman optimal policy, and confirms the rationality of the proposed CAP. In addition,
Figure~\ref{fig: natural and robustness rewards during training} illustrates the natural episode return and robustness during training for different algorithms. It is worth noting that CAR-DQN agents can fast and stably converge to peak robustness and natural performance across all environments, while other algorithms exhibit unstable trends. For instance, 
the natural reward curves of SA-DQN and WocaR-DQN on BankHeist and RADIAL-DQN on RoadRunner distinctly tend to decrease, and the robust curves of SA-DQN and WocaR-DQN on BankHeist tend to decline. 
This discrepancy primarily stems from their robustness objectives, which diverge from the standard training loss and consequently result in learning sub-optimal actions. In contrast, the proposed consistent objective ensures that CAR-DQN always learns optimal actions in both natural and robust directions.

\textbf{Training efficiency.}\quad Training SA-DQN, RADIAL-DQN, WocaR-DQN, and CAR-DQN costs approximately 27, 12, 20, and 14 hours, respectively. All these models are trained for 4.5 million frames on identical hardware. Additionally, our proposed loss does not incur additional memory consumption compared to vanilla training.

\subsection{Ablation Studies}

\begin{table}[t]
\caption{Performance of DQN with different Bellman $p$-error.}
\label{table: p-error}
\vskip 0.15in
\resizebox{\columnwidth}{!}{%
\begin{tabular}{c|c|c|ccc}
\hline
Environment                 & Norm   & Natural   & PGD              & MinBest          & ACR     \\ \hline
\multirow{3}{*}{Pong}       & $L^1$      & $\bf 21.0 \pm 0.0$   & $-21.0 \pm 0.0$  & $-21.0 \pm 0.0$  & $0$     \\
                            & $L^2$      & $\bf 21.0 \pm 0.0$   & $-21.0 \pm 0.0$  & $-20.8 \pm 0.1$  & $0$     \\
                            & $L^\infty$ & $\bf 21.0 \pm 0.0$   & $\bf 21.0 \pm 0.0$   & $\bf 21.0 \pm 0.0$   & $0.985$ \\ \hline
\multirow{3}{*}{Freeway}    & $L^1$      & $\bf 33.9 \pm 0.1$   & $0.0 \pm 0.0$        & $0.0 \pm 0.0$        & $0$     \\
                            & $L^2$      & $21.8 \pm 0.3$   & $21.7 \pm 0.3$   & $22.1 \pm 0.3$   & $0$     \\
                            & $L^\infty$ & $33.3 \pm 0.1$   & $\bf 33.2 \pm 0.1$   & $\bf 33.2 \pm 0.1$   & $0.981$ \\ \hline
\multirow{3}{*}{BankHeist}  & $L^1$      & $1325.5 \pm 5.7$ & $27.0 \pm 2.0$   & $0.0 \pm 0.0$        & $0$     \\
                            & $L^2$      & $1314.5 \pm 4.0$ & $18.5 \pm 1.5$   & $22.5 \pm 2.6$   & $0$     \\
                            & $L^\infty$ & $\bf 1356.0 \pm 1.7$  & $\bf 1356.5 \pm 1.1$  & $\bf 1356.5 \pm 1.1$  & $0.969$ \\ \hline
\multirow{3}{*}{RoadRunner} & $L^1$      & $43795 \pm 1066$ & $0 \pm 0$        & $0 \pm 0$        & $0$     \\
                            & $L^2$      & $30620 \pm 990$  & $0 \pm 0$        & $0 \pm 0$        & $0$     \\
                            & $L^\infty$ & $\bf 49500 \pm 2106$ & $\bf 48230 \pm 1648$ & $\bf 48050 \pm 1642$ & $0.947$ \\ \hline
\end{tabular}%
}
\vskip -0.1in
\end{table}

\textbf{Necessity of infinity-norm.} \quad
To verify the necessity of the $(\infty, d_{\mu_0}^\pi)$-norm for adversarial robustness, we train DQN agents using the Bellman error under $(1, d_{\mu_0}^\pi)$-norm and $(2, d_{\mu_0}^\pi)$-norm, respectively. We then compare their performance with our CAR-DQN, which approximates the Bellman error under $(\infty, d_{\mu_0}^\pi)$-norm. As shown in Table \ref{table: p-error}, all agents perform well without attacks in the four games. However, the performance of $(1, d_{\mu_0}^\pi)$-norm and $(2, d_{\mu_0}^\pi)$-norm agents highly degrades under strong attacks, receiving episode rewards close to the lowest in each game. These empirical results are highly consistent with Theorem \ref{thm:necessity of infty norm}.

\begin{figure}[t]
    \centering
    \includegraphics[width=\columnwidth]{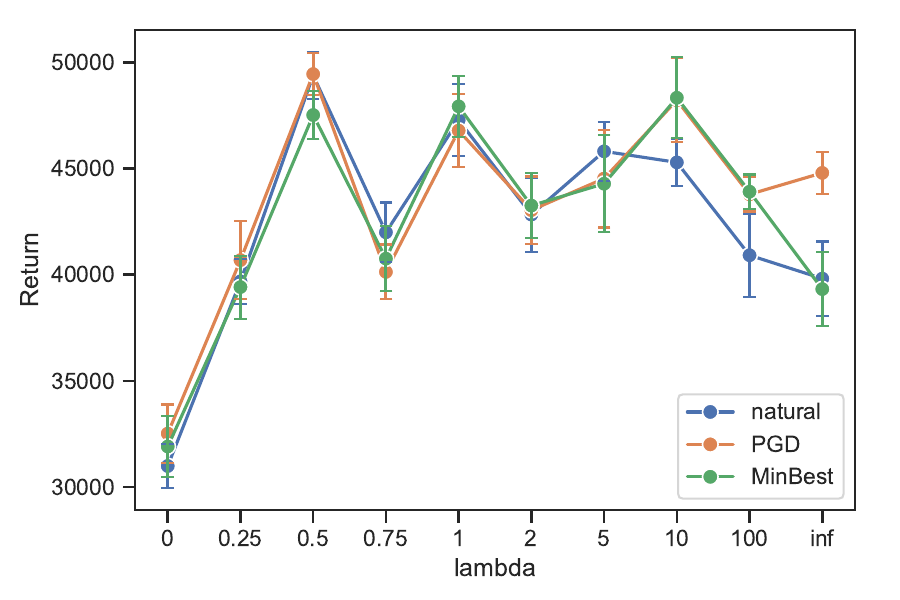}
    \vspace{-2em}
    \caption{Natural, PGD attack, and MinBest attack rewards of CAR-DQN with different soft coefficients on RoadRunner game.}
    \vspace{-1em}
    \label{fig:soft roadrunner}
\end{figure}

\begin{table}[t]
\caption{Ablation studies for soft coefficients on 4 Atari games.}
\label{table:soft coeff}
\vskip 0.15in
\resizebox{\columnwidth}{!}{%
\begin{tabular}{c|c|c|ccc}
\hline
Environment                 & $\lambda$ & Natural    & PGD               & MinBest           & ACR     \\ \hline
\multirow{3}{*}{Pong}       & $0$       & $\bf 21.0 \pm 0.0$    & $\bf 21.0 \pm 0.0$    & $\bf 21.0 \pm 0.0$    & $0.972$ \\
                            & $1$       & $\bf 21.0 \pm 0.0$    & $\bf 21.0 \pm 0.0$    & $\bf 21.0 \pm 0.0$    & $0.985$ \\
                            & $\infty$  & $20.6 \pm 0.1$   & $20.7 \pm 0.1$    & $20.7 \pm 0.1$    & $0.980$ \\ \hline
\multirow{3}{*}{Freeway}    & $0$       & $31.6 \pm 0.2$    & $31.5 \pm 0.1$    & $31.5 \pm 0.1$    & $0.966$ \\
                            & $1$       & $\bf 33.3 \pm 0.1$   & $\bf 33.2 \pm 0.1$   & $\bf 33.2 \pm 0.1$   & $0.981$ \\
                            & $\infty$  & $31.5 \pm 0.1$    & $30.9 \pm 0.3$    & $31.2 \pm 0.2$    & $0.967$ \\ \hline
\multirow{3}{*}{BankHeist}  & $0$       & $1307.5 \pm 11.0$ & $1288.5 \pm 14.0$ & $1284.0 \pm 13.8$ & $0.980$ \\
                            & $1$       & $\bf 1356.0 \pm 1.7$  & $\bf 1356.5 \pm 1.1$  & $\bf 1356.5 \pm 1.1$  & $0.969$ \\
                            & $\infty$  & $1326.0 \pm 4.8$  & $1316.0 \pm 6.8$  & $1314.0 \pm 6.6$  & $0.979$ \\ \hline
\multirow{3}{*}{RoadRunner} & $0$       & $25160 \pm 802$   & $24540 \pm 760$   & $26785 \pm 617$   & $0.007$ \\
                            & $1$       & $\bf 49500 \pm 2106$ & $\bf 48230 \pm 1648$ & $\bf 48050 \pm 1642$ & $0.947$ \\
                            & $\infty$  & $40890 \pm 2075$  & $36760 \pm 1874$  & $36740 \pm 2098$  & $0.940$ \\ \hline
\end{tabular}%
}
\vskip -0.1in
\end{table}

\textbf{Effects of soft coefficient.} \quad
We validate the effectiveness of the soft CAR-DQN loss by adjusting the soft coefficient $\lambda$. We train CAR-DQN agents on the RoadRunner environment with $\lambda$ values ranging from $0$ to $\infty$. When $\lambda=0$ we utilize the sample with the largest adversarial TD-error from a batch, while $\lambda=\infty$ corresponds to averaging over all samples in a batch. It is worth noting that a small $\lambda$ may lead to numerical instability. As depicted in Figure \ref{fig:soft roadrunner}, the agents exhibit similar capabilities when $0.5 \leq \lambda \leq 10$, indicating that the learned policies are not sensitive to the soft coefficient within this range. Table \ref{table:soft coeff} displays the performance of CAR-DQN agents with $\lambda=0,1,\infty$ across four Atari environments. In the RoadRunner, the case $\lambda=0$ yields poor performance, achieving returns around 25000 due to inadequate utilization of the samples. Interestingly, utilizing only the sample with the largest adversarial TD error from a batch achieves good robustness on the other three simpler games. The case $\lambda=\infty$ results in worse robustness compared to other cases with differentiated weights. This suggests that each sample in a batch plays a distinct role in robust training, and we can enhance robust performance by specifying weightings. These results further validate the efficacy of our CAR-DQN loss.

\section{Conclusion}
In this paper, we prove the alignment of the optimal robust policy with the Bellman optimal policy under the consistency assumption of policy. We show that measuring Bellman error in differed $L^p$ spaces yields varied performance, underscoring the necessity of Bellman infinity-error for robustness. We validate these findings through experiments with CAR-DQN, which optimizes a surrogate objective of Bellman infinity-error. We believe this work contributes significantly to unveiling the nature of robustness in Q-learning.
Since our work focuses on value-based DRL with discrete action space, we will extend future research into the policy-based DRL and continuous action space setting.

\section*{Acknowledgements}

This paper is supported by the National Key R\&D (research and development) Program of China (2022YFA1004001) and the National Natural Science Foundation of China (Nos. 11991022, U23B2012). We thank Anqi Li and Wenzhao Liu for valuable feedback on earlier drafts of the paper.

\section*{Impact Statement}

Reinforcing the robustness of Deep Reinforcement Learning agents is crucial for commercial and industrial applications. Unlike previous approaches that focus on imposing smoothness or stability on neural networks through elaborated regularization techniques, this study uncovers the inherent robustness of the Bellman optimal policy under adversarial attack scenarios. We believe this discovery can offer fresh insights to the research community, potentially driving advancements in related applications. 


\bibliography{car_dqn}
\bibliographystyle{icml2024}

\newpage
\appendix
\onecolumn


\section{Theorems and Proofs of Optimal Adversarial Robustness}

\subsection{Reasonablity of the Consistency Assumption} \label{app: reasonable of C assumption}

\begin{theorem}\label{continuous a.e.}
    For any MDP $\mathcal{M}$, let $\mathcal{S}_{nu}$ denote the state set where the optimal action is not unique, i.e. $\mathcal{S}_{nu} = \left\{ s\in\mathcal{S} | \mathop{\arg\max}_a Q^*(s, a) \text{ is not a singleton} \right\}$. If $Q^*(\cdot,a)$ is continuous almost everywhere in $\mathcal{S}$ for all $a\in\mathcal{A}$, we have the following conclusions:
        \begin{itemize}
            \item For almost everywhere $s\in \mathcal{S} \setminus \mathcal{S}_{nu} $, there exists $\epsilon > 0$ such that $B_\epsilon(s) = B^*_\epsilon(s)$.
            \item Given $\epsilon > 0$, let $\mathcal{S}_{nin}$ denote the set of states where \textit{the intrinsic state $\epsilon$-neighourhood} is not the same as \textit{the $\epsilon$-neighourhood}, i.e. $\mathcal{S}_{nin} = \left\{ s\in\mathcal{S} |  B_\epsilon(s) \neq B^*_\epsilon(s) \right\}$. Then, we have $\mathcal{S}_{nin} \subseteq \mathcal{S}_{nu} \cup \mathcal{S}_{0} + B_\epsilon  = \left\{ s_1 + s_2 | s_1 \in \mathcal{S}_{nu} \cup \mathcal{S}_{0},\ \|s_2\| \le \epsilon \right\}$, where $\mathcal{S}_{0}$ is a zero measure set. 
        \end{itemize}
\end{theorem}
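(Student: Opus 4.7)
The plan is to introduce the joint discontinuity set and then use a single ``local constancy of $\arg\max$'' lemma to drive both statements. First I would define $D_a := \{ s \in \mathcal{S} : Q^*(\cdot,a) \text{ is discontinuous at } s \}$ for each $a \in \mathcal{A}$ and set $D := \bigcup_{a \in \mathcal{A}} D_a$. Because $\mathcal{A}$ is finite and each $D_a$ has measure zero by hypothesis, $D$ itself is a null set, and I will take this $D$ to play the role of $\mathcal{S}_0$.

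Next I would establish the main technical lemma: for every $s \in \mathcal{S} \setminus (\mathcal{S}_{nu} \cup D)$ with unique maximizer $a^*(s) \in \arg\max_a Q^*(s,a)$, there is a neighborhood $U$ of $s$ on which $\arg\max_a Q^*(s',a) = \{a^*(s)\}$ for every $s' \in U$. The proof is elementary: let $\delta := \min_{a \neq a^*(s)} \bigl[ Q^*(s,a^*(s)) - Q^*(s,a) \bigr] > 0$, and use continuity of each $Q^*(\cdot,a)$ at $s$ (guaranteed because $s \notin D$) together with the finiteness of $\mathcal{A}$ to pick $U$ on which every $Q^*(\cdot,a)$ deviates from $Q^*(s,a)$ by less than $\delta/3$; the gap then survives uniformly on $U$. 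Part (1) is now immediate: for almost every $s \in \mathcal{S} \setminus \mathcal{S}_{nu}$, namely those also avoiding the null set $D$, pick $\epsilon>0$ with $B_\epsilon(s) \subseteq U$, whence $B_\epsilon(s) = B^*_\epsilon(s)$.

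For Part (2) I would argue by contraposition. Assume $s \notin (\mathcal{S}_{nu} \cup D) + B_\epsilon$, which is exactly $B_\epsilon(s) \cap (\mathcal{S}_{nu} \cup D) = \emptyset$. Then every $s' \in B_\epsilon(s)$ satisfies the hypotheses of the lemma, so the map $s' \mapsto a^*(s')$ is well-defined and locally constant on $B_\epsilon(s)$. Since $B_\epsilon(s) \subseteq \mathbb{R}^d$ is connected and the codomain $\mathcal{A}$ is discrete and finite, any locally constant map from a connected set into $\mathcal{A}$ is globally constant. Hence $a^*(s') = a^*(s)$ for all $s' \in B_\epsilon(s)$, which is exactly $B_\epsilon(s) = B^*_\epsilon(s)$, i.e.\ $s \notin \mathcal{S}_{nin}$. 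Taking the contrapositive and setting $\mathcal{S}_0 = D$ yields $\mathcal{S}_{nin} \subseteq (\mathcal{S}_{nu} \cup \mathcal{S}_0) + B_\epsilon$.

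The main obstacle is not the local-constancy lemma itself but making sure the local-to-global step is legitimate: I need to verify that ``continuous a.e.\ for each action'' really promotes to ``simultaneously continuous off a single null set'' (handled by the finite union above), and that connectedness of the Euclidean ball $B_\epsilon(s)$ suffices to upgrade local constancy of an $\mathcal{A}$-valued map into global constancy. Everything else reduces to elementary continuity estimates; no contraction or fixed-point machinery is needed, and the argument works uniformly in $\epsilon$.
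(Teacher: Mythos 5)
Your proposal is correct, and Part (1) is essentially the paper's own argument (finite action set plus continuity at $s$ off a null set yields a uniform gap on a small ball). For Part (2), however, you take a genuinely different route. The paper argues directly: given $s\in\mathcal{S}_{nin}$, it either finds a point of $\mathcal{S}_{nu}$ inside $B_\epsilon(s)$, or (when all argmaxes on the ball are singletons) it picks the point $s_1\in B_\epsilon(s)$ \emph{closest} to $s$ at which the optimal action differs, and shows $s_1$ must lie in $\mathcal{S}_0:=\mathcal{S}_n\cap\mathcal{S}'$, the set of discontinuity points at which the argmax actually changes in every neighborhood. You instead argue by contraposition with the coarser choice $\mathcal{S}_0 = D$ (the full joint discontinuity set): if $B_\epsilon(s)$ avoids $\mathcal{S}_{nu}\cup D$, local constancy of the argmax plus connectedness of the ball forces global constancy, so $s\notin\mathcal{S}_{nin}$. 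Both choices of $\mathcal{S}_0$ are null, so both satisfy the theorem as stated; the paper's smaller $\mathcal{S}_0$ only matters for its informal claim that these are ``special and rare'' discontinuities, which the formal statement does not require. What your version buys is rigor: the paper's ``closest point'' need not exist (the set of states with a different optimal action need not be closed, so the infimum of distances may not be attained), whereas your local-to-global step sidesteps that entirely. The one caveat is that your connectedness step needs $B_\epsilon(s)\cap\mathcal{S}$ to be connected (e.g.\ $\mathcal{S}$ convex, or the ball contained in $\mathcal{S}$); but the paper's argument tacitly relies on the same kind of assumption when it moves from a neighborhood of $s_1$ toward $s$ inside $\mathcal{S}$, so this is a shared gap rather than a new one you introduced.
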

\begin{proof}
   (1)  Let $\mathcal{S}^{\prime}=\{ s \in \mathcal{S}  |  \exists a \in \mathcal{A},  \text{s.t. $Q^*(s,a)$  is not continuous at } s \}$. Then $\mu(\mathcal{S}^{\prime})=0$ because $Q^*(\cdot,a)$ is continuous almost everywhere in $\mathcal{S}$ for all $a\in\mathcal{A}$ and $\mathcal{A}$ is a finite discrete set. And $Q^*(s,a)$ is continuous in $\left(\mathcal{S} \setminus \mathcal{S}_{nu} \right) \setminus  \mathcal{S}^{\prime} $. Because $ \arg\max_{a}Q^*(s,a) $ is a singleton  for $s\in \left(\mathcal{S} \setminus \mathcal{S}_{nu} \right) \setminus  \mathcal{S}^{\prime}$, define $\arg\max_{a}Q^*(s,a) = \{a_s^*\} $ for any $s\in \left(\mathcal{S} \setminus \mathcal{S}_{nu} \right) \setminus  \mathcal{S}^{\prime} $ . Then $Q^*(s,a_s^*) > Q^*(s,a)$ for a fixed $a\in \mathcal{A} \setminus \{a_{s}^*\}$.  According to continuity of $Q^*(\cdot,a)$ for all $a\in \mathcal{A}$,
   there exists $ \epsilon_a >0$, such that $Q^*(s^{\prime},a_{s}^*) > Q^*(s^{\prime},a)$ for all $s^{\prime} \in  B_{\epsilon_a}(s)$. Because $\mathcal{A}$ is a finite discrete set, let $\epsilon=\min_{a\in \mathcal{A} \setminus {a_s^*}}\{\epsilon_a\}$, then $Q^*(s^{\prime},a_{s^{\prime}}^*) > Q^*(s^{\prime},a)$ for all $s^{\prime} \in  B_\epsilon(s)$  and for all $a\in \mathcal{A} \setminus \{a_{s^{\prime}}^*\}$, i.e. $B_\epsilon(s)=B_\epsilon^*(s)$.

   (2)Let 
    $\mathcal{S}_n=\{ s \in \mathcal{S}  | \forall \epsilon_1>0, \exists s^{\prime} \in B_{\epsilon_1}(s), \ \text{s.t. }    \arg\max_{a}Q^*(s^{\prime},a) \neq \arg\max_{a}Q^*(s,a) \}$ 
    and 
    $\mathcal{S}_0=\mathcal{S}_n \cap \mathcal{S}^{\prime}$. Then $S_0$ is the set of discontinuous points that cause the optimal action to change.  
    And $\mu(\mathcal{S}_0)=\mu(\mathcal{S}_n \cap \mathcal{S}^{\prime})=0$ because $\mu(\mathcal{S}^{\prime})=0$. 
    
    For any $s\in \mathcal{S}_{nin} = \left\{ s\in\mathcal{S} |  B_\epsilon(s) \neq B^*_\epsilon(s) \right\}$, we have the following two cases.

    \textbf{Case 1}. $\exists s^\prime \in B_\epsilon(s)$ s.t. $s^\prime \in \mathcal{S}_{nu}$, then $s \in \mathcal{S}_{nu}+B_\epsilon$, i.e. 
    \begin{equation}
        s \in \mathcal{S}_{nu} \cup \mathcal{S}_{0} + B_\epsilon.
        \label{th2_case1}
    \end{equation}

    \textbf{Case 2}. $\forall s^\prime \in B_\epsilon(s) , s^\prime \notin \mathcal{S}_{nu}$, which means that  $\arg\max_{a}Q^*(s^\prime,a)$ is a singleton for all $s^\prime\in B_{\epsilon}(s)$. Define $\arg\max_{a}Q^*(s^{\prime},a) = \{a_{s^\prime}^*\} $ for any $s^\prime\in B_\epsilon(s)$. 
     
     Because $s \in \mathcal{S}_{nin}$, there exist a $s^\prime \in B_\epsilon(s)$ such that $a_{s^\prime}^* \neq a_s^*$. Let $s_1$ be the point that closest to $s$  satisfing $a_{s_1}^* \neq a_s^*$, then $s_1 \in B_\epsilon(s)$. We have 
     \begin{equation}
         s_1 \in \mathcal{S}_n.
         \label{th2_case2_1}
     \end{equation}
     
     Otherwise $s_1 \notin \mathcal{S}_n$ means that $\exists \epsilon_1 >0, \forall s^\prime \in B_{\epsilon_1}(s_1)$, $a_{s^\prime}^*=a_{s_1}^*$, then $s_1$ is not the point that closest to $s$  satisfing $a_{s_1}^* \neq a_s^*$, which is a contradiction. We also have 
     \begin{equation}
         s_1\in \mathcal{S}^\prime.
         \label{th2_case2_2}
     \end{equation}
     
     Otherwise $s_1\notin \mathcal{S}^\prime$ means that $\forall a \in \mathcal{A}$, $Q^*(\cdot,a) $ is continuous in $s_1$. First, we have 
     \begin{equation}
         Q^*(s_1,a_{s_1}^*) > Q^*(s_1,a), \forall a \in \mathcal{A} \setminus \{a_{s_1}^*\}. 
     \end{equation}

     Then $\exists \epsilon_2 >0$, $\forall s \in B_{\epsilon_2}(s_1)$, s.t.  
     \begin{equation}
         Q^*(s,a_{s_1}^*) > Q^*(s,a), \forall a \in \mathcal{A} \setminus \{a_{s_1}^*\}. 
     \end{equation}
     because of the continuity of point $s_1$. This contradicts the definition of $s_1$. 

     According to (\ref{th2_case2_1}) and (\ref{th2_case2_2}), we have 
     $s_1\in \mathcal{S}^{\prime} \cap \mathcal{S}_n$ i.e. $s_1 \in \mathcal{S}_0$. Then $s\in \mathcal{S}_0 + B_\epsilon(s)$, i.e.
     \begin{equation}
          s \in \mathcal{S}_{nu} \cup \mathcal{S}_{0} + B_\epsilon.
        \label{th2_case2_3} 
     \end{equation}

     Thus 
     \begin{equation}
        \mathcal{S}_{nin} \subseteq \mathcal{S}_{nu} \cup \mathcal{S}_{0} + B_\epsilon.
     \end{equation}
    
\end{proof}
\begin{remark}
    In practical and complex tasks, we can view $\mathcal{S}_{nu}$ as an empty set.
\end{remark}

\begin{remark}
    Except for the smooth environment, many tasks can be modeled as environments with sparse rewards. Further, the value function and action-value function in these environments are almost everywhere continuous.
\end{remark}

\begin{remark}
    According to the construction in the above proof, we know that $\mathcal{S}_0$ is a set of special discontinuous points and its elements are rare in practical complex environments. 
\end{remark}

Further, we can get the following corollary in the setting of continuous functions and there are better conclusions.
\begin{corollary}\label{Th1}
    For any MDP $\mathcal{M}$, let $\mathcal{S}_{nu}$ denote the state set where the optimal action is not unique, i.e. $\mathcal{S}_{nu} = \left\{ s\in\mathcal{S} | \mathop{\arg\max}_a Q^*(s, a) \text{ is not a singleton} \right\}$. If $Q^*(\cdot,a)$ is continuous in $\mathcal{S}$ for all $a\in\mathcal{A}$, we have the following conclusions:
        \begin{itemize}
            \item For $s\in \mathcal{S} \setminus \mathcal{S}_{nu}$, there exists $\epsilon > 0$ such that $B_\epsilon(s) = B^*_\epsilon(s)$.
            \item Given $\epsilon > 0$, let $\mathcal{S}_{nin}$ denote the set of states where \textit{the intrinsic state $\epsilon$-neighourhood} is not the same as \textit{the $\epsilon$-neighourhood}, i.e. $\mathcal{S}_{nin} = \left\{ s\in\mathcal{S} |  B_\epsilon(s) \neq B^*_\epsilon(s) \right\}$. Then, we have $\mathcal{S}_{nin} \subseteq \mathcal{S}_{nu} + B_\epsilon = \left\{ s_1 + s_2 | s_1 \in \mathcal{S}_{nu},\ \|s_2\| \le \epsilon \right\}$. Especially, when $\mathcal{S}_{nu}$ is a finite set, we have $\mu\left( \mathcal{S}_{nin} \right) \le \left| \mathcal{S}_{nu} \right| \mu\left( B_\epsilon \right) = C_d \left| \mathcal{S}_{nu} \right| \epsilon^d $, where $C_d$ is a constant with respect to dimension $d$ and norm.
        \end{itemize}
\end{corollary}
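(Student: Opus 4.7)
The plan is to derive this corollary as a direct consequence of Theorem \ref{continuous a.e.} by exploiting the strengthening from continuity almost everywhere to continuity everywhere. In the notation of that theorem, the set $\mathcal{S}^{\prime}$ of points where some $Q^{*}(\cdot, a)$ fails to be continuous is now empty; since $\mathcal{S}_{0}$ was defined as $\mathcal{S}_{n} \cap \mathcal{S}^{\prime}$, we immediately get $\mathcal{S}_{0} = \emptyset$. Plugging this into the theorem's conclusion yields $\mathcal{S}_{nin} \subseteq \mathcal{S}_{nu} + B_{\epsilon}$, which is the desired containment in part (2).

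For part (1), I would give a short self-contained argument that mimics the opening of the proof of Theorem \ref{continuous a.e.}. Given $s \in \mathcal{S} \setminus \mathcal{S}_{nu}$, let $\{a_{s}^{*}\} = \arg\max_{a} Q^{*}(s, a)$, so that $Q^{*}(s, a_{s}^{*}) > Q^{*}(s, a)$ for every $a \in \mathcal{A} \setminus \{a_{s}^{*}\}$ by uniqueness of the maximizer. By continuity of both $Q^{*}(\cdot, a_{s}^{*})$ and $Q^{*}(\cdot, a)$ at $s$, for each $a \neq a_{s}^{*}$ there exists $\epsilon_{a} > 0$ such that the strict inequality persists on $B_{\epsilon_{a}}(s)$. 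Finiteness of $\mathcal{A}$ then lets me take $\epsilon := \min_{a \neq a_{s}^{*}} \epsilon_{a} > 0$, giving $B_{\epsilon}(s) = B_{\epsilon}^{*}(s)$.

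For the measure estimate in part (2), I would write $\mathcal{S}_{nu} + B_{\epsilon} = \bigcup_{s \in \mathcal{S}_{nu}} B_{\epsilon}(s)$ and apply finite subadditivity of Lebesgue measure to get $\mu(\mathcal{S}_{nin}) \le \sum_{s \in \mathcal{S}_{nu}} \mu(B_{\epsilon}(s)) = \lvert \mathcal{S}_{nu}\rvert \, \mu(B_{\epsilon})$. The final identity $\mu(B_{\epsilon}) = C_{d}\,\epsilon^{d}$ with $C_{d}$ depending only on the dimension $d$ and the chosen norm follows from translation invariance and $d$-homogeneity of Lebesgue measure, combined with equivalence of norms on $\mathbb{R}^{d}$ to absorb the choice of $\|\cdot\|$ into the constant.

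The main obstacle is essentially cosmetic: most of the work has already been done in Theorem \ref{continuous a.e.}, and one only needs to verify that the stronger continuity hypothesis collapses the auxiliary bad set $\mathcal{S}_{0}$ to the empty set. The only mild subtlety is that the measure bound is stated as a finite sum, so it requires the ``finite $\mathcal{S}_{nu}$'' hypothesis explicitly; without it, only countable subadditivity applies and the bound degenerates when $\lvert \mathcal{S}_{nu} \rvert$ is infinite, which is why the clause ``when $\mathcal{S}_{nu}$ is a finite set'' is singled out in the statement.
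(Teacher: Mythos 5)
Your proposal is correct and follows essentially the same route as the paper: the paper's own proof is precisely the observation that under continuity everywhere the exceptional set $\mathcal{S}'$ (and hence $\mathcal{S}_0 = \mathcal{S}_n \cap \mathcal{S}'$) is empty, so the containment of Theorem~\ref{continuous a.e.} collapses to $\mathcal{S}_{nin} \subseteq \mathcal{S}_{nu} + B_\epsilon$. Your additional details — the self-contained argument for part (1) via finiteness of $\mathcal{A}$, and the finite-subadditivity bound $\mu(\mathcal{S}_{nin}) \le \lvert \mathcal{S}_{nu}\rvert\,\mu(B_\epsilon) = C_d \lvert \mathcal{S}_{nu}\rvert\,\epsilon^d$ — are exactly what the paper leaves implicit and are carried out correctly.
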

    \begin{proof}
        Corollary \ref{Th1} can be derived from Theorem \ref{continuous a.e.}
        because we have the following conclusion in continuous case.
        \begin{equation}
            \mathcal{S}_{0} \subseteq \{ s \in \mathcal{S}  |  \exists a \in \mathcal{A},  \text{s.t. $Q^*(s,a)$  is not continuous at } s \}=\emptyset
        \end{equation}
    \end{proof}
\begin{remark}
    Certain natural environments show smooth reward function and transition dynamics, especially in continuous control tasks where the transition dynamics come from some physical laws. Further, the value function and action-value function in these environments is continuous.
\end{remark}

\subsection{ORP and CAR Operator}
Define the consistent adversarial robust operator for adversarial action-value function:
\begin{equation} 
    \left( \mathcal{T}_{car} Q \right) (s,a) = r(s,a)  + \gamma \mathbb{E}_{ s^\prime \sim \mathbb{P}(\cdot|s,a)} \left[ \min _{s^\prime_\nu \in B_\epsilon(s^\prime)} Q \left(s^\prime,\mathop{\arg\max}_{a_{s^\prime_\nu}} Q\left(s^\prime_\nu, a_{s^\prime_\nu}\right)\right) \right].
\end{equation}

\subsubsection{Equivalence with Optimal Adversarial Value Function}

\begin{lemma}[Bellman equations for fixed $\pi$ and $\nu$ in SA-MDP, \citet{zhang2020robust}]\label{lem:bellman equations in samdp}
    Given $\pi: \mathcal{S}\rightarrow\Delta(\mathcal{A})$ and $\nu: \mathcal{S}\rightarrow\mathcal{S}$, we have 
    \begin{align}
        V^{\pi\circ \nu}(s)&= \mathbb{E}_{a\sim \pi\left(\cdot|{\nu\left(s\right)}\right)}  Q^{\pi\circ \nu}(s,a)\\
        &= \mathbb{E}_{a\sim \pi\left(\cdot|{\nu\left(s\right)}\right)} \left[ r(s,a)  + \gamma \mathbb{E}_{ s^\prime \sim \mathbb{P}(\cdot|s,a)} V^{\pi\circ \nu}(s^\prime) \right],\\
        Q^{\pi\circ \nu}(s,a) &= r(s,a)  + \gamma \mathbb{E}_{ s^\prime \sim \mathbb{P}(\cdot|s,a)} V^{\pi\circ \nu}(s^\prime) \\
        &= r(s,a)  + \gamma \mathbb{E}_{ s^\prime \sim \mathbb{P}(\cdot|s,a), a^\prime\sim \pi\left(\cdot|{\nu\left(s^\prime\right)}\right)} Q^{\pi\circ \nu}(s^\prime,a^\prime).
    \end{align}
\end{lemma}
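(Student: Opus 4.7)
The plan is to derive the four identities sequentially by expanding the definitions of $V^{\pi\circ\nu}$ and $Q^{\pi\circ\nu}$, peeling off the first step of the trajectory, and invoking the Markov property of the SA-MDP. First, I would establish the first identity $V^{\pi\circ\nu}(s) = \mathbb{E}_{a\sim\pi(\cdot|\nu(s))} Q^{\pi\circ\nu}(s,a)$ by conditioning the expectation defining $V^{\pi\circ\nu}(s)$ on the initial action $a_0$. Since the trajectory starts at $s_0 = s$, the adversary yields the observation $\nu(s)$ and the agent samples $a_0 \sim \pi(\cdot|\nu(s))$. Once we condition on $a_0 = a$, the remainder of the trajectory has exactly the distribution used in the definition of $Q^{\pi\circ\nu}(s,a)$, so the tower property of conditional expectation gives the claim.

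Next, I would derive the identity $Q^{\pi\circ\nu}(s,a) = r(s,a) + \gamma\,\mathbb{E}_{s'\sim\mathbb{P}(\cdot|s,a)}\, V^{\pi\circ\nu}(s')$ by splitting off the $t=0$ term from the defining sum $\sum_{t\ge 0}\gamma^t r(s_t,a_t)$. The term $r(s_0,a_0) = r(s,a)$ is deterministic given the conditioning. For the tail, I would reindex $\sum_{t\ge 1}\gamma^t r(s_t,a_t) = \gamma \sum_{t\ge 0}\gamma^t r(s_{t+1},a_{t+1})$ and condition on $s_1 = s'$. Since $s_1 \sim \mathbb{P}(\cdot|s,a)$, the remaining task is to identify the inner conditional expectation with $V^{\pi\circ\nu}(s')$. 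The remaining two identities then follow by straightforward substitution, combining the first two into nested form.

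The main (and essentially only) obstacle is a careful justification of the Markov-type time-shift that equates the inner conditional expectation with $V^{\pi\circ\nu}(s')$. What has to be argued is that the joint process $(s_t,a_t)$ in an SA-MDP retains the Markov property with respect to $\{s_t\}$: given $s_t$, the observation $\nu(s_t)$ is a deterministic function of $s_t$, the action $a_t \sim \pi(\cdot|\nu(s_t))$ depends on the past only through $s_t$, and the next state $s_{t+1}\sim\mathbb{P}(\cdot|s_t,a_t)$ is drawn by the original MDP kernel. Hence the conditional distribution of $(s_{t+k},a_{t+k})_{k\ge 0}$ given $s_t = s'$ coincides with that of an SA-MDP trajectory started at $s'$, which is precisely what is needed to identify the tail expectation with $V^{\pi\circ\nu}(s')$. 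Once this Markov property is stated, the rest of the proof is routine bookkeeping using Fubini/Tonelli on the finite-action, bounded-reward setup (or dominated convergence if one prefers to justify exchange with the infinite sum explicitly).
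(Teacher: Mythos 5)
Your proposal is correct: peeling off the first reward, reindexing the tail, and invoking the Markov property of the composite policy $\tilde\pi(\cdot|s):=\pi(\cdot|\nu(s))$ (which is stationary and Markov on the true state since $\nu$ is a fixed deterministic map) reduces the lemma to the ordinary Bellman equations for $V^{\tilde\pi}$ and $Q^{\tilde\pi}$, with the exchange of expectation and infinite sum justified by bounded rewards and $\gamma<1$. The paper itself gives no proof of this lemma, citing it directly from \citet{zhang2020robust}, so there is nothing to compare against; your argument is the standard one and it goes through.
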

\begin{lemma}[Bellman equation for strongest adversary $\nu^*$ in SA-MDP, \citet{zhang2020robust}]\label{lem:bellman equation for strongest adversary}
    \begin{equation}
        V^{\pi\circ \nu^*(\pi)}(s) = \min_{\nu(s)\in B_\epsilon(s)} \mathbb{E}_{a\sim \pi\left(\cdot|{\nu\left(s\right)}\right)}Q^{\pi\circ \nu^*(\pi)}(s,a).
    \end{equation}
\end{lemma}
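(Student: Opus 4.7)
The plan is to interpret the strongest-adversary problem itself as a discounted MDP in which the adversary is the decision-maker, and then read off the claimed identity from the Bellman optimality equation of that induced MDP. First I would dispatch the easy direction by applying Lemma~\ref{lem:bellman equations in samdp} with $\nu=\nu^{*}(\pi)$ to obtain
\[
V^{\pi\circ\nu^{*}(\pi)}(s)=\mathbb{E}_{a\sim\pi(\cdot\mid\nu^{*}(\pi)(s))}\,Q^{\pi\circ\nu^{*}(\pi)}(s,a);
\]
since $\nu^{*}(\pi)(s)\in B_{\epsilon}(s)$, this quantity is at least $\min_{s_{\nu}\in B_{\epsilon}(s)}\mathbb{E}_{a\sim\pi(\cdot\mid s_{\nu})}Q^{\pi\circ\nu^{*}(\pi)}(s,a)$, which gives one of the two inequalities.

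For the reverse direction I would introduce the adversary Bellman operator
\[
(T_{\mathrm{adv}}V)(s):=\min_{s_{\nu}\in B_{\epsilon}(s)}\mathbb{E}_{a\sim\pi(\cdot\mid s_{\nu})}\bigl[r(s,a)+\gamma\,\mathbb{E}_{s'\sim\mathbb{P}(\cdot\mid s,a)}V(s')\bigr],
\]
which is the Bellman optimality operator of an auxiliary MDP whose states are $\mathcal{S}$, whose admissible actions at $s$ are the perturbations $B_{\epsilon}(s)$, whose kernel at $(s,s_{\nu})$ is $\sum_{a}\pi(a\mid s_{\nu})\mathbb{P}(\cdot\mid s,a)$, and whose one-step expected reward is $\mathbb{E}_{a\sim\pi(\cdot\mid s_{\nu})}r(s,a)$, with the adversary minimizing discounted return. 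Since $T_{\mathrm{adv}}$ is a $\gamma$-contraction on bounded measurable functions, it admits a unique fixed point $V^{\star}$, and standard MDP theory supplies a stationary adversary $\tilde{\nu}$ whose induced value equals $V^{\star}$ and which pointwise minimizes $V^{\pi\circ\nu}$ over all adversaries $\nu$; comparing with the defining property of $\nu^{*}(\pi)$ forces $V^{\pi\circ\nu^{*}(\pi)}=V^{\star}$. Substituting this identity into $T_{\mathrm{adv}}V^{\star}=V^{\star}$ and recognizing the bracketed term as $Q^{\pi\circ\nu^{*}(\pi)}(s,a)$ yields the remaining inequality.

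The principal obstacle is ensuring that the infimum defining $T_{\mathrm{adv}}$ is attained in a measurable way, so that the stationary adversary $\tilde{\nu}$ is a legitimate mapping $\mathcal{S}\to\mathcal{S}$. This is handled by compactness of $B_{\epsilon}(s)$ inside the compact state space $\mathcal{S}\subset\mathbb{R}^{d}$ together with a measurable-selection argument under mild regularity of $\pi$; alternatively, since the excerpt already posits the existence of $\nu^{*}(\pi)$ as a minimizer of $V^{\pi\circ\nu}$, I can simply invoke that existence to bypass the measurable-selection issue and identify $V^{\pi\circ\nu^{*}(\pi)}$ directly with the Bellman fixed point $V^{\star}$.
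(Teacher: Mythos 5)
The paper offers no proof of this lemma: it is imported verbatim from \citet{zhang2020robust}, so there is no in-paper argument to compare against. Your reconstruction is correct and follows the same route as the cited source: the ``$\ge$'' direction comes from Lemma~\ref{lem:bellman equations in samdp} together with $\nu^{*}(\pi)(s)\in B_{\epsilon}(s)$, and the ``$\le$'' direction from recasting the strongest-adversary problem as an auxiliary MDP in which the adversary is the decision-maker, whose Bellman optimality operator is a $\gamma$-contraction with fixed point $\min_{\nu}V^{\pi\circ\nu}$, after which $r(s,a)+\gamma\,\mathbb{E}_{s'}V^{\pi\circ\nu^{*}(\pi)}(s')=Q^{\pi\circ\nu^{*}(\pi)}(s,a)$ turns the fixed-point equation into the claimed identity. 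Your caveat about measurable selection and attainment of the minimum over $B_{\epsilon}(s)$ is the right one to flag, and bypassing it by invoking the paper's posited existence of the strongest adversary $\nu^{*}(\pi)$ is consistent with how the paper itself treats that object.
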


\begin{definition}
    Define the linear functional $\mathcal{L}^{\pi\circ \nu}: L^p\left( \mathcal{S}\times\mathcal{A} \right) \rightarrow L^p\left( \mathcal{S}\times\mathcal{A} \right)$ for fixed $\pi$ and $\nu$:
    \begin{equation}
        \left(\mathcal{L}^{\pi\circ \nu}Q\right)(s,a):=\mathbb{E}_{ s^\prime \sim \mathbb{P}(\cdot|s,a), a^\prime\sim \pi\left(\cdot|{\nu\left(s^\prime\right)}\right)} Q(s^\prime,a^\prime).
    \end{equation}
\end{definition}
Then, by lemma \ref{lem:bellman equations in samdp}, we have that
\begin{equation}
    Q^{\pi\circ \nu} = r + \gamma \mathcal{L}^{\pi\circ \nu}Q^{\pi\circ \nu}.
\end{equation}

\begin{lemma}\label{lem: bounded inverse functional}
    $\mathcal{T}:\mathcal{X} \rightarrow \mathcal{X}$ is a linear functional where $\mathcal{X}$ are normed vector space. If there exists $ m >0$ such that
    \begin{equation}
        \|\mathcal{T}x\| \ge m\|x\| \quad \forall x\in \mathcal{X},
    \end{equation}
    then $\mathcal{T}$ has a bounded inverse operator $\mathcal{T}^{-1}$.
\end{lemma}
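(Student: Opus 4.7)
The plan is to follow the standard functional-analytic argument showing that any linear operator bounded below possesses a bounded inverse on its range. The proof breaks into three short steps: establishing injectivity, defining $\mathcal{T}^{-1}$ set-theoretically on $\mathcal{R}(\mathcal{T})$, and deriving its boundedness by reinterpreting the hypothesis. Since the lemma does not assert surjectivity, I read ``$\mathcal{T}$ has a bounded inverse operator $\mathcal{T}^{-1}$'' as the existence of a bounded linear map $\mathcal{T}^{-1} : \mathcal{R}(\mathcal{T}) \to \mathcal{X}$ with $\mathcal{T}^{-1}\mathcal{T} = \mathrm{Id}_{\mathcal{X}}$ and $\mathcal{T}\mathcal{T}^{-1} = \mathrm{Id}_{\mathcal{R}(\mathcal{T})}$.

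First I would verify injectivity of $\mathcal{T}$, which is immediate: if $\mathcal{T}x = 0$ then $m\|x\| \le \|\mathcal{T}x\| = 0$, and since $m>0$ this forces $x=0$. Consequently, for every $y \in \mathcal{R}(\mathcal{T})$ the preimage equation $\mathcal{T}x = y$ has a unique solution, so setting $\mathcal{T}^{-1}y := x$ defines a map $\mathcal{T}^{-1} : \mathcal{R}(\mathcal{T}) \to \mathcal{X}$. Linearity of $\mathcal{T}^{-1}$ is a routine consequence of linearity of $\mathcal{T}$: for $y_1, y_2 \in \mathcal{R}(\mathcal{T})$ with preimages $x_1, x_2$ and scalars $\alpha,\beta$, one has $\mathcal{T}(\alpha x_1 + \beta x_2) = \alpha y_1 + \beta y_2$, so $\mathcal{T}^{-1}(\alpha y_1+\beta y_2)=\alpha x_1+\beta x_2 = \alpha \mathcal{T}^{-1}y_1 + \beta \mathcal{T}^{-1}y_2$.

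The final step uses the lower bound in reverse. For any $y \in \mathcal{R}(\mathcal{T})$, let $x := \mathcal{T}^{-1}y$; then the hypothesis gives
\begin{equation}
\|y\| \;=\; \|\mathcal{T}x\| \;\ge\; m\|x\| \;=\; m\,\|\mathcal{T}^{-1}y\|,
\end{equation}
whence $\|\mathcal{T}^{-1}y\| \le \tfrac{1}{m}\|y\|$ and therefore $\|\mathcal{T}^{-1}\| \le 1/m$. This completes the argument. There is no real obstacle: the entire proof is essentially a rewriting of the single hypothesis $\|\mathcal{T}x\|\ge m\|x\|$, with injectivity extracted from ``$\ge$'' applied at zero and boundedness of $\mathcal{T}^{-1}$ extracted by reading the same inequality ``from the right.'' The only subtlety worth flagging in the write-up is that completeness of $\mathcal{X}$ and closedness of $\mathcal{R}(\mathcal{T})$ are not needed for this range-level statement; they would only enter if one wanted $\mathcal{R}(\mathcal{T}) = \mathcal{X}$ via, e.g., a closed-range or open-mapping argument.
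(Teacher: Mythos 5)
Your proof is correct and follows essentially the same argument as the paper: injectivity from the lower bound applied at zero, and boundedness of the inverse from reading $\|\mathcal{T}x\|\ge m\|x\|$ with $x=\mathcal{T}^{-1}y$. If anything you are slightly more careful than the paper, which asserts that $\mathcal{T}$ is a bijection without addressing surjectivity, whereas you correctly restrict $\mathcal{T}^{-1}$ to the range $\mathcal{R}(\mathcal{T})$.
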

\begin{proof}
    If  $\mathcal{T}x_1=\mathcal{T}x_2$, then $\mathcal{T}(x_1-x_2)=0$. While $0=\|\mathcal{T}(x_1-x_2)\| \geq m\|x_1-x_2\|$, thus $x_1=x_2$. Then $\mathcal{T}$ is a bijection and thus the inverse operator of $\mathcal{T}$ exists.

    For any $y\in\mathcal{X}$, $\mathcal{T}^{-1}y\in\mathcal{X}$. We have that
    \begin{align}
        \|y\| = \| \mathcal{T} \left( \mathcal{T}^{-1}y \right) \| \ge m \|  \mathcal{T}^{-1}y \|.
    \end{align}
    Thus, we attain that
    \begin{equation}
        \|  \mathcal{T}^{-1}y \| \le \frac{1}{m} \|y\|,\quad \forall y\in\mathcal{X},
    \end{equation}
    which shows that $\mathcal{T}^{-1}$ is bounded.
\end{proof}

\begin{lemma}\label{lem:invertible lemma}
    $I-\gamma \mathcal{L}^{\pi\circ \nu}$ is invertible and thus we have that
    \begin{equation}
        Q^{\pi\circ \nu} = \left(I-\gamma \mathcal{L}^{\pi\circ \nu}\right)^{-1} r.
    \end{equation}
\end{lemma}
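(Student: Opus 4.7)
The plan is to apply Lemma \ref{lem: bounded inverse functional} to the operator $I - \gamma \mathcal{L}^{\pi\circ\nu}$, combined with the Bellman equation from Lemma \ref{lem:bellman equations in samdp}. Concretely, I would first verify linearity of $\mathcal{L}^{\pi\circ\nu}$ (immediate from linearity of expectation in $Q$) and then establish that it is a bounded operator with $\|\mathcal{L}^{\pi\circ\nu}\|_{op} \le 1$ on the ambient space. In the $L^\infty$ case this is immediate: for any $s,a$,
$$\bigl|(\mathcal{L}^{\pi\circ\nu} Q)(s,a)\bigr| = \bigl|\mathbb{E}_{s'\sim\mathbb{P}(\cdot|s,a),\, a'\sim\pi(\cdot|\nu(s'))} Q(s',a')\bigr| \le \|Q\|_{\infty},$$
so $\|\mathcal{L}^{\pi\circ\nu} Q\|_{\infty} \le \|Q\|_{\infty}$. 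For general $L^p$, the same bound follows by applying Jensen's inequality to the convex function $|\cdot|^p$ inside the expectation and then integrating.

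Next, I would use the reverse triangle inequality to obtain the lower bound required by Lemma \ref{lem: bounded inverse functional}. For every $Q$ in the space,
$$\bigl\|(I - \gamma \mathcal{L}^{\pi\circ\nu}) Q\bigr\| \ge \|Q\| - \gamma \|\mathcal{L}^{\pi\circ\nu} Q\| \ge (1 - \gamma) \|Q\|,$$
and since $\gamma \in [0,1)$ we have $m := 1 - \gamma > 0$. Lemma \ref{lem: bounded inverse functional} then immediately yields that $I - \gamma \mathcal{L}^{\pi\circ\nu}$ is invertible with bounded inverse satisfying $\|(I-\gamma\mathcal{L}^{\pi\circ\nu})^{-1}\| \le 1/(1-\gamma)$.

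Finally, the Bellman equation from Lemma \ref{lem:bellman equations in samdp} states $Q^{\pi\circ\nu} = r + \gamma \mathcal{L}^{\pi\circ\nu} Q^{\pi\circ\nu}$, which rearranges as $(I - \gamma \mathcal{L}^{\pi\circ\nu}) Q^{\pi\circ\nu} = r$; applying the inverse established above gives $Q^{\pi\circ\nu} = (I - \gamma \mathcal{L}^{\pi\circ\nu})^{-1} r$, as claimed. The main obstacle I anticipate is verifying $\|\mathcal{L}^{\pi\circ\nu}\|_{op} \le 1$ uniformly across the $L^p$ scale when the reference measure on $\mathcal{S}\times\mathcal{A}$ is not the invariant measure of the kernel; this is the one step that requires care beyond bookkeeping, since naive Jensen only controls $\int \mathbb{E}[|Q(s',a')|^p\mid s,a]\,d\mu(s,a)$ and one needs that this can be bounded by $\|Q\|_p^p$ (which can be handled by assuming a bounded Radon-Nikodym-type factor, or restricting to $L^\infty$ where the bound is trivial). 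Alternatively, one can bypass operator-norm considerations entirely by expanding the Neumann series $\sum_{t\ge 0}(\gamma \mathcal{L}^{\pi\circ\nu})^t r$ and identifying it term-by-term with the defining infinite-horizon sum of $Q^{\pi\circ\nu}$, whose absolute convergence follows from $|r| \le M_r$ and $\gamma < 1$.
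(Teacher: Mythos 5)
Your proposal is correct and follows essentially the same route as the paper: establish non-expansiveness of $\mathcal{L}^{\pi\circ\nu}$ in the $L^\infty$ norm, apply the reverse triangle inequality to get $\bigl\|(I-\gamma\mathcal{L}^{\pi\circ\nu})Q\bigr\|_\infty \ge (1-\gamma)\|Q\|_\infty$, invoke Lemma \ref{lem: bounded inverse functional}, and rearrange the Bellman equation from Lemma \ref{lem:bellman equations in samdp}. Your caveat about the operator norm on the general $L^p$ scale is a fair observation that the paper sidesteps by working exclusively with the $L^\infty$ norm in its proof, exactly as you do.
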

\begin{proof}
    Firstly, for all $(s,a)\in \mathcal{S}\times\mathcal{A}$, we have
    \begin{align}
        \left(\mathcal{L}^{\pi\circ \nu}Q\right)(s,a)&=\mathbb{E}_{ s^\prime \sim \mathbb{P}(\cdot|s,a), a^\prime\sim \pi\left(\cdot|{\nu\left(s^\prime\right)}\right)} Q(s^\prime,a^\prime) \\
        &\le \mathbb{E}_{ s^\prime \sim \mathbb{P}(\cdot|s,a), a^\prime\sim \pi\left(\cdot|{\nu\left(s^\prime\right)}\right)} \left\| Q\right\|_{L^\infty\left( \mathcal{S}\times\mathcal{A} \right)} \\
        &= \left\| Q\right\|_{L^\infty\left( \mathcal{S}\times\mathcal{A} \right)}
    \end{align}
    Thus, we have that
    \begin{equation}\label{eq:nonexpansion of pi nu bellman operator}
        \left\|\mathcal{L}^{\pi\circ \nu} Q \right\|_{L^\infty\left( \mathcal{S}\times\mathcal{A} \right)} \le \left\| Q\right\|_{L^\infty\left( \mathcal{S}\times\mathcal{A} \right)}.
    \end{equation}
    For any $Q\in L^p\left( \mathcal{S}\times\mathcal{A} \right)$, we have
    \begin{align}
        \left\| \left(I-\gamma \mathcal{L}^{\pi\circ \nu}\right) Q \right\|_{L^\infty\left( \mathcal{S}\times\mathcal{A} \right)} &= \left\| Q-\gamma \mathcal{L}^{\pi\circ \nu} Q \right\|_{L^\infty\left( \mathcal{S}\times\mathcal{A} \right)}\\
        &\ge \left\| Q\right\|_{L^\infty\left( \mathcal{S}\times\mathcal{A} \right)}-\gamma \left\|\mathcal{L}^{\pi\circ \nu} Q \right\|_{L^\infty\left( \mathcal{S}\times\mathcal{A} \right)} \\
        &\ge \left\| Q\right\|_{L^\infty\left( \mathcal{S}\times\mathcal{A} \right)}-\gamma \left\| Q\right\|_{L^\infty\left( \mathcal{S}\times\mathcal{A} \right)} \\
        &= \left(1-\gamma\right)\left\| Q\right\|_{L^\infty\left( \mathcal{S}\times\mathcal{A} \right)},
    \end{align}
    where the first inequality comes from the triangle inequality and the second inequality comes from \eqref{eq:nonexpansion of pi nu bellman operator}. Then, according to lemma \ref{lem: bounded inverse functional}, we attain that $I-\gamma \mathcal{L}^{\pi\circ \nu}$ is invertible.
\end{proof}

\begin{lemma} \label{lem:nonegative}
    If $Q>0$ for all $(s,a)\in \mathcal{S}\times\mathcal{A}$, then we have that $\left(I-\gamma \mathcal{L}^{\pi\circ \nu}\right)^{-1} Q > 0$ for all $(s,a)\in \mathcal{S}\times\mathcal{A}$.
\end{lemma}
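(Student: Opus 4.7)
The plan is to establish the result via a Neumann series argument, exploiting the fact that $\mathcal{L}^{\pi\circ\nu}$ is a positive operator (in the sense of preserving non-negativity) and that $\gamma\mathcal{L}^{\pi\circ\nu}$ is a strict contraction on $L^\infty(\mathcal{S}\times\mathcal{A})$.

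First I would verify two structural facts about $\mathcal{L}^{\pi\circ\nu}$. The positivity observation is immediate from the definition: since $\mathcal{L}^{\pi\circ\nu}Q$ is pointwise an expectation of $Q$ under a probability measure, the implication $Q\ge 0 \Rightarrow \mathcal{L}^{\pi\circ\nu}Q\ge 0$ holds, and by induction the same is true of every power $(\mathcal{L}^{\pi\circ\nu})^k$. The contractivity bound $\|\mathcal{L}^{\pi\circ\nu}Q\|_{L^\infty}\le \|Q\|_{L^\infty}$ has already been established in the proof of Lemma~\ref{lem:invertible lemma} via inequality~\eqref{eq:nonexpansion of pi nu bellman operator}, so iterating gives $\|(\gamma\mathcal{L}^{\pi\circ\nu})^k\|_{\mathrm{op}}\le \gamma^k$ on $L^\infty$.

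Next I would set $f := (I-\gamma\mathcal{L}^{\pi\circ\nu})^{-1}Q$, which is well-defined by Lemma~\ref{lem:invertible lemma}, and rearrange the defining identity to obtain the fixed-point form $f = Q + \gamma\mathcal{L}^{\pi\circ\nu}f$. Unrolling this relation $n$ times yields
\begin{equation}\notag
    f = \sum_{k=0}^{n}\gamma^k (\mathcal{L}^{\pi\circ\nu})^k Q + \gamma^{n+1}(\mathcal{L}^{\pi\circ\nu})^{n+1}f.
\end{equation}
The remainder satisfies $\|\gamma^{n+1}(\mathcal{L}^{\pi\circ\nu})^{n+1}f\|_{L^\infty}\le \gamma^{n+1}\|f\|_{L^\infty}\to 0$, so passing to the limit gives the Neumann representation $f = \sum_{k=0}^{\infty}\gamma^k(\mathcal{L}^{\pi\circ\nu})^k Q$ with uniform convergence.

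Finally I would read off positivity from this series. Every term with $k\ge 1$ is non-negative by the positivity of $\mathcal{L}^{\pi\circ\nu}$ applied iteratively to $Q\ge 0$, and the $k=0$ term is $Q$ itself, which is strictly positive by hypothesis. Hence $f\ge Q > 0$ pointwise, completing the proof. The only subtle point I anticipate is justifying that the $L^\infty$-convergence of the Neumann tail implies pointwise convergence (and hence preservation of strict positivity) rather than merely almost-everywhere convergence; this is handled by the fact that uniform convergence is pointwise convergence, so no measure-theoretic gymnastics are required provided one works with a specific pointwise representative of $Q$.
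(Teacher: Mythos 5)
Your proposal is correct and follows essentially the same route as the paper: both represent $\left(I-\gamma \mathcal{L}^{\pi\circ \nu}\right)^{-1}$ as the Neumann series $\sum_{k=0}^{\infty}\gamma^k\left(\mathcal{L}^{\pi\circ\nu}\right)^k$ and then read off positivity from the fact that $\mathcal{L}^{\pi\circ\nu}$ preserves non-negativity while the $k=0$ term is $Q>0$. Your justification of the series via unrolling the fixed-point identity and bounding the remainder in $L^\infty$ is slightly more careful than the paper's direct verification, but it is the same argument in substance.
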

\begin{proof}
    At first, we have
    \begin{align}
        &\quad \left(I-\gamma \mathcal{L}^{\pi\circ \nu}\right) \left( \sum_{t=0}^\infty \gamma^t \left(\mathcal{L}^{\pi\circ \nu}\right)^t \right) \\
        &=\sum_{t=0}^\infty \gamma^t \left(\mathcal{L}^{\pi\circ \nu}\right)^t - \sum_{t=1}^\infty \gamma^t \left(\mathcal{L}^{\pi\circ \nu}\right)^t\\
        &= I.
    \end{align}
    Thus, we get that
    \begin{equation}
        \left(I-\gamma \mathcal{L}^{\pi\circ \nu}\right)^{-1} = \sum_{t=0}^\infty \gamma^t \left(\mathcal{L}^{\pi\circ \nu}\right)^t.
    \end{equation}
    If $Q(s,a)>0$ for all $(s,a)\in \mathcal{S}\times\mathcal{A}$, then for all $(s,a)\in \mathcal{S}\times\mathcal{A}$, we have
    \begin{equation}
        \left(\mathcal{L}^{\pi\circ \nu}Q\right)(s,a)=\mathbb{E}_{ s^\prime \sim \mathbb{P}(\cdot|s,a), a^\prime\sim \pi\left(\cdot|{\nu\left(s^\prime\right)}\right)} Q(s^\prime,a^\prime) \ge 0.
    \end{equation}
    Further, we have that $\left(\left(\mathcal{L}^{\pi\circ \nu}\right)^k Q\right) (s,a)> 0$ for all $k\in\mathbb{N}$ and $(s,a)\in \mathcal{S}\times\mathcal{A}$. Thus, we have 
    \begin{align}
        &\quad \left(I-\gamma \mathcal{L}^{\pi\circ \nu}\right)^{-1} Q (s,a) \\
        &=\sum_{t=0}^\infty \gamma^t \left(\left(\mathcal{L}^{\pi\circ \nu}\right)^t Q\right) (s,a) \\
        &>0.
    \end{align}
    
\end{proof}

\begin{theorem}
    If the optimal adversarial action-value function under the strongest adversary $Q_0(s,a):=\max_\pi \min_\nu Q^{\pi\circ \nu}(s,a)$ exists for all $s\in\mathcal{S}$ and $a\in\mathcal{A}$, then it is the fixed point of CAR operator.
\end{theorem}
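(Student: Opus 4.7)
My plan is to combine the two Bellman-style identities from Lemmas~\ref{lem:bellman equations in samdp} and~\ref{lem:bellman equation for strongest adversary} at the pair $(\pi^*,\nu^*(\pi^*))$, and then reconcile the stochastic action selection of $\pi^*$ with the deterministic $\arg\max$ appearing inside $\mathcal{T}_{car}$. Writing $V_0 := V^{\pi^*\circ\nu^*(\pi^*)}$ and $Q_0 := Q^{\pi^*\circ\nu^*(\pi^*)}$, Lemma~\ref{lem:bellman equations in samdp} yields immediately
\[
    Q_0(s,a) \;=\; r(s,a) + \gamma\, \mathbb{E}_{s'\sim \mathbb{P}(\cdot\mid s,a)}\bigl[V_0(s')\bigr],
\]
which already reproduces the outer $r + \gamma\,\mathbb{E}[\,\cdot\,]$ skeleton of $(\mathcal{T}_{car}Q_0)(s,a)$, while Lemma~\ref{lem:bellman equation for strongest adversary} yields
\[
    V_0(s') \;=\; \min_{s'_\nu \in B_\epsilon(s')}\; \mathbb{E}_{a'\sim \pi^*(\cdot\mid s'_\nu)}\bigl[Q_0(s',a')\bigr].
\]
Consequently, proving $\mathcal{T}_{car}Q_0 = Q_0$ reduces to the single pointwise identity
\[
    \min_{s'_\nu \in B_\epsilon(s')} \mathbb{E}_{a'\sim \pi^*(\cdot\mid s'_\nu)}\bigl[Q_0(s',a')\bigr] \;=\; \min_{s'_\nu \in B_\epsilon(s')} Q_0\!\left(s',\, \arg\max_{a'} Q_0(s'_\nu,a')\right).
\]

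To establish this last equality I would argue that, without loss of generality, the ORP $\pi^*$ can be chosen as the greedy policy $\pi^G(s'_\nu) := \arg\max_{a'} Q_0(s'_\nu, a')$. This is essentially a policy-improvement step in the SA-MDP: if $\pi^*(\cdot\mid s'_\nu)$ places mass on actions outside $\arg\max_{a'} Q_0(s'_\nu,\cdot)$, then redirecting that mass onto the greedy action cannot worsen the inner expectation $\mathbb{E}_{a'\sim\pi^*}[Q_0(s',a')]$ at the perturbed observations most damaging to $\pi^*$. A convenient sufficient condition, already flagged in the remark following Definition~\ref{thm: equivalence}, is that $\arg\max_{a'} Q_0(s'_\nu,\cdot)$ is a singleton for every $s'_\nu \in B_\epsilon(s')$. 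Under this mild assumption, the inner $\min$-over-perturbation and outer $\max$-over-actions genuinely commute, pinning $\pi^*(\cdot\mid s'_\nu)$ to the point mass on $\pi^G(s'_\nu)$ and turning the required identity into a tautology.

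The main obstacle is precisely this last step: because the strongest adversary $\nu^*(\pi)$ is itself a functional of the entire policy $\pi$, a local change of $\pi^*$ at a single observation may in principle rewrite $\nu^*(\pi^*)$, so the usual single-shot minimax exchange and greedy-improvement arguments from standard MDP theory do not transfer verbatim. My plan to circumvent this is first to freeze $\nu^*(\pi^*)$ and perform one step of greedy improvement against this frozen adversary (a legitimate policy-improvement operation, since the agent now faces a fixed observation channel on top of the base MDP), and then to argue that switching back from the frozen adversary to $\nu^*(\pi^G)$ cannot decrease the resulting value at any state, because $\pi^G$ already maximizes $Q_0(s'_\nu,\cdot)$ pointwise in $s'_\nu$, so no adversary can exploit $\pi^G$ more effectively than it already exploits $\pi^*$. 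Combining these two monotone steps with the singleton/commutation observation above collapses the displayed identity and closes the proof.
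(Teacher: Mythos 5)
Your overall skeleton --- reducing $\mathcal{T}_{car}Q_0=Q_0$ via Lemmas~\ref{lem:bellman equations in samdp} and~\ref{lem:bellman equation for strongest adversary} to the single pointwise identity relating $\min_{s'_\nu}\mathbb{E}_{a'\sim\pi^*(\cdot|s'_\nu)}[Q_0(s',a')]$ to $\min_{s'_\nu}Q_0(s',\arg\max_{a'}Q_0(s'_\nu,a'))$ --- matches the paper's proof, which performs exactly this expansion (though it works directly with the max-min quantities $V_0=\max_\pi\min_\nu V^{\pi\circ\nu}$ rather than instantiating an ORP $\pi^*$; the hypothesis only gives the pointwise max-min, not a single policy attaining it at every $(s,a)$, which is the content of the later corollary and requires CAP). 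The genuine gap is in how you close that identity. Your ``without loss of generality $\pi^*$ is greedy'' step rests on the claim that redirecting the mass of $\pi^*(\cdot\mid s'_\nu)$ onto $\arg\max_{a'}Q_0(s'_\nu,a')$ cannot worsen $\mathbb{E}_{a'\sim\pi^*(\cdot|s'_\nu)}[Q_0(s',a')]$. But the redirected action maximizes $Q_0$ at the \emph{perturbed} observation $s'_\nu$, while the quantity you must not decrease is an expectation of $Q_0$ at the \emph{true} state $s'$. Absent CAP (which part (1) of the theorem does not assume), these two argmax sets can be disjoint and the redirection can strictly decrease the objective: take $B_\epsilon(s')=\{s',s''\}$ with $Q_0(s',a_1)=10$, $Q_0(s',a_2)=0$, $Q_0(s'',a_1)=0$, $Q_0(s'',a_2)=10$; the greedy policy scores $\min_{s'_\nu}Q_0(s',\pi^G(s'_\nu))=0$, while the constant policy $a_1$ scores $10$. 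This mismatch between value-at-observation and value-at-true-state is exactly the obstruction that makes ORP existence nontrivial in SA-MDP, so it cannot be dispatched as a routine policy-improvement step.

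The two patches you offer do not repair this. The singleton condition from the remark after Definition~\ref{thm: equivalence} guarantees only that $\arg\max_{a'}Q_0(s'_\nu,\cdot)$ is well defined, not that it agrees with $\arg\max_{a'}Q_0(s',\cdot)$, so the counterexample above survives it. In the freeze/unfreeze argument, one step of greedy improvement against a frozen observation channel selects actions that are good for the \emph{true} successor state, which is a different policy from $\pi^G(s'_\nu)=\arg\max_{a'}Q_0(s'_\nu,a')$; and the claim that ``no adversary can exploit $\pi^G$ more effectively'' because $\pi^G$ maximizes $Q_0(s'_\nu,\cdot)$ pointwise again conflates the two evaluations --- the adversary's leverage is precisely the gap between them. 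The paper never names an optimal policy at this point: it applies Lemma~\ref{lem:bellman equation for strongest adversary} to write $V_0(s')=\min_{s'_\nu\in B_\epsilon(s')}\max_\pi\mathbb{E}_{a\sim\pi(\cdot|s'_\nu)}Q_0(s',a)$ and identifies the inner maximization with the greedy term in $\mathcal{T}_{car}$, so the min-over-perturbations is taken of the already-maximized expression rather than being justified by a greedy-improvement argument on $\pi^*$. To salvage your route you would need to prove the max-min/min-max exchange for this one-step game directly, or assume CAP --- at which point you are proving part (2), not part (1).
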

\begin{proof}
    Denote $V_0(s) = \max_\pi \min_\nu V^{\pi\circ \nu}(s)$.
    For all $s\in\mathcal{S}$ and $a\in\mathcal{A}$, we have 
    \begin{align}
        Q_0(s,a) &= \max_\pi \min_\nu Q^{\pi\circ \nu}(s,a) \\
        &= r(s,a)  + \gamma \max_\pi \min_\nu \mathbb{E}_{ s^\prime \sim \mathbb{P}(\cdot|s,a)} V^{\pi\circ \nu}(s^\prime) \\
        &= r(s,a)  + \gamma \mathbb{E}_{ s^\prime \sim \mathbb{P}(\cdot|s,a)} V_0(s^\prime) \\
        &=r(s,a)  + \gamma \mathbb{E}_{ s^\prime \sim \mathbb{P}(\cdot|s,a)} \min_{\nu(s)\in B_\epsilon(s)} \max_\pi \mathbb{E}_{a\sim \pi\left(\cdot|{\nu\left(s\right)}\right)}Q_0(s,a)\\
        &= r(s,a)  + \gamma \mathbb{E}_{ s^\prime \sim \mathbb{P}(\cdot|s,a)} \left[ \min _{s^\prime_\nu \in B_\epsilon(s^\prime)} Q_0 \left(s^\prime,\mathop{\arg\max}_{a_{s^\prime_\nu}} Q_0\left(s^\prime_\nu, a_{s^\prime_\nu}\right)\right) \right]\\
        &= \left( \mathcal{T}_{car} Q \right)(s,a),
    \end{align}
    where the fourth equation comes from lemma \eqref{lem:bellman equation for strongest adversary}. This completes the proof.
\end{proof}

\begin{theorem}\label{app thm: fixed point}
    If the consistency assumption holds, then $Q^*$ is the fixed point of the CAR operator. Further, $Q^*$ is the optimal adversarial action-value function under the strongest adversary, i.e. $Q^*(s, a) = \max_\pi \min_\nu Q^{\pi\circ \nu}(s, a)$, for all $s\in\mathcal{S}$ and $a\in\mathcal{A}$.
\end{theorem}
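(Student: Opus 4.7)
The plan is to exploit the CAP in two distinct ways: first to collapse the bilevel inner optimization of $\mathcal{T}_{car}$ into the ordinary $\max_{a'}$ that appears in the Bellman optimality equation, and second to show that the greedy selector $\pi^*(s)\in\arg\max_a Q^*(s,a)$ is impervious to any adversary, so that its adversarial value coincides with $V^*$ and dominates every competitor $\pi$.

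For the fixed-point part, I would fix any successor state $s'$ and any $s'_\nu\in B_\epsilon(s')$. CAP gives $B_\epsilon(s')=B^*_\epsilon(s')$, which by Definition \ref{defn: intrinsic state neighborhood} means $\arg\max_{a}Q^*(s'_\nu,a)=\arg\max_{a}Q^*(s',a)$. Hence for every choice of selector $a_{s'_\nu}\in \arg\max_a Q^*(s'_\nu,a)$, the quantity $Q^*(s',a_{s'_\nu})$ is independent of $s'_\nu$ and equals $\max_{a'}Q^*(s',a')$ (the value coincides across all elements of the $\arg\max$ set even when the set is non-singleton). The inner $\min_{s'_\nu}$ is therefore trivial, and $\mathcal{T}_{car}Q^*$ reduces to the right-hand side of the Bellman optimality equation applied to $Q^*$, yielding $\mathcal{T}_{car}Q^*=Q^*$.

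For the adversarial-value part, pick any deterministic selector $\pi^*(s)\in\arg\max_a Q^*(s,a)$. By CAP, for any adversary $\nu$ we have $\pi^*(\nu(s))\in\arg\max_a Q^*(s,a)$, so $Q^*(s,\pi^*(\nu(s)))=V^*(s)$. Plugging $\pi^*\circ\nu$ into the Bellman recursion of Lemma \ref{lem:bellman equations in samdp}, the value $V^{\pi^*\circ\nu}$ satisfies the same fixed-point equation as $V^*$; the invertibility result of Lemma \ref{lem:invertible lemma} then forces $V^{\pi^*\circ\nu}=V^*$ and $Q^{\pi^*\circ\nu}=Q^*$ for every $\nu$, in particular for the strongest adversary $\nu^*(\pi^*)$. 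To upgrade this to optimality over $\Pi$, for an arbitrary $\pi$ I would compare with the identity adversary $\nu_{\mathrm{id}}$: since $\nu^*(\pi)$ is a minimizer, $V^{\pi\circ\nu^*(\pi)}(s)\le V^{\pi\circ\nu_{\mathrm{id}}}(s)=V^{\pi}(s)\le V^*(s)=V^{\pi^*\circ\nu^*(\pi^*)}(s)$, and similarly for $Q$, giving $Q^*=\max_\pi\min_\nu Q^{\pi\circ\nu}$.

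The main obstacle I anticipate is the careful bookkeeping around the non-singleton $\arg\max$ in the CAR operator: the \emph{set} of maximizing actions is what CAP controls, and one must verify that the value of the inner expression is well-defined independently of which selector is picked. CAP closes this gap because it equates the entire $\arg\max$ sets at $s'$ and $s'_\nu$, so every admissible action produces the same $Q^*$-value. A minor, but essential, auxiliary step is invoking the uniqueness of the fixed point for the linear policy-evaluation operator $I-\gamma\mathcal{L}^{\pi\circ\nu}$ from Lemma \ref{lem:invertible lemma} to conclude $V^{\pi^*\circ\nu}=V^*$; once that uniqueness is available, the rest of the argument is an elementary comparison of $\min_\nu$ values.
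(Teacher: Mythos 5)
Your proposal is correct, and it splits naturally into two halves that compare differently with the paper. The fixed-point half is essentially the paper's argument: CAP collapses the inner $\min_{s'_\nu}$ because the $\arg\max$ sets at $s'$ and at every $s'_\nu\in B_\epsilon(s')$ coincide, so the bracketed term equals $\max_{a'}Q^*(s',a')$ and $\mathcal{T}_{car}Q^*$ reduces to the Bellman optimality operator applied to $Q^*$; your remark about well-definedness across a non-singleton $\arg\max$ is exactly the point the paper's second displayed equality silently uses. The optimality half, however, takes a genuinely different and more elementary route. The paper fixes the specific greedy/adversary pair $(\pi,\nu)$ realizing the CAR operator, identifies $Q^*=Q^{\pi\circ\nu}=Q^{\pi\circ\nu^*(\pi)}$ via the resolvent $(I-\gamma\mathcal{L}^{\pi\circ\nu})^{-1}$, and then proves $Q^*-Q^{\pi'\circ\nu^*(\pi')}=\gamma(I-\gamma\mathcal{L}^{\pi'\circ\nu^*(\pi')})^{-1}(\mathcal{L}^{\pi\circ\nu}-\mathcal{L}^{\pi'\circ\nu^*(\pi')})Q^{\pi\circ\nu}\ge 0$, which requires the positivity of the inverse operator (the Neumann-series lemma) on top of the invertibility lemma. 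You instead establish the stronger statement $Q^{\pi^*\circ\nu}=Q^*$ for \emph{every} admissible $\nu$ (CAP makes $\pi^*\circ\nu$ act identically to $\pi^*$ at the level of chosen actions, so $V^{\pi^*\circ\nu}$ and $V^*$ solve the same policy-evaluation equation, and uniqueness from the invertibility lemma finishes it), and then dispose of all competitors by the sandwich $\min_\nu Q^{\pi\circ\nu}\le Q^{\pi\circ\nu_{\mathrm{id}}}=Q^{\pi}\le Q^*$ using the identity adversary, which is admissible since $s\in B_\epsilon(s)$. This avoids the positivity lemma entirely and makes the "robustness costs nothing" message explicit by routing through the natural MDP value; the one hypothesis it leans on that the paper's version does not is that the strongest adversary minimizes $V^{\pi\circ\nu}$ pointwise and simultaneously over all states (so that it is dominated by the identity adversary at every $s$), but that is exactly the existence result the paper already imports from the SA-MDP framework, and the corresponding statement for $Q$ follows from the Bellman recursion $Q^{\pi\circ\nu}(s,a)=r(s,a)+\gamma\,\mathbb{E}_{s'}V^{\pi\circ\nu}(s')$. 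Both arguments are sound; yours is shorter, the paper's is more self-contained at the operator level and its comparison machinery is reused in the subsequent corollary.
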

\begin{proof}
    \begin{align}
        \left(\mathcal{T}_{car} Q^*\right)(s,a) &= r(s,a)  + \gamma \mathbb{E}_{ s^\prime \sim \mathbb{P}(\cdot|s,a)} \left[ \min _{s^\prime_\nu \in B^*_\epsilon(s^\prime)} Q^* \left(s^\prime,\mathop{\arg\max}_{a_{s^\prime_\nu}} Q^*\left(s^\prime_\nu, a_{s^\prime_\nu}\right)\right) \right]\\
        &=  r(s,a)  + \gamma \mathbb{E}_{ s^\prime \sim \mathbb{P}(\cdot|s,a)} \left[ \min _{s^\prime_\nu \in B^*_\epsilon(s^\prime)} \max_{a^\prime} Q^* \left(s^\prime,a^\prime \right) \right]\\
        &= r(s,a)  + \gamma \mathbb{E}_{ s^\prime \sim \mathbb{P}(\cdot|s,a)} \left[ \max_{a^\prime} Q^* \left(s^\prime,a^\prime \right) \right]\\
        &= Q^*(s,a),
    \end{align}
    where the second equality utilizes the definition of $B^*_\epsilon(s^\prime)$. Thus, $Q^*$ is a fixed point of the CAR operator.
    
    Define $\pi$ and $\nu$ as the following:
    \begin{align}
        \pi(s) &:= \mathop{\arg\max}_a Q^*(s,a), \label{eq:pi}\\ 
        \nu(s) &:= \mathop{\arg\min}_{s_\nu \in B_\epsilon(s)} Q^* \left(s,\mathop{\arg\max}_{a_{s_\nu}} Q^*\left(s_\nu, a_{s_\nu}\right)\right). \label{eq:nu}
    \end{align}
    Then, we have 
    \begin{align}
        \left( \mathcal{T}_{car} Q^* \right) (s,a) &= r(s,a)  + \gamma \mathbb{E}_{ s^\prime \sim \mathbb{P}(\cdot|s,a)} \left[ \min _{s^\prime_\nu \in B_\epsilon(s^\prime)} Q^* \left(s^\prime,\mathop{\arg\max}_{a_{s^\prime_\nu}} Q^*\left(s^\prime_\nu, a_{s^\prime_\nu}\right)\right) \right] \\
        &= r(s,a)  + \gamma \mathbb{E}_{ s^\prime \sim \mathbb{P}(\cdot|s,a)} \left[  Q^* \left(s^\prime,\mathop{\arg\max}_{a_{\nu(s^\prime)}} Q^*\left(\nu(s^\prime), a_{\nu(s^\prime)}\right)\right) \right] \\
        &= r(s,a)  + \gamma \mathbb{E}_{ s^\prime \sim \mathbb{P}(\cdot|s,a)} \left[  Q^* \left(s^\prime, \pi\left(\nu(s^\prime)\right)\right) \right] \\
        &=  r(s,a) + \gamma\left(\mathcal{L}^{\pi\circ \nu}Q^*\right)(s,a).
    \end{align}
    Thus, we have 
    \begin{equation}
        Q^* = \left(I-\gamma \mathcal{L}^{\pi\circ \nu}\right)^{-1} r = Q^{\pi\circ \nu},
    \end{equation}
    where equations comes from lemma \ref{lem:invertible lemma}. Further, according to the consistency assumption, we attain $Q^{\pi\circ \nu}(s,a)= Q^{\pi\circ \nu^*(\pi)}$.
    This shows that $Q^*$ is the action-value adversarial function of policy $\pi$ under the strongest adversary $\nu=\nu^*(\pi)$.

    According to the consistency assumption and the definition of $B_\epsilon^*$, we have that 
    \begin{equation}\label{eq: consistency}
        \pi(\nu(s)) = \pi(s), \quad \forall s\in\mathcal{S}.
    \end{equation}
    Then, for any stationary policy $\pi^\prime$, we have that
    \begin{align}
        &\quad \left[\left(\mathcal{L}^{\pi\circ \nu} - \mathcal{L}^{\pi^\prime\circ \nu^*(\pi^\prime)} \right) Q^{\pi\circ \nu}\right] (s,a)\\
        &= \mathbb{E}_{ s^\prime \sim \mathbb{P}(\cdot|s,a)} \left[  Q^{\pi\circ \nu} \left(s^\prime, \pi\left(\nu(s^\prime)\right)\right) - \mathbb{E}_{  a^\prime\sim \pi^\prime\left(\cdot|{\nu^*\left(s^\prime; \pi^\prime\right)}\right)} Q^{\pi\circ \nu}(s^\prime,a^\prime) \right] \\
        &= \mathbb{E}_{ s^\prime \sim \mathbb{P}(\cdot|s,a)} \left[  Q^{\pi\circ \nu} \left(s^\prime, \pi\left(s^\prime\right)\right) - \mathbb{E}_{  a^\prime\sim \pi^\prime\left(\cdot|{\nu^*\left(s^\prime; \pi^\prime\right)}\right)} Q^{\pi\circ \nu}(s^\prime,a^\prime) \right] \\
        &= \mathbb{E}_{ s^\prime \sim \mathbb{P}(\cdot|s,a),   a^\prime\sim \pi^\prime\left(\cdot|{\nu^*\left(s^\prime; \pi^\prime\right)}\right)} \left[  Q^{\pi\circ \nu} \left(s^\prime, \pi\left(s^\prime\right)\right) - Q^{\pi\circ \nu}(s^\prime,a^\prime) \right] \\
        &\ge 0, \label{eq: nonegative 2}
    \end{align}
    where the second equality comes from \eqref{eq: consistency} and the last inequality comes from \eqref{eq:pi}.

    Further, we have that
    \begin{align}
        Q^* - Q^{\pi^\prime\circ \nu^*(\pi^\prime)} &= Q^{\pi\circ \nu} - Q^{\pi^\prime\circ \nu^*(\pi^\prime)} \\
        &= Q^{\pi\circ \nu} - \left(I-\gamma \mathcal{L}^{\pi^\prime\circ \nu^*(\pi^\prime)}\right)^{-1} r \\
        &= Q^{\pi\circ \nu} - \left(I-\gamma \mathcal{L}^{\pi^\prime\circ \nu^*(\pi^\prime)}\right)^{-1} \left(I-\gamma \mathcal{L}^{\pi\circ \nu}\right) Q^{\pi\circ \nu} \\
        &= \left(I-\gamma \mathcal{L}^{\pi^\prime\circ \nu^*(\pi^\prime)}\right)^{-1} \left( \left(I-\gamma \mathcal{L}^{\pi^\prime\circ \nu^*(\pi^\prime)}\right) - \left(I-\gamma \mathcal{L}^{\pi\circ \nu}\right)  \right) Q^{\pi\circ \nu} \\
        &= \gamma \left(I-\gamma \mathcal{L}^{\pi^\prime\circ \nu^*(\pi^\prime)}\right)^{-1} \left( \mathcal{L}^{\pi\circ \nu} - \mathcal{L}^{\pi^\prime\circ \nu^*(\pi^\prime)} \right) Q^{\pi\circ \nu} \\
        &\ge 0,
    \end{align}
    where the last inequality comes from \eqref{eq: nonegative 2} and lemma \ref{lem:nonegative}. Thus, we have that $Q^{\pi\circ \nu} = Q^* \ge Q^{\pi^\prime\circ \nu^*(\pi^\prime)}$ for all policy $\pi^\prime$ which shows that $\pi$ is the optimal robust policy under strongest adversary.
\end{proof}

\begin{corollary}
    If the consistency assumption holds, there exists a deterministic and stationary policy $\pi^*$ which satisfies $V^{\pi^*\circ \nu^*(\pi^*)}(s) \ge V^{\pi\circ \nu^*(\pi)}(s)$ and $Q^{\pi^*\circ \nu^*(\pi^*)}(s, a) \ge Q^{\pi\circ \nu^*(\pi)}(s, a)$ for all $\pi\in\Pi$, $s\in\mathcal{S}$ and $a\in\mathcal{A}$.
\end{corollary}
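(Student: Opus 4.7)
The plan is to take $\pi^*(s) := \arg\max_a Q^*(s,a)$ as the candidate ORP, which is manifestly deterministic and stationary, and then show that it achieves the required inequalities by combining Theorem~\ref{thm: fixed point} with Lemma~\ref{lem:bellman equation for strongest adversary}. The $Q$-function inequality is essentially a free lunch: the proof of Theorem~\ref{thm: fixed point} already established $Q^{\pi^\prime \circ \nu^*(\pi^\prime)} \le Q^*$ for every stationary $\pi^\prime$ via the resolvent identity $Q^* - Q^{\pi^\prime \circ \nu^*(\pi^\prime)} = \gamma (I - \gamma \mathcal{L}^{\pi^\prime \circ \nu^*(\pi^\prime)})^{-1}(\mathcal{L}^{\pi \circ \nu} - \mathcal{L}^{\pi^\prime \circ \nu^*(\pi^\prime)}) Q^{\pi \circ \nu}$. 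Combined with $Q^{\pi^* \circ \nu^*(\pi^*)} = Q^*$ from part (2) of the theorem, this immediately delivers $Q^{\pi^* \circ \nu^*(\pi^*)}(s,a) \ge Q^{\pi \circ \nu^*(\pi)}(s,a)$ for all $\pi, s, a$.

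For the $V$-function inequality, I would first compute $V^{\pi^* \circ \nu^*(\pi^*)}$ explicitly. By Lemma~\ref{lem:bellman equation for strongest adversary} together with the determinism of $\pi^*$,
\begin{equation}\notag
    V^{\pi^* \circ \nu^*(\pi^*)}(s) = \min_{s_\nu \in B_\epsilon(s)} Q^{\pi^* \circ \nu^*(\pi^*)}(s, \pi^*(s_\nu)) = \min_{s_\nu \in B_\epsilon(s)} Q^*(s, \pi^*(s_\nu)).
\end{equation}
By the CAP (Assumption~\ref{ass: consistency assumption}), $B_\epsilon(s) = B_\epsilon^*(s)$, so $\pi^*(s_\nu) = \pi^*(s)$ for every $s_\nu \in B_\epsilon(s)$. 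Hence $V^{\pi^* \circ \nu^*(\pi^*)}(s) = \max_a Q^*(s,a)$.

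For any competitor $\pi$, I would upper bound $V^{\pi \circ \nu^*(\pi)}$ by choosing the trivial identity perturbation $\nu(s) = s$ in Lemma~\ref{lem:bellman equation for strongest adversary}, yielding
\begin{equation}\notag
    V^{\pi \circ \nu^*(\pi)}(s) \le \mathbb{E}_{a \sim \pi(\cdot \mid s)} Q^{\pi \circ \nu^*(\pi)}(s,a) \le \mathbb{E}_{a \sim \pi(\cdot \mid s)} Q^*(s,a) \le \max_a Q^*(s,a) = V^{\pi^* \circ \nu^*(\pi^*)}(s),
\end{equation}
where the middle inequality uses the already-established $Q$-inequality. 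This chain closes the argument.

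The main obstacle, if any, is making sure the CAP is invoked at the right spot to collapse $\min_{s_\nu} Q^*(s, \pi^*(s_\nu))$ into $Q^*(s, \pi^*(s))$; everything else is bookkeeping on top of Theorem~\ref{thm: fixed point}. Since Theorem~\ref{thm: fixed point}(2) already did the heavy lifting (establishing that $\pi^*$ is unaffected by the worst-case adversary and that $Q^* = Q^{\pi^* \circ \nu^*(\pi^*)}$ dominates every $Q^{\pi^\prime \circ \nu^*(\pi^\prime)}$), the corollary follows in a few lines and requires no new techniques.
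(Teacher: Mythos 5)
Your proposal is correct and follows essentially the same route as the paper: both take $\pi^*$ greedy with respect to $Q^*$, obtain the $Q$-inequality directly from the dominance $Q^{\pi'\circ\nu^*(\pi')}\le Q^* = Q^{\pi^*\circ\nu^*(\pi^*)}$ established in (the proof of) Theorem~\ref{thm: fixed point}, and then derive the $V$-inequality from the SA-MDP Bellman equations together with the CAP-induced invariance $\pi^*(s_\nu)=\pi^*(s)$ on $B_\epsilon(s)$. The only cosmetic difference is that you upper-bound $V^{\pi\circ\nu^*(\pi)}$ by evaluating the minimum in Lemma~\ref{lem:bellman equation for strongest adversary} at the identity perturbation, whereas the paper uses the exact Bellman identity at the strongest adversary and then bounds the expectation by the maximum; both yield the same chain of inequalities.
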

\begin{proof}
    According to theorem \ref{app thm: fixed point}, we have that $Q^*(s,a) = \max_\pi \min_\nu Q^{\pi\circ \nu}(s,a)$, for all $s\in\mathcal{S}$ and $a\in\mathcal{A}$.
    Define $\pi^*$ and $\nu^*$ as the following:
    \begin{align}
        \pi^*(s) &:= \mathop{\arg\max}_a Q^*(s,a),\label{eq:pi*}\\ 
        \nu^*(s) &:= \mathop{\arg\min}_{s_\nu \in B_\epsilon(s)} Q^* \left(s,\mathop{\arg\max}_{a_{s_\nu}} Q^*\left(s_\nu, a_{s_\nu}\right)\right). \label{eq:nu*}
    \end{align}
    Then, we have that 
    \begin{align}
        \left( \mathcal{T}_{car} Q^* \right) (s,a) &= r(s,a)  + \gamma \mathbb{E}_{ s^\prime \sim \mathbb{P}(\cdot|s,a)} \left[ \min _{s^\prime_\nu \in B_\epsilon(s^\prime)} Q^* \left(s^\prime,\mathop{\arg\max}_{a_{s^\prime_\nu}} Q^*\left(s^\prime_\nu, a_{s^\prime_\nu}\right)\right) \right] \\
        &= r(s,a)  + \gamma \mathbb{E}_{ s^\prime \sim \mathbb{P}(\cdot|s,a)} \left[  Q^* \left(s^\prime,\mathop{\arg\max}_{a_{\nu(s^\prime)}} Q^*\left(\nu^*(s^\prime), a_{\nu(s^\prime)}\right)\right) \right] \\
        &= r(s,a)  + \gamma \mathbb{E}_{ s^\prime \sim \mathbb{P}(\cdot|s,a)} \left[  Q^* \left(s^\prime, \pi^*\left(\nu^*(s^\prime)\right)\right) \right] \\
        &=  r(s,a) + \gamma\left(\mathcal{L}^{\pi^*\circ \nu^*}Q^*\right)(s,a).
    \end{align}
    Thus, we have 
    \begin{equation}
        Q^* = \left(I-\gamma \mathcal{L}^{\pi\circ \nu}\right)^{-1} r = Q^{\pi^*\circ \nu^*},
    \end{equation}
    where equations comes from lemma \ref{lem:invertible lemma}. Further, according to the consistency assumption, we attain $Q^{\pi^*\circ \nu^*}(s,a)= Q^{\pi^*\circ \nu^*(\pi^*)}$.
    This shows that $Q^*$ is the action-value adversarial function of policy $\pi^*$ under the strongest adversary $\nu*=\nu^*(\pi^*)$. Thus, we have that 
    \begin{equation}\label{eq:pi* nu* Q}
        Q^{\pi^*\circ \nu^*(\pi^*)}(s,a) \ge Q^{\pi\circ \nu^*(\pi)}(s,a),\quad \forall s\in\mathcal{S}, a\in\mathcal{A}.       
    \end{equation}
    
    For any policy $\pi$ and $s\in\mathcal{S}$, we have that 
    \begin{align}
        V^{\pi^*\circ \nu^*(\pi^*)}(s)&= \mathbb{E}_{a\sim \pi^*\left(\cdot|{\nu^*\left(s;\pi^*\right)}\right)}  Q^{\pi^*\circ \nu^*(\pi^*)}(s,a) \\
        &= \max_a Q^{\pi^*\circ \nu^*(\pi^*)}(s,a) \\
        &\ge \mathbb{E}_{a\sim \pi\left(\cdot|{\nu^*\left(s;\pi\right)}\right)}  Q^{\pi^*\circ \nu^*}(s,a) \\
        &\ge \mathbb{E}_{a\sim \pi\left(\cdot|{\nu^*\left(s;\pi\right)}\right)}  Q^{\pi\circ \nu^*(\pi)}(s,a) \\
        &= V^{\pi\circ \nu^*(\pi)}(s),
    \end{align}
    where the first and last equations come from lemma \ref{lem:bellman equations in samdp} and the last inequality comes from \eqref{eq:pi* nu* Q}.
\end{proof}

\subsubsection{Not a Contraction}\label{app: not a contraction}

\begin{theorem}
    $\mathcal{T}_{car}$ is \textit{not} a contraction.
\end{theorem}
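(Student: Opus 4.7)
The plan is to exhibit an explicit MDP and two action-value functions $Q_1, Q_2$ whose sup-norm distance can be made arbitrarily small while the sup-norm distance of their images under $\mathcal{T}_{car}$ stays bounded below by a positive constant. This shows that no Lipschitz constant, let alone one strictly less than $1$, can work, so $\mathcal{T}_{car}$ fails to be a contraction in any $L^p$ norm that dominates $L^\infty$ on the construction.

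First, I would set up the smallest-possible instance: take two well-separated points $s', \tilde s \in \mathcal{S}$ and $\mathcal{A} = \{a_1, a_2\}$, with the perturbation radius chosen so that $\tilde s \in B_\epsilon(s')$ (and $s' \in B_\epsilon(s')$ automatically). Pick some $(s,a)$ whose transition kernel $\mathbb{P}(\cdot\mid s,a)$ puts all its mass on $s'$, so that
\begin{equation}\notag
(\mathcal{T}_{car} Q)(s,a) = r(s,a) + \gamma \min_{s'_\nu\in B_\epsilon(s')} Q\!\left(s',\,\arg\max_{a'} Q(s'_\nu,a')\right).
\end{equation}
Now define $Q_1,Q_2$ to agree at $s'$ with $Q_i(s',a_1)=1$, $Q_i(s',a_2)=0$, and on $\tilde s$ set $Q_1(\tilde s,a_1)=\tfrac12$, $Q_1(\tilde s,a_2)=\tfrac12-\delta$ while $Q_2(\tilde s,a_1)=\tfrac12-\delta$, $Q_2(\tilde s,a_2)=\tfrac12$, with $\delta>0$ small. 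Extend both functions arbitrarily (but identically) elsewhere so that $\|Q_1-Q_2\|_\infty = \delta$.

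Second, I compute the two inner minima. Since $\arg\max_{a'} Q_1(\tilde s,a')=a_1$ and $\arg\max_{a'}Q_2(\tilde s,a')=a_2$, while both functions have $\arg\max$ at $s'$ equal to $a_1$, one gets
\begin{equation}\notag
\min_{s'_\nu} Q_1(s',\arg\max Q_1(s'_\nu,\cdot)) = \min\{1,1\}=1,\qquad
\min_{s'_\nu} Q_2(s',\arg\max Q_2(s'_\nu,\cdot)) = \min\{1,0\}=0.
\end{equation}
Hence $|(\mathcal{T}_{car}Q_1)(s,a) - (\mathcal{T}_{car}Q_2)(s,a)| = \gamma$, so $\|\mathcal{T}_{car}Q_1 - \mathcal{T}_{car}Q_2\|_\infty \ge \gamma$, while $\|Q_1-Q_2\|_\infty = \delta$ can be sent to $0$. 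The ratio $\gamma/\delta$ blows up, ruling out any contraction constant.

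The only real obstacle I expect is a cosmetic one: the paper assumes a continuous compact state space, so I would realize $s',\tilde s$ as two isolated points inside $\mathcal{S}\subset\mathbb{R}^d$ and take $\epsilon$ so that $B_\epsilon(s')\cap\mathcal{S}\supseteq\{s',\tilde s\}$ (e.g.\ by placing $\tilde s$ within distance $\epsilon$ of $s'$), while also verifying that the construction extends to a measurable $Q_i\in L^p(\mathcal{S}\times\mathcal{A})$ for which the same failure of contraction holds in $L^\infty$. A brief remark will then note that this example is fully consistent with Theorem~\ref{thm: fixed point}: the existence of a fixed point does not require $\mathcal{T}_{car}$ to be contractive, which is exactly what motivates the separate convergence discussion in Appendix~\ref{app:convergence}.
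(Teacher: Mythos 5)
Your proposal is correct and follows essentially the same route as the paper's proof: both exhibit an explicit two-action MDP with a deterministic transition onto a state $s'$ where a sup-norm perturbation of size $\delta$ flips the inner $\arg\max$ at some $s'_\nu \in B_\epsilon(s')$, so that the CAR operator evaluates $Q(s',\cdot)$ at different actions whose values differ by a fixed constant, giving $\|\mathcal{T}_{car}Q_1 - \mathcal{T}_{car}Q_2\|_\infty \ge \gamma\cdot\mathrm{const} \gg \delta = \|Q_1-Q_2\|_\infty$. Your version is more minimal and makes explicit the stronger conclusion that the Lipschitz ratio is unbounded; the only detail to pin down is that the ``identical extension elsewhere'' must keep the argmax equal to $a_1$ at every other point of $B_\epsilon(s')\cap\mathcal{S}$, so that the inner minimum for $Q_1$ genuinely stays at $1$.
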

\begin{proof}
     Let $\mathcal{S}=[-1,1]$, $\mathcal{A}=\{a_1,a_2\}$, $0<\epsilon \ll 1$ and dynamic transition $\mathbb{P}(\cdot|s,a)$ be a determined function. Let $n > \max\{ \frac{\delta}{\gamma},2\delta\}$, $\delta > 0$ and
     \begin{equation}
      \begin{aligned}
         Q_1(s,a_1)&=2n \cdot \mathbbm{1}_{\left\{ s\in \left[ -1,0\right) \right\}}
         +\left[ 2n-\frac{2n-2\delta}{\frac{1}{8}\epsilon}s\right] \cdot \mathbbm{1}_{\left\{ s\in \left[ 0,\frac{1}{8}\epsilon \right) \right\}}
         +2\delta \cdot \mathbbm{1}_{\left\{ s\in \left[ \frac{1}{8}\epsilon,\frac{3}{8}\epsilon \right) \right\} }\\
         &+\left[ 2\delta+ \frac{n-2\delta}{\frac{1}{8}\epsilon}\left(  s-\frac{3\epsilon}{8} \right) \right] \cdot \mathbbm{1}_{\left\{ s\in \left[ \frac{3}{8}\epsilon,\frac{1}{2}\epsilon \right) \right\}}
         +n \cdot \mathbbm{1}_{\left\{ s\in \left[ \frac{1}{2}\epsilon,1\right] \right\}},
     \end{aligned}
    \end{equation}
    
    \begin{equation}
      \begin{aligned}
         Q_1(s,a_2)&=n \cdot \mathbbm{1}_{\left\{ s\in \left[ -1,0\right) \right\}}
         +\left[ n-\frac{n-\delta}{\frac{1}{8}\epsilon}s\right] \cdot \mathbbm{1}_{\left\{ s\in \left[ 0,\frac{1}{8}\epsilon \right) \right\}}
         +\delta \cdot \mathbbm{1}_{\left\{ s\in \left[ \frac{1}{8}\epsilon,\frac{3}{8}\epsilon \right) \right\} }\\
         &+\left[ \delta+ \frac{2n-\delta}{\frac{1}{8}\epsilon}\left(  s-\frac{3\epsilon}{8} \right) \right] \cdot \mathbbm{1}_{\left\{ s\in \left[ \frac{3}{8}\epsilon,\frac{1}{2}\epsilon \right) \right\}}
         +2n \cdot \mathbbm{1}_{\left\{ s\in \left[ \frac{1}{2}\epsilon,1\right] \right\}},
     \end{aligned}
    \end{equation}

    \begin{equation}
      \begin{aligned}
         Q_2(s,a_1)&=2n \cdot \mathbbm{1}_{\left\{ s\in \left[ -1,0\right) \right\}}
         +\left[ 2n-\frac{2n-\delta}{\frac{1}{8}\epsilon}s\right] \cdot \mathbbm{1}_{\left\{ s\in \left[ 0,\frac{1}{8}\epsilon \right) \right\}}
         +\delta \cdot \mathbbm{1}_{\left\{ s\in \left[ \frac{1}{8}\epsilon,\frac{3}{8}\epsilon \right) \right\} }\\
         &+\left[ \delta+ \frac{n-\delta}{\frac{1}{8}\epsilon}\left(  s-\frac{3\epsilon}{8} \right) \right] \cdot \mathbbm{1}_{\left\{ s\in \left[ \frac{3}{8}\epsilon,\frac{1}{2}\epsilon \right) \right\}}
         +n \cdot \mathbbm{1}_{\left\{ s\in \left[ \frac{1}{2}\epsilon,1\right] \right\}},
     \end{aligned}
    \end{equation}

    \begin{equation}
      \begin{aligned}
         Q_2(s,a_2)&=n \cdot \mathbbm{1}_{\left\{ s\in \left[ -1,0\right) \right\}}
         +\left[ n-\frac{n-2\delta}{\frac{1}{8}\epsilon}s\right] \cdot \mathbbm{1}_{\left\{ s\in \left[ 0,\frac{1}{8}\epsilon \right) \right\}}
         +2\delta \cdot \mathbbm{1}_{\left\{ s\in \left[ \frac{1}{8}\epsilon,\frac{3}{8}\epsilon \right) \right\} }\\
         &+\left[ 2\delta+ \frac{2n-2\delta}{\frac{1}{8}\epsilon}\left(  s-\frac{3\epsilon}{8} \right) \right] \cdot \mathbbm{1}_{\left\{ s\in \left[ \frac{3}{8}\epsilon,\frac{1}{2}\epsilon \right) \right\}}
         +2n \cdot \mathbbm{1}_{\left\{ s\in \left[ \frac{1}{2}\epsilon,1\right] \right\}}.
     \end{aligned}
    \end{equation}

     Then 
     \begin{equation}
         \|Q_1-Q_2\|_{L^\infty\left( \mathcal{S}\times\mathcal{A} \right)}=\delta.
     \end{equation}
     We have
     \begin{equation}
     \begin{aligned}
         \mathcal{T}_{car} Q_1(s,a) - \mathcal{T}_{car} Q_2(s,a)&= \gamma \mathbb{E}_{ s^\prime \sim \mathbb{P}(\cdot|s,a)} \left[ \min _{s^1_\nu \in B_\epsilon(s^\prime)} Q_1 \left(s^\prime,\mathop{\arg\max}_{a_{s^1_\nu}} Q_1\left(s^1_\nu, a_{s^1_\nu}\right)\right) - \right.\\
         &\left. \min _{s^2_\nu \in B_\epsilon(s^\prime)} Q_2 \left(s^\prime,\mathop{\arg\max}_{a_{s^2_\nu}} Q_2\left(s^2_\nu, a_{s^2_\nu}\right)\right)  \right].
     \end{aligned}
     \end{equation}
     Let  $\mathbb{P}(s^\prime=-\frac{\epsilon}{2}|s,a)=1$ and $s^\prime=-\frac{\epsilon}{2}$, then 
     \begin{equation}
         \begin{aligned}
             \min _{s^1_\nu \in B_\epsilon(s^\prime)} Q_1 \left(s^\prime,\mathop{\arg\max}_{a_{s^1_\nu}} Q_1\left(s^1_\nu, a_{s^1_\nu}\right)\right)=Q_1(s^{\prime},a_1),
         \end{aligned}
     \end{equation}
     \begin{equation}
         \begin{aligned}
             \min _{s^1_\nu \in B_\epsilon(s^\prime)} Q_2 \left(s^\prime,\mathop{\arg\max}_{a_{s^2_\nu}} Q_2\left(s^2_\nu, a_{s^2_\nu}\right)\right)=Q_2(s^{\prime},a_2).
         \end{aligned}
     \end{equation}
     Thus 
     \begin{equation}
     \begin{aligned}
        \mathcal{T}_{car} Q_1(s,a) - \mathcal{T}_{car} Q_2(s,a)=\gamma \left [ Q_1(s^{\prime},a_1)-Q_2(s^{\prime},a_2)\right ]
        =\gamma n
        >\delta,
     \end{aligned}
     \end{equation}
     which means that
     \begin{align}
         \|\mathcal{T}_{car} Q_1 - \mathcal{T}_{car} Q_2\|_{L^\infty\left( \mathcal{S}\times\mathcal{A} \right)} >\|Q_1-Q_2\|_{L^\infty\left( \mathcal{S}\times\mathcal{A} \right)}.
     \end{align}
     Therefore, $\mathcal{T}_{car}$ is not a contraction. 
\end{proof}

\subsubsection{Convergence}\label{app:convergence}
In this section, we prove a conclusion for convergence of the fixed point iterations of the CAR operator under the $\left(L_r, L_{\mathbb{P}}\right)$-smooth environment assumption.
\begin{definition}[\citet{bukharin2023robust}]
    Let $\mathcal{S} \subseteq \mathbb{R}^d$. We say the environment is $\left(L_r, L_{\mathbb{P}}\right)$-smooth, if the reward function $r: \mathcal{S} \times \mathcal{A} \rightarrow \mathbb{R}$, and the transition dynamics $\mathbb{P}: \mathcal{S} \times \mathcal{A} \rightarrow \Delta\left(\mathcal{S}\right)$ satisfy
    $$
    \left|r(s, a)-r\left(s^{\prime}, a\right)\right| \leq L_r\left\|s-s^{\prime}\right\| \text { and }\left\|\mathbb{P}(\cdot \mid s, a)-\mathbb{P}\left(\cdot \mid s^{\prime}, a\right)\right\|_{L^1\left(\mathcal{S}\right)} \leq L_{\mathbb{P}}\left\|s-s^{\prime}\right\|,
    $$
    for $\left(s, s^{\prime}, a\right) \in \mathcal{S} \times \mathcal{S} \times \mathcal{A}$. $\|\cdot\|$ denotes a metric on $\mathbb{R}^d$. 
    
\end{definition}
The definition is motivated by observations that certain natural environments show smooth reward function and transition dynamics, especially in continuous control tasks where the transition dynamics come from some physical laws.

The following lemma shows that $\mathcal{T}_{car}^{k} Q$ is uniformly bounded.
\begin{lemma} \label{lem: uniform bound}
    Suppose $Q$ and $r$ are uniformly bounded, i.e. $\exists\ M_Q,M_r >0$ such that $\left|Q(s,a)\right| \le M_Q,\ \left|r(s,a)\right| \le M_r\ \forall s\in\mathcal{S}, a\in\mathcal{A}$. Then $\mathcal{T}_{car} Q(\cdot,a)$ is uniformly bounded, i.e.
    \begin{equation}
        \left|\mathcal{T}_{car} Q(s,a) \right| \le C_Q,\ \forall s\in\mathcal{S}, a\in\mathcal{A},
    \end{equation}
    where $C_Q = \max\left\{ M_Q, \frac{M_r}{1-\gamma} \right\}$. Further, for any $k\in\mathbb{N}$, $\mathcal{T}_{car}^{k} Q(\cdot,a)$ has the same uniform bound as $\mathcal{T}_{car} Q(\cdot,a)$, i.e.
    \begin{equation}\label{eq: uniform bound}
        \left|\mathcal{T}_{car}^{k} Q(s,a) \right| \le C_Q,\ \forall s\in\mathcal{S}, a\in\mathcal{A}.
    \end{equation}
\end{lemma}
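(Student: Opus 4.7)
The plan is to bound $|\mathcal{T}_{car} Q(s,a)|$ directly from the definition and then promote the resulting inequality to an induction. First I would note that the inner object $Q(s',\arg\max_{a_{s'_\nu}} Q(s'_\nu, a_{s'_\nu}))$ appearing inside $\mathcal{T}_{car}$ is still a value of $Q$ at some state–action pair, hence is bounded in absolute value by $M_Q$. The expectation over $s'$ and the minimum over $s'_\nu$ preserve this bound, so from the defining formula
\begin{equation}\notag
    \mathcal{T}_{car} Q(s,a) = r(s,a) + \gamma \mathbb{E}_{s'\sim \mathbb{P}(\cdot|s,a)}\left[\min_{s'_\nu \in B_\epsilon(s')} Q\bigl(s',\arg\max_{a_{s'_\nu}} Q(s'_\nu, a_{s'_\nu})\bigr)\right]
\end{equation}
one obtains $|\mathcal{T}_{car} Q(s,a)| \le M_r + \gamma M_Q$.

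Next I would verify the one-step inequality $M_r + \gamma M_Q \le C_Q$ by splitting on which argument realises the max in $C_Q = \max\{M_Q, M_r/(1-\gamma)\}$. If $C_Q = M_Q$ then $M_r \le (1-\gamma) C_Q$ forces $M_r + \gamma M_Q \le C_Q$; if $C_Q = M_r/(1-\gamma)$ then $M_Q \le C_Q$ and a direct substitution gives $M_r + \gamma M_Q \le M_r + \gamma C_Q = C_Q$. This finishes the base case.

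For \eqref{eq: uniform bound} I would use induction on $k$. Assuming $|\mathcal{T}_{car}^{k-1} Q(s,a)| \le C_Q$ for all $(s,a)$, the same argument as above, but with $M_Q$ replaced by $C_Q$, yields $|\mathcal{T}_{car}^{k} Q(s,a)| \le M_r + \gamma C_Q$. The key algebraic observation is that $C_Q$ is already a fixed point of the one-step update $x \mapsto M_r + \gamma x$ as soon as $C_Q \ge M_r/(1-\gamma)$, which holds by the very definition of $C_Q$. Hence $M_r + \gamma C_Q \le C_Q$ and the induction closes.

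No step is genuinely difficult here; the only subtlety is recognising that the right constant to propagate through the induction is $C_Q$ rather than $M_Q$, so that the scheme is self-stabilising under the update $x\mapsto M_r+\gamma x$. The $(L_r,L_{\mathbb{P}})$-smoothness hypothesis is not needed for this lemma — it will matter only in the subsequent convergence result — so I would not invoke it here.
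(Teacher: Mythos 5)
Your proposal is correct and follows essentially the same route as the paper's proof: bound the inner term by $M_Q$ (it is a value of $Q$ at some state--action pair, and the min and expectation preserve the bound), obtain $M_r+\gamma M_Q\le C_Q$, and then induct with $M_Q$ replaced by $C_Q$, using that $M_r+\gamma C_Q\le(1-\gamma)C_Q+\gamma C_Q=C_Q$. The paper performs the same induction; your explicit case split on which term realises the max is just a slightly more verbose way of stating the inequality $M_r\le(1-\gamma)C_Q$, $M_Q\le C_Q$ that the paper uses directly.
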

\begin{proof}
    \begin{align}
        \left|\mathcal{T}_{car} Q(s,a) \right| &= \left|r(s,a)  + \gamma \mathbb{E}_{ s^\prime \sim \mathbb{P}(\cdot|s,a)} \left[ \min _{s^\prime_\nu \in B_\epsilon(s^\prime)} Q \left(s^\prime,\mathop{\arg\max}_{a_{s^\prime_\nu}} Q\left(s^\prime_\nu, a_{s^\prime_\nu}\right)\right) \right]\right| \\
        &\le \left|r(s,a) \right| + \gamma\mathbb{E}_{ s^\prime \sim \mathbb{P}(\cdot|s,a)} \left|\min _{s^\prime_\nu \in B_\epsilon(s^\prime)} Q \left(s^\prime,\mathop{\arg\max}_{a_{s^\prime_\nu}} Q\left(s^\prime_\nu, a_{s^\prime_\nu}\right)\right)\right| \\
        &\le M_r + \gamma M_Q \\
        &\le \max\left\{ M_Q, \frac{M_r}{1-\gamma} \right\}, \qquad \forall s\in\mathcal{S}, a\in\mathcal{A}.
    \end{align}

    Let $C_Q = \max\left\{ M_Q, \frac{M_r}{1-\gamma} \right\}$. Suppose the inequality (\ref{eq: uniform bound}) holds for $k=n$. Then, for $k=n+1$, we have
    \begin{align}
        \left|\mathcal{T}_{car}^{n+1} Q(s,a) \right| &= \left|r(s,a)  + \gamma \mathbb{E}_{ s^\prime \sim \mathbb{P}(\cdot|s,a)} \left[ \min _{s^\prime_\nu \in B_\epsilon(s^\prime)} \mathcal{T}_{car}^{n} Q \left(s^\prime,\mathop{\arg\max}_{a_{s^\prime_\nu}} \mathcal{T}_{car}^{n} Q\left(s^\prime_\nu, a_{s^\prime_\nu}\right)\right) \right]\right| \\
        &\le \left|r(s,a) \right| + \gamma\mathbb{E}_{ s^\prime \sim \mathbb{P}(\cdot|s,a)}  \left|\min _{s^\prime_\nu \in B_\epsilon(s^\prime)} \mathcal{T}_{car}^{n} Q \left(s^\prime,\mathop{\arg\max}_{a_{s^\prime_\nu}} \mathcal{T}_{car}^{n} Q\left(s^\prime_\nu, a_{s^\prime_\nu}\right)\right)\right| \\
        &\le M_r + \gamma C_Q \\
        &\le (1-\gamma) C_Q + \gamma C_Q \\
        &= C_Q.
    \end{align}
    By induction, we have $\left|\mathcal{T}_{car}^{k} Q(s,a) \right| \le C_Q,\ \forall s\in\mathcal{S}, a\in\mathcal{A}, k\in\mathbb{N}$.
\end{proof}

The following lemma shows that $\mathcal{T}_{car}^{k} Q$ is uniformly Lipschitz continuous in the $\left(L_r, L_{\mathbb{P}}\right)$-smooth environment.
\begin{lemma} \label{lem: lip}
    Suppose the environment is $\left(L_r, L_{\mathbb{P}}\right)$-smooth and suppose $Q$ and $r$ are uniformly bounded, i.e. $\exists\ M_Q,M_r >0$ such that $\left|Q(s,a)\right| \le M_Q,\ \left|r(s,a)\right| \le M_r\ \forall s\in\mathcal{S}, a\in\mathcal{A}$. Then $\mathcal{T}_{car} Q(\cdot,a)$ is Lipschitz continuous, i.e.
        \begin{equation}
            \left| \mathcal{T}_{car} Q(s,a) - \mathcal{T}_{car} Q(s^\prime,a) \right| \le L_{\mathcal{T}_{car}} \|s - s^\prime\|,
        \end{equation}
        where $L_{\mathcal{T}_{car}} =  L_r + \gamma C_Q L_{\mathbb{P}}$ and $C_Q = \max\left\{ M_Q, \frac{M_r}{1-\gamma} \right\}$. Further, for any $k\in\mathbb{N}$, $\mathcal{T}_{car}^{k} Q(\cdot,a)$ is Lipschitz continuous and has the same Lipschitz constant as $\mathcal{T}_{car} Q(\cdot,a)$, i.e.
        \begin{equation}
            \left| \mathcal{T}_{car}^{k} Q(s,a) - \mathcal{T}_{car}^{k} Q(s^\prime,a) \right| \le L_{\mathcal{T}_{car}} \|s - s^\prime\|.
        \end{equation}
\end{lemma}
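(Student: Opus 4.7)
The plan is to peel off the two $s$-dependences in $\mathcal{T}_{car} Q(s,a) = r(s,a) + \gamma\int_{\mathcal{S}} F(s')\,\mathbb{P}(ds'\mid s,a)$, where I abbreviate $F(s') := \min_{s'_\nu \in B_\epsilon(s')} Q(s', \arg\max_{a_{s'_\nu}} Q(s'_\nu, a_{s'_\nu}))$. Note that $F$ depends only on $s'$, not on the outer state $s$ on which we want the Lipschitz estimate, so the only mechanisms by which changing $s$ to $s''$ affects $\mathcal{T}_{car} Q(s,a)$ are through $r$ and through the transition kernel $\mathbb{P}(\cdot\mid s,a)$.

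First I would apply the triangle inequality to get
\begin{equation}\notag
\left|\mathcal{T}_{car} Q(s,a) - \mathcal{T}_{car} Q(s'',a)\right| \le \left|r(s,a)-r(s'',a)\right| + \gamma\left|\int_{\mathcal{S}} F(u)\bigl[\mathbb{P}(du\mid s,a)-\mathbb{P}(du\mid s'',a)\bigr]\right|.
\end{equation}
The first term is bounded by $L_r\|s-s''\|$ directly from $(L_r,L_{\mathbb{P}})$-smoothness. For the second term I would pull $F$ through the difference of signed measures and use $|F(u)| \le M_Q \le C_Q$ together with Hölder's inequality (or the usual total-variation / $L^1$ duality) to get $\left|\int F\,d(\mathbb{P}(\cdot\mid s,a)-\mathbb{P}(\cdot\mid s'',a))\right| \le C_Q\,\|\mathbb{P}(\cdot\mid s,a)-\mathbb{P}(\cdot\mid s'',a)\|_{L^1(\mathcal{S})} \le C_Q L_{\mathbb{P}}\|s-s''\|$. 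Summing gives the claimed Lipschitz constant $L_{\mathcal{T}_{car}} = L_r + \gamma C_Q L_{\mathbb{P}}$.

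The key observation that makes the induction for $\mathcal{T}_{car}^k Q$ work without any blow-up is that $L_{\mathcal{T}_{car}}$ involves the \emph{uniform bound} $C_Q$ of the integrand, and not its Lipschitz constant. So for the induction step at level $k{+}1$, I define $F_k(s') := \min_{s'_\nu \in B_\epsilon(s')} (\mathcal{T}_{car}^k Q)(s', \arg\max_{a_{s'_\nu}} (\mathcal{T}_{car}^k Q)(s'_\nu, a_{s'_\nu}))$, invoke Lemma~\ref{lem: uniform bound} to conclude $|F_k(s')| \le C_Q$, and then repeat verbatim the estimate above with $F_k$ in place of $F$. This gives the same constant $L_{\mathcal{T}_{car}}$ at every iterate, which is what we want.

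The only subtle point, which I would pause to verify carefully, is the inner measurability/integrability: the min–argmax construction of $F_k$ must yield a well-defined, measurable, uniformly bounded function of $s'$ so that the expectation under $\mathbb{P}(\cdot\mid s,a)$ is legitimate and so that the duality bound $|\int F\,d(\mathbb{P}-\mathbb{P}')|\le \|F\|_\infty\|\mathbb{P}-\mathbb{P}'\|_{L^1}$ actually applies. Since $\mathcal{A}$ is finite and $B_\epsilon(s')$ is compact, this is a finite min over continuous (or at least measurable) sections and poses no real difficulty, but it is the one place where the argument is not purely algebraic. Once this is in place, both the base case and the induction step are two-line computations, and the lemma follows.
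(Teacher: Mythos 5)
Your proposal is correct and follows essentially the same route as the paper's proof: the same split into the reward difference plus the integral of the (uniformly bounded) inner min--argmax term against the difference of transition kernels, bounded via $(L_r,L_{\mathbb{P}})$-smoothness and the uniform bound $C_Q$ from Lemma~\ref{lem: uniform bound}, with the induction to $\mathcal{T}_{car}^{k}$ carried by the fact that the Lipschitz constant depends only on the uniform bound of the iterates, not on their regularity. Your explicit remark on measurability of the min--argmax construction is a point the paper glosses over, but it does not change the argument.
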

\begin{proof}
    For all $s_1,s_2 \in \mathcal{S}$, we have
    \begin{align}
        &\quad \mathcal{T}_{car} Q(s_1,a) - \mathcal{T}_{car} Q(s_2,a) \\
        &= r(s_1,a)  + \gamma \mathbb{E}_{ s^\prime \sim \mathbb{P}(\cdot|s_1,a)} \left[ \min _{s^\prime_\nu \in B_\epsilon(s^\prime)} Q \left(s^\prime,\mathop{\arg\max}_{a_{s^\prime_\nu}} Q\left(s^\prime_\nu, a_{s^\prime_\nu}\right)\right) \right] \\
        &\quad - r(s_2,a) - \gamma \mathbb{E}_{ s^\prime \sim \mathbb{P}(\cdot|s_2,a)} \left[ \min _{s^\prime_\nu \in B_\epsilon(s^\prime)} Q \left(s^\prime,\mathop{\arg\max}_{a_{s^\prime_\nu}} Q\left(s^\prime_\nu, a_{s^\prime_\nu}\right)\right) \right] \\
        &= \left(r(s_1,a) - r(s_2,a)\right) \\
        &\quad + \gamma \int_{s^\prime} \left( \mathbb{P}(s^\prime|s_1,a) - \mathbb{P}(s^\prime|s_2,a)\right) \min _{s^\prime_\nu \in B_\epsilon(s^\prime)} Q \left(s^\prime,\mathop{\arg\max}_{a_{s^\prime_\nu}} Q\left(s^\prime_\nu, a_{s^\prime_\nu}\right)\right) ds^\prime.
    \end{align}

    Then, we have 
    \begin{align}
        &\quad \left| \mathcal{T}_{car} Q(s_1,a) - \mathcal{T}_{car} Q(s_2,a)\right| \\
        &\le \left| \left(r(s_1,a) - r(s_2,a)\right) \right| \\
        &\quad + \left| \gamma \int_{s^\prime} \left( \mathbb{P}(s^\prime|s_1,a) - \mathbb{P}(s^\prime|s_2,a)\right) \min _{s^\prime_\nu \in B_\epsilon(s^\prime)} Q \left(s^\prime,\mathop{\arg\max}_{a_{s^\prime_\nu}} Q\left(s^\prime_\nu, a_{s^\prime_\nu}\right)\right) ds^\prime \right| \\    
        &\le L_r \|s_1 - s_2\| \\
        &\quad + \gamma \int_{s^\prime} \left| \mathbb{P}(s^\prime|s_1,a) - \mathbb{P}(s^\prime|s_2,a)\right| \left| \min _{s^\prime_\nu \in B_\epsilon(s^\prime)} Q \left(s^\prime,\mathop{\arg\max}_{a_{s^\prime_\nu}} Q\left(s^\prime_\nu, a_{s^\prime_\nu}\right)\right) \right| ds^\prime\\
        &\le  L_r \|s_1 - s_2\| + \gamma C_Q \int_{s^\prime} \left| \mathbb{P}(s^\prime|s_1,a) - \mathbb{P}(s^\prime|s_2,a)\right| ds^\prime\\
        &\le  L_r \|s_1 - s_2\| + \gamma C_Q L_{\mathbb{P}} \|s_1 - s_2\| \\
        &= \left( L_r + \gamma C_Q L_{\mathbb{P}} \right) \|s_1 - s_2\|.
    \end{align}
    The second inequality comes from the Lipschitz property of $r$. The third inequality comes from the uniform boundedness of $Q$ and the last inequality utilizes the Lipschitz property of $\mathbb{P}$.

    Note that $\mathcal{T}_{car}^{k}$ and $\mathcal{T}_{car}$ have the same uniform boundedness $C_Q$. Then, due to lemma \ref{lem: uniform bound}, we can extend the above proof to $\mathcal{T}_{car}^{k}$.
\end{proof}

\begin{remark}
    Note that if replace the operator $\mathcal{T}_{car}$ in the Lemma \ref{lem: uniform bound} and Lemma \ref{lem: lip} with Bellman optimality operator $\mathcal{T}_B$, these lemmas still hold.
\end{remark}

The following lemma shows that the fixed point iteration has a property close to contraction.
\begin{lemma} \label{lem: near contraction}
    Suppose $Q$ and $r$ are uniformly bounded, i.e. $\exists\ M_Q,M_r >0$ such that $\left|Q(s,a)\right| \le M_Q,\ \left|r(s,a)\right| \le M_r\ \forall s\in\mathcal{S}, a\in\mathcal{A}$. Let $Q^*$ denote the Bellman optimality Q-function. If the consistency assumption holds, we have
    \begin{equation}
        \|\mathcal{T}_{car} Q - \mathcal{T}_{car} Q^*\|_\infty \le \gamma \left( \| Q - Q^* \|_\infty + 2 \max_s \max_{s_\nu \in B^*_\epsilon(s)} \max_{a} \left| Q \left(s,a\right) - Q \left(s_\nu,a\right) \right| \right).
    \end{equation}
    Further, if $Q(\cdot, a)$ is $L$-Lipschitz continuous with respect to $s\in\mathcal{S}$, i.e
    \begin{equation}
        \left| Q(s,a) - Q(s^\prime,a) \right| \le L \|s-s^\prime\|,\quad \forall s,s^\prime \in \mathcal{S},\ a\in \mathcal{A},
    \end{equation}
    we have
    \begin{equation}
        \|\mathcal{T}_{car} Q - \mathcal{T}_{car} Q^*\|_\infty \le \gamma \| Q - Q^* \|_\infty + 2 \gamma L \epsilon .
    \end{equation}
\end{lemma}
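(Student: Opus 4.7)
The plan is to first use the consistency assumption (CAP) to collapse $\mathcal{T}_{car} Q^*$ to a familiar form, and then control $\mathcal{T}_{car} Q - \mathcal{T}_{car} Q^*$ by a telescoping argument that separately handles the mismatch between $Q$ and $Q^*$ and the oscillation of $Q$ across the intrinsic $\epsilon$-neighborhood. The second statement will then be immediate from the first.

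First I would unpack $\mathcal{T}_{car} Q^*$. Writing $a^*(s') := \arg\max_a Q^*(s',a)$ and using CAP (Assumption~\ref{ass: consistency assumption}), for every $s'_\nu \in B_\epsilon(s') = B^*_\epsilon(s')$ one has $\arg\max_a Q^*(s'_\nu,a) = a^*(s')$, so $Q^*(s', \arg\max_{a_{s'_\nu}} Q^*(s'_\nu, a_{s'_\nu})) = Q^*(s', a^*(s')) = \max_{a'} Q^*(s',a')$ is independent of $s'_\nu$; the $\min_{s'_\nu}$ becomes trivial and $\mathcal{T}_{car} Q^* = \mathcal{T}_B Q^* = Q^*$ (recovering the relevant half of Theorem~\ref{thm: fixed point}). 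Consequently,
\begin{equation*}
\mathcal{T}_{car} Q(s,a) - \mathcal{T}_{car} Q^*(s,a) = \gamma\, \mathbb{E}_{s' \sim \mathbb{P}(\cdot \mid s,a)}\!\left[\,\min_{s'_\nu \in B_\epsilon(s')} Q(s', \pi_Q(s'_\nu)) - \max_{a'} Q^*(s',a')\right],
\end{equation*}
where $\pi_Q(s'_\nu) := \arg\max_a Q(s'_\nu,a)$. It therefore suffices to bound the bracket pointwise by $\|Q-Q^*\|_\infty + 2\Delta$ with $\Delta := \max_{s}\max_{s_\nu \in B^*_\epsilon(s)} \max_a |Q(s,a) - Q(s_\nu,a)|$.

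Second, I would bound this bracket in both directions. For the upper direction, plug in the feasible choice $s'_\nu = s'$: then $\min_{s'_\nu} Q(s', \pi_Q(s'_\nu)) \le Q(s', \pi_Q(s')) \le Q^*(s', \pi_Q(s')) + \|Q-Q^*\|_\infty \le \max_{a'}Q^*(s',a') + \|Q-Q^*\|_\infty$. For the lower direction, let $\hat{s} \in B_\epsilon(s')$ attain the min and chain four elementary inequalities: (i) $Q(s', \pi_Q(\hat{s})) \ge Q(\hat{s}, \pi_Q(\hat{s})) - \Delta$ by definition of $\Delta$; (ii) $Q(\hat{s}, \pi_Q(\hat{s})) \ge Q(\hat{s}, a^*(s'))$ by optimality of $\pi_Q(\hat{s})$ for $Q(\hat{s},\cdot)$; (iii) $Q(\hat{s}, a^*(s')) \ge Q(s', a^*(s')) - \Delta$ again by $\Delta$; and (iv) $Q(s', a^*(s')) \ge Q^*(s', a^*(s')) - \|Q-Q^*\|_\infty$. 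Combining these yields $Q(s', \pi_Q(\hat s)) \ge \max_{a'}Q^*(s',a') - \|Q-Q^*\|_\infty - 2\Delta$. Taking the absolute value and then the supremum over $(s,a)$ (the expectation is a contraction on $L^\infty$) gives the first bound.

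Finally, for the Lipschitz case I would note that under CAP, $B^*_\epsilon(s) \subseteq B_\epsilon(s)$ implies $\|s - s_\nu\| \le \epsilon$, so the $L$-Lipschitz hypothesis on $Q(\cdot,a)$ forces $\Delta \le L\epsilon$, and the second inequality follows by direct substitution. The only mildly delicate step is (i)--(iii) in the lower bound, where one must be careful that the two $\Delta$ terms arise from \emph{different} arguments (one controls the variation of $Q(\cdot, \pi_Q(\hat{s}))$, the other controls the variation of $Q(\cdot, a^*(s'))$), giving the factor $2$ in the final estimate; aside from this bookkeeping the argument is mechanical.
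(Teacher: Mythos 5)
Your proof is correct and follows essentially the same route as the paper's: both collapse $\mathcal{T}_{car}Q^*$ to $Q^*$ via CAP, bound the easy direction by $\|Q-Q^*\|_\infty$ using optimality of $\max_{a'}Q^*(s',\cdot)$, and handle the hard direction by the same telescoping chain (your steps (i)--(iv) are exactly the paper's three-term decomposition at the minimizer $s^{\prime,*}_\nu$, with its last term split into your (ii) and (i)). The Lipschitz specialization via $\Delta \le L\epsilon$ is also identical.
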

\begin{proof}
    Denote $a_{s^\prime_\nu,Q}^*=\mathop{\arg\max}_{a} Q\left(s^\prime_\nu, a\right)$ and $ s^{\prime,*}_\nu = \arg\min _{s^\prime_\nu \in B^*_\epsilon(s^\prime)} Q \left(s^\prime,a_{s^\prime_\nu,Q}^*\right)  $. If $\mathcal{T}_{car} Q > \mathcal{T}_{car} Q^*$, we have
    \begin{align}
        &\quad \left(\mathcal{T}_{car} Q\right)(s,a) - \left(\mathcal{T}_{car} Q^*\right)(s,a) \\
        &= \gamma \mathbb{E}_{ s^\prime \sim \mathbb{P}(\cdot|s,a)} \left[ \min _{s^\prime_\nu \in B^*_\epsilon(s^\prime)} Q \left(s^\prime,a_{s^\prime_\nu,Q}^*\right) - \min _{s^\prime_\nu \in B^*_\epsilon(s^\prime)} Q^* \left(s^\prime,a_{s^\prime_\nu,Q^*}^*\right) \right]\\
        &= \gamma \mathbb{E}_{ s^\prime \sim \mathbb{P}(\cdot|s,a)} \left[ Q \left(s^\prime,a_{s^{\prime,*}_\nu,Q}^*\right) - Q^* \left(s^\prime,a_{s^\prime,Q^*}^*\right) \right]\\
        &= \gamma \mathbb{E}_{ s^\prime \sim \mathbb{P}(\cdot|s,a)} \left[ Q \left(s^\prime,a_{s^{\prime,*}_\nu,Q}^*\right) - Q^* \left(s^\prime,a_{s^{\prime,*}_\nu,Q}^*\right) + Q^* \left(s^\prime,a_{s^{\prime,*}_\nu,Q}^*\right) - Q^* \left(s^\prime,a_{s^\prime,Q^*}^*\right) \right]\\
        &\le  \gamma \mathbb{E}_{ s^\prime \sim \mathbb{P}(\cdot|s,a)} \left[ Q \left(s^\prime,a_{s^{\prime,*}_\nu,Q}^*\right) - Q^* \left(s^\prime,a_{s^{\prime,*}_\nu,Q}^*\right) \right]\\
        &\le \gamma \mathbb{E}_{ s^\prime \sim \mathbb{P}(\cdot|s,a)} \left[ \max_{a^\prime} \left( Q \left(s^\prime,a^\prime\right) - Q^* \left(s^\prime,a^\prime\right) \right) \right]\\
        &\le \gamma \| Q - Q^* \|_\infty,
    \end{align}
    where the second equality utilize the definition of $B^*_\epsilon(s^\prime)$ and the first inequality comes from the optimality of $a_{s^\prime,Q^*}^*$.
    If $\mathcal{T}_{car} Q < \mathcal{T}_{car} Q^*$, we have
    \begin{align}
        &\quad \left(\mathcal{T}_{car} Q^*\right)(s,a) - \left(\mathcal{T}_{car} Q\right)(s,a) \\
        &= \gamma \mathbb{E}_{ s^\prime \sim \mathbb{P}(\cdot|s,a)} \left[  \min _{s^\prime_\nu \in B^*_\epsilon(s^\prime)} Q^* \left(s^\prime,a_{s^\prime_\nu,Q^*}^*\right) - \min _{s^\prime_\nu \in B^*_\epsilon(s^\prime)} Q \left(s^\prime,a_{s^\prime_\nu,Q}^*\right)  \right]\\
        &= \gamma \mathbb{E}_{ s^\prime \sim \mathbb{P}(\cdot|s,a)} \left[ Q^* \left(s^\prime,a_{s^\prime,Q^*}^*\right) - Q \left(s^\prime,a_{s^{\prime,*}_\nu,Q}^*\right) \right]\\
        &= \gamma \mathbb{E}_{ s^\prime \sim \mathbb{P}(\cdot|s,a)} \left[ Q^* \left(s^\prime,a_{s^\prime,Q^*}^*\right) - Q\left(s^\prime,a_{s^\prime,Q^*}^*\right)\right] \label{eq: convergence 1}\\
        &\quad + \gamma \mathbb{E}_{ s^\prime \sim \mathbb{P}(\cdot|s,a)}\left[Q\left(s^\prime,a_{s^\prime,Q^*}^*\right) - Q\left( s^{\prime,*}_\nu, a_{s^\prime,Q^*}^* \right) \right] \label{eq: convergence 2}\\
        &\quad + \gamma \mathbb{E}_{ s^\prime \sim \mathbb{P}(\cdot|s,a)}\left[Q\left( s^{\prime,*}_\nu, a_{s^\prime,Q^*}^* \right) - Q \left(s^\prime,a_{s^{\prime,*}_\nu,Q}^*\right) \right]. \label{eq: convergence 3}
    \end{align}
    
    We will separately analyze the items \ref{eq: convergence 1}, \ref{eq: convergence 2} and \ref{eq: convergence 3}. Firstly, we can bound the item \ref{eq: convergence 1} with $\| Q - Q^* \|_\infty$.
    \begin{align}
        &\quad \mathbb{E}_{ s^\prime \sim \mathbb{P}(\cdot|s,a)} \left[ Q^* \left(s^\prime,a_{s^\prime,Q^*}^*\right) - Q\left(s^\prime,a_{s^\prime,Q^*}^*\right)\right] \\
        & \le \mathbb{E}_{ s^\prime \sim \mathbb{P}(\cdot|s,a)} \left[ \max_{a^\prime} \left( Q \left(s^\prime,a^\prime\right) - Q^* \left(s^\prime,a^\prime\right) \right) \right] \\
        &\le \| Q - Q^* \|_\infty.
    \end{align}

    For the item \ref{eq: convergence 2}, we have
    \begin{align}
        &\quad \mathbb{E}_{ s^\prime \sim \mathbb{P}(\cdot|s,a)}\left[Q\left(s^\prime,a_{s^\prime,Q^*}^*\right) - Q\left( s^{\prime,*}_\nu, a_{s^\prime,Q^*}^* \right) \right]  \\
        &\le \mathbb{E}_{ s^\prime \sim \mathbb{P}(\cdot|s,a)} \left[ \max_{a^\prime} \left( Q \left(s^\prime,a^\prime\right) - Q \left(s^{\prime,*}_\nu,a^\prime\right) \right) \right] \\
        &\le \mathbb{E}_{ s^\prime \sim \mathbb{P}(\cdot|s,a)} \left[ \max_{s^\prime_\nu \in B^*_\epsilon(s^\prime)} \max_{a^\prime} \left| Q \left(s^\prime,a^\prime\right) - Q \left(s^\prime_\nu,a^\prime\right) \right| \right] \\
        &\le \max_s \max_{s_\nu \in B^*_\epsilon(s)} \max_{a} \left| Q \left(s,a\right) - Q \left(s_\nu,a\right) \right| .
    \end{align}

    Due to $a_{s^{\prime,*}_\nu,Q}^*=\mathop{\arg\max}_{a} Q\left(s^{\prime,*}_\nu, a\right)$, we have $Q\left(s^{\prime,*}_\nu, a\right) \le Q\left(s^{\prime,*}_\nu, a_{s^{\prime,*}_\nu,Q}^*\right),\ \forall a$. Then, for the item \ref{eq: convergence 3}, we have
    \begin{align}
        &\quad \mathbb{E}_{ s^\prime \sim \mathbb{P}(\cdot|s,a)}\left[Q\left( s^{\prime,*}_\nu, a_{s^\prime,Q^*}^* \right) - Q \left(s^\prime,a_{s^{\prime,*}_\nu,Q}^*\right) \right] \\
        &\le \mathbb{E}_{ s^\prime \sim \mathbb{P}(\cdot|s,a)}\left[Q\left(s^{\prime,*}_\nu, a_{s^{\prime,*}_\nu,Q}^*\right) - Q \left(s^\prime,a_{s^{\prime,*}_\nu,Q}^*\right) \right] \\
        &\le \mathbb{E}_{ s^\prime \sim \mathbb{P}(\cdot|s,a)} \left[ \max_{a^\prime} \left| Q \left(s^\prime,a^\prime\right) - Q \left(s^{\prime,*}_\nu,a^\prime\right) \right| \right]\\
        &\le \mathbb{E}_{ s^\prime \sim \mathbb{P}(\cdot|s,a)} \left[ \max_{s^\prime_\nu \in B^*_\epsilon(s^\prime)} \max_{a^\prime} \left| Q \left(s^\prime,a^\prime\right) - Q \left(s^\prime_\nu,a^\prime\right) \right| \right] \\
        &\le \max_s \max_{s_\nu \in B^*_\epsilon(s)} \max_{a} \left| Q \left(s,a\right) - Q \left(s_\nu,a\right) \right| .
    \end{align}

    Thus, we have 
    \begin{align}
        &\quad \left(\mathcal{T}_{car} Q^*\right)(s,a) - \left(\mathcal{T}_{car} Q\right)(s,a) \\
        &\le \gamma \left( \| Q - Q^* \|_\infty + 2 \max_s \max_{s_\nu \in B^*_\epsilon(s)} \max_{a} \left| Q \left(s,a\right) - Q \left(s_\nu,a\right) \right| \right).
    \end{align}
    In a summary, we get 
    \begin{equation}
        \|\mathcal{T}_{car} Q - \mathcal{T}_{car} Q^*\|_\infty \le \gamma \left( \| Q - Q^* \|_\infty + 2 \max_s \max_{s_\nu \in B^*_\epsilon(s)} \max_{a} \left| Q \left(s,a\right) - Q \left(s_\nu,a\right) \right| \right).
    \end{equation}

    Further, when $Q(\cdot, a)$ is $L$-Lipschitz continuous, i.e
    \begin{equation}
        \left| Q(s,a) - Q(s^\prime,a) \right| \le L \|s-s^\prime\|,\quad \forall s,s^\prime \in \mathcal{S},\ a\in \mathcal{A},
    \end{equation}
    we have 
    \begin{align}
        &\quad \max_s \max_{s_\nu \in B^*_\epsilon(s)} \max_{a} \left| Q \left(s,a\right) - Q \left(s_\nu,a\right) \right| \\
        &\le  \max_s \max_{s_\nu \in B^*_\epsilon(s)} L \|s-s_\nu\| \\
        &\le L \epsilon.
    \end{align}
    Then, we have
    \begin{equation}
        \|\mathcal{T}_{car} Q - \mathcal{T}_{car} Q^*\|_\infty \le \gamma \left( \| Q - Q^* \|_\infty + 2 L \epsilon \right).
    \end{equation}
\end{proof}
\begin{remark}
    We can relax the Lipschitz condition to local lipschitz continuous in the $B^*_\epsilon(s)$.
\end{remark}

We prove that the fixed point iterations of $\mathcal{T}_{car}$ at least converge to a sub-optimal solution close to $Q^*$ in the $\left(L_r, L_{\mathbb{P}}\right)$-smooth environment.
\begin{theorem}
    Suppose the environment is $\left(L_r, L_{\mathbb{P}}\right)$-smooth and suppose $Q_0$ and $r$ are uniformly bounded, i.e. $\exists\ M_{Q_0},M_r >0$ such that $\left|Q_0(s,a)\right| \le M_{Q_0},\ \left|r(s,a)\right| \le M_r\ \forall s\in\mathcal{S}, a\in\mathcal{A}$. Let $Q^*$ denote the Bellman optimality Q-function and $Q_{k+1} = \mathcal{T}_{car} Q_{k} = \mathcal{T}_{car}^{k+1} Q_0$ for all $k\in\mathbb{N}$. If the consistency assumption holds, we have
    \begin{equation}
        \|Q_{k+1} - Q^*\|_\infty \le  \gamma^{k+1} \| Q_0 - Q^*\|_\infty + \gamma^{k+1} D_{Q_0} + \frac{2 \gamma \epsilon }{1-\gamma}L_{\mathcal{T}_{car}},
    \end{equation}
    where $D_{Q_0} = 2 \max_s \max_{s_\nu \in B^*_\epsilon(s)} \max_{a} \left| Q_0 \left(s,a\right) - Q_0 \left(s_\nu,a\right) \right|$, $L_{\mathcal{T}_{car}} =  L_r + \gamma C_{Q_0} L_{\mathbb{P}}$ and $C_{Q_0} = \max\left\{ M_{Q_0}, \frac{M_r}{1-\gamma} \right\}$. 
\end{theorem}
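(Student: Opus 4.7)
The plan is to split the iteration into a single ``warm-up'' step followed by a geometric recursion exploiting Lipschitz continuity. After one application of $\mathcal{T}_{car}$, Lemma \ref{lem: lip} tells us that $Q_k(\cdot,a)$ is $L_{\mathcal{T}_{car}}$-Lipschitz for every $k\ge 1$ (this is precisely where the smoothness of the environment and the uniform bound $C_{Q_0}$ from Lemma \ref{lem: uniform bound} enter). So $Q_0$ is the only iterate that may fail to be Lipschitz, which is exactly why the statement pays a one-shot price $\gamma^{k+1}D_{Q_0}$ rather than something involving a Lipschitz constant of $Q_0$.

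First I would handle $k=0$ using the general (non-Lipschitz) bound from Lemma \ref{lem: near contraction}:
\begin{equation}\notag
\|Q_1 - Q^*\|_\infty \le \gamma \|Q_0 - Q^*\|_\infty + \gamma D_{Q_0}.
\end{equation}
Next, for every $k\ge 1$, since $Q_k$ is $L_{\mathcal{T}_{car}}$-Lipschitz by Lemma \ref{lem: lip}, I would apply the second (Lipschitz) part of Lemma \ref{lem: near contraction} to obtain the one-step recursion
\begin{equation}\notag
\|Q_{k+1} - Q^*\|_\infty \le \gamma \|Q_k - Q^*\|_\infty + 2\gamma L_{\mathcal{T}_{car}}\epsilon.
\end{equation}

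Iterating this recursion from $k=1$ up to an arbitrary index $k$ and then substituting the warm-up bound for $\|Q_1 - Q^*\|_\infty$ yields
\begin{equation}\notag
\|Q_{k+1} - Q^*\|_\infty \le \gamma^k \|Q_1 - Q^*\|_\infty + 2\gamma L_{\mathcal{T}_{car}}\epsilon \sum_{i=0}^{k-1}\gamma^i
\le \gamma^{k+1}\|Q_0-Q^*\|_\infty + \gamma^{k+1} D_{Q_0} + \frac{2\gamma\epsilon}{1-\gamma}L_{\mathcal{T}_{car}},
\end{equation}
where in the last step I use $\sum_{i=0}^{k-1}\gamma^i \le 1/(1-\gamma)$. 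The cases $k=0$ and trivial bookkeeping at the boundary can be checked directly.

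The conceptually delicate point is verifying that Lemma \ref{lem: lip} legitimately upgrades every iterate beyond $Q_0$ to a Lipschitz function with the \emph{same} constant $L_{\mathcal{T}_{car}}$, independent of $k$; this is where the uniform boundedness provided by Lemma \ref{lem: uniform bound} is essential, since the Lipschitz estimate derived there uses $C_{Q_0}$ rather than any iterate-dependent bound. Once that is in hand, the rest is a routine geometric-series argument, and the separation of the proof into a single non-Lipschitz step plus a Lipschitz tail is what cleanly produces the two error terms $\gamma^{k+1}D_{Q_0}$ and $\frac{2\gamma\epsilon}{1-\gamma}L_{\mathcal{T}_{car}}$ appearing in the statement.
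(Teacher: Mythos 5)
Your proposal is correct and follows essentially the same route as the paper's proof: a single non-Lipschitz application of Lemma \ref{lem: near contraction} to $Q_0$ (yielding the $\gamma^{k+1}D_{Q_0}$ term), followed by the Lipschitz version of that lemma applied to the iterates $\mathcal{T}_{car}^{k}Q_0$, $k\ge 1$ (which Lemma \ref{lem: lip} guarantees are uniformly $L_{\mathcal{T}_{car}}$-Lipschitz), and a geometric-series sum. The paper unrolls the recursion from the top down rather than iterating forward from $Q_1$, but the decomposition and the role of each lemma are identical.
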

\begin{proof}
    For any $k\in\mathbb{N}$, we have
    \begin{align}
        &\quad \|Q_{k+1} - Q^*\|_\infty \\
        &= \|\mathcal{T}_{car}^{k+1} Q_0 - \mathcal{T}_{car}^{k+1} Q^*\|_\infty \\
        &\le \gamma \|\mathcal{T}_{car}^{k} Q_0 - \mathcal{T}_{car}^{k} Q^*\|_\infty  + 2 \gamma  L_{\mathcal{T}_{car}} \epsilon \\
        &\le \gamma \left( \gamma \|\mathcal{T}_{car}^{k-1} Q_0 - \mathcal{T}_{car}^{k-1} Q^*\|_\infty  + 2 \gamma  L_{\mathcal{T}_{car}} \epsilon \right) + 2 \gamma  L_{\mathcal{T}_{car}} \epsilon \\
        &= \gamma^2 \|\mathcal{T}_{car}^{k-1} Q_0 - \mathcal{T}_{car}^{k-1} Q^*\|_\infty + 2 \epsilon L_{\mathcal{T}_{car}}  \sum_{l=1}^{2} \gamma^l \\
        &\le \cdots \\
        &\le \gamma^k \|\mathcal{T}_{car} Q_0 - \mathcal{T}_{car} Q^*\|_\infty + 2 \epsilon L_{\mathcal{T}_{car}}  \sum_{l=1}^{k} \gamma^l \\
        &\le \gamma^{k+1} \| Q_0 - Q^*\|_\infty + 2 \gamma^{k+1} \max_s \max_{s_\nu \in B^*_\epsilon(s)} \max_{a} \left| Q_0 \left(s,a\right) - Q_0 \left(s_\nu,a\right) \right|  + 2 \epsilon L_{\mathcal{T}_{car}}  \sum_{l=1}^{k} \gamma^l \\
        &\le \gamma^{k+1} \| Q_0 - Q^*\|_\infty + 2 \gamma^{k+1} \max_s \max_{s_\nu \in B^*_\epsilon(s)} \max_{a} \left| Q_0 \left(s,a\right) - Q_0 \left(s_\nu,a\right) \right| + \frac{2 \gamma \epsilon }{1-\gamma}L_{\mathcal{T}_{car}}.
    \end{align}
    The first and second inequalities come from Lemma \ref{lem: lip} and Lemma \ref{lem: near contraction}. The penultimate inequality comes from Lemma \ref{lem: near contraction}.
\end{proof}

\section{Theorems and Proofs of Policy Robustness under Bellman p-error}

\textbf{Banach Space} is a complete normed space $\left(X, \|\cdot\| \right)$, consisting of a vector space $X$ together with a norm $\|\cdot\| :X\rightarrow \mathbb{R}^+$. In this paper, we consider the setting where the continuous state space $\mathcal{S} \subset \mathbb{R}^d$ is a compact set and the action space $\mathcal{A}$ is a finite set. We discuss in the Banach space $\left(L^p\left( \mathcal{S}\times\mathcal{A} \right), \|\cdot\|_{p} \right),\ 1\le p\le\infty$. Define $L^p\left( \mathcal{S}\times\mathcal{A} \right) = \left\{ f| \|f\|_{p} <\infty \right\}$, where $\|f\|_{p} = \left( \int_{\mathcal{S}}\sum_{a\in\mathcal{A}} \left| f(s,a) \right|^p d\mu(s) \right)^{\frac{1}{p}}$ for $\ 1\le p <\infty$, $\mu$ is the measure over $\mathcal{S}$ and $\|f\|_{\infty} = \inf \left\{ M\in\mathbb{R}_{\ge 0}|\left| f(s,a) \right|\le M \text{ for almost every } (s,a)  \right\}$. For simplicity, we refer to this Banach space as $L^p\left( \mathcal{S}\times\mathcal{A} \right)$.

\subsection{$L^\infty$ is Necessary for Adversarial Robustness}
\begin{theorem}
    There exists an MDP instance $\mathcal{M}$ such that the following statements hold. Given a function $Q$ and adversary perturbation size $\epsilon$, let $\mathcal{S}^Q_{sub}$ denote the set of states where the greedy policy according to $Q$ is suboptimal, i.e. $\mathcal{S}^Q_{sub} = \left\{ s | Q^*(s,\mathop{\arg\max}_a Q(s, a)) < \max_a Q^*(s, a) \right\}$ and let $\mathcal{S}^Q_{adv}$ denote the set of states in whose $\epsilon$-neighbourhood there exists the adversarial state, i.e. $\mathcal{S}^Q_{adv} = \left\{ s | \exists s_\nu \in B_\epsilon(s),\ \text{s.t. } Q^*(s,\mathop{\arg\max}_a Q(s_\nu, a)) < \max_a Q^*(s, a) \right\}$, where $Q^*$ is the Bellman optimal $Q$-function.
        \begin{itemize}
            \item For any $1\le p<\infty$ and $\delta>0$, there exists a function $Q\in L^p\left( \mathcal{S}\times\mathcal{A} \right)$ satisfying $\|Q-Q^*\|_{L^p\left( \mathcal{S}\times\mathcal{A} \right)} \leq \delta$ such that $\mu\left(\mathcal{S}^Q_{sub}\right) = O(\delta)$ yet $\mu\left( \mathcal{S}^Q_{adv} \right) =\mu \left(\mathcal{S}\right)$.
            \item There exists a $\bar{\delta}>0$ such that for any $0< \delta \le \bar{\delta}$, for any function $Q\in L^\infty\left( \mathcal{S}\times\mathcal{A} \right)$ satisfying $\|Q-Q^*\|_{L^\infty\left( \mathcal{S}\times\mathcal{A} \right)} \leq  \delta$, we have that $\mu\left(\mathcal{S}^Q_{sub}\right) = O(\delta)$ and $\mu\left( \mathcal{S}^Q_{adv} \right) =2 \epsilon + O\left( \delta \right)$.
        \end{itemize}
\end{theorem}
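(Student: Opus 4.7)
The plan is to exhibit a single one-dimensional MDP whose Bellman optimal $Q^*$ has a simple interface structure, then verify the two statements by separate arguments. Concretely, take $\mathcal{S}=[-1,1]$, $\mathcal{A}=\{a_1,a_2\}$, discount $\gamma=0$, and rewards $r(s,a_1)=1+s$, $r(s,a_2)=1-s$, so that $Q^*(s,a_1)=1+s$ and $Q^*(s,a_2)=1-s$. The optimal action is $a_1$ on $s>0$ and $a_2$ on $s<0$, with the two values tied only on $\{s=0\}$. This MDP is rich enough to support both a pathological $L^p$ perturbation for (1) and a clean uniform estimate for (2).

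For statement (1), I will perturb $Q^*$ by $\epsilon$-dense narrow spikes that locally flip the greedy action. Place centers $x_0,\ldots,x_{n-1}\in[-1,1]$ with spacing at most $2\epsilon$ on each side of $0$, using $n=O(1/\epsilon)$ many. Around each $x_k$ take an interval $I_k$ of width $w$; on $I_k$ add $2|x_k|+\eta$ (for small $\eta>0$) to the coordinate opposite to $\pi^*(x_k)$, and leave $Q=Q^*$ elsewhere. Every pointwise deviation is at most $2+\eta$, so $\|Q-Q^*\|_p^p\le n\,w\,(2+\eta)^p$; choosing $w=\delta^p/(n(2+\eta)^p)$ yields $\|Q-Q^*\|_p\le\delta$. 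Outside $\bigcup_k I_k$ we have $Q=Q^*$, so $\mathcal{S}^Q_{sub}\subseteq\bigcup_k I_k$ has measure $nw=O(\delta^p)=O(\delta)$. Conversely, for every $s\in\mathcal{S}$ there is some $x_k\in B_\epsilon(s)$ on the same side of $0$ as $s$, and the spike at $x_k$ has greedy action opposite to $\pi^*(x_k)=\pi^*(s)$, giving an adversarial state in $B_\epsilon(s)$. Hence $\mathcal{S}^Q_{adv}=\mathcal{S}$ up to a null set.

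For statement (2), fix $\bar\delta<1/4$ and any $Q$ with $\|Q-Q^*\|_\infty\le\delta\le\bar\delta$. Pointwise, $Q(s,a_1)-Q(s,a_2)\in[2s-2\delta,2s+2\delta]$, so whenever $|s|>\delta$ the sign of this gap matches $\operatorname{sgn}(s)$ and the greedy action agrees with $\pi^*(s)$; thus $\mathcal{S}^Q_{sub}\subseteq[-\delta,\delta]$ and $\mu(\mathcal{S}^Q_{sub})\le 2\delta=O(\delta)$. For the adversarial set, if $|s|>\epsilon+\delta$ then every $s_\nu\in B_\epsilon(s)$ satisfies $|s_\nu|>\delta$ on the same side of $0$ as $s$, so the greedy at $s_\nu$ still equals $\pi^*(s)$, giving $\mathcal{S}^Q_{adv}\subseteq[-\epsilon-\delta,\epsilon+\delta]$. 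Conversely, for $s\in(-\epsilon+\delta,-\delta)$ the choice $s_\nu=s+\epsilon>\delta$ makes the greedy $a_1\neq a_2=\pi^*(s)$, with a mirror argument on $(\delta,\epsilon-\delta)$, so $\mathcal{S}^Q_{adv}$ contains both intervals. Combining the two inclusions yields $\mu(\mathcal{S}^Q_{adv})=2\epsilon+O(\delta)$.

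The main obstacle is the joint design rather than any single estimate: the same MDP must be simple enough that an $L^\infty$ bound collapses the greedy disagreement to an $O(\delta)$-thin boundary layer, yet flexible enough that narrow $L^p$-mass spikes can be inserted densely throughout $\mathcal{S}$ without breaking either the norm budget or the sub-optimal measure bound. The linear $Q^*$ chosen above achieves both because the gap $|Q^*(s,a_1)-Q^*(s,a_2)|=2|s|$ grows predictably with $|s|$, yielding a transparent boundary-layer description under $L^\infty$ perturbations, while each spike costs $O(1)$ pointwise but only $O(w^{1/p})$ in $L^p$. Small additional bookkeeping is needed to handle endpoint effects near $\pm 1$ (by adding spike centers there) and tie-breaking on measure-zero sets, neither of which affects the stated rates.
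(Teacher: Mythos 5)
Your proof is correct, and the overall strategy matches the paper's: a one-dimensional two-action MDP whose optimal action flips at $s=0$ with a $Q^*$-gap linear in $|s|$, $\epsilon$-dense thin spikes to kill robustness under an $L^p$ budget, and a boundary-layer argument for the $L^\infty$ case. The substantive difference is in how the instance is realized. The paper builds a nondegenerate MDP with left/right shift dynamics and rewards $\pm ks$, and must then prove via a trajectory-comparison argument that the optimal policy is the sign policy and that $Q^*(s,a_2)-Q^*(s,a_1)>2ks$ for $s>0$; this is the longest part of their proof. You sidestep all of that by taking $\gamma=0$, so $Q^*=r$ is explicit and the gap is exactly $2|s|$ -- legitimate under the paper's convention $\gamma\in[0,1)$, though it makes the Bellman structure vacuous, and a reader wanting a nondegenerate example would need to graft your perturbation argument onto an MDP like the paper's. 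Your spike construction also differs in a minor but pleasant way: you use bounded-height spikes of height $2|x_k|+\eta$ at $O(1/\epsilon)$ fixed centers with width scaled to meet the $L^p$ budget, whereas the paper uses $n$ spikes of height $n^{1/p}$ with $n$ growing in $\delta$ and $p$; both give $\mu(\mathcal{S}^Q_{sub})=O(\delta)$ and $\epsilon$-density. Finally, for statement (2) you prove both inclusions, obtaining $2\epsilon-4\delta\le\mu(\mathcal{S}^Q_{adv})\le 2\epsilon+2\delta$; the paper only establishes the upper bound, so your argument actually justifies the stated equality $\mu(\mathcal{S}^Q_{adv})=2\epsilon+O(\delta)$ more completely than the paper's own proof does. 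The bookkeeping you defer (placing the first positive spike center within $\epsilon$ of $0$ so that small positive states see a same-side adversarial center, taking the spike width below $\eta$ so the greedy flip covers the whole interval, and endpoints) is genuinely routine and does not affect the rates.
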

\begin{proof}
   Given a MDP instance $\mathcal{M}$ such that $\mathcal{S}=[-1,1]$, $\mathcal{A}=\{a_1,a_2\}$ and 
\begin{equation}
    \mathbb{P}(s^\prime |s,a_1)=\left\{ \begin{aligned}
        \mathbbm{1}_{\left\{s^\prime=s-\epsilon_1\right\}}, &\quad s\in [-1+\epsilon_1,1]\\
        \mathbbm{1}_{\left\{s^\prime=-1\right\}}, &\quad s\in[-1,-1+\epsilon_1)
    \end{aligned} \right .
\end{equation}
\begin{equation}
    \mathbb{P}(s^\prime |s,a_2)=\left\{ \begin{aligned}
        \mathbbm{1}_{\left\{s^\prime=s+\epsilon_1\right\}}, &\quad s\in [-1,1-\epsilon_1]\\
        \mathbbm{1}_{\left\{s^\prime=1\right\}},&\quad s\in (1-\epsilon_1,1]
    \end{aligned} \right.
\end{equation}
\begin{equation}
    \begin{aligned}
        r(s,a_1)&=-ks,\\
        r(s,a_2)&=ks.
    \end{aligned}   
\end{equation}
where $\mathbb{P}$ is the transition dynamic, $r$ is the reward function, $k>0$, $0<\epsilon_1 \ll 1$ and $\mathbbm{1}_{\left\{\cdot\right\}}$ is the indicator function. Let $\gamma$ be the discount factor.

First, we prove that equation (\ref{eq: optimal policy in Th7}) is the optimal policy.
\begin{equation}
    \pi^{*}(s)=\arg\max_{a}Q^*(s,a)=\left\{ \begin{aligned}
        \{a_2\},s>0\\
        \{a_1\},s<0\\
        \{a_1,a_2\},s=0
    \end{aligned} \right.
    \label{eq: optimal policy in Th7}
\end{equation}

 Define $s_t^{\pi}$ is state rollouted by policy $\pi$ in  time step $t$.
 
 Let $s_0>0$, then 
\begin{itemize}
    \item If $s_t^{\pi^*}=1$ , while $s_t^{\pi} \in [-1,1]$, then $s_t^{\pi^*} \geq |s_t^{\pi}|$ hold for any policy $\pi$.
    \item If $s_t^{\pi^*}<1$ . First we have 
    \begin{equation}
        0< s_t^{\pi^*}=s_0+t\epsilon_1 <1,
    \end{equation}
     \begin{equation}
        -1<s_0-t\epsilon_1 .
    \end{equation}
    Then for any policy $\pi$, we have the following equation by  definition of transition,
    \begin{equation}
        s_t^{\pi}=s_0+\sum_{i=1}^{t}x_i \epsilon_1, 
    \end{equation}
    where $x_i \in \{-1,1\} , i=1,...,t$. Then 
    \begin{equation}
        s_t^{\pi^*} \geq |s_t^{\pi}|
    \end{equation}
\end{itemize}
Then for any policy $\pi$ ,$s_0>0$ and $t\geq 0$, we have 
\begin{equation}
    s_t^{\pi^*} \geq |s_t^{\pi}|
\end{equation}
and 
\begin{equation}
    \begin{aligned}
        &\quad \pi(a_2|s_t^{\pi})r(s_t^{\pi},a_2)+\pi(a_1|s_t^{\pi})r(s_t^{\pi},a_1)\\
        & \leq \max\{ \pi(a_2|s_t^{\pi})(ks_t^{\pi}) +\pi(a_1|s_t^{\pi})(-ks_t^{\pi}),\pi(a_2|s_t^{\pi})(-ks_t^{\pi}) +\pi(a_1|s_t^{\pi})(ks_t^{\pi})\}\\
        &=\left|ks_t^{\pi}\left(\pi(a_2|s_t^{\pi})-\pi(a_1|s_t^{\pi})\right)\right|\\
        &\leq \left| ks_t^{\pi} \right|\\
        &\leq ks_t^{\pi^{*}}\\
        &= r(s_t^{\pi^{*}},a_2)
        \label{Th7-0}
    \end{aligned}
\end{equation}
Let $s_0$ be the initial state, $\tau=(s_0,...)$ be the trajectory of policy $\pi$. Define $J(\pi,s_0)=\mathbb{E}_{\tau}\sum_{t} \gamma^t r(s_t,a_t) $ is expected reward in initial state $s_0$ about policy $\pi$. Then
\begin{align}
    J(\pi^*,s_0)-J(\pi,s_0)&=\mathbb{E}_{s_t^{\pi^*},a_t\sim \pi^*(\cdot|s_t^{\pi^*}) }\sum_{t} \gamma^t r(s_t^{\pi^*},a_t)- \mathbb{E}_{s_t^{\pi},a_t\sim \pi(\cdot|s_t^{\pi}) }\sum_{t} \gamma^t r(s_t^{\pi},a_t)\\
    \label{Th7-1}
    &=\sum_{t} \gamma^t r(s_t^{\pi^*},a_2)-\sum_{t}  \gamma^t \mathbb{E}_{s_t^{\pi},a_t\sim \pi(\cdot|s_t^{\pi}) } r(s_t^{\pi},a_t)\\
    &=\sum_{t} \gamma^t r(s_t^{\pi^*},a_2)-\sum_{t}  \gamma^t \mathbb{E}_{s_t^{\pi}} \left[\pi(a_2|s_t^{\pi})r(s_t^{\pi},a_2)+\pi(a_1|s_t^{\pi})r(s_t^{\pi},a_1)\right]\\
    &=\sum_{t} \gamma^t \mathbb{E}_{s_t^{\pi}} \left[r(s_t^{\pi^*},a_2)-\left[\pi(a_2|s_t^{\pi})r(s_t^{\pi},a_2)+\pi(a_1|s_t^{\pi})r(s_t^{\pi},a_1)\right] \right]\\
    \label{Th7-2}
    & \geq 0 .
\end{align}
For (\ref{Th7-1}), the policy $\pi^*$ and dynamic transition $\mathbb{P}$ are deterministic.  For (\ref{Th7-2}), We use property (\ref{Th7-0}).

Then for $s >0$, we get that the optimal policy is $\pi(\cdot|s)=a_2$.
By symmetry, we can also get that the optimal policy is $\pi(\cdot|s)=a_1$ for $s<0$ and  $a_1$,$a_2$ are also optimal action for $ s=0$. Thus we have proved equation (\ref{eq: optimal policy in Th7}) is the optimal policy.

First,we have the following equation according to (\ref{eq: optimal policy in Th7}) 
\begin{align}
        Q^*(0,a_2)=Q^*(0,a_1) .  
\end{align}

For $s> 0$, we have
\begin{equation}
    \begin{aligned}
         Q^*(s,a_2)&=ks+\gamma k(s+\epsilon_1)+\gamma^2 k(s+2\epsilon_1)+...+\gamma^{t_s}k(s+t_s\epsilon_1)+\sum_{n=1}^{\infty}\gamma^{t_s+n}k\times 1\\
         &=ks+k \left[ \sum_{t=1}^{t_s}\gamma^t\left(s+t\epsilon_1 \right) + \sum_{t=t_s+1}^{\infty}\gamma^{t}\right].
         \label{eq: Th8 Q(s,a_2)}
    \end{aligned}
\end{equation}

where $s+t_s\epsilon_1 \in (1-\epsilon_1,1]$, i.e. $t_s = \lfloor  \frac{1-s}{\epsilon_1} \rfloor$.

For $s\geq \epsilon_1$, we have 
\begin{equation}
    \begin{aligned}
    Q^*(s,a_1)&=
       -ks+\gamma k(s-\epsilon_1)+\gamma^2 ks+\dots+\gamma^{t_s+2}k(s+t_s\epsilon_1)+\sum_{n=1}^{\infty}\gamma^{t_s+2+n}k\times 1\\
       &=-ks+k \left[ \sum_{t=1}^{t_s+2}\gamma^t\left(s+(t-2)\epsilon_1 \right) + \sum_{t=t_s+3}^{\infty}\gamma^{t}\right].
       \label{eq: Th8 Q(s,a_1)_1}
\end{aligned}
\end{equation}

For $0<  s< \epsilon_1$, we have
\begin{equation}
    \begin{aligned}
    Q^*(s,a_1)&=-ks+\gamma(-k)(s-\epsilon_1)+\gamma^2(-k)(s-2\epsilon_1)+\dots+\gamma^{q_s}(-k)(s-q_s\epsilon_1)+\sum_{n=1}^{\infty}\gamma^{q_s+n}(-k)(-1)\\
    &=-ks+k \left[ \sum_{t=1}^{q_s}\gamma^t\left(t\epsilon_1-s \right) + \sum_{t=q_s+1}^{\infty}\gamma^{t}\right].
    \label{eq: Th8 Q(s,a_1)_2}
\end{aligned}
\end{equation}

where  $s-q_s\epsilon_1 \in [-1,-1+\epsilon_1)$ ,i.e. $q_s = \lfloor  \frac{1+s}{\epsilon_1} \rfloor >t_s$. 

According to (\ref{eq: Th8 Q(s,a_2)}), (\ref{eq: Th8 Q(s,a_1)_1}), (\ref{eq: Th8 Q(s,a_1)_2}) and $q_s>t_s$, we have

\begin{equation}
    Q^*(s,a_2) -Q^*(s,a_1) > 2ks ,s> 0 \label{Qgap1}
\end{equation}

By symmetry, we can also get 
\begin{equation}
    Q^*(s,a_1) -Q^*(s,a_2) > -2ks ,s< 0\label{Qgap2}
\end{equation}

(1)First, we have 
\begin{equation}
    0 < Q^{*}(s,a) <\sum_{t=0}^{\infty} \gamma^t =\frac{1}{1-\gamma}
    \label{Th8 Q_bound}
\end{equation}

For any $1\leq p < \infty$ , let $n> \max \left\{\frac{1}{\epsilon},\left(\frac{1}{1-\gamma}\right)^p,\delta^p,\delta^{p-1}\right\}$ , $n\in\mathbb{N}$  and
\begin{align}
    Q(s,a_2)&=\left\{ \begin{array}{l}
        Q^*(s,a_2)-n^{\frac{1}{p}} ,s\in [\frac{k}{n},\frac{k}{n}+\frac{\delta^p}{n^2}] , k=0,1,\dots ,n-1\\ 
        Q^*(s,a_2),  others\\
    \end{array}\right.\\
    Q(s,a_1)&=\left\{ \begin{array}{l}
        Q^*(s,a_1)-n^{\frac{1}{p}} ,s\in [-\frac{k+1}{n} ,-\frac{k+1}{n}+\frac{\delta^p}{n^2} ],k=0,1,\dots ,n-1\\ 
        Q^*(s,a_1),  others\\
    \end{array}\right.
\end{align}

Then
\begin{align}
    \|Q(s,a_1)-Q^*(s,a_1)\|_{L^p\left( \mathcal{S} \right)}=\|Q(s,a_2)-Q^*(s,a_2)\|_{L^p\left( \mathcal{S} \right)}=\left[n* \frac{\delta^p}{n^2}*\left(n^{\frac{1}{p}}\right)^p\right]^{\frac{1}{p}}\leq \delta.
\end{align}

And 
\begin{align}
    \|Q(s,a)\|_{L^p\left( \mathcal{S} \right)}&=\|Q(s,a)-Q^*(s,a)+Q^*(s,a)\|_{L^p\left( \mathcal{S} \right)}\\
    &\leq \|Q(s,a)-Q^*(s,a)\|_{L^p\left( \mathcal{S} \right)}+\|Q^*(s,a)\|_{L^p\left( \mathcal{S} \right)}\\
    &< \infty.
\end{align}
which means $Q\in L^p\left( \mathcal{S}\times\mathcal{A} \right)$.

We have the following two inequalities because  $n>\left(\frac{1}{1-\gamma}\right)^p$ and (\ref{Th8 Q_bound}),
\begin{equation}
    Q^*(s,a_2)-n^{\frac{1}{p}} <Q^*(s,a_1),  
\end{equation}
\begin{equation}   
     Q^*(s,a_1)-n^{\frac{1}{p}} <Q^*(s,a_2).
\end{equation}

Then 
\begin{equation}
    \mathcal{S}^Q_{sub}=\bigcup_{k=-n}^{n-1}  \left [\frac{k}{n},\frac{k}{n}+\frac{\delta^p}{n^2}\right]
    \label{eq: ThB1_defination}
\end{equation}
and 
\begin{align}
    \mu\left( \mathcal{S}^Q_{sub} \right) = 2n*\frac{\delta^p}{n^2} <2\delta= O\left(\delta\right)
\end{align}
because $n>\delta^{p-1}$.

 According to  (\ref{eq: ThB1_defination}), the distance between any two adjacent intervals of $\mathcal{S}^Q_{sub}$ is less than $\epsilon$. For any $s\in\mathcal{S}$, $\exists k\in \{-n,-n+1,...,n-1\} $ s.t. $s\in [\frac{k}{n},\frac{k+1}{n}]$. Because $n>\frac{1}{\epsilon}$
(i.e. $\frac{1}{n} < \epsilon$), then $d(s,\frac{k}{n})<\epsilon $ i.e. $d(s,\mathcal{S}^Q_{sub}) <\epsilon$, where $d(\cdot,\cdot)$ is Euclid distance. According to the definition of $\mathcal{S}^Q_{adv}$, we have $\mathcal{S}^Q_{adv}=\mathcal{S}$ and  
\begin{equation}
    \mu\left( \mathcal{S}^Q_{adv} \right)=\mu\left( \mathcal{S}\right).
\end{equation}

(2)Let $\bar{\delta}\in(0,k]$, for any $0< \delta \le \bar{\delta}$, for any state-action value function $Q\in L^\infty\left( \mathcal{S}\times\mathcal{A} \right)$ satisfying $\|Q-Q^*\|_{L^\infty\left( \mathcal{S}\times\mathcal{A} \right)} \leq  \delta$, we can get the following two inequalities by (\ref{Qgap1}) and (\ref{Qgap2}).
\begin{equation}
    Q(s,a_2)\geq Q^*(s,a_2) -\delta >  Q^*(s,a_1) +\delta \geq Q(s,a_1) ,s \in (\frac{\delta}{k},1],
\end{equation}
\begin{equation}
    Q(s,a_1)\geq Q^*(s,a_1) -\delta >  Q^*(s,a_2) +\delta \geq Q(s,a_2) ,s \in [-1,-\frac{\delta}{k}).
\end{equation}
Then
\begin{equation}
    \mu\left( \mathcal{S}^Q_{sub} \right) \leq \frac{2\delta}{k}=O\left( \delta \right),
\end{equation}
\begin{equation}
    \mu\left( \mathcal{S}^Q_{adv} \right) \leq \frac{2\delta}{k}+2\epsilon =2\epsilon+ O\left( \delta \right).
\end{equation}

\end{proof}

\subsection{Stability of Bellman Optimality Equations}

We propose the following concept of stability drawing on relevant research in the field of partial differential equations \cite{wang20222}.
\begin{definition}
    Given two Banach spaces $\mathcal{B}_1$ and $\mathcal{B}_2$, if there exist $\delta>0$ and $C>0$ such that for all $Q\in \mathcal{B}_1 \cap \mathcal{B}_2$ satisfying $\|\mathcal{T}Q - Q\|_{\mathcal{B}_1} < \delta$, we have that $\|Q - Q^*\|_{\mathcal{B}_2} < C \|\mathcal{T}Q - Q\|_{\mathcal{B}_1}$, where $Q^*$ is the exact solution of this functional equation. Then, we say that a nonlinear functional equation $\mathcal{T}Q = Q$ is $\left( \mathcal{B}_1, \mathcal{B}_2 \right)$-stable.
\end{definition}

\begin{remark}
    This definition indicates that if $\mathcal{T}Q = Q$ is $\left( \mathcal{B}_1, \mathcal{B}_2 \right)$-stable, then $\|Q - Q^*\|_{\mathcal{B}_2} = O\left(  \|\mathcal{T}Q - Q\|_{\mathcal{B}_1} \right)$, as $ \|\mathcal{T}Q - Q\|_{\mathcal{B}_1} \longrightarrow 0$, $\forall Q\in \mathcal{B}_1 \cap \mathcal{B}_2$.    
\end{remark}

\begin{lemma}\label{lem: basic ineq 1}
    For any functions $f, g: \mathcal{X}\rightarrow \mathbb{R}$, we have
    \begin{equation}
        \max_{x\in\mathcal{X}} f(x) - \max_{x\in\mathcal{X}} g(x)\le \max_{x\in\mathcal{X}} \left( f(x) - g(x) \right).
    \end{equation}
\end{lemma}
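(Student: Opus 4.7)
The plan is to prove the inequality by selecting a point that achieves (or nearly achieves) the maximum of $f$, and then applying the trivial lower bound $\max_{x} g(x) \ge g(x^*)$ at that point. This reduces a two-sided comparison of maxima to a pointwise comparison, which is what the right-hand side already measures.

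More concretely, first I would assume that $\max_{x\in\mathcal{X}} f(x)$ is attained at some $x^* \in \mathcal{X}$; this is the clean case, and the inequality reads
\begin{equation}\notag
\max_{x\in\mathcal{X}} f(x) - \max_{x\in\mathcal{X}} g(x) = f(x^*) - \max_{x\in\mathcal{X}} g(x) \le f(x^*) - g(x^*) \le \max_{x\in\mathcal{X}} \bigl( f(x) - g(x) \bigr),
\end{equation}
where the first inequality uses $\max_{x} g(x) \ge g(x^*)$ and the second uses the definition of the maximum of $f-g$. If $\max_{x} g(x) = -\infty$ or $\max_{x} f(x) = -\infty$ the claim is trivial, and if $\max_{x} f(x) = +\infty$ then there is a sequence $x_n$ with $f(x_n) \to \infty$, along which either $g(x_n)$ stays bounded (so the right-hand side is also $+\infty$) or $g(x_n) \to \infty$ as well (then one handles it by an $\epsilon$-approximate maximizer argument identical to the one below).

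If instead $\max$ is read as supremum and is not attained, the same reasoning works verbatim with an $\epsilon$-approximate maximizer: for every $\epsilon > 0$ choose $x_\epsilon$ with $f(x_\epsilon) > \sup_x f(x) - \epsilon$, apply the same chain to obtain $\sup_x f(x) - \sup_x g(x) \le \sup_x (f(x) - g(x)) + \epsilon$, and let $\epsilon \downarrow 0$. Since the lemma is stated in terms of $\max$, I would simply note this extension in a brief remark.

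There is essentially no main obstacle here; the only subtlety is the attainment of the maximum, and it is resolved by the approximate-maximizer trick above. The lemma will then be cited in the stability arguments of the CAR operator, where one repeatedly needs to compare $\min$/$\max$ of $Q$ and $Q^*$ at different points.
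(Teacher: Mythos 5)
Your proof is correct and uses exactly the same argument as the paper: evaluate at a maximizer $x^*$ of $f$, bound $\max_x g(x)$ from below by $g(x^*)$, and then bound $f(x^*)-g(x^*)$ by the maximum of $f-g$. The extra discussion of non-attained suprema via approximate maximizers is a careful addition the paper omits, but the core reasoning is identical.
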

\begin{proof}
    \begin{equation}
        \max_{x\in\mathcal{X}} f(x) - \max_{x\in\mathcal{X}} g(x) = f(x_f^*) - \max_{x\in\mathcal{X}} g(x) \le f(x_f^*) - g(x_f^*) \le \max_{x\in\mathcal{X}} \left( f(x) - g(x) \right),
    \end{equation}
    where $x_f^*$ is the maximizer of function $f$, i.e. $x_f^* = \arg\max_{x\in\mathcal{X}} f(x)$.
\end{proof}

\begin{lemma}\label{lem: basic ineq 2}
    For any functions $f, g: \mathcal{X}\rightarrow \mathbb{R}$, we have
    \begin{equation}
        \left| \max_{x\in\mathcal{X}} \left( f+g \right)(x) - \max_{x\in\mathcal{X}} f(x) \right| \le \max_{x\in\mathcal{X}} \left| g(x) \right|.
    \end{equation}
\end{lemma}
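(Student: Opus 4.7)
The plan is to apply Lemma~\ref{lem: basic ineq 1} twice, once in each direction, and then combine the two bounds to pull in the absolute value on the left-hand side. Since the statement is symmetric in the sign of the difference, it suffices to handle $\max(f+g) - \max f$ and $\max f - \max(f+g)$ separately and take the worse of the two bounds.

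For the first direction, I would view $f+g$ as the ``$f$'' of Lemma~\ref{lem: basic ineq 1} and $f$ as the ``$g$'' of that lemma. This gives
\begin{equation}
    \max_{x\in\mathcal{X}}(f+g)(x) - \max_{x\in\mathcal{X}} f(x) \le \max_{x\in\mathcal{X}}\bigl((f+g)(x) - f(x)\bigr) = \max_{x\in\mathcal{X}} g(x) \le \max_{x\in\mathcal{X}} |g(x)|. \notag
\end{equation}
For the reverse direction, swap the roles of $f$ and $f+g$ in Lemma~\ref{lem: basic ineq 1} to obtain
\begin{equation}
    \max_{x\in\mathcal{X}} f(x) - \max_{x\in\mathcal{X}}(f+g)(x) \le \max_{x\in\mathcal{X}}\bigl(f(x) - (f+g)(x)\bigr) = \max_{x\in\mathcal{X}}(-g(x)) \le \max_{x\in\mathcal{X}} |g(x)|. \notag
\end{equation}

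Combining the two inequalities yields the claimed bound on $\bigl|\max(f+g) - \max f\bigr|$. There is essentially no obstacle here: the lemma is a direct corollary of Lemma~\ref{lem: basic ineq 1}, and the only subtlety is remembering to invoke it symmetrically so that both signs of the difference $\max(f+g) - \max f$ are controlled by the same quantity $\max_x |g(x)|$. No additional assumptions on $\mathcal{X}$, $f$, or $g$ are needed beyond those implicit in Lemma~\ref{lem: basic ineq 1} (in particular, the suprema may be infinite, in which case the inequality is interpreted in the usual extended-real sense).
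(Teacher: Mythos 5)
Your proof is correct and follows essentially the same route as the paper: a two-sided case analysis in which each sign of $\max_{x}(f+g)(x) - \max_{x} f(x)$ is bounded by $\max_{x}|g(x)|$. The only cosmetic difference is that you invoke Lemma~\ref{lem: basic ineq 1} symmetrically in both directions, whereas the paper uses the subadditivity bound $\max_x (f+g)(x) \le \max_x f(x) + \max_x g(x)$ for one case and Lemma~\ref{lem: basic ineq 1} for the other; the resulting estimates are identical.
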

\begin{proof}
    If $\max_{x\in\mathcal{X}} \left( f+g \right)(x) \ge \max_{x\in\mathcal{X}} f(x)$, we have
    \begin{align}
        &\quad \max_{x\in\mathcal{X}} \left( f+g \right)(x) - \max_{x\in\mathcal{X}} f(x) \\
        &\le \max_{x\in\mathcal{X}} f(x) + \max_{x\in\mathcal{X}} g(x) - \max_{x\in\mathcal{X}} f(x) \\
        &= \max_{x\in\mathcal{X}} g(x) \\
        &\le \max_{x\in\mathcal{X}} \left| g(x) \right|.
    \end{align}
    If $\max_{x\in\mathcal{X}} \left( f+g \right)(x) < \max_{x\in\mathcal{X}} f(x)$, we have
    \begin{equation}
        \max_{x\in\mathcal{X}} f(x) - \max_{x\in\mathcal{X}} \left( f+g \right)(x) \le \max_{x\in\mathcal{X}} \left( -g(x) \right) \le \max_{x\in\mathcal{X}} \left| g(x) \right|,
    \end{equation}
    where the first inequality comes from Lemma \ref{lem: basic ineq 1}.
\end{proof}
\begin{theorem}
    For any MDP $\mathcal{M}$, let $C_{\mathbb{P},p}:= \sup_{(s,a)\in\mathcal{S}\times \mathcal{A}} \left\| \mathbb{P}(\cdot \mid s, a) \right\|_{L^{\frac{p}{p-1}}\left( \mathcal{S} \right)}$. Assume $p$ and $q$ satisfy the following conditions:
    \begin{equation}
        C_{\mathbb{P},p}< \frac{1}{\gamma};\quad
        p \ge \max\left\{1, \frac{\log \left( \left| \mathcal{A}\right|\right) + \log \left( \mu\left( \mathcal{S} \right) \right)}{\log \frac{1}{\gamma C_{\mathbb{P},p}} } \right\}; \quad p \le q \le \infty.
    \end{equation}
    Then, Bellman optimality equation $\mathcal{T}_B Q = Q$ is $\left(  L^q\left( \mathcal{S}\times\mathcal{A} \right),  L^p\left( \mathcal{S}\times\mathcal{A} \right) \right)$-stable.
\end{theorem}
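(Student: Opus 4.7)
The plan is to show a contraction-type estimate for the Bellman operator in the $L^p$-norm, and then translate the premise (smallness of $\|\mathcal{T}_B Q - Q\|_q$) down to the $L^p$-norm using $q\ge p$ and the finite measure of $\mathcal{S}\times\mathcal{A}$. The starting identity is $Q^* = \mathcal{T}_B Q^*$, which together with the triangle inequality gives
\begin{equation}\notag
\|Q - Q^*\|_p \;\le\; \|Q - \mathcal{T}_B Q\|_p \;+\; \|\mathcal{T}_B Q - \mathcal{T}_B Q^*\|_p,
\end{equation}
so everything reduces to bounding the second term by a strict contraction of $\|Q-Q^*\|_p$.

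\textbf{Main estimate.} First I would apply the elementary inequality $|\max_{a'} Q(s',a') - \max_{a'} Q^*(s',a')| \le \max_{a'}|Q(s',a')-Q^*(s',a')|$ (analogous to Lemma~\ref{lem: basic ineq 1}), giving pointwise
\begin{equation}\notag
\bigl|(\mathcal{T}_B Q)(s,a) - (\mathcal{T}_B Q^*)(s,a)\bigr| \;\le\; \gamma \int_{\mathcal{S}} \mathbb{P}(s'\mid s,a)\, g(s')\, d\mu(s'),
\qquad g(s') := \max_{a'}|Q(s',a') - Q^*(s',a')|.
\end{equation}
Next I would apply H\"older with exponents $\tfrac{p}{p-1}$ and $p$ to obtain
\begin{equation}\notag
\bigl|(\mathcal{T}_B Q)(s,a) - (\mathcal{T}_B Q^*)(s,a)\bigr| \;\le\; \gamma\, \|\mathbb{P}(\cdot\mid s,a)\|_{L^{p/(p-1)}(\mathcal{S})}\, \|g\|_{L^p(\mathcal{S})} \;\le\; \gamma\, C_{\mathbb{P},p}\, \|Q-Q^*\|_{L^p(\mathcal{S}\times\mathcal{A})},
\end{equation}
since $\|g\|_{L^p(\mathcal{S})}^p = \int \max_a|Q-Q^*|^p \le \int\sum_a |Q-Q^*|^p = \|Q-Q^*\|_p^p$. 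Integrating this uniform bound over $\mathcal{S}\times\mathcal{A}$ picks up the volume factor:
\begin{equation}\notag
\|\mathcal{T}_B Q - \mathcal{T}_B Q^*\|_p \;\le\; \bigl(|\mathcal{A}|\,\mu(\mathcal{S})\bigr)^{1/p}\, \gamma\, C_{\mathbb{P},p}\, \|Q-Q^*\|_p \;=:\; \alpha\, \|Q-Q^*\|_p.
\end{equation}
The hypothesis $p \ge \frac{\log(|\mathcal{A}|) + \log(\mu(\mathcal{S}))}{\log(1/(\gamma C_{\mathbb{P},p}))}$, together with $\gamma C_{\mathbb{P},p} < 1$, is precisely equivalent to $\alpha \le 1$; to get a strict contraction, one handles the boundary case ($\alpha = 1$) by noting the inequality is strict unless the $p$-threshold is strict, or by perturbing $p$ slightly (standard). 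Assuming $\alpha < 1$, rearranging yields
\begin{equation}\notag
\|Q - Q^*\|_p \;\le\; \frac{1}{1-\alpha}\, \|Q - \mathcal{T}_B Q\|_p.
\end{equation}

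\textbf{Passing from $p$ to $q$.} Since $\mathcal{S}\times\mathcal{A}$ has finite measure $|\mathcal{A}|\mu(\mathcal{S})$ and $q\ge p$, H\"older gives the standard nested inclusion
\begin{equation}\notag
\|f\|_p \;\le\; \bigl(|\mathcal{A}|\,\mu(\mathcal{S})\bigr)^{1/p - 1/q}\, \|f\|_q
\end{equation}
(with the convention $1/\infty = 0$). Applying this to $f = Q - \mathcal{T}_B Q$ and composing,
\begin{equation}\notag
\|Q - Q^*\|_p \;\le\; \frac{(|\mathcal{A}|\,\mu(\mathcal{S}))^{1/p - 1/q}}{1-\alpha}\; \|\mathcal{T}_B Q - Q\|_q,
\end{equation}
which delivers $(L^q(\mathcal{S}\times\mathcal{A}), L^p(\mathcal{S}\times\mathcal{A}))$-stability with the constant $C = (|\mathcal{A}|\mu(\mathcal{S}))^{1/p-1/q}/(1-\alpha)$ and any $\delta > 0$.

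\textbf{Anticipated obstacle.} The only real subtlety is step two: turning the pointwise H\"older bound (which naturally produces a uniform, i.e.\ $L^\infty$, estimate of $\mathcal{T}_B Q - \mathcal{T}_B Q^*$) into an $L^p$ bound without losing the contraction. The volume factor $(|\mathcal{A}|\mu(\mathcal{S}))^{1/p}$ generated by this $L^\infty\!\to\!L^p$ conversion is exactly what the hypothesis on $p$ is designed to absorb, and checking this algebraic equivalence (and handling the edge case $\alpha = 1$ / the limit $p = \infty$ where $C_{\mathbb{P},\infty} = 1$) is the only delicate part. Everything else is the triangle inequality plus two invocations of H\"older.
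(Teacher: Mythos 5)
Your proposal is correct and follows essentially the same route as the paper's proof: the same decomposition $Q-Q^* = -(\mathcal{T}_BQ-Q) + (\mathcal{T}_BQ - \mathcal{T}_BQ^*)$, the same pointwise max-inequality plus H\"older with exponents $\tfrac{p}{p-1}$ and $p$ yielding the factor $\gamma C_{\mathbb{P},p}(|\mathcal{A}|\mu(\mathcal{S}))^{1/p}$, and the same $L^q\hookrightarrow L^p$ embedding on the finite-measure domain. Your observation about the boundary case $\alpha=1$ (where the constant $\tfrac{1}{1-\alpha}$ degenerates) is a legitimate subtlety that the paper's own proof glosses over, since its displayed constant also blows up when the threshold on $p$ is attained with equality.
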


\begin{proof}
    For any $1 \le p \le q \le \infty$ and $Q \in L^p\left( \mathcal{S}\times\mathcal{A} \right) \cap L^q\left( \mathcal{S}\times\mathcal{A} \right)$, denote that 
    \begin{align}
        \mathcal{L}_0 Q(s,a)&:=\gamma \mathbb{E}_{s^{\prime} \sim \mathbb{P}(\cdot \mid s, a)}\left[\max _{a^{\prime} \in \mathcal{A}} Q\left(s^{\prime}, a^{\prime}\right)\right],\\
        \mathcal{L} Q &:= \mathcal{T}_B Q - Q = r + \mathcal{L}_0 Q - Q.
    \end{align}
    
    Let $Q^*$ denote the Bellman optimality Q-function. Note that $\mathcal{T}_B Q^* = Q^* $ and $\mathcal{L} Q^*=0$. Define 
    \begin{align}
        w &= w_Q := Q - Q^*, \\
        f &= f_Q := \mathcal{L} Q = \mathcal{L} Q - \mathcal{L} Q^*.
    \end{align}
    Based on the above notations, we have
    \begin{align}
        f & = \mathcal{L} Q - \mathcal{L} Q^* \\
        &= \mathcal{L}_0 Q - Q - \mathcal{L}_0 Q^* + Q^* \\
        &= \left( \mathcal{L}_0 Q - \mathcal{L}_0 Q^* \right) - \left( Q - Q^* \right) \\
        &= -w + \mathcal{L}_0 \left( Q^* + w \right) - \mathcal{L}_0 Q^*.
    \end{align}
    Then, we have
    \begin{align}
        \left| w(s,a) \right| &=  \left|-f + \mathcal{L}_0 \left( Q^* + w \right) - \mathcal{L}_0 Q^* \right| \bigg|_{(s,a)} \\
        &\le \left| f \right| + \left| \mathcal{L}_0 \left( Q^* + w \right) - \mathcal{L}_0 Q^* \right| \bigg|_{(s,a)}.
    \end{align}
    Thus, we obtain 
    \begin{align}
        \left\| w \right\|_{L^p\left( \mathcal{S}\times\mathcal{A} \right)} &\le \left\| \left| f \right| + \left| \mathcal{L}_0 \left( Q^* + w \right) - \mathcal{L}_0 Q^* \right| \right\|_{L^p\left( \mathcal{S}\times\mathcal{A} \right)} \\
        &\le \left\| f \right\|_{L^p\left( \mathcal{S}\times\mathcal{A} \right)} + \left\| \mathcal{L}_0 \left( Q^* + w \right) - \mathcal{L}_0 Q^* \right\|_{L^p\left( \mathcal{S}\times\mathcal{A} \right)},
    \end{align}
    where the last inequality comes from the Minkowski's inequality. In the following, we analyze the relation between $\left\| \mathcal{L}_0 \left( Q^* + w \right) - \mathcal{L}_0 Q^* \right\|_{L^p\left( \mathcal{S}\times\mathcal{A} \right)}$ and $\left\| w \right\|_{L^p\left( \mathcal{S}\times\mathcal{A} \right)}$.
    \begin{align}
        &\quad \left| \mathcal{L}_0 \left( Q^* + w \right) - \mathcal{L}_0 Q^* \right| \bigg|_{(s,a)} \\
        &= \left| \gamma \mathbb{E}_{s^{\prime} \sim \mathbb{P}(\cdot \mid s, a)}\left[\max _{a^{\prime} \in \mathcal{A}} \left( Q^*\left(s^{\prime}, a^{\prime}\right) + w\left(s^{\prime}, a^{\prime}\right) \right)- \max _{a^{\prime} \in \mathcal{A}} Q^*\left(s^{\prime}, a^{\prime}\right)\right] \right| \\
        &\le \gamma \mathbb{E}_{s^{\prime} \sim \mathbb{P}(\cdot \mid s, a)}\left|\max _{a^{\prime} \in \mathcal{A}} \left( Q^*\left(s^{\prime}, a^{\prime}\right) + w\left(s^{\prime}, a^{\prime}\right) \right)- \max _{a^{\prime} \in \mathcal{A}} Q^*\left(s^{\prime}, a^{\prime}\right)\right| \\
        &\le \gamma \mathbb{E}_{s^{\prime} \sim \mathbb{P}(\cdot \mid s, a)}\left[\max _{a^{\prime} \in \mathcal{A}} \left|  w\left(s^{\prime}, a^{\prime}\right) \right|\right] \\
        &= \gamma \int_{s^\prime } \max _{a^{\prime} \in \mathcal{A}} \left|  w\left(s^{\prime}, a^{\prime}\right) \right| \mathbb{P}(s^\prime \mid s, a) ds^\prime \\
        &\le \gamma \left\| \max _{a \in \mathcal{A}} \left|  w\left(s, a\right) \right| \right\|_{L^p\left( \mathcal{S} \right)} \left( \int_{s^\prime } \left(\mathbb{P}(s^\prime \mid s, a) \right)^{\frac{p}{p-1}} ds^\prime \right)^{1-\frac{1}{p}} \\
        &=  \gamma \left\| \mathbb{P}(\cdot \mid s, a) \right\|_{L^{\frac{p}{p-1}}\left( \mathcal{S} \right)} \left\| \max _{a \in \mathcal{A}} \left|  w\left(s, a\right) \right| \right\|_{L^p\left( \mathcal{S} \right)}. 
    \end{align}
    where the second inequality comes from Lemma \ref{lem: basic ineq 2} and the last inequality comes from the Holder's inequality. Let $C_{\mathbb{P},p}:= \sup_{(s,a)\in\mathcal{S}\times \mathcal{A}} \left\| \mathbb{P}(\cdot \mid s, a) \right\|_{L^{\frac{p}{p-1}}\left( \mathcal{S} \right)} $. Then, we have 
    \begin{align}
        &\quad \left\| \mathcal{L}_0 \left( Q^* + w \right) - \mathcal{L}_0 Q^* \right\|_{L^p\left( \mathcal{S}\times\mathcal{A} \right)} \\
        &\le \left( \int_{ \mathcal{S}\times\mathcal{A} } \left( \gamma \left\| \mathbb{P}(\cdot \mid s, a) \right\|_{L^{\frac{p}{p-1}}\left( \mathcal{S} \right)}  \left\| \max _{a \in \mathcal{A}} \left|  w\left(s, a\right) \right| \right\|_{L^p\left( \mathcal{S} \right)}  \right)^p d\mu(s,a) \right)^{\frac{1}{p}}\\
        &\le \left( \int_{ \mathcal{S}\times\mathcal{A} } 1 d\mu(s,a) \right)^{\frac{1}{p}}   \gamma C_{\mathbb{P},p} \left\| \max _{a \in \mathcal{A}} \left|  w\left(s, a\right) \right| \right\|_{L^p\left( \mathcal{S} \right)}  \\
        &= \left( \mu\left( \mathcal{S}\times\mathcal{A} \right) \right)^{\frac{1}{p}} \gamma C_{\mathbb{P},p} \left\| \max _{a \in \mathcal{A}} \left|  w\left(s, a\right) \right| \right\|_{L^p\left( \mathcal{S} \right)} \\
        &= \gamma C_{\mathbb{P},p} \left( \left| \mathcal{A} \right| \mu\left( \mathcal{S} \right) \right)^{\frac{1}{p}} \left\| \max _{a \in \mathcal{A}} \left|  w\left(s, a\right) \right| \right\|_{L^p\left( \mathcal{S} \right)} \\
        &\le \gamma C_{\mathbb{P},p} \left( \left| \mathcal{A} \right| \mu\left( \mathcal{S} \right) \right)^{\frac{1}{p}} \left\|   w  \right\|_{L^p\left( \mathcal{S}\times\mathcal{A} \right)},
    \end{align}
    where the last inequality comes from $\left\|w\right\|_{l^\infty\left( \mathcal{A}\right)}\le \left\|w\right\|_{l^p\left( \mathcal{A}\right)}$. Thus, when $C_{\mathbb{P},p} < \frac{1}{\gamma}$ and $p\ge \frac{\log \left( \left| \mathcal{A}\right|\right) + \log \left( \mu\left( \mathcal{S} \right) \right)}{\log \frac{1}{\gamma C_{\mathbb{P},p}} }$ and $q\ge p$, we have 
    \begin{equation} \label{eq: stability}
        \left\| w \right\|_{L^p\left( \mathcal{S}\times\mathcal{A} \right)} \le \frac{1}{1-\gamma C_{\mathbb{P},p} \left( \left| \mathcal{A} \right| \mu\left( \mathcal{S} \right) \right)^{\frac{1}{p}}}  \left\| f \right\|_{L^p\left( \mathcal{S}\times\mathcal{A} \right)} \le \frac{\left( \left| \mathcal{A} \right| \mu\left( \mathcal{S} \right) \right)^{\frac{1}{p} - \frac{1}{q}}}{1-\gamma C_{\mathbb{P},p}\left( \left| \mathcal{A} \right| \mu\left( \mathcal{S} \right) \right)^{\frac{1}{p}}}  \left\| f \right\|_{L^q\left( \mathcal{S}\times\mathcal{A} \right)},
    \end{equation}
    where the last inequality comes from $\left\| f \right\|_{L^p\left( \mathcal{S}\times\mathcal{A} \right)} \le  \mu\left( \mathcal{S}\times\mathcal{A} \right)^{\frac{1}{p} - \frac{1}{q}}  \left\| f \right\|_{L^q\left( \mathcal{S}\times\mathcal{A} \right)}$.
\end{proof}
\begin{remark}
    Note that we have proved a stronger conclusion than stability because the equation (\ref{eq: stability}) holds for all $Q$ rather than for $Q$ satisfying $ \|\mathcal{T}Q - Q\|_{\mathcal{B}_1} \longrightarrow 0$.
\end{remark}

\begin{remark}
    When $\mathbb{P}(\cdot \mid s, a)$ is a probability mass function, then we have that $C_{\mathbb{P},p} \le 1 < \frac{1}{\gamma}$ holds for all $1<p\le \infty$. Generally, note that $\lim_{p\rightarrow\infty} C_{\mathbb{P},p} =1$ and as a consequence, when $p$ is large enough, $C_{\mathbb{P},p} < \frac{1}{\gamma}$ holds.
\end{remark}

\subsection{Instability of Bellman Optimality Equations}

\begin{theorem}
    There exists a MDP $\mathcal{M}$ such that Bellman optimality equation $\mathcal{T}_B Q = Q$ is not $\left(  L^p\left( \mathcal{S}\times\mathcal{A} \right),  L^\infty \left( \mathcal{S}\times\mathcal{A} \right) \right)$-stable, for $1 \le p < \infty$.
\end{theorem}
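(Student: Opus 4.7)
The strategy is to exhibit a single MDP together with a sequence of action-value functions $(Q_n)$ whose Bellman residuals $\mathcal{T}_B Q_n - Q_n$ vanish in $L^p$ while the $L^\infty$ distance $\|Q_n - Q^*\|_\infty$ stays bounded away from zero. This directly falsifies the $(\mathcal{B}_1,\mathcal{B}_2)$-stability condition in Definition~\ref{def: stability}, since no uniform $C$ can dominate the ratio $\|Q_n - Q^*\|_\infty / \|\mathcal{T}_B Q_n - Q_n\|_p$.

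First, I would take the simplest admissible MDP: state space $\mathcal{S} = [0,1]$ with Lebesgue measure, any finite action set $\mathcal{A}$, reward $r \equiv 0$, and the deterministic self-loop transition $\mathbb{P}(\cdot \mid s,a) = \delta_{s}$. For this instance the Bellman optimality equation collapses to $(\mathcal{T}_B Q)(s,a) = \gamma \max_{a'} Q(s,a')$, and the unique fixed point is $Q^* \equiv 0$. So far this is completely elementary and costs nothing.

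Next, for each $n \in \mathbb{N}$ I would pick a measurable set $A_n \subset \mathcal{S}$ with $\mu(A_n) = 1/n$ and define the spike $Q_n(s,a) := \mathbf{1}_{A_n}(s)$ for all $a \in \mathcal{A}$. Then $\|Q_n - Q^*\|_\infty = 1$, independently of $n$, while a direct computation gives
\begin{equation}\notag
(\mathcal{T}_B Q_n - Q_n)(s,a) = (\gamma - 1)\,\mathbf{1}_{A_n}(s),
\qquad
\|\mathcal{T}_B Q_n - Q_n\|_p = (1-\gamma)\, |\mathcal{A}|^{1/p}\, n^{-1/p}.
\end{equation}
In particular $\|\mathcal{T}_B Q_n - Q_n\|_p \to 0$ as $n \to \infty$ for every fixed $1 \le p < \infty$.

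Finally, I would close the argument by negating Definition~\ref{def: stability}: fix any candidate pair $\delta, C > 0$; since $(1-\gamma) |\mathcal{A}|^{1/p} n^{-1/p} \to 0$, we may choose $n$ large enough that simultaneously $\|\mathcal{T}_B Q_n - Q_n\|_p < \min\{\delta, 1/C\}$, whence $\|Q_n - Q^*\|_\infty = 1 > C\,\|\mathcal{T}_B Q_n - Q_n\|_p$. This contradicts $(L^p, L^\infty)$-stability for every $1 \le p < \infty$, as required. The construction is not really hard anywhere; the only conceptual point worth flagging is that the instability is caused precisely by the phenomenon exposed in Theorem~\ref{thm:necessity of infty norm} — namely that $L^p$ norms for finite $p$ cannot detect large deviations supported on sets of small measure, so the same spike mechanism that breaks policy robustness also breaks the stability of the Bellman equation. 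A richer non-degenerate MDP (e.g.\ the one used in the previous theorem) would work identically by placing the spike on a set whose preimages under the dynamics also shrink with $n$, but the self-loop example suffices to establish the stated existence claim.
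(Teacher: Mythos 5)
Your proof is correct, but it takes a genuinely more elementary route than the paper's. The paper establishes this statement as a special case of a stronger result (non-$(L^q,L^p)$-stability for all $1\le q<p\le\infty$) via a non-degenerate MDP on $[-1,1]$ with shift dynamics $s\mapsto s\pm 0.1$ and linear rewards $r(s,a_i)=k_i s$; it first has to identify the optimal policy, then plants a small tent-shaped bump of height $h$ on an interval of length $\epsilon$ in $Q(\cdot,a_2)$ and carefully tracks how the bump propagates through the Bellman operator (the residual appears not only on the support of the bump but also on its preimages under the dynamics), finally tuning $\epsilon$ as a function of $\delta$, $n$, $p$, $q$. Your self-loop construction with $r\equiv 0$, $Q^*\equiv 0$, and the indicator spike $Q_n=\mathbf{1}_{A_n}$ short-circuits all of this: the residual is exactly $(\gamma-1)\mathbf{1}_{A_n}$, so the computation is a one-liner and the negation of Definition~\ref{def: stability} is immediate. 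Since the theorem only asserts existence of \emph{some} MDP, this is a complete proof, and in fact your example extends with no extra work to the full $(L^q,L^p)$, $q<p$, statement (the ratio of norms scales as $n^{1/q-1/p}\to\infty$). What the paper's heavier construction buys is rhetorical rather than logical: it shows the instability is not an artifact of trivial dynamics and zero reward but persists in an MDP with genuinely informative structure, which supports the paper's broader narrative about the vulnerability of $L^1$-trained agents; your example isolates the underlying mechanism (finite-$p$ norms cannot see spikes on small-measure sets) in its purest form.
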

Generally, we have the following theorem.
\begin{theorem}\label{thm: unstable of bellman opt eq}
    There exists an MDP $\mathcal{M}$ such that Bellman optimality equation $\mathcal{T}_B Q = Q$ is not $\left(  L^q\left( \mathcal{S}\times\mathcal{A} \right),  L^p \left( \mathcal{S}\times\mathcal{A} \right) \right)$-stable, for all $1 \le q < p\le \infty$.    
\end{theorem}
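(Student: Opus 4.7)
\textbf{Proof plan for Theorem~\ref{thm: unstable of bellman opt eq}.} The plan is to exhibit a single, very simple MDP together with, for each pair $1\le q<p\le\infty$, an explicit family of perturbations $Q_n=Q^*+w_n$ of the optimal $Q$-function, such that $\|\mathcal{T}_B Q_n-Q_n\|_q\to 0$ while the ratio $\|Q_n-Q^*\|_p/\|\mathcal{T}_B Q_n-Q_n\|_q\to\infty$. This will contradict the definition of $(L^q,L^p)$-stability.

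\textbf{Choice of MDP.} Take $\mathcal{S}=[0,1]$, $\mathcal{A}=\{a_1,a_2\}$ with deterministic self-loop transitions $\mathbb{P}(\cdot\mid s,a)=\delta_s$, rewards $r(s,a_1)=0$, $r(s,a_2)=1$, and discount $\gamma\in(0,1)$. A direct Bellman computation gives $Q^*(s,a_2)=\tfrac{1}{1-\gamma}$ and $Q^*(s,a_1)=\tfrac{\gamma}{1-\gamma}$ for every $s$, so $a_2$ is the unique optimal action everywhere and $Q^*(s,a_2)-Q^*(s,a_1)=1$.

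\textbf{Construction of the perturbation.} For each $n$ pick a measurable set $A_n\subseteq\mathcal{S}$ of Lebesgue measure $\mu_n\to 0^+$, fix a height $h\in(0,1)$ (strictly less than the action gap), and define
\begin{equation}\notag
w_n(s,a)=h\,\mathbf{1}_{A_n}(s)\,\mathbf{1}_{\{a=a_1\}},\qquad Q_n:=Q^*+w_n.
\end{equation}
Because $h<1$, the perturbation never reverses the arg-max: $\arg\max_{a'}Q_n(s,a')=\{a_2\}$ and $\max_{a'}Q_n(s,a')=\max_{a'}Q^*(s,a')$ for every $s$. Using this together with $\mathbb{P}(\cdot\mid s,a)=\delta_s$, a short calculation shows
\begin{equation}\notag
(\mathcal{T}_B Q_n - Q_n)(s,a)= (\mathcal{T}_B Q^*-Q^*)(s,a)-w_n(s,a)=-w_n(s,a),
\end{equation}
so $|\mathcal{T}_B Q_n-Q_n|=|w_n|$ pointwise.

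\textbf{Norm comparison.} Both norms then reduce to a single computation:
\begin{equation}\notag
\|\mathcal{T}_B Q_n - Q_n\|_q = h\,\mu_n^{1/q},\qquad \|Q_n-Q^*\|_p = h\,\mu_n^{1/p},
\end{equation}
with the usual convention $\mu_n^{1/\infty}=1$. For $1\le q<p\le\infty$ we have $1/p-1/q<0$, hence
\begin{equation}\notag
\frac{\|Q_n-Q^*\|_p}{\|\mathcal{T}_B Q_n-Q_n\|_q}=\mu_n^{\,1/p-1/q}\;\longrightarrow\;\infty,\qquad \|\mathcal{T}_B Q_n-Q_n\|_q=h\,\mu_n^{1/q}\;\longrightarrow\;0,
\end{equation}
as $\mu_n\to 0$. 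Therefore no constants $\delta>0$, $C>0$ can simultaneously satisfy $\|\mathcal{T}_B Q_n-Q_n\|_q<\delta$ and $\|Q_n-Q^*\|_p\le C\|\mathcal{T}_B Q_n-Q_n\|_q$ for all large $n$, ruling out $(L^q,L^p)$-stability.

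\textbf{Expected obstacle.} The only delicate point is guaranteeing that the Bellman error collapses to $-w_n$ without cross-terms; this is what forces us to (i) keep the spike height strictly below the action gap so the greedy action is preserved and (ii) place the spike on the \emph{suboptimal} action so the maximum appearing inside $\mathcal{T}_B$ is literally unchanged. The deterministic self-loop transitions also prevent $w_n$ from being smeared out by $\mathbb{P}$, which would otherwise require bounding a convolution-type term. Once these design choices are in place, the rest of the argument is the elementary $L^p$ scaling of an indicator function; the strict inequality $1/p>1/q$ does all the work.
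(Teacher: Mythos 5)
Your proof is correct, and it reaches the conclusion by a genuinely different and considerably more elementary route than the paper. The paper's proof builds an MDP on $[-1,1]$ with deterministic shift dynamics ($s\mapsto s\pm 0.1$) and linear rewards, places a trapezoidal bump on the \emph{optimal} action's $Q$-values near $s=0$, and must then (i) first establish what the optimal policy is for that MDP and (ii) track how the bump propagates through $\mathcal{T}_B$ into three separate regions ($(-0.1,-0.1+\epsilon)$, $(0,\epsilon)$, $(0.1,0.1+\epsilon)$), bounding the residual by $3h\epsilon^{1/q}$ while lower-bounding $\|Q-Q^*\|_p$ by $h(\epsilon/2)^{1/p}$ on the flat part of the trapezoid. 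Your two design choices --- self-loop transitions so the perturbation cannot be smeared by $\mathbb{P}$, and placing the spike on the \emph{suboptimal} action with height below the action gap so that $\max_{a'}Q_n(s,a')=\max_{a'}Q^*(s,a')$ pointwise --- make the Bellman residual collapse exactly to $-w_n$, reducing both norms to the scaling of a single indicator function and eliminating all of the paper's case analysis. The computation $Q^*(s,a_2)=\tfrac{1}{1-\gamma}$, $Q^*(s,a_1)=\tfrac{\gamma}{1-\gamma}$ is immediate, a single MDP handles every pair $1\le q<p\le\infty$ as the theorem requires, and the quantifier structure of the non-stability claim is correctly negated by letting $\mu(A_n)\to 0$. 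What the paper's heavier construction buys is reuse: the same shift MDP underlies its Theorem on the necessity of the $L^\infty$-norm and the seminorm/DQN instability results in Appendix C, and it shows the instability persists even when the perturbation interacts nontrivially with the Bellman operator; your construction instead isolates the phenomenon to its essence, namely that $\mu_n^{1/p-1/q}\to\infty$ when $q<p$. Both are complete proofs of the stated theorem.
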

\begin{proof}
    In order to show a Bellman optimality equation $\mathcal{T}_B Q = Q$ is not $\left(  L^q\left( \mathcal{S}\times\mathcal{A} \right),  L^p \left( \mathcal{S}\times\mathcal{A} \right) \right)$-stable, it is sufficient and necessary to prove $\forall n\in\mathbb{N}, \forall \delta>0, \exists Q(s,a), \text{ such that }\|\mathcal{T}_BQ-Q\|_{L^q\left( \mathcal{S}\times\mathcal{A} \right)}<\delta, \text{ but }  \|Q-Q^*\|_{L^p\left( \mathcal{S}\times\mathcal{A} \right)} \ge n {\|\mathcal{T}_BQ-Q\|_{L^q\left( \mathcal{S}\times\mathcal{A} \right)}}.$
    
    Define an MDP $\mathcal{M}$ where $\mathcal{S}=[-1,1], \mathcal{A}=\{a_1,a_2\},$
	\begin{equation*}
			\mathbb{P}(s^\prime |s,a_1)=\left\{ \begin{aligned}
				&\mathbbm{1}_{\left\{s^\prime=s-0.1\right\}}, \quad && s\in[-0.9,1] \\
				&\mathbbm{1}_{\left\{s^\prime=s \right\}}    ,      && else 
			\end{aligned}\right.,\quad		
			\mathbb{P}(s^\prime |s,a_2)=\left\{ \begin{aligned}
				&\mathbbm{1}_{\left\{s^\prime=s+0.1\right\}}, \quad && s\in[-1,0.9] \\
				&\mathbbm{1}_{\left\{s^\prime=s \right\}}    ,      && else 
			\end{aligned}\right.,
	\end{equation*}
	$r(s,a_i)=k_is,\ k_2\ge k_1> 0$. The transition function is essentially a deterministic transition dynamic and for convenience, we denote that
        \begin{equation*}
            p(s,a_1)=\left\{ \begin{aligned}
                &s-0.1, \quad && s\in[-0.9,1] \\
                &s     ,      && else 
            \end{aligned}\right.,\quad		
            p(s,a_2)=\left\{ \begin{aligned}
                &s+0.1, \quad && s\in[-1,0.9] \\
                &s     ,      && else 
            \end{aligned}\right..
       \end{equation*}
	
	Let $Q^*(s,a)=Q^{\pi^*}(s,a)$ be the optimal Q-function, where $\pi^*$ is the optimal policy. 
 
 We have $Q^*(s,a_2)\geq Q^*(s,a_1),\ \forall s\ge 0$. To prove this, we define $\bar{\pi}(s)\equiv a_2,\forall s\ge 0$, and thus\begin{equation}\label{equ:barQ}
     Q^{\bar{\pi}}(s,a_2)=\sum_{i=0}^{\infty}\gamma^i r(s_i,a_2),
 \end{equation} where $s_0=s\ge 0,\  s_{i}=p(s_{i-1},a_2)\ge 0,\,i\ge 1.$ Consider Q-function of any policy $\pi$ 
 \begin{equation}\label{equ:Qpi}
     Q^{\pi}(s,\alpha_0)=\sum_{i=0}^{\infty}\gamma^i r(\tilde{s}_i,\pi(\tilde{s}_i)),
 \end{equation}
 where $\pi(\tilde{s}_0)=\alpha_0 \in \mathcal{A},\ \tilde{s}_0=s\ge 0,\ \tilde{s}_{i+1}=p(\tilde{s}_i,\pi(\tilde{s}_i))$ and $r(\tilde{s}_i,\pi(\tilde{s}_i)) = \pi(a_1|\tilde{s}_i) r(\tilde{s}_i,a_1) + \pi(a_2|\tilde{s}_i) r(\tilde{s}_i,a_2)$.
	
 We first notice that all $s_i$ and $\tilde{s}_i$ lie on the grid points $\mathcal{S}\cap \{s + 0.1z:\,z\in \mathbb{Z} \}$, actually, $-\lfloor \frac{1+s}{0.1} \rfloor\le z\le\lfloor \frac{1-s}{0.1} \rfloor$. In the following, we prove $s_{i}\geq \tilde{s}_{i},\ \forall i$. We consider the recursion method and suppose $s_i\geq \tilde{s}_i.$ Then, we have the following two cases. If $s_i\le 0.9$, we obtain
 $$s_{i+1}=s_i+0.1\ge \tilde{s}_i+0.1\ge \tilde{s}_{i+1}.$$
 If $s_i> 0.9$, it follows from $z\le\lfloor \frac{1-s}{0.1} \rfloor$ that $$s_{i+1}=s_i=s+0.1 \lfloor \frac{1-s}{0.1} \rfloor \ge \tilde{s}_{i+1}.$$ 
 Thus, we have $s_{i+1}\geq \tilde{s}_{i+1},\ \forall i$. Note that $s_0=\tilde{s}_0=s$, and by recursion, it can be obtained that $s_{i}\geq \tilde{s}_{i}$ holds for all $i$.
 
 Noticing that the reward $r(s,a)$ is an increasing function in terms of $s$ and satisfies $r(s,a_2)\ge r(s,a_1), \forall s\ge 0,$ we have $$r(s_{i},a_2)\ge r(s_{i},\alpha_{i})\ge r(\tilde{s}_{i},\alpha_{i}),\quad \forall\alpha_{i}\in\mathcal{A},i=0,1,2,\dots ,$$
 where the second inequality is due to $s_i\ge \tilde{s}_i$. As a consequence, \begin{equation}\label{equ:reward}
     r(s_{i},a_2)\ge r(\tilde{s}_i,\pi(\tilde{s}_i)),\quad\forall i=0,1,2,\dots.
 \end{equation}

Combining \eqref{equ:barQ}, \eqref{equ:Qpi}, and \eqref{equ:reward}, we obtain that $Q^{\bar{\pi}}(s,a_2)\ge Q^{\pi}(s,\alpha_0)$. Further, with $\alpha_0=a_2$, we derive $\bar{\pi}(s)={\pi^*}(s)$ on $s>0$. With $\alpha_0=a_1$, we derive $Q^*(s,a_2)=Q^{\bar{\pi}}(s,a_2)\geq Q^*(s,a_1),\ \forall s\ge 0.$
	
We then prove that given $1\le q<p$, $\forall n\in \mathbb{N}, \delta>0$, there exists $Q(s,a)$ with $\|\mathcal{T}_BQ-Q\|_{L^q\left( \mathcal{S}\times\mathcal{A} \right)}\leq \delta$, such that $\|Q-Q^*\|_{L^p\left( \mathcal{S}\times\mathcal{A} \right)}\ge n\|\mathcal{T}_BQ-Q\|_{L^q\left( \mathcal{S}\times\mathcal{A} \right)}$. Let $Q(s,a_1)=Q^*(s,a_1),$ 
\begin{equation*}
	Q(s,a_2)=Q^*(s,a_2)+h\cdot\mathbbm{1}_{(\frac{1}{4}\epsilon, \frac{3}{4}\epsilon)}+\frac{4h}{\epsilon}s\cdot\mathbbm{1}_{(0,\frac{1}{4}\epsilon]}+\left(-\frac{4h}{\epsilon}s+4h\right)\cdot\mathbbm{1}_{[\frac{3}{4}\epsilon,\epsilon)},
\end{equation*}
	where $h>0$, $\epsilon=\min\left\{\left(\frac{\delta}{3h}\right)^q,\left(3n\cdot2^{\frac{1}{p}}\right)^{-\frac{pq}{p-q}}\right\}$ and $\mathbbm{1}_A(s)=\left\{ \begin{aligned}
		&1, &&s\in A\\
		&0, && else
	\end{aligned}\right.$ denotes the indicator function. It can be seen from the definition that \begin{equation}\label{equ:Q-UL}
	    Q^*(s,a_2)\le Q^*(s,a_2)+h\cdot\mathbbm{1}_{(\frac{1}{4}\epsilon, \frac{3}{4}\epsilon)} \leq Q(s,a_2)\leq Q^*(s,a_2)+h\cdot\mathbbm{1}_{(0,\epsilon)}.
	\end{equation}

	We consider the following cases.
	\begin{itemize}
		\item When $s\in (-0.1,-0.1+\epsilon),a=a_2$, 
            \begin{align*}
              \mathcal{T}_BQ(s,a_2) =r(s,a_2)+\gamma\max_{a_i}Q(s+0.1,a_i)=r(s,a_2)+\gamma Q(s+0.1,a_2),
            \end{align*}
              Together with \eqref{equ:Q-UL}, we have
            \begin{align*}
               & Q(s,a_2)=Q^*(s,a_2)= r(s,a_2)+\gamma Q^*(s+0.1,a_2)\\
               \le & \mathcal{T}_BQ(s,a_2) \leq r(s,a_2)+\gamma[Q^*(s+0.1,a_2)+h]\\
               =& Q^*(s,a_2)+h\gamma= Q(s,a_2)+h\gamma,
            \end{align*}
               thus $|\mathcal{T}_BQ(s,a_2)-Q(s,a_2)|\leq h\gamma.$
		\item When $s\in (0,\epsilon),a=a_2$, 
            \begin{align*}                \mathcal{T}_BQ(s,a_2)&=r(s,a_2)+\gamma\max_{a_i}Q(s+0.1,a_i)=r(s,a_2)+\gamma Q(s+0.1,a_2)\\
              &=r(s,a_2)+\gamma Q^*(s+0.1,a_2)=Q^*(s,a_2),
            \end{align*}
               Again from \eqref{equ:Q-UL}, there is
               \begin{align*}
                   |\mathcal{T}_BQ(s,a_2)-Q(s,a_2)|=|Q^*(s,a_2)-Q(s,a_2)|\leq h.
               \end{align*} 
		\item When $s\in (0.1,0.1+\epsilon),a=a_1$, 
            \begin{align*}
              \mathcal{T}_BQ(s,a_1) =r(s,a_1)+\gamma\max_{a_i}Q(s-0.1,a_i)=r(s,a_1)+\gamma Q(s-0.1,a_2),
            \end{align*}
              Utilizing \eqref{equ:Q-UL}, we have
            \begin{align*}
               & Q(s,a_1)=Q^*(s,a_1)= r(s,a_1)+\gamma Q^*(s-0.1,a_2)\\
               \le & \mathcal{T}_BQ(s,a_1) \leq r(s,a_1)+\gamma[Q^*(s-0.1,a_2)+h]\\
               =& Q^*(s,a_1)+h\gamma= Q(s,a_1)+h\gamma,
            \end{align*}
               thus $|\mathcal{T}_BQ(s,a_1)-Q(s,a_1)|\leq h\gamma.$

		\item Otherwise, $$\mathcal{T}_BQ(s,a_i)=r(s,a_i)+\gamma Q\left(p(s,a_i),\pi^*(p(s,a_i))\right)=Q^*(s,a_i),$$
		also note that $Q(s,\cdot)=Q^*(s,\cdot)$ for $s\notin(0,\epsilon)$, thus $$|\mathcal{T}_BQ(s,a)-Q(s,a)|=|Q^*(s,a)-Q(s,a)|=0.$$
	\end{itemize}
From the analysis above, we have
	\begin{equation}\label{equ:TQ-Q}
		\|\mathcal{T}_BQ-Q\|_{L^q\left( \mathcal{S}\times\mathcal{A} \right)}\leq (2h\gamma+h)\epsilon^{\frac{1}{q}}\leq 3h\epsilon^{\frac{1}{q}} \le\delta,
	\end{equation}
and
	\begin{equation}\label{equ:Q-Q*}
		\|Q-Q^*\|_{L^p\left( \mathcal{S}\times\mathcal{A} \right)}\geq\|(Q-Q^*)\mathbbm{1}_{( \frac{1}{4}\epsilon, \frac{3}{4}\epsilon )}\|_{L^p\left( \mathcal{S}\times\mathcal{A} \right)}\geq h\left(\frac{\epsilon}{2}\right)^{\frac{1}{p}} \ge n \|\mathcal{T}_BQ-Q\|_{L^q\left( \mathcal{S}\times\mathcal{A} \right)}.
	\end{equation}
Inequality \eqref{equ:TQ-Q} and \eqref{equ:Q-Q*} come from $\epsilon=\min\left\{\left(\frac{\delta}{3h}\right)^q,\left(3n\cdot2^{\frac{1}{p}}\right)^{-\frac{pq}{p-q}}\right\}$, which prove the desired property.
\end{proof}

\section{Theorems and Proofs of Stability Analysis of DQN}
In practical DQN training, we use the following loss:
\begin{equation}
    \mathcal{L}(\theta)=\frac{1}{\left| \mathcal{B} \right|} \sum_{(s,a,r,s^\prime)\in \mathcal{B}} \left|r + \gamma \max_{a^\prime} Q(s^\prime,a^\prime; \bar{\theta}) - Q(s,a; \theta)  \right|,
\end{equation}
where $\mathcal{B}$ represents a batch of transition pairs sampled from the replay buffer and $\bar{\theta}$ is the parameter of target network.

$\mathcal{L}(\theta)$ is a approximation of the following objective:
    \begin{align}
        \mathcal{L}(Q;\pi)&=\mathbb{E}_{s\sim d_{\mu_0}^\pi(\cdot)} \mathbb{E}_{a\sim \pi(\cdot|s)} \left| \mathcal{T}_B Q(s,a) - Q(s,a)  \right| \\
        &=\mathbb{E}_{(s,a)\sim d_{\mu_0}^\pi(\cdot,\cdot)} \left| \mathcal{T}_B Q(s,a) - Q(s,a)  \right|,
    \end{align}
where $d_{\mu_0}^\pi(s)=\mathbb{E}_{s_0\sim \mu_0} \left[ (1-\gamma) \sum_{t=0}^{\infty} \gamma^t \operatorname{Pr}^\pi(s_t=s|s_0) \right]$ is the state visitation distribution
and $d_{\mu_0}^\pi(s,a)=\mathbb{E}_{s_0\sim \mu_0} \left[ (1-\gamma) \sum_{t=0}^{\infty} \gamma^t \operatorname{Pr}^\pi(s_t=s, a_t=a|s_0) \right]$ is the state-action visitation distribution.

\subsection{Definition and Properties of $L^{p,d_{\mu_0}^\pi}$} \label{app:seminorm}
\begin{definition}
    Given a policy $\pi$, for any function $f:\mathcal{S}\times\mathcal{A}\rightarrow \mathbb{R}$ and $1\le p\le\infty$, we define the seminorm $L^{p,d_{\mu_0}^\pi}$.
        \begin{itemize}
            \item If $d_{\mu_0}^\pi$ is a probability density function, we define
            \begin{equation}
                \begin{aligned}
                    \left\| f \right\|_{L^{p,d_{\mu_0}^\pi}\left( \mathcal{S}\times\mathcal{A} \right)} &:= \left\| d_{\mu_0}^\pi f \right\|_{L^{p}\left( \mathcal{S}\times\mathcal{A} \right)} \\
                    &= \left(\int_{(s,a)\in\mathcal{S}\times\mathcal{A}} \left| d_{\mu_0}^\pi(s,a)  f(s,a) \right|^p d \mu(s,a) \right)^{\frac{1}{p}}.
                \end{aligned}
            \end{equation}
            \item If $d_{\mu_0}^\pi$ is a probability mass function, we define
            \begin{equation}
                    \left\| f \right\|_{L^{p,d_{\mu_0}^\pi}\left( \mathcal{S}\times\mathcal{A} \right)} := \left(\sum_{(s,a)\in\mathcal{S}\times\mathcal{A}} \left| d_{\mu_0}^\pi(s,a)  f(s,a) \right|^p  \right)^{\frac{1}{p}}.          
            \end{equation}
        \end{itemize}
\end{definition}

\begin{remark}
    Note that $\mathcal{L}(Q;\pi) = \left\|\mathcal{T}_B Q - Q \right\|_{L^{1,d_{\mu_0}^\pi}\left( \mathcal{S}\times\mathcal{A} \right)}$.
\end{remark}
\begin{theorem}
    For any $d_{\mu_0}^\pi(s,a)$ and $1\le p \le \infty$, $L^{p,d_{\mu_0}^\pi}$ is a seminorm.
\end{theorem}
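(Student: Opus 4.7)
The plan is to reduce everything to the fact that $L^{p}\left(\mathcal{S}\times\mathcal{A}\right)$ (or the discrete $\ell^{p}$ analogue) is already a genuine norm, using the fact that the map $f \mapsto d_{\mu_0}^{\pi}\, f$ is linear. Since a seminorm on a real vector space requires only (i) non-negativity, (ii) absolute homogeneity, and (iii) the triangle inequality (but not definiteness), I only need to verify these three properties; I do not need to worry that $d_{\mu_0}^{\pi}$ may vanish on parts of $\mathcal{S}\times\mathcal{A}$, which is precisely what prevents $L^{p,d_{\mu_0}^\pi}$ from being a norm in general.

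First I would treat the probability density function case and the probability mass function case in parallel, since the argument is identical up to replacing the integral $\int_{\mathcal{S}\times\mathcal{A}} \cdot \, d\mu$ by the sum $\sum_{(s,a)\in\mathcal{S}\times\mathcal{A}}$. Fix $1\le p\le \infty$ and write $d:=d_{\mu_0}^{\pi}$ for brevity, so that by definition $\|f\|_{L^{p,d}} = \|d\cdot f\|_{L^{p}}$. Non-negativity is immediate because the $L^{p}$-norm is non-negative. For absolute homogeneity, given $\alpha \in \mathbb{R}$ we have
\begin{equation}\notag
\|\alpha f\|_{L^{p,d}} \;=\; \|d\cdot (\alpha f)\|_{L^{p}} \;=\; \|\alpha (d\cdot f)\|_{L^{p}} \;=\; |\alpha|\,\|d\cdot f\|_{L^{p}} \;=\; |\alpha|\,\|f\|_{L^{p,d}},
\end{equation}
where the middle equality uses that pointwise multiplication commutes with scalar multiplication and the third equality uses the absolute homogeneity of $\|\cdot\|_{L^{p}}$.

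For the triangle inequality, given $f,g$, I use the linearity $d\cdot(f+g) = d\cdot f + d\cdot g$ together with the Minkowski inequality in $L^{p}$:
\begin{equation}\notag
\|f+g\|_{L^{p,d}} \;=\; \|d\cdot f + d\cdot g\|_{L^{p}} \;\le\; \|d\cdot f\|_{L^{p}} + \|d\cdot g\|_{L^{p}} \;=\; \|f\|_{L^{p,d}} + \|g\|_{L^{p,d}}.
\end{equation}
This exhausts the three required properties, so $L^{p,d_{\mu_0}^{\pi}}$ is a seminorm in both the density and mass-function cases.

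There is no genuine obstacle here; the only thing worth flagging is that this quantity fails to be a norm precisely when $d_{\mu_0}^{\pi}$ can vanish on a set of positive measure (or a nonempty set, in the discrete case), since then any $f$ supported on that zero set satisfies $\|f\|_{L^{p,d}}=0$ without $f\equiv 0$. This matches the remark already made in the paper that the seminorm upgrades to a norm when $d_{\mu_0}^{\pi}(s,a)>0$ almost everywhere, and it is the reason the authors chose the term ``seminorm'' rather than ``norm'' in the definition.
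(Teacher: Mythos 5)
Your proof is correct and follows essentially the same route as the paper's: both reduce absolute homogeneity and the triangle inequality for $\left\| \cdot \right\|_{L^{p,d_{\mu_0}^\pi}}$ to the corresponding properties of the $L^{p}$-norm via the linear map $f \mapsto d_{\mu_0}^{\pi} f$ and Minkowski's inequality. Your additional remarks on non-negativity and on why definiteness can fail are consistent with the paper's surrounding discussion.
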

\begin{proof}
    Firstly, we show that $L^{p,d_{\mu_0}^\pi}$ satisfies the absolute homogeneity. For any function $f$ and $\lambda\in\mathbb{R}$, we have
    \begin{align}
        \left\| \lambda f \right\|_{L^{p,d_{\mu_0}^\pi}\left( \mathcal{S}\times\mathcal{A} \right)} = \left\| d_{\mu_0}^\pi \lambda f \right\|_{L^{p}\left( \mathcal{S}\times\mathcal{A} \right)} = \left|\lambda \right| \left\| d_{\mu_0}^\pi f \right\|_{L^{p}\left( \mathcal{S}\times\mathcal{A} \right)} = \left|\lambda \right|  \left\| f \right\|_{L^{p,d_{\mu_0}^\pi}\left( \mathcal{S}\times\mathcal{A} \right)}.
    \end{align}

    Next, we show that the triangle inequality holds. For any functions $f$ and $g$, we have
    \begin{align}
         \left\| f + g \right\|_{L^{p,d_{\mu_0}^\pi}\left( \mathcal{S}\times\mathcal{A} \right)} &= \left\| d_{\mu_0}^\pi (f+g) \right\|_{L^{p}\left( \mathcal{S}\times\mathcal{A} \right)} \\
         &\le \left\| d_{\mu_0}^\pi f \right\|_{L^{p}\left( \mathcal{S}\times\mathcal{A} \right)} + \left\| d_{\mu_0}^\pi g \right\|_{L^{p}\left( \mathcal{S}\times\mathcal{A} \right)} \\
         &= \left\| f \right\|_{L^{p,d_{\mu_0}^\pi}\left( \mathcal{S}\times\mathcal{A} \right)} + \left\| g \right\|_{L^{p,d_{\mu_0}^\pi}\left( \mathcal{S}\times\mathcal{A} \right)},
    \end{align}
    where the inequality comes from the triangle inequality of $L^{p}\left( \mathcal{S}\times\mathcal{A} \right)$.
\end{proof}
\begin{theorem}
    If $d_{\mu_0}^\pi(s,a)>0$ for almost everywhere $(s,a)\in\mathcal{S}\times\mathcal{A}$, then $L^{p,d_{\mu_0}^\pi}\left( \mathcal{S}\times\mathcal{A} \right) := \left\{f \left| \left\| f \right\|_{L^{p,d_{\mu_0}^\pi}\left( \mathcal{S}\times\mathcal{A} \right)} \le \infty \right. \right\}$ is a Banach space, for $1\le p \le \infty$.
\end{theorem}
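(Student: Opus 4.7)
The plan is to verify the two defining properties of a Banach space, namely that $\|\cdot\|_{L^{p,d_{\mu_0}^\pi}}$ is an actual norm (not merely a seminorm) on the vector space $L^{p,d_{\mu_0}^\pi}(\mathcal{S}\times\mathcal{A})$, and that this normed space is complete. The vector space structure itself is immediate from the already-established absolute homogeneity and triangle inequality, together with the fact that $\|\cdot\|_{L^{p,d_{\mu_0}^\pi}}$-finiteness is preserved under linear combinations.

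For the norm property, I would observe that $\|f\|_{L^{p,d_{\mu_0}^\pi}(\mathcal{S}\times\mathcal{A})} = 0$ is equivalent to $\|d_{\mu_0}^\pi f\|_{L^p(\mathcal{S}\times\mathcal{A})} = 0$, which forces $d_{\mu_0}^\pi(s,a) f(s,a) = 0$ for almost every $(s,a)$. Invoking the hypothesis $d_{\mu_0}^\pi(s,a) > 0$ almost everywhere, one concludes $f(s,a) = 0$ almost everywhere, i.e., $f = 0$ in the usual sense of $L^p$-equivalence classes. The same argument works verbatim in the probability-mass case, where ``almost everywhere'' is replaced by ``for every $(s,a)$ with $d_{\mu_0}^\pi(s,a) > 0$,'' and under the stated positivity this again means $f \equiv 0$.

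For completeness, the key idea is that the linear map $\Phi: f \mapsto d_{\mu_0}^\pi f$ is an isometry from $L^{p,d_{\mu_0}^\pi}(\mathcal{S}\times\mathcal{A})$ into $L^p(\mathcal{S}\times\mathcal{A})$ by the very definition of the seminorm. Given a Cauchy sequence $\{f_n\}$ in $L^{p,d_{\mu_0}^\pi}$, the sequence $\{\Phi(f_n)\} = \{d_{\mu_0}^\pi f_n\}$ is Cauchy in the classical Banach space $L^p(\mathcal{S}\times\mathcal{A})$, hence converges in $L^p$-norm to some $g \in L^p(\mathcal{S}\times\mathcal{A})$. Since $d_{\mu_0}^\pi > 0$ almost everywhere, define $f := g / d_{\mu_0}^\pi$ (interpreted almost everywhere); then $d_{\mu_0}^\pi f = g \in L^p$, so $f \in L^{p,d_{\mu_0}^\pi}(\mathcal{S}\times\mathcal{A})$, and
\begin{equation*}
\|f_n - f\|_{L^{p,d_{\mu_0}^\pi}} = \|d_{\mu_0}^\pi f_n - d_{\mu_0}^\pi f\|_{L^p} = \|d_{\mu_0}^\pi f_n - g\|_{L^p} \longrightarrow 0.
\end{equation*}

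The only point that requires a modicum of care is handling the measurability of the pointwise quotient $g/d_{\mu_0}^\pi$ on the (measure-zero) set where $d_{\mu_0}^\pi$ vanishes, which is harmless because redefining $f$ arbitrarily on a null set does not change its equivalence class. The same reasoning adapts to $p = \infty$, where the essential-supremum norm makes $\Phi$ an isometry onto its image, and to the probability-mass case where the argument becomes purely elementary since the positivity hypothesis makes $\Phi$ a bijection onto weighted $\ell^p$. Thus the main obstacle is really just recognizing the isometric embedding into the classical $L^p$ space; once that is in place, completeness is inherited for free.
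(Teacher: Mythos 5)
Your proposal is correct and follows essentially the same route as the paper: positive definiteness is deduced from the almost-everywhere positivity of $d_{\mu_0}^\pi$, and completeness is transferred from the classical $L^p\left(\mathcal{S}\times\mathcal{A}\right)$ space via the multiplication map $f \mapsto d_{\mu_0}^\pi f$, with the limit recovered as $g/d_{\mu_0}^\pi$. If anything, your explicit conclusion that $\|f_n - f\|_{L^{p,d_{\mu_0}^\pi}} = \|d_{\mu_0}^\pi f_n - g\|_{L^p} \rightarrow 0$ is stated more carefully than the paper's, which only records convergence of the norms $\|d_{\mu_0}^\pi f_i\|_{L^p} \rightarrow \|g\|_{L^p}$ rather than convergence in norm.
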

\begin{proof}
    Firstly, we show the $L^{p,d_{\mu_0}^\pi}$ is positive definite. If $\left\| f \right\|_{L^{p,d_{\mu_0}^\pi}\left( \mathcal{S}\times\mathcal{A} \right)} = 0$, we have that $d_{\mu_0}^\pi(s,a) f(s,a) = 0$, for almost everywhere $(s,a) \in \mathcal{S}\times\mathcal{A}$. Due to the nonnegativity of $d_{\mu_0}^\pi(s,a)$, we have $ f(s,a) = 0$, for almost everywhere $(s,a) \in \mathcal{S}\times\mathcal{A}$.

    We show the completeness of $L^{p,d_{\mu_0}^\pi}\left( \mathcal{S}\times\mathcal{A} \right)$ in the following. For any Cauchy sequence $\left\{ f_i \right\} \subset L^{p,d_{\mu_0}^\pi}\left( \mathcal{S}\times\mathcal{A} \right)$, $\left\{ d_{\mu_0}^\pi f_i \right\}$ is a Cauchy sequence in $L^{p}\left( \mathcal{S}\times\mathcal{A} \right)$. Then, due to the completeness of $L^{p}\left( \mathcal{S}\times\mathcal{A} \right)$, there exists $g\in L^{p}\left( \mathcal{S}\times\mathcal{A} \right)$ such that
    $$\lim_{i\rightarrow \infty} \left\| d_{\mu_0}^\pi f_i \right\|_{L^{p}\left( \mathcal{S}\times\mathcal{A} \right)} = \left\| g \right\|_{L^{p}\left( \mathcal{S}\times\mathcal{A} \right)}.$$
    Let $f = \frac{g}{d_{\mu_0}^\pi}$ for almost everywhere $(s,a)\in\mathcal{S}\times\mathcal{A}$, we have $\left\| f \right\|_{L^{p,d_{\mu_0}^\pi}\left( \mathcal{S}\times\mathcal{A} \right)} =  \left\| g \right\|_{L^{p}\left( \mathcal{S}\times\mathcal{A} \right)} \le \infty$ and 
    $$ \lim_{i\rightarrow \infty}\left\| f_i \right\|_{L^{p,d_{\mu_0}^\pi}\left( \mathcal{S}\times\mathcal{A} \right)} = \left\| f \right\|_{L^{p,d_{\mu_0}^\pi}\left( \mathcal{S}\times\mathcal{A} \right)} .$$
    Thus, $L^{p,d_{\mu_0}^\pi}\left( \mathcal{S}\times\mathcal{A} \right)$ is a Banach space.
\end{proof}
We analyze the properties of $L^{p,d_{\mu_0}^\pi}\left( \mathcal{S}\times\mathcal{A} \right)$ in the following lemma.
\begin{lemma}\label{lem: norm property}
    Given a policy $\pi$, for any function $f:\mathcal{S}\times\mathcal{A}\rightarrow \mathbb{R}$, then we have the following properties.
    \begin{itemize}
        \item If $M_{d_{\mu_0}^\pi}:=\sup_{(s,a)\in\mathcal{S}\times\mathcal{A}} d_{\mu_0}^\pi(s,a) <\infty$, then 
        $$ \left\| f \right\|_{L^{p,d_{\mu_0}^\pi}\left( \mathcal{S}\times\mathcal{A} \right)} \le M_{d_{\mu_0}^\pi} \left\|  f \right\|_{L^{p}\left( \mathcal{S}\times\mathcal{A} \right)} ,\ \forall 1\le p\le\infty.$$
        \item If $C_{d_{\mu_0}^\pi}:=\inf_{(s,a)\in\mathcal{S}\times\mathcal{A}} d_{\mu_0}^\pi(s,a)  >0$, then we have 
        $$  C_{d_{\mu_0}^\pi} \left\|  f \right\|_{L^{p}\left( \mathcal{S}\times\mathcal{A} \right)} \le \left\| f \right\|_{L^{p,d_{\mu_0}^\pi}\left( \mathcal{S}\times\mathcal{A} \right)} ,\ \forall 1\le p\le\infty.$$
        \item $\left\| f \right\|_{L^{1,d_{\mu_0}^\pi}\left( \mathcal{S}\times\mathcal{A} \right)}\le \left\| d_{\mu_0}^\pi \right\|_{L^{\frac{p}{p-1}}\left( \mathcal{S}\times\mathcal{A} \right)} \left\|  f \right\|_{L^{p}\left( \mathcal{S}\times\mathcal{A} \right)},\ \forall 1\le p\le\infty$.
        \item If $d_{\mu_0}^\pi(s,a) \neq 0$ for almost everywhere $(s,a)\in\mathcal{S}\times\mathcal{A}$, then we have 
        $$ \left\|  f \right\|_{L^{1}\left( \mathcal{S}\times\mathcal{A} \right)} \le C_{d_{\mu_0}^\pi,p} \left\| f \right\|_{L^{p,d_{\mu_0}^\pi}\left( \mathcal{S}\times\mathcal{A} \right)},\ \forall 1< p<\infty,$$ 
        where $C_{d_{\mu_0}^\pi,p}=\left( \int_{(s,a)\in\mathcal{S}\times\mathcal{A}} \left| d_{\mu_0}^\pi(s,a)  \right|^{-\frac{p}{p-1}} d \mu(s,a) \right)^{\frac{p-1}{p}}$.
        \item If $d_{\mu_0}^\pi(s,a) \neq 0$ for almost everywhere $(s,a)\in\mathcal{S}\times\mathcal{A}$, then we have 
        $$ \left\|  f \right\|_{L^{p}\left( \mathcal{S}\times\mathcal{A} \right)} \le C_{d_{\mu_0}^\pi,p} \left\| f \right\|_{L^{p^2,d_{\mu_0}^\pi}\left( \mathcal{S}\times\mathcal{A} \right)},\ \forall 1< p<\infty,$$ 
        where $C_{d_{\mu_0}^\pi,p}=\left( \int_{(s,a)\in\mathcal{S}\times\mathcal{A}} \left| d_{\mu_0}^\pi(s,a)  \right|^{-\frac{p^2}{p-1}} d \mu(s,a) \right)^{\frac{p-1}{p^2}}$.
    \end{itemize}
\end{lemma}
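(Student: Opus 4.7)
The plan is to split the five claims into two simple pointwise bounds (items 1 and 2) and three H\"older-type inequalities (items 3, 4, and 5), each of which reduces to one line once the right factorization is identified.

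For items (1) and (2), I would argue directly from the definition $\|f\|_{L^{p,d_{\mu_0}^\pi}} = \|d_{\mu_0}^\pi f\|_{L^p}$. Whenever the corresponding supremum/infimum is finite/positive, the pointwise sandwich $C_{d_{\mu_0}^\pi} |f(s,a)| \le |d_{\mu_0}^\pi(s,a) f(s,a)| \le M_{d_{\mu_0}^\pi} |f(s,a)|$ holds almost everywhere and is preserved under the $L^p$ norm (take $p$-th powers and integrate, or take essential supremum in the $p=\infty$ case), producing both inequalities immediately.

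For item (3), I would apply H\"older's inequality to the product $|d_{\mu_0}^\pi|\cdot|f|$ with conjugate exponents $p/(p-1)$ and $p$, obtaining $\int |d_{\mu_0}^\pi||f| \, d\mu \le \|d_{\mu_0}^\pi\|_{L^{p/(p-1)}} \|f\|_{L^p}$, and recognizing the left-hand side as $\|f\|_{L^{1,d_{\mu_0}^\pi}}$ by definition; the endpoint cases $p=1$ and $p=\infty$ are covered by the standard $L^1$--$L^\infty$ duality convention. For items (4) and (5), the hypothesis that $d_{\mu_0}^\pi$ is nonzero almost everywhere allows the algebraic identity $|f| = |d_{\mu_0}^\pi f|/|d_{\mu_0}^\pi|$ a.e. For (4) I would substitute this into $\|f\|_{L^1} = \int |f|\, d\mu$ and apply H\"older with exponents $p$ and $p/(p-1)$ to the factors $|d_{\mu_0}^\pi f|$ and $|d_{\mu_0}^\pi|^{-1}$, producing
\[
\|f\|_{L^1} \le \|d_{\mu_0}^\pi f\|_{L^p} \cdot \Bigl( \int |d_{\mu_0}^\pi|^{-p/(p-1)} \, d\mu \Bigr)^{(p-1)/p},
\]
which is exactly the stated bound. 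For (5) I would instead write $|f|^p = |d_{\mu_0}^\pi f|^p \cdot |d_{\mu_0}^\pi|^{-p}$, apply the same conjugate pair of H\"older exponents to these two factors, and then take a $p$-th root of both sides; this converts the outer $L^p$ integration on $|d_{\mu_0}^\pi f|^p$ into an $L^{p^2}$ norm on $d_{\mu_0}^\pi f$ and leaves the exponent on the $|d_{\mu_0}^\pi|^{-p^2/(p-1)}$ integral at $(p-1)/p^2$, matching the stated $C_{d_{\mu_0}^\pi, p}$.

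There is no real obstacle here; every claim collapses to one monotone pointwise inequality or one invocation of H\"older with carefully matched exponents. The only place where bookkeeping deserves attention is item (5), since applying H\"older at the level of $|f|^p$ rather than $|f|$ effectively squares the relevant exponents; I would verify explicitly that taking the $p$-th root on both sides produces the constant with outer exponent $(p-1)/p^2$ (not $(p-1)/p$) and the inner power $-p^2/(p-1)$ on $d_{\mu_0}^\pi$, exactly as written.
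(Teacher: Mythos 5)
Your proposal is correct and follows essentially the same route as the paper: items (1)--(3) are the identical pointwise-bound and H\"older arguments, and your factorizations $|f| = |d_{\mu_0}^\pi f|/|d_{\mu_0}^\pi|$ and $|f|^p = |d_{\mu_0}^\pi f|^p|d_{\mu_0}^\pi|^{-p}$ for items (4) and (5) are exactly the standard derivation of the ``reverse H\"older inequality'' that the paper invokes as a black box at those two steps. Your exponent bookkeeping in item (5), including the outer power $(p-1)/p^2$ after taking the $p$-th root, matches the paper's constant.
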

\begin{proof}
(1)
    If $M_{d_{\mu_0}^\pi}:=\sup_{(s,a)\in\mathcal{S}\times\mathcal{A}} d_{\mu_0}^\pi(s,a) <\infty$, we have
    \begin{align}
        \left\| f \right\|_{L^{p,d_{\mu_0}^\pi}\left( \mathcal{S}\times\mathcal{A} \right)} &= \left(\int_{(s,a)\in\mathcal{S}\times\mathcal{A}} \left| d_{\mu_0}^\pi(s,a) f(s,a) \right|^p d \mu(s,a) \right)^{\frac{1}{p}} \\
        &\le M_{d_{\mu_0}^\pi} \left(\int_{(s,a)\in\mathcal{S}\times\mathcal{A}} \left| f(s,a) \right|^p d \mu(s,a) \right)^{\frac{1}{p}} \\
        &=  M_{d_{\mu_0}^\pi} \left\|  f \right\|_{L^{p}\left( \mathcal{S}\times\mathcal{A} \right)}.
    \end{align}
    
(2)
    If $C_{d_{\mu_0}^\pi}:=\inf_{(s,a)\in\mathcal{S}\times\mathcal{A}} d_{\mu_0}^\pi(s,a) >0$, we have
    \begin{align}
        \left\| f \right\|_{L^{p,d_{\mu_0}^\pi}\left( \mathcal{S}\times\mathcal{A} \right)} &= \left(\int_{(s,a)\in\mathcal{S}\times\mathcal{A}} \left| d_{\mu_0}^\pi(s,a) f(s,a) \right|^p d \mu(s,a) \right)^{\frac{1}{p}} \\
        &\ge C_{d_{\mu_0}^\pi} \left\|  f \right\|_{L^{p}\left( \mathcal{S}\times\mathcal{A} \right)}.
    \end{align}
    
(3)
    \begin{align}
        \left\| f \right\|_{L^{1,d_{\mu_0}^\pi}\left( \mathcal{S}\times\mathcal{A} \right)} &= \int_{(s,a)\in\mathcal{S}\times\mathcal{A}} \left| d_{\mu_0}^\pi(s,a) f(s,a) \right| d \mu(s,a) \\
        &\le \left\|  f \right\|_{L^{p}\left( \mathcal{S}\times\mathcal{A} \right)} \left(\int_{(s,a)\in\mathcal{S}\times\mathcal{A}} \left| d_{\mu_0}^\pi(s,a) \right|^{\frac{p}{p-1}} d \mu(s,a) \right)^{1-\frac{1}{p}} \\
        &= \left\| d_{\mu_0}^\pi \right\|_{L^{\frac{p}{p-1}}\left( \mathcal{S}\times\mathcal{A} \right)} \left\|  f \right\|_{L^{p}\left( \mathcal{S}\times\mathcal{A} \right)},
    \end{align}
    where the first inequality comes from the Holder's inequality.
    
(4)
    For $1<p<\infty$, we have
    \begin{align}
         \left\| f \right\|_{L^{p,d_{\mu_0}^\pi}\left( \mathcal{S}\times\mathcal{A} \right)}^p &= \int_{(s,a)\in\mathcal{S}\times\mathcal{A}} \left| d_{\mu_0}^\pi(s,a) f(s,a) \right|^p d \mu(s,a)  \\
         &\ge \left\|  f \right\|_{L^{1}\left( \mathcal{S}\times\mathcal{A} \right)}^p \left( \int_{(s,a)\in\mathcal{S}\times\mathcal{A}} \left| d_{\mu_0}^\pi(s,a) \right|^{-\frac{p}{p-1}} d \mu(s,a) \right)^{-(p-1)},
    \end{align}
    where the inequality comes from reverse Holder's inequality.

(5)
    Further, we have
    \begin{align}
        \left\| f \right\|_{L^{{p^2},d_{\mu_0}^\pi}\left( \mathcal{S}\times\mathcal{A} \right)}^{p^2} &= \int_{(s,a)\in\mathcal{S}\times\mathcal{A}} \left| d_{\mu_0}^\pi(s,a) f(s,a) \right|^{p^2} d \mu(s,a)  \\
        &\ge \left\|  f \right\|_{L^{p}\left( \mathcal{S}\times\mathcal{A} \right)}^{p^2} \left( \int_{(s,a)\in\mathcal{S}\times\mathcal{A}} \left| d_{\mu_0}^\pi(s,a) \right|^{-\frac{p^2}{p-1}} d \mu(s,a) \right)^{-(p-1)},
    \end{align}
    where the inequality comes from reverse Holder's inequality.
\end{proof}

\begin{remark}
    Note that in a practical Q-learning scheme, we take the $\epsilon$-greedy policy for exploration and as a result, for any state-action pair $(s, a)$, we can visit it with positive probability, i.e. $d_{\mu_0}^\pi(s,a) >0$. Furthermore, the condition, $d_{\mu_0}^\pi(s,a)\neq 0$ for almost everywhere $(s,a)\in\mathcal{S}\times\mathcal{A}$, always holds.
\end{remark}

\subsection{Stability of DQN: the Good} \label{app: Stability of DQN: the Good}

\begin{theorem}
    For any MDP $\mathcal{M}$ and fixed policy $\pi$, assume $C_{d_{\mu_0}^\pi}:=\inf_{(s,a)\in\mathcal{S}\times\mathcal{A}} d_{\mu_0}^\pi(s,a)  >0$ and let $C_{\mathbb{P},p}:= \sup_{(s,a)\in\mathcal{S}\times \mathcal{A}} \left\| \mathbb{P}(\cdot \mid s, a) \right\|_{L^{\frac{p}{p-1}}\left( \mathcal{S} \right)}$. Assume $p$ and $q$ satisfy the following conditions:
    \begin{equation}
        C_{\mathbb{P},p}< \frac{1}{\gamma};\quad
        p \ge \max\left\{1, \frac{\log \left( \left| \mathcal{A}\right|\right) + \log \left( \mu\left( \mathcal{S} \right) \right)}{\log \frac{1}{\gamma C_{\mathbb{P},p}} } \right\}; \quad p \le q \le \infty.
    \end{equation}
    Then, Bellman optimality equation $\mathcal{T}_B Q = Q$ is $\left(  L^{q,d_{\mu_0}^\pi}\left( \mathcal{S}\times\mathcal{A} \right),  L^p\left( \mathcal{S}\times\mathcal{A} \right) \right)$-stable.
\end{theorem}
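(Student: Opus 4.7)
The plan is to reduce this statement to the unweighted stability result already established in Theorem~\ref{thm: stability} by exploiting the uniform lower bound on the visitation density. The key observation is that when $C_{d_{\mu_0}^\pi}>0$, the weighted seminorm $\|\cdot\|_{L^{q,d_{\mu_0}^\pi}}$ dominates a constant multiple of the ordinary $L^q$ norm, so that any Bellman residual which is small in the weighted seminorm is automatically small in the unweighted norm, and the existing $L^p$ theory can then be applied verbatim.

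First I would apply part (2) of Lemma~\ref{lem: norm property}, which gives the uniform inequality
\begin{equation}\notag
\|f\|_{L^q(\mathcal{S}\times\mathcal{A})} \;\le\; \frac{1}{C_{d_{\mu_0}^\pi}}\,\|f\|_{L^{q,d_{\mu_0}^\pi}(\mathcal{S}\times\mathcal{A})}
\end{equation}
for every measurable $f$, valid for $1\le q\le\infty$. Specialising to $f=\mathcal{T}_B Q - Q$, the assumption $\|\mathcal{T}_B Q - Q\|_{L^{q,d_{\mu_0}^\pi}}<\delta$ yields $\|\mathcal{T}_B Q - Q\|_{L^q} < \delta/C_{d_{\mu_0}^\pi}$, which transfers the small-residual hypothesis from the weighted seminorm to the pure $L^q$ norm without extra work.

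Next I would invoke Theorem~\ref{thm: stability}(2) directly: under the stated conditions on $p$, $q$, and $C_{\mathbb{P},p}$ (which are identical to the hypotheses of the present theorem), that result certifies $(L^q,L^p)$-stability with an explicit constant $C':=(|\mathcal{A}|\mu(\mathcal{S}))^{1/p-1/q}/(1-\gamma C_{\mathbb{P},p}(|\mathcal{A}|\mu(\mathcal{S}))^{1/p})$. Chaining the two bounds produces
\begin{equation}\notag
\|Q-Q^*\|_{L^p} \;\le\; C'\,\|\mathcal{T}_B Q - Q\|_{L^q} \;\le\; \frac{C'}{C_{d_{\mu_0}^\pi}}\,\|\mathcal{T}_B Q - Q\|_{L^{q,d_{\mu_0}^\pi}},
\end{equation}
which is precisely the $(L^{q,d_{\mu_0}^\pi},L^p)$-stability conclusion, with explicit constant $C'/C_{d_{\mu_0}^\pi}$ and any $\delta>0$ inherited from the unweighted theorem.

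\textbf{Main obstacle.} Once Lemma~\ref{lem: norm property}(2) is available the argument is essentially mechanical, so the real conceptual content sits entirely in the hypothesis $C_{d_{\mu_0}^\pi}>0$. This is exactly the nondegeneracy assumption that makes the weighted seminorm one-sided equivalent to $L^q$; without it (for instance when $d_{\mu_0}^\pi$ is a probability mass function that vanishes on parts of $\mathcal{S}\times\mathcal{A}$, as discussed in Appendix~\ref{app:Stability of DQN: the Bad}), the dominating inequality fails and the reduction collapses. Thus the principal subtlety is not in the chain of inequalities but in recognising that uniform positivity of the state-action visitation density is the precise structural requirement needed to transport the pure $L^p$ stability theory of $\mathcal{T}_B$ into the practical DQN training regime; the $\epsilon$-greedy exploration used in our implementation is what makes this assumption plausible.
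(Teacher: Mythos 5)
Your proposal is correct and follows essentially the same route as the paper's own proof: both reduce the weighted statement to the unweighted stability bound of Theorem~\ref{thm: stability}(2) and then apply part (2) of Lemma~\ref{lem: norm property} (i.e., $C_{d_{\mu_0}^\pi}\|f\|_{L^q}\le\|f\|_{L^{q,d_{\mu_0}^\pi}}$) to transfer the residual bound, arriving at the same constant $C'/C_{d_{\mu_0}^\pi}$. Your closing remark about $\epsilon$-greedy exploration making $C_{d_{\mu_0}^\pi}>0$ plausible also matches the remark the paper attaches to this theorem.
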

\begin{proof}
    For any $1 \le p \le q \le \infty$ and $Q \in L^p\left( \mathcal{S}\times\mathcal{A} \right) \cap L^q\left( \mathcal{S}\times\mathcal{A} \right)$, denote that 
    \begin{align}
        \mathcal{L}_0 Q(s,a)&:=\gamma \mathbb{E}_{s^{\prime} \sim \mathbb{P}(\cdot \mid s, a)}\left[\max _{a^{\prime} \in \mathcal{A}} Q\left(s^{\prime}, a^{\prime}\right)\right],\\
        \mathcal{L} Q &:= \mathcal{T}_B Q - Q = r + \mathcal{L}_0 Q - Q.
    \end{align}
    
    Let $Q^*$ denote the Bellman optimality Q-function. Note that $\mathcal{T}_B Q^* = Q^* $ and $\mathcal{L} Q^*=0$. Define 
    \begin{align}
        w &= w_Q := Q - Q^*, \\
        f &= f_Q := \mathcal{L} Q = \mathcal{L} Q - \mathcal{L} Q^*.
    \end{align}
    Based on the above notations, we have
    \begin{align}
        f & = \mathcal{L} Q - \mathcal{L} Q^* \\
        &= -w + \mathcal{L}_0 \left( Q^* + w \right) - \mathcal{L}_0 Q^*.
    \end{align}
    
    According to the inequality (\ref{eq: stability}), we have that when $C_{\mathbb{P},p} < \frac{1}{\gamma}$ and $p\ge \frac{\log \left( \left| \mathcal{A}\right|\right) + \log \left( \mu\left( \mathcal{S} \right) \right)}{\log \frac{1}{\gamma C_{\mathbb{P},p}} }$ and $q\ge p$, we have 
    \begin{equation} 
        \left\| w \right\|_{L^p\left( \mathcal{S}\times\mathcal{A} \right)} \le \frac{\left( \left| \mathcal{A} \right| \mu\left( \mathcal{S} \right) \right)^{\frac{1}{p} - \frac{1}{q}}}{1-\gamma C_{\mathbb{P},p}\left( \left| \mathcal{A} \right| \mu\left( \mathcal{S} \right) \right)^{\frac{1}{p}}}  \left\| f \right\|_{L^q\left( \mathcal{S}\times\mathcal{A} \right)}.
    \end{equation}
    According to Lemma \ref{lem: norm property}, when $C_{d_{\mu_0}^\pi}:=\inf_{(s,a)\in\mathcal{S}\times\mathcal{A}} d_{\mu_0}^\pi(s,a)  >0$, we have
    \begin{equation}
        \left\| w \right\|_{L^p\left( \mathcal{S}\times\mathcal{A} \right)} \le \frac{\left( \left| \mathcal{A} \right| \mu\left( \mathcal{S} \right) \right)^{\frac{1}{p} - \frac{1}{q}}}{ C_{d_{\mu_0}^\pi} \left(1-\gamma C_{\mathbb{P},p}\left( \left| \mathcal{A} \right| \mu\left( \mathcal{S} \right) \right)^{\frac{1}{p}} \right) }  \left\| f \right\|_{L^{q,d_{\mu_0}^\pi}\left( \mathcal{S}\times\mathcal{A} \right)}.
    \end{equation}
\end{proof}
\begin{remark}
    Note that in a practical Q-learning scheme, we take the $\epsilon$-greedy policy for exploration and as a result, for any state-action pair $(s, a)$, we can visit it with positive probability, and thus the condition $C_{d_{\mu_0}^\pi}>0$ is fulfilled.
\end{remark}
We also demonstrate a theorem with better bound yet stronger condition. 

\begin{theorem}
    For any MDP $\mathcal{M}$ and fixed policy $\pi$, assume $C_{d_{\mu_0}^\pi}:=\inf_{(s,a)\in\mathcal{S}\times\mathcal{A}} d_{\mu_0}^\pi(s,a)  >0$. Assume $p$, $q$ and $\gamma$ satisfy the following conditions:
    \begin{equation}
        C_{d_{\mu_0}^\pi, \mathbb{P},p}:= \frac{\left\| d_{\mu_0}^\pi \right\|_{L^{p^2}\left( \mathcal{S}\times\mathcal{A} \right)} C_{\mathbb{P},p}}{C_{d_{\mu_0}^\pi}}<\frac{1}{\gamma} ;\quad p\ge \frac{\log \left( \left| \mathcal{A}\right|\right) + \log \left( \mu\left( \mathcal{S} \right) \right)}{\log \frac{1}{\gamma C_{d_{\mu_0}^\pi, \mathbb{P},p}} } - 1; \quad q\ge p^2.
    \end{equation}
    Then, Bellman optimality equation $\mathcal{T}_B Q = Q$ is $\left(  L^{q,d_{\mu_0}^\pi}\left( \mathcal{S}\times\mathcal{A} \right),  L^p\left( \mathcal{S}\times\mathcal{A} \right) \right)$-stable.
\end{theorem}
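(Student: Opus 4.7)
The plan is to emulate the proof of the previous $\left(L^{q,d_{\mu_0}^\pi},L^p\right)$-stability theorem while sharpening the key H\"older estimate so that the resulting coefficient is governed by $C_{d_{\mu_0}^\pi,\mathbb{P},p}$ rather than $C_{\mathbb{P},p}$. First I would set $w:=Q-Q^*$ and $f:=\mathcal{T}_B Q-Q$, and use $\mathcal{T}_B Q^*=Q^*$ to obtain the identity $w=-f+\mathcal{L}_0(Q^*+w)-\mathcal{L}_0 Q^*$, where $\mathcal{L}_0 Q(s,a):=\gamma\,\mathbb{E}_{s'\sim\mathbb{P}(\cdot\mid s,a)}\left[\max_{a'}Q(s',a')\right]$. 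Minkowski's inequality then yields
\begin{equation*}
    \|w\|_{L^p(\mathcal{S}\times\mathcal{A})} \le \|f\|_{L^p(\mathcal{S}\times\mathcal{A})} + \|\mathcal{L}_0(Q^*+w)-\mathcal{L}_0 Q^*\|_{L^p(\mathcal{S}\times\mathcal{A})}.
\end{equation*}

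The heart of the argument is a sharper bound on the second term. From the pointwise estimate established in the previous stability proof (together with the max-difference lemma), one has $|\mathcal{L}_0(Q^*+w)-\mathcal{L}_0 Q^*|(s,a)\le\gamma\int\max_{a'}|w(s',a')|\,\mathbb{P}(s'|s,a)\,ds'$. The plan is to first apply H\"older in $s'$ with exponents $(p,p/(p-1))$, producing the factor $C_{\mathbb{P},p}$, and then to integrate in $(s,a)$ with a second H\"older step that couples the density $d_{\mu_0}^\pi$ to $|w|$ using exponents tied to $p^2$, while invoking the pointwise lower bound $d_{\mu_0}^\pi\ge C_{d_{\mu_0}^\pi}$ to insert/absorb the weight. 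After careful bookkeeping one should obtain
\begin{equation*}
    \|\mathcal{L}_0(Q^*+w)-\mathcal{L}_0 Q^*\|_{L^p(\mathcal{S}\times\mathcal{A})} \le \gamma\,C_{d_{\mu_0}^\pi,\mathbb{P},p}\,(|\mathcal{A}|\mu(\mathcal{S}))^{1/(p+1)}\,\|w\|_{L^p(\mathcal{S}\times\mathcal{A})}.
\end{equation*}
The shift from the exponent $1/p$ (used in the previous, unweighted theorem) to $1/(p+1)$ is precisely what produces the ``$-1$'' in the hypothesis on $p$.

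Under the assumptions $\gamma C_{d_{\mu_0}^\pi,\mathbb{P},p}<1$ and $p+1\ge\frac{\log|\mathcal{A}|+\log\mu(\mathcal{S})}{\log(1/(\gamma C_{d_{\mu_0}^\pi,\mathbb{P},p}))}$, the coefficient $\gamma C_{d_{\mu_0}^\pi,\mathbb{P},p}(|\mathcal{A}|\mu(\mathcal{S}))^{1/(p+1)}$ is strictly less than $1$, so one can absorb the $\|w\|_{L^p}$ term into the left-hand side and conclude $\|w\|_{L^p}\le C_1\|f\|_{L^p}$. Finally, I would combine property (5) of Lemma~A.7 (dominating $\|f\|_{L^p}$ by $\|f\|_{L^{p^2,d_{\mu_0}^\pi}}$ via the inverse-density constant) with the standard embedding $L^{q,d_{\mu_0}^\pi}\hookrightarrow L^{p^2,d_{\mu_0}^\pi}$ valid for $q\ge p^2$, to obtain $\|f\|_{L^p}\le C_2\|f\|_{L^{q,d_{\mu_0}^\pi}}$ and hence $\|w\|_{L^p}\le C_1 C_2\|f\|_{L^{q,d_{\mu_0}^\pi}}$, which is the desired stability.

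The main obstacle will be engineering the compound H\"older step so that the combination $\|d_{\mu_0}^\pi\|_{L^{p^2}}\,C_{\mathbb{P},p}/C_{d_{\mu_0}^\pi}$ emerges exactly as stated, rather than the cruder $(|\mathcal{A}|\mu(\mathcal{S}))^{1/p}\,C_{\mathbb{P},p}$ of the unweighted version. Concretely, the delicate point is choosing the H\"older pairing $(p,p/(p-1))$ against the transition kernel in the $s'$ variable, while simultaneously using the conjugate pair $(p^2,p^2/(p^2-1))$ to split the outer $L^p(\mathcal{S}\times\mathcal{A})$ integral so that $d_{\mu_0}^\pi$ factors out as $\|d_{\mu_0}^\pi\|_{L^{p^2}}$---all without introducing extraneous multiplicative constants that would destroy the contraction estimate.
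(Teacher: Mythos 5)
Your overall skeleton (the identity $w=-f+\mathcal{L}_0(Q^*+w)-\mathcal{L}_0Q^*$, Minkowski, a H\"older estimate against the transition kernel, absorption, and a final conversion to the weighted $q$-seminorm via the reverse H\"older property and the embedding for $q\ge p^2$) matches the paper's, but the step you yourself flag as the ``main obstacle'' is a genuine gap, and I do not see how to close it in the form you state. You want
\begin{equation*}
\left\|\mathcal{L}_0(Q^*+w)-\mathcal{L}_0Q^*\right\|_{L^p(\mathcal{S}\times\mathcal{A})}\le \gamma\,\frac{\left\|d_{\mu_0}^\pi\right\|_{L^{p^2}(\mathcal{S}\times\mathcal{A})}C_{\mathbb{P},p}}{C_{d_{\mu_0}^\pi}}\left(|\mathcal{A}|\mu(\mathcal{S})\right)^{\frac{1}{p+1}}\left\|w\right\|_{L^p(\mathcal{S}\times\mathcal{A})},
\end{equation*}
but both norms here are unweighted and the operator $\mathcal{L}_0$ does not involve $d_{\mu_0}^\pi$ at all, so no choice of H\"older exponents can make the ratio $\left\|d_{\mu_0}^\pi\right\|_{L^{p^2}}/C_{d_{\mu_0}^\pi}$ ``emerge''; it can only be inserted through a trivial weakening such as $1\le \left\|d_{\mu_0}^\pi\right\|_{L^{p^2}}/\bigl(C_{d_{\mu_0}^\pi}\,\mu(\mathcal{S}\times\mathcal{A})^{1/p^2}\bigr)$, after which the contraction coefficient you must beat is essentially $\gamma C_{\mathbb{P},p}(|\mathcal{A}|\mu(\mathcal{S}))^{1/p}$ from the unweighted theorem, a condition that is not among the hypotheses of this theorem. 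The proposed ``insert/absorb the weight'' mechanism is therefore not a proof step, and the exponent $1/(p+1)$ is asserted rather than derived.

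The paper sidesteps this by performing the absorption in the \emph{weighted} seminorm rather than in $L^p$: it applies Minkowski to $\|w\|_{L^{p^2,d_{\mu_0}^\pi}}$, lower-bounds this by $\|w\|_{L^p}/C_{d_{\mu_0}^\pi,p}$ using the reverse H\"older inequality $\|w\|_{L^p}\le C_{d_{\mu_0}^\pi,p}\|w\|_{L^{p^2,d_{\mu_0}^\pi}}$, and upper-bounds $\|\mathcal{L}_0(Q^*+w)-\mathcal{L}_0Q^*\|_{L^{p^2,d_{\mu_0}^\pi}}$ by $\gamma\|d_{\mu_0}^\pi\|_{L^{p^2}}C_{\mathbb{P},p}\|w\|_{L^p}$; here the weight genuinely integrates out as $\|d_{\mu_0}^\pi\|_{L^{p^2}}$ because the pointwise H\"older bound $\gamma C_{\mathbb{P},p}\bigl\|\max_a|w(\cdot,a)|\bigr\|_{L^p(\mathcal{S})}$ is constant in $(s,a)$. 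The ``$-1$'' in the condition on $p$ then arises from bounding $C_{d_{\mu_0}^\pi,p}\le(|\mathcal{A}|\mu(\mathcal{S}))^{(p-1)/p^2}/C_{d_{\mu_0}^\pi}$ together with $p^2/(p-1)\ge p+1$, not from a modified exponent in the kernel estimate. To repair your argument you would need either to actually prove your claimed operator bound (which I believe is false in general) or to move the Minkowski/absorption step into the $L^{p^2,d_{\mu_0}^\pi}$ seminorm as the paper does; your final conversion via $\|f\|_{L^{p^2,d_{\mu_0}^\pi}}\le\mu(\mathcal{S}\times\mathcal{A})^{1/p^2-1/q}\|f\|_{L^{q,d_{\mu_0}^\pi}}$ for $q\ge p^2$ is correct and coincides with the paper's.
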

\begin{proof}
    For any $1 \le p \le q \le \infty$ and $Q \in L^p\left( \mathcal{S}\times\mathcal{A} \right) \cap L^q\left( \mathcal{S}\times\mathcal{A} \right)$, denote that 
    \begin{align}
        \mathcal{L}_0 Q(s,a)&:=\gamma \mathbb{E}_{s^{\prime} \sim \mathbb{P}(\cdot \mid s, a)}\left[\max _{a^{\prime} \in \mathcal{A}} Q\left(s^{\prime}, a^{\prime}\right)\right],\\
        \mathcal{L} Q &:= \mathcal{T}_B Q - Q = r + \mathcal{L}_0 Q - Q.
    \end{align}
    
    Let $Q^*$ denote the Bellman optimality Q-function. Note that $\mathcal{T}_B Q^* = Q^* $ and $\mathcal{L} Q^*=0$. Define 
    \begin{align}
        w &= w_Q := Q - Q^*, \\
        f &= f_Q := \mathcal{L} Q = \mathcal{L} Q - \mathcal{L} Q^*.
    \end{align}
    Based on the above notations, we have
    \begin{align}
        f & = \mathcal{L} Q - \mathcal{L} Q^* \\
        &= -w + \mathcal{L}_0 \left( Q^* + w \right) - \mathcal{L}_0 Q^*.
    \end{align}
    Then, we have
    \begin{align}
        \left| w(s,a) \right| &=  \left|-f + \mathcal{L}_0 \left( Q^* + w \right) - \mathcal{L}_0 Q^* \right| \bigg|_{(s,a)} \\
        &\le \left| f \right| + \left| \mathcal{L}_0 \left( Q^* + w \right) - \mathcal{L}_0 Q^* \right| \bigg|_{(s,a)}.
    \end{align}
    Thus, we obtain 
    \begin{align}
        \left\| w \right\|_{L^{p^2,d_{\mu_0}^\pi}\left( \mathcal{S}\times\mathcal{A} \right)} &\le \left\|d_{\mu_0}^\pi \left| f \right| + d_{\mu_0}^\pi \left| \mathcal{L}_0 \left( Q^* + w \right) - \mathcal{L}_0 Q^* \right| \right\|_{L^{p^2}\left( \mathcal{S}\times\mathcal{A} \right)} \\
        &\le \left\| f \right\|_{L^{p^2,d_{\mu_0}^\pi}\left( \mathcal{S}\times\mathcal{A} \right)} + \left\| \mathcal{L}_0 \left( Q^* + w \right) - \mathcal{L}_0 Q^* \right\|_{L^{p^2,d_{\mu_0}^\pi}\left( \mathcal{S}\times\mathcal{A} \right)},
    \end{align}
    where the last inequality comes from the Minkowski's inequality. Owing to Lemma \ref{lem: norm property}, we have
    \begin{align}
        \left\| w \right\|_{L^{p^2,d_{\mu_0}^\pi}\left( \mathcal{S}\times\mathcal{A} \right)} \ge \frac{1}{C_{d_{\mu_0}^\pi,p}} \left\|  w \right\|_{L^{p}\left( \mathcal{S}\times\mathcal{A} \right)},
    \end{align}
    where $C_{d_{\mu_0}^\pi,p}=\left( \int_{(s,a)\in\mathcal{S}\times\mathcal{A}} \left| d_{\mu_0}^\pi(s,a)  \right|^{-\frac{p^2}{p-1}} d \mu(s,a) \right)^{\frac{p-1}{p^2}}$.
    In the following, we analyze the relation between $\left\| \mathcal{L}_0 \left( Q^* + w \right) - \mathcal{L}_0 Q^* \right\|_{L^{p^2,d_{\mu_0}^\pi}\left( \mathcal{S}\times\mathcal{A} \right)}$ and $\left\| w \right\|_{L^p\left( \mathcal{S}\times\mathcal{A} \right)}$.
    \begin{align}
        &\quad \left| \mathcal{L}_0 \left( Q^* + w \right) - \mathcal{L}_0 Q^* \right| \bigg|_{(s,a)} \\
        &= \left| \gamma \mathbb{E}_{s^{\prime} \sim \mathbb{P}(\cdot \mid s, a)}\left[\max _{a^{\prime} \in \mathcal{A}} \left( Q^*\left(s^{\prime}, a^{\prime}\right) + w\left(s^{\prime}, a^{\prime}\right) \right)- \max _{a^{\prime} \in \mathcal{A}} Q^*\left(s^{\prime}, a^{\prime}\right)\right] \right| \\
        &\le \gamma \mathbb{E}_{s^{\prime} \sim \mathbb{P}(\cdot \mid s, a)}\left|\max _{a^{\prime} \in \mathcal{A}} \left( Q^*\left(s^{\prime}, a^{\prime}\right) + w\left(s^{\prime}, a^{\prime}\right) \right)- \max _{a^{\prime} \in \mathcal{A}} Q^*\left(s^{\prime}, a^{\prime}\right)\right| \\
        &\le \gamma \mathbb{E}_{s^{\prime} \sim \mathbb{P}(\cdot \mid s, a)}\left[\max _{a^{\prime} \in \mathcal{A}} \left|  w\left(s^{\prime}, a^{\prime}\right) \right|\right] \\
        &= \gamma \int_{s^\prime } \max _{a^{\prime} \in \mathcal{A}} \left|  w\left(s^{\prime}, a^{\prime}\right) \right| \mathbb{P}(s^\prime \mid s, a) ds^\prime \\
        &\le \gamma \left\| \max _{a \in \mathcal{A}} \left|  w\left(s, a\right) \right| \right\|_{L^p\left( \mathcal{S} \right)} \left( \int_{s^\prime } \left(\mathbb{P}(s^\prime \mid s, a) \right)^{\frac{p}{p-1}} ds^\prime \right)^{1-\frac{1}{p}} \\
        &= \gamma \left\| \mathbb{P}(\cdot \mid s, a) \right\|_{L^{\frac{p}{p-1}}\left( \mathcal{S} \right)} \left\| \max _{a \in \mathcal{A}} \left|  w\left(s, a\right) \right| \right\|_{L^p\left( \mathcal{S} \right)}, 
    \end{align}
    where the second inequality comes from Lemma \ref{lem: basic ineq 2} and the third inequality comes from the Holder's inequality. let $C_{\mathbb{P},p}:= \sup_{(s,a)\in\mathcal{S}\times \mathcal{A}} \left\| \mathbb{P}(\cdot \mid s, a) \right\|_{L^{\frac{p}{p-1}}\left( \mathcal{S} \right)}$ and then, we have 
    \begin{align}
        &\quad \left\| \mathcal{L}_0 \left( Q^* + w \right) - \mathcal{L}_0 Q^* \right\|_{L^{p^2,d_{\mu_0}^\pi}\left( \mathcal{S}\times\mathcal{A} \right)} \\
        &\le \left( \int_{\mathcal{S}\times\mathcal{A} } \left( \gamma \left\| \mathbb{P}(\cdot \mid s, a) \right\|_{L^{\frac{p}{p-1}}\left( \mathcal{S} \right)} \left\| \max _{a \in \mathcal{A}} \left|  w\left(s, a\right) \right| \right\|_{L^p\left( \mathcal{S} \right)} d_{\mu_0}^\pi(s,a) \right)^{p^2} d\mu(s,a) \right)^{\frac{1}{p^2}}\\
        &= \left( \int_{ \mathcal{S}\times\mathcal{A} } \left(d_{\mu_0}^\pi(s,a)\right)^{p^2} d\mu(s,a) \right)^{\frac{1}{p^2}}   \gamma  C_{\mathbb{P},p} \left\| \max _{a \in \mathcal{A}} \left|  w\left(s, a\right) \right| \right\|_{L^p\left( \mathcal{S} \right)}  \\
        &\le \gamma \left\| d_{\mu_0}^\pi \right\|_{L^{p^2}\left( \mathcal{S}\times\mathcal{A} \right)}    C_{\mathbb{P},p} \left\| \max _{a \in \mathcal{A}} \left|  w\left(s, a\right) \right| \right\|_{L^p\left( \mathcal{S} \right)} \\
        &\le \gamma \left\| d_{\mu_0}^\pi \right\|_{L^{p^2}\left( \mathcal{S}\times\mathcal{A} \right)}   C_{\mathbb{P},p} \left\|   w  \right\|_{L^p\left( \mathcal{S}\times\mathcal{A} \right)},
    \end{align}
    where  the last inequality comes from $\left\|w\right\|_{l^\infty\left( \mathcal{A}\right)}\le \left\|w\right\|_{l^p\left( \mathcal{A}\right)}$. 
    Then, we have that
    \begin{align}
        \left(\frac{1}{C_{d_{\mu_0}^\pi,p}} - \gamma \left\| d_{\mu_0}^\pi \right\|_{L^{p^2}\left( \mathcal{S}\times\mathcal{A} \right)}   C_{\mathbb{P},p} \right) \left\|   w  \right\|_{L^p\left( \mathcal{S}\times\mathcal{A} \right)} \le \left\| f \right\|_{L^{p^2,d_{\mu_0}^\pi}\left( \mathcal{S}\times\mathcal{A} \right)} .
    \end{align}
    When $C_{d_{\mu_0}^\pi}:=\inf_{(s,a)\in\mathcal{S}\times\mathcal{A}} d_{\mu_0}^\pi(s,a)  >0$, we have 
    \begin{align}
        C_{d_{\mu_0}^\pi,p} \le \frac{\left( \left| \mathcal{A} \right| \mu\left( \mathcal{S} \right) \right)^{\frac{p-1}{p^2}}}{C_{d_{\mu_0}^\pi}} 
    \end{align}
    Thus, when the following conditions hold
    $$C_{d_{\mu_0}^\pi, \mathbb{P},p}:= \frac{\left\| d_{\mu_0}^\pi \right\|_{L^{p^2}\left( \mathcal{S}\times\mathcal{A} \right)} C_{\mathbb{P},p}}{C_{d_{\mu_0}^\pi}}<\frac{1}{\gamma} ;\quad p\ge \frac{\log \left( \left| \mathcal{A}\right|\right) + \log \left( \mu\left( \mathcal{S} \right) \right)}{\log \frac{1}{\gamma C_{d_{\mu_0}^\pi, \mathbb{P},p}} } - 1; \quad q\ge p^2,$$ 
    we have 
    \begin{align} 
        \left\| w \right\|_{L^p\left( \mathcal{S}\times\mathcal{A} \right)} &\le \frac{\left( \left| \mathcal{A} \right| \mu\left( \mathcal{S} \right) \right)^{\frac{p-1}{p^2}}}{C_{d_{\mu_0}^\pi} - \gamma \left\| d_{\mu_0}^\pi \right\|_{L^{p^2}\left( \mathcal{S}\times\mathcal{A} \right)}   C_{\mathbb{P},p} \left( \left| \mathcal{A} \right| \mu\left( \mathcal{S} \right) \right)^{\frac{p-1}{p^2}}}   \left\| f \right\|_{L^{p^2,d_{\mu_0}^\pi}\left( \mathcal{S}\times\mathcal{A} \right)} \\
        &\le \frac{\left( \left| \mathcal{A} \right| \mu\left( \mathcal{S} \right) \right)^{\frac{1}{p} - \frac{1}{q}}}{C_{d_{\mu_0}^\pi} - \gamma \left\| d_{\mu_0}^\pi \right\|_{L^{p^2}\left( \mathcal{S}\times\mathcal{A} \right)}   C_{\mathbb{P},p} \left( \left| \mathcal{A} \right| \mu\left( \mathcal{S} \right) \right)^{\frac{p-1}{p^2}}}  \left\| f \right\|_{L^{q,d_{\mu_0}^\pi}\left( \mathcal{S}\times\mathcal{A} \right)},
    \end{align}
    where the last inequality comes from $\left\| f \right\|_{L^{p^2,d_{\mu_0}^\pi}\left( \mathcal{S}\times\mathcal{A} \right)} \le  \mu\left( \mathcal{S}\times\mathcal{A} \right)^{\frac{1}{p^2} - \frac{1}{q}}  \left\| f \right\|_{L^{q,d_{\mu_0}^\pi}\left( \mathcal{S}\times\mathcal{A} \right)}$.
\end{proof}
\begin{remark}
    The conditions are not satisfactory. The result implicitly adds the constrain for $\gamma$ because $\lim_{p\rightarrow\infty} C_{d_{\mu_0}^\pi, \mathbb{P},p} = \frac{1}{C_{d_{\mu_0}^\pi}}$, which indicates $\gamma$ may be very small, i.e. $\gamma < C_{d_{\mu_0}^\pi}$. 
\end{remark}
In the following theorem, we describe the instablility of DQN.

\begin{theorem}
    There exists a MDP $\mathcal{M}$ such that for all $\pi$ satisfying $M_{d_{\mu_0}^\pi}:=\sup_{(s,a)\in\mathcal{S}\times\mathcal{A}} d_{\mu_0}^\pi(s,a) <\infty$, Bellman optimality equation $\mathcal{T}_B Q = Q$ is not $\left(  L^{q,d_{\mu_0}^\pi}\left( \mathcal{S}\times\mathcal{A} \right),  L^p \left( \mathcal{S}\times\mathcal{A} \right) \right)$-stable, for all $1 \le q < p\le \infty$.
\end{theorem}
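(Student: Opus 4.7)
The plan is to recycle the MDP and the perturbed $Q$ constructed in the proof of Theorem~\ref{thm: unstable of bellman opt eq}, and to convert the already-established instability in unweighted $L^p$ spaces into instability in $(L^{q,d_{\mu_0}^\pi},L^p)$ by using the one-sided comparison $\|\cdot\|_{L^{q,d_{\mu_0}^\pi}}\le M_{d_{\mu_0}^\pi}\|\cdot\|_{L^q}$ from Lemma~\ref{lem: norm property}(1). Concretely, to refute stability it suffices to exhibit, for every $n\in\mathbb{N}$ and every $\delta>0$, some $Q$ with $\|\mathcal{T}_BQ-Q\|_{L^{q,d_{\mu_0}^\pi}}<\delta$ and $\|Q-Q^*\|_{L^p}\ge n\,\|\mathcal{T}_BQ-Q\|_{L^{q,d_{\mu_0}^\pi}}$.

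First I would fix the same one-dimensional MDP used in the proof of Theorem~\ref{thm: unstable of bellman opt eq} ($\mathcal{S}=[-1,1]$, $\mathcal{A}=\{a_1,a_2\}$, deterministic $\pm 0.1$ drift, linear reward), and reuse the same family of perturbations $Q=Q^*+h\,\varphi_\epsilon$ supported on a sub-interval of length $\epsilon$ inside $(0,\epsilon)\times\{a_2\}$. That proof already establishes the two key estimates
\begin{equation*}
\|\mathcal{T}_BQ-Q\|_{L^q(\mathcal{S}\times\mathcal{A})}\le 3h\,\epsilon^{1/q},\qquad \|Q-Q^*\|_{L^p(\mathcal{S}\times\mathcal{A})}\ge h\bigl(\tfrac{\epsilon}{2}\bigr)^{1/p},
\end{equation*}
for every $1\le q<\infty$ and every $1\le p\le\infty$ (with $(\epsilon/2)^{1/p}$ replaced by $1$ in the case $p=\infty$, since the perturbation reaches amplitude $h$ on the middle subinterval).

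Next I would invoke Lemma~\ref{lem: norm property}(1), which applies precisely because the hypothesis $M_{d_{\mu_0}^\pi}<\infty$ is assumed; it yields
\begin{equation*}
\|\mathcal{T}_BQ-Q\|_{L^{q,d_{\mu_0}^\pi}(\mathcal{S}\times\mathcal{A})}\le M_{d_{\mu_0}^\pi}\,\|\mathcal{T}_BQ-Q\|_{L^q(\mathcal{S}\times\mathcal{A})}\le 3M_{d_{\mu_0}^\pi}h\,\epsilon^{1/q}.
\end{equation*}
Combining with the lower bound on $\|Q-Q^*\|_{L^p}$ gives
\begin{equation*}
\frac{\|Q-Q^*\|_{L^p}}{\|\mathcal{T}_BQ-Q\|_{L^{q,d_{\mu_0}^\pi}}}\;\ge\;\frac{\epsilon^{1/p-1/q}}{3\cdot 2^{1/p}M_{d_{\mu_0}^\pi}},
\end{equation*}
(with the usual convention $1/\infty=0$). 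Since $q<p$ implies $1/p-1/q<0$, the right-hand side tends to $+\infty$ as $\epsilon\downarrow 0$, so for each $n$ I pick $\epsilon$ small enough to make this ratio exceed $n$, and then, if needed, shrink $h$ to enforce $3M_{d_{\mu_0}^\pi}h\,\epsilon^{1/q}<\delta$. This directly negates the stability definition for all $1\le q<p\le\infty$.

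The main subtlety, rather than a true obstacle, is the case $p=\infty$: the standard lower bound $h(\epsilon/2)^{1/p}$ degenerates, but the $L^\infty$ norm of the perturbation remains equal to $h$ regardless of $\epsilon$, so the ratio becomes $1/(3M_{d_{\mu_0}^\pi}\epsilon^{1/q})\to\infty$ and the same argument still closes. A minor bookkeeping point is that Lemma~\ref{lem: norm property}(1) is the only place where $M_{d_{\mu_0}^\pi}<\infty$ is used; the construction of $Q$ itself is independent of $\pi$, so the same counterexample works uniformly for every admissible behaviour policy.
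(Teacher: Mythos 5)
Your proposal is correct and follows essentially the same route as the paper: the paper likewise recycles the MDP and perturbed $Q$ from the unweighted instability theorem and converts the $L^q$ residual bound into an $L^{q,d_{\mu_0}^\pi}$ bound via the inequality $\|\cdot\|_{L^{q,d_{\mu_0}^\pi}}\le M_{d_{\mu_0}^\pi}\|\cdot\|_{L^q}$ from Lemma~\ref{lem: norm property}. Your version is slightly more careful than the paper's in explicitly tracking the factor $M_{d_{\mu_0}^\pi}$ in the final ratio and in handling the $p=\infty$ degeneration, but these are bookkeeping refinements, not a different argument.
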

\begin{proof}
    According to the proof of Theorem \ref{thm: unstable of bellman opt eq}, for $1 \le q < p\le \infty$, there exists a MDP $\mathcal{M}$ satisfying the following statement. For all $\delta>0$ and $n\in\mathbb{N}$, there exists a $Q\in L^p \left( \mathcal{S}\times\mathcal{A} \right) \cap L^q \left( \mathcal{S}\times\mathcal{A} \right)$ satisfying $\left\| \mathcal{T}_B Q - Q \right\|_{L^q \left( \mathcal{S}\times\mathcal{A} \right)} \le \frac{\delta}{M_{d_{\mu_0}^\pi}}$ such that $\left\| Q - Q^* \right\|_{L^p \left( \mathcal{S}\times\mathcal{A} \right)} > n\left\| \mathcal{T}_B Q - Q \right\|_{L^q \left( \mathcal{S}\times\mathcal{A} \right)}$. 

    According to Lemma \ref{lem: norm property}, if $M_{d_{\mu_0}^\pi}:=\sup_{(s,a)\in\mathcal{S}\times\mathcal{A}} d_{\mu_0}^\pi(s,a) <\infty$, we have
    \begin{equation}
        \left\| Q \right\|_{L^{q,d_{\mu_0}^\pi}\left( \mathcal{S}\times\mathcal{A} \right)} \le M_{d_{\mu_0}^\pi} \left\| Q \right\|_{L^q \left( \mathcal{S}\times\mathcal{A} \right)} < \infty.
    \end{equation}
    Thus, we have $Q\in L^{q,d_{\mu_0}^\pi}\left( \mathcal{S}\times\mathcal{A} \right) \cap L^p \left( \mathcal{S}\times\mathcal{A} \right)$. For the same reason, we have
    \begin{equation}
        \left\| \mathcal{T}_B Q - Q \right\|_{L^{q,d_{\mu_0}^\pi}\left( \mathcal{S}\times\mathcal{A} \right)} \le M_{d_{\mu_0}^\pi} \left\| \mathcal{T}_B Q - Q \right\|_{L^q \left( \mathcal{S}\times\mathcal{A} \right)} \le \delta.
    \end{equation}
    Hence, we get that For all $\delta>0$ and $n\in\mathbb{N}$, there exists a $Q\in L^{q,d_{\mu_0}^\pi}\left( \mathcal{S}\times\mathcal{A} \right) \cap L^p \left( \mathcal{S}\times\mathcal{A} \right)$ satisfying $\left\| \mathcal{T}_B Q - Q \right\|_{L^{q,d_{\mu_0}^\pi}\left( \mathcal{S}\times\mathcal{A} \right)} \le \delta$ such that $\left\| Q - Q^* \right\|_{L^p \left( \mathcal{S}\times\mathcal{A} \right)} > n \left\| \mathcal{T}_B Q - Q \right\|_{L^q \left( \mathcal{S}\times\mathcal{A} \right)}$.
\end{proof}

\begin{remark}
    If $M_{d_{\mu_0}^\pi} = \infty$, $d_{\mu_0}^\pi$ degenerates to the discrete probability distribution. Then, $L^{p,d_{\mu_0}^\pi}$ can be considered as a norm defined on a finite dimension space. In this setting, we also have $C_{d_{\mu_0}^\pi}=0$ and Bellman optimality equation $\mathcal{T}_B Q = Q$ is not $\left(  L^{q,d_{\mu_0}^\pi}\left( \mathcal{S}\times\mathcal{A} \right),  L^p \left( \mathcal{S}\times\mathcal{A} \right) \right)$-stable, for any $p$ and $q$.
\end{remark}

According to the above theorems and remarks, we have the following corollary in the DQN procedure.
\begin{corollary}
     In practical DQN procedure, the Bellman optimality equations $\mathcal{T}_B Q = Q$ is $(  L^{\infty,d_{\mu_0}^\pi}( \mathcal{S}\times\mathcal{A}),  L^p( \mathcal{S}\times\mathcal{A}))$-stable for all $1\le p \le \infty$, while it is not $(  L^{q,d_{\mu_0}^\pi}( \mathcal{S}\times\mathcal{A}),  L^p( \mathcal{S}\times\mathcal{A}))$-stable for all $1 \le q < p\le \infty$. 
\end{corollary}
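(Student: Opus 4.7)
The plan is to reduce this statement to two results already established earlier in the paper: the $L^p$-stability theorem for the Bellman optimality equation (the ``good'' direction) and the $L^p$-instability construction (the ``bad'' direction), bridged by the norm-comparison inequalities for the weighted seminorm $\|\cdot\|_{L^{p,d_{\mu_0}^\pi}}$ that were collected earlier. I interpret ``practical DQN procedure'' as ensuring $C_{d_{\mu_0}^\pi} := \inf_{(s,a)} d_{\mu_0}^\pi(s,a) > 0$ (guaranteed by $\epsilon$-greedy exploration reaching every state-action pair) and $M_{d_{\mu_0}^\pi} := \sup_{(s,a)} d_{\mu_0}^\pi(s,a) < \infty$ (automatic for bounded densities on the compact domain), alongside $\gamma \in [0,1)$ and $\mu(\mathcal{S}\times\mathcal{A}) = |\mathcal{A}|\mu(\mathcal{S}) < \infty$.

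For the stability half, I first specialize the previously proved $L^p$-stability theorem to $p = q = \infty$. The required condition $C_{\mathbb{P},\infty} = 1 < 1/\gamma$ is automatic from $\gamma < 1$, and the logarithmic lower bound on $p$ is vacuous at $p = \infty$, so the theorem yields $\|Q - Q^*\|_{L^\infty} \le C_1 \|\mathcal{T}_B Q - Q\|_{L^\infty}$ for every $Q$ in the relevant space. The lower bound on $d_{\mu_0}^\pi$ supplies the pointwise estimate $|f(s,a)| \le C_{d_{\mu_0}^\pi}^{-1}|d_{\mu_0}^\pi(s,a) f(s,a)|$, hence $\|f\|_{L^\infty} \le C_{d_{\mu_0}^\pi}^{-1}\|f\|_{L^{\infty,d_{\mu_0}^\pi}}$, while finite measure gives $\|g\|_{L^p} \le (|\mathcal{A}|\mu(\mathcal{S}))^{1/p}\|g\|_{L^\infty}$ for every $1 \le p \le \infty$. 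Chaining these three inequalities with $f = \mathcal{T}_B Q - Q$ and $g = Q - Q^*$ delivers $(L^{\infty,d_{\mu_0}^\pi}, L^p)$-stability with an explicit constant.

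For the instability half, fix $1 \le q < p \le \infty$ and import the MDP and witness family from the previously proved $L^q$-to-$L^p$ instability theorem: for every $\delta > 0$ and $n \in \mathbb{N}$ there exists $Q$ with $\|\mathcal{T}_B Q - Q\|_{L^q} \le \delta$ and $\|Q - Q^*\|_{L^p} \ge n\|\mathcal{T}_B Q - Q\|_{L^q}$. The upper bound $M_{d_{\mu_0}^\pi}$ gives $\|\mathcal{T}_B Q - Q\|_{L^{q,d_{\mu_0}^\pi}} \le M_{d_{\mu_0}^\pi}\|\mathcal{T}_B Q - Q\|_{L^q}$, so the same $Q$ satisfies $\|\mathcal{T}_B Q - Q\|_{L^{q,d_{\mu_0}^\pi}} \le M_{d_{\mu_0}^\pi}\delta$ and $\|Q - Q^*\|_{L^p} \ge (n/M_{d_{\mu_0}^\pi})\|\mathcal{T}_B Q - Q\|_{L^{q,d_{\mu_0}^\pi}}$. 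Since $\delta$ and $n$ are arbitrary, no pair of constants can witness stability.

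The only real obstacle is pinning down what the phrase ``practical DQN procedure'' formally permits; once both $C_{d_{\mu_0}^\pi} > 0$ and $M_{d_{\mu_0}^\pi} < \infty$ are accepted as part of the setting, the remainder is a short bookkeeping exercise that invokes the prior theorems without revisiting the Bellman operator itself. I would discharge this obstacle via a brief remark linking $\epsilon$-greedy exploration and compactness of $\mathcal{S}$ to the two density bounds, rather than attempting a standalone ergodicity argument.
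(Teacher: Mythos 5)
Your proposal is correct and follows essentially the same route as the paper: the appendix establishes the "good" direction by combining the $(L^\infty,L^\infty)$-stability of $\mathcal{T}_B$ (where $C_{\mathbb{P},\infty}=1<1/\gamma$ makes the hypotheses automatic) with the bound $\|f\|_{L^q}\le C_{d_{\mu_0}^\pi}^{-1}\|f\|_{L^{q,d_{\mu_0}^\pi}}$ and the finite-measure embedding into $L^p$, and the "bad" direction by reusing the instability witnesses from the unweighted theorem together with $\|f\|_{L^{q,d_{\mu_0}^\pi}}\le M_{d_{\mu_0}^\pi}\|f\|_{L^q}$. Your reading of "practical DQN procedure" as $C_{d_{\mu_0}^\pi}>0$ (via $\epsilon$-greedy exploration) and $M_{d_{\mu_0}^\pi}<\infty$ matches the paper's own remarks.
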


\subsection{Stability of DQN: the Bad} \label{app:Stability of DQN: the Bad}
\begin{theorem}
    There exists an MDP $\mathcal{M}$ such that for all $\pi$ satisfying $ d_{\mu_0}^\pi$ is a discrete probability distribution, Bellman optimality equation $\mathcal{T}_B Q = Q$ is not $\left(  L^{q,d_{\mu_0}^\pi}\left( \mathcal{S}\times\mathcal{A} \right),  L^p \left( \mathcal{S}\times\mathcal{A} \right) \right)$-stable, for any $p$ and $q$.
\end{theorem}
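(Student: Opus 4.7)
\bigskip

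\noindent\textbf{Proof Proposal.} The plan is to reuse the deterministic $\mathcal{S}=[-1,1]$, $\mathcal{A}=\{a_1,a_2\}$ MDP built in the preceding theorem (where the transitions shift the state by $\pm 0.1$ and $r(s,a_i)=k_i s$). For this MDP the $L^p(\mathcal{S}\times\mathcal{A})$-norm integrates against Lebesgue measure on a continuum, while any policy $\pi$ for which $d_{\mu_0}^\pi$ is a discrete probability distribution (for instance, any $\mu_0$ supported on countably many points combined with the deterministic dynamics) can only give positive mass to a countable set $K\subset\mathcal{S}\times\mathcal{A}$. The contrast between these two measures is what will be exploited: $L^{q,d_{\mu_0}^\pi}$ only sees the countable set $K$, whereas $L^p$ sees a set of full Lebesgue measure, so we can insert an arbitrarily large perturbation in $\mathcal{S}\times\mathcal{A}\setminus K$ without changing $\|\mathcal{T}_B Q-Q\|_{L^{q,d_{\mu_0}^\pi}}$.

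Given any such $\pi$, let $K:=\mathrm{supp}\,d_{\mu_0}^\pi$ and let $S_K:=\{s:(s,a)\in K\text{ for some }a\}$. Since the transitions $p(s,a_i)$ are deterministic, the successor state of any $(s,a)\in K$ still lies in $S_K$, so the set $K^+:=K\cup\bigl(\{p(s,a):(s,a)\in K\}\times\mathcal{A}\bigr)$ is still countable, hence has Lebesgue measure zero in $\mathcal{S}\times\mathcal{A}$. I then define
\begin{equation}\notag
Q(s,a)\;:=\;Q^*(s,a)+M\,\phi(s,a)\cdot\mathbbm{1}_{\{(s,a)\notin K^+\}},
\end{equation}
where $\phi$ is any fixed nonzero, bounded, continuous bump supported on a Lebesgue set of positive measure disjoint from $K^+$ (which exists because $K^+$ is countable), and $M>0$ is a scalar to be chosen. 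Then $Q\equiv Q^*$ on $K^+$, so in particular $\max_{a'}Q(p(s,a),a')=\max_{a'}Q^*(p(s,a),a')$ for every $(s,a)\in K$; consequently $(\mathcal{T}_B Q)(s,a)=(\mathcal{T}_B Q^*)(s,a)=Q^*(s,a)=Q(s,a)$ on $K$. Hence $\|\mathcal{T}_B Q-Q\|_{L^{q,d_{\mu_0}^\pi}(\mathcal{S}\times\mathcal{A})}=0$, whereas $\|Q-Q^*\|_{L^p(\mathcal{S}\times\mathcal{A})}=M\|\phi\|_{L^p(\mathcal{S}\times\mathcal{A})}$, which can be made any positive number by choosing $M$.

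To finish, fix any candidate constants $C,\delta>0$ from the stability definition. For every $M>0$ the constructed $Q$ satisfies $\|\mathcal{T}_B Q-Q\|_{L^{q,d_{\mu_0}^\pi}}=0<\delta$, yet $\|Q-Q^*\|_{L^p}=M\|\phi\|_{L^p}>0=C\cdot\|\mathcal{T}_B Q-Q\|_{L^{q,d_{\mu_0}^\pi}}$. This violates Definition~\ref{def: stability} for every choice of $(C,\delta)$, so $\mathcal{T}_B Q=Q$ is not $(L^{q,d_{\mu_0}^\pi},L^p)$-stable, for any $1\le p,q\le\infty$.

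The main obstacle I foresee is the bookkeeping that keeps the Bellman residual exactly zero on the support: one has to take the closure $K^+$ under the transition (not merely $K$), and to argue that the $\max_{a'}$ inside $\mathcal{T}_B$ cannot be inflated by the perturbation, which here is taken care of because $K^+$ contains $(p(s,a),a')$ for \emph{both} actions $a'\in\mathcal{A}$, so the perturbation is invisible to the $\max$. If one weakens the construction (e.g.\ perturbing $Q$ only at non-optimal actions on reachable states), one must additionally control the perturbation size so that the perturbed entry stays below the Bellman-optimal value, which is the technical check that would need the most care.
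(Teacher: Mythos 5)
Your proposal is correct and follows essentially the same route as the paper's proof: both exploit the fact that a discrete visitation distribution is supported on a Lebesgue-null set, close that set under one-step transitions (your $K^+$, the paper's $B=\{B_0\cup\{B_0+0.1\}\cup\{B_0-0.1\}\}\cap\mathcal{S}$), and perturb $Q^*$ only off that set so the weighted Bellman residual is exactly zero while $\|Q-Q^*\|_{L^p}$ is arbitrarily large. The only cosmetic blemish is your request for a \emph{continuous} bump with support disjoint from $K^+$ (impossible if $K^+$ is dense), but this is immaterial since your indicator $\mathbbm{1}_{\{(s,a)\notin K^+\}}$ already kills $\phi$ on $K^+$ and $K^+$ is Lebesgue-null; the paper simply uses a constant perturbation $h\cdot\mathbbm{1}_D$.
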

\begin{proof}
    We only need to show there exists an MDP such that $\forall n\in\mathbb{N}, \forall \delta>0, \exists Q(s,a),$ such that $\|\mathcal{T}_BQ-Q\|_{L^{q,d_{\mu_0}^\pi}\left( \mathcal{S}\times\mathcal{A} \right)}<\delta, \text{ but }  \|Q-Q^*\|_{L^p\left( \mathcal{S}\times\mathcal{A} \right)} \ge n.$
        
        Consider an MDP $\mathcal{M}$ where $\mathcal{S}=[-1,1], \mathcal{A}=\{a_1,a_2\},$
	\begin{equation*}
			\mathbb{P}(s^\prime |s,a_1)=\left\{ \begin{aligned}
				&\mathbbm{1}_{\left\{s^\prime=s-0.1\right\}}, \quad && s\in[-0.9,1] \\
				&\mathbbm{1}_{\left\{s^\prime=s \right\}}    ,      && else 
			\end{aligned}\right.,\quad		
			\mathbb{P}(s^\prime |s,a_2)=\left\{ \begin{aligned}
				&\mathbbm{1}_{\left\{s^\prime=s+0.1\right\}}, \quad && s\in[-1,0.9] \\
				&\mathbbm{1}_{\left\{s^\prime=s \right\}}    ,      && else 
			\end{aligned}\right.,
	\end{equation*}
	$r(s,a_i)=k_is,\ k_2\ge k_1> 0$. The transition function is essentially a deterministic transition dynamic and for convenience, we denote that
        \begin{equation*}
            p(s,a_1)=\left\{ \begin{aligned}
                &s-0.1, \quad && s\in[-0.9,1] \\
                &s     ,      && else 
            \end{aligned}\right.,\quad		
            p(s,a_2)=\left\{ \begin{aligned}
                &s+0.1, \quad && s\in[-1,0.9] \\
                &s     ,      && else 
            \end{aligned}\right..
       \end{equation*}
	
    Let $Q^*(s,a)=Q^{\pi^*}(s,a)$ be the optimal Q-function, where $\pi^*$ is the optimal policy. 
 
    Define $B_0=\{ s\in\mathcal{S}:\exists a\in\mathcal{A}, s.t.\, d^{\pi}_{\mu_0}(s,a)\neq 0 \}$, which contains all the states that can be explored. Let $B=\left\{B_0\cup\{B_0+0.1\}\cup\{B_0-0.1\}\right\}\cap\mathcal{S}$, then $\forall s\in B,\ p(s,a)\in B.$ Since $d_{\mu_0}^\pi$ is a discrete probability distribution, $\mu (B)=\mu (B_0)=0.$
    
    Let $D=[-1,1]\setminus B$, and $Q(s,a)=Q^*(s,a)+h\cdot \mathbbm{1}_D$, where $h=\frac{n}{2^{\frac{1}{p}}}$. We have that $Q(s,a)=Q^*(s,a),\forall s\notin D.$
    We then have for any $s\in B$ that
    \begin{align*}
        \mathcal{T}_B Q(s,a) & = r(s,a)+\gamma\cdot\max_{a^{\prime}\in\mathcal{A}}Q(p(s,a),a^{\prime})\\
        & = r(s,a)+\gamma\cdot\max_{a^{\prime}\in\mathcal{A}}Q^*(p(s,a),a^{\prime})\\
        & = Q^*(s,a)\\
        & = Q(s,a).
    \end{align*}
    For any $s\notin B$, $d_{\mu_0}^\pi(s,a)=0$ for all $a\in\mathcal{A}$. Hence, $\|\mathcal{T}_B Q(s,a)-Q(s,a)\|_{L^{q,d^{\pi}_{\mu_0}}(\mathcal{S}\times\mathcal{A})}=0 < \delta.$

    However, we find that $$\|Q(s,a)-Q^*(s,a)\|_{L^p(\mathcal{S}\times\mathcal{A})} = h\cdot \mu(D)^{\frac{1}{p}}=h\cdot 2^{\frac{1}{p}} \ge n,$$
    which completes the proof.
\end{proof}

\begin{remark}
    If $d_{\mu_0}^\pi$ is a discrete probability distribution, $L^{\infty,d_{\mu_0}^\pi}\left( \mathcal{S}\times\mathcal{A} \right)$ is not a good choice. However, the sample process should be considered in practical reinforcement learning algorithms and as a consequence, we have to apply the space $L^{\infty,d_{\mu_0}^{\pi}}\left( \mathcal{S}\times\mathcal{A} \right)$ rather than $L^{\infty}\left( \mathcal{S}\times\mathcal{A} \right)$.
\end{remark}

\section{Derivation of CAR-DQN}\label{app:derive car-dqn}

Our theory motivates us to use the following objective:
\begin{align}
    \mathcal{L}_{car}(\theta) &:= \left\|\mathcal{T}_B Q_\theta - Q_\theta \right\|_{L^{\infty,d_{\mu_0}^{\pi_\theta}}\left( \mathcal{S}\times\mathcal{A} \right)} \\
    &=\sup_{(s,a)\in\mathcal{S}\times\mathcal{A}} d_{\mu_0}^{\pi_\theta}(s,a) \left| \mathcal{T}_B Q_\theta(s,a) - Q_\theta(s,a) \right| \\
    &=\sup_{(s,a)\in\mathcal{S}\times\mathcal{A}} d_{\mu_0}^{\pi_\theta}(s,a) \left| r(s, a)+\gamma \mathbb{E}_{s^{\prime} \sim \mathbb{P}(\cdot \mid s, a)}\left[\max _{a^{\prime} \in \mathcal{A}} Q_\theta\left(s^{\prime}, a^{\prime}\right)\right] - Q_\theta(s,a) \right| .
\end{align}
However, the objective is intractable in a model-free setting, due to the unknown environment, i.e. unknown reward function and unknown transition function.
\begin{remark}
    We apply the space $L^{\infty,d_{\mu_0}^{\pi_\theta}}\left( \mathcal{S}\times\mathcal{A} \right)$ rather than $L^{\infty}\left( \mathcal{S}\times\mathcal{A} \right)$ because the sampling process should be considered in practical reinforcement learning algorithms.
\end{remark}

\subsection{Surrogate Objective}
We can derive that
\begin{align}
    \mathcal{L}_{car}(\theta) &= \sup_{s\in\mathcal{S}} \max_{a\in\mathcal{A}} d_{\mu_0}^{\pi_\theta}(s,a) \left| \mathcal{T}_{B} Q_\theta(s,a) - Q_\theta(s,a) \right|  \\
    &= \sup_{s\in\mathcal{S}} \max_{s_\nu \in B_\epsilon(s)} \max_{a\in\mathcal{A}} d_{\mu_0}^{\pi_\theta}(s,a) \left| \mathcal{T}_{B} Q_\theta(s_\nu,a) - Q_\theta(s_\nu,a) \right| \\
    &= \sup_{(s,a)\in\mathcal{S}\times\mathcal{A}} d_{\mu_0}^{\pi_\theta}(s,a) \max_{s_\nu \in B_\epsilon(s)}  \left| \mathcal{T}_{B} Q_\theta(s_\nu,a) - Q_\theta(s_\nu,a) \right|. 
\end{align}
However, in a practical reinforcement learning setting, we cannot directly get the estimation of $\mathcal{T}_{B} Q_\theta(s_\nu, a)$.

\begin{theorem}
    Let $\mathcal{L}_{car}^{train}(\theta) := \sup_{(s,a)\in\mathcal{S}\times\mathcal{A}} d_{\mu_0}^{\pi_\theta}(s,a) \max_{s_\nu \in B_\epsilon(s)} \left| \mathcal{T}_{B} Q_\theta(s,a) - Q_\theta(s_\nu,a) \right| $ and $\mathcal{L}_{car}^{diff}(\theta) := \sup_{(s,a)\in\mathcal{S}\times\mathcal{A}} d_{\mu_0}^{\pi_\theta}(s,a) \max_{s_\nu \in B_\epsilon(s)} \left| \mathcal{T}_{B} Q_\theta(s_\nu,a) - \mathcal{T}_{B}Q_\theta(s,a) \right|$. We have that 
    \begin{equation}
        \left| \mathcal{L}_{car}^{train}(\theta) -  \mathcal{L}_{car}^{diff}(\theta)\right| \le \mathcal{L}_{car}(\theta) \le \mathcal{L}_{car}^{train}(\theta) + \mathcal{L}_{car}^{diff}(\theta).
    \end{equation}
\end{theorem}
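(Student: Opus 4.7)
The proof is essentially a careful application of the triangle inequality, carried out uniformly in the perturbation $s_\nu$ and in the sample point $(s,a)$. The plan is to work pointwise first, then apply monotonicity of $\max$ and $\sup$ together with their subadditivity, rather than trying to manipulate the three suprema directly.

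For the upper bound, I would start with the pointwise identity
\begin{equation}\notag
\bigl|\mathcal{T}_{B}Q_\theta(s_\nu,a)-Q_\theta(s_\nu,a)\bigr|
\le \bigl|\mathcal{T}_{B}Q_\theta(s_\nu,a)-\mathcal{T}_{B}Q_\theta(s,a)\bigr|
+ \bigl|\mathcal{T}_{B}Q_\theta(s,a)-Q_\theta(s_\nu,a)\bigr|,
\end{equation}
valid for every $(s,a)\in\mathcal{S}\times\mathcal{A}$ and every $s_\nu\in B_\epsilon(s)$. Taking $\max_{s_\nu\in B_\epsilon(s)}$ on both sides and using the elementary inequality $\max_{s_\nu}(f(s_\nu)+g(s_\nu))\le \max_{s_\nu}f(s_\nu)+\max_{s_\nu}g(s_\nu)$, then multiplying through by the nonnegative weight $d_{\mu_0}^{\pi_\theta}(s,a)$ and taking $\sup_{(s,a)}$ (again using subadditivity of $\sup$ over a sum of nonnegative terms) yields $\mathcal{L}_{car}(\theta)\le \mathcal{L}_{car}^{diff}(\theta)+\mathcal{L}_{car}^{train}(\theta)$. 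The crucial point is that $\mathcal{L}_{car}(\theta)$ can be rewritten as a $\sup$ of a $\max_{s_\nu}$ (as the paper already observes, since if the $\sup$ is attained at $(s^\star,a^\star)$ then $s^\star$ itself lies in $B_\epsilon(s^\star)$), so that both sides are in a comparable form.

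For the lower bound I would establish the two one-sided inequalities
\begin{equation}\notag
\mathcal{L}_{car}^{train}(\theta)-\mathcal{L}_{car}^{diff}(\theta)\le \mathcal{L}_{car}(\theta),
\qquad
\mathcal{L}_{car}^{diff}(\theta)-\mathcal{L}_{car}^{train}(\theta)\le \mathcal{L}_{car}(\theta),
\end{equation}
each by the same triangle-inequality routine. For the first, use
\begin{equation}\notag
\bigl|\mathcal{T}_{B}Q_\theta(s,a)-Q_\theta(s_\nu,a)\bigr|
\le \bigl|\mathcal{T}_{B}Q_\theta(s,a)-\mathcal{T}_{B}Q_\theta(s_\nu,a)\bigr|
+ \bigl|\mathcal{T}_{B}Q_\theta(s_\nu,a)-Q_\theta(s_\nu,a)\bigr|,
\end{equation}
and for the second, use
\begin{equation}\notag
\bigl|\mathcal{T}_{B}Q_\theta(s_\nu,a)-\mathcal{T}_{B}Q_\theta(s,a)\bigr|
\le \bigl|\mathcal{T}_{B}Q_\theta(s_\nu,a)-Q_\theta(s_\nu,a)\bigr|
+ \bigl|Q_\theta(s_\nu,a)-\mathcal{T}_{B}Q_\theta(s,a)\bigr|.
\end{equation}
The same max/sup manipulation then produces the claimed bounds, and combining the two gives $|\mathcal{L}_{car}^{train}-\mathcal{L}_{car}^{diff}|\le \mathcal{L}_{car}$.

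The only mild subtlety, and the step I would verify most carefully, is that the rewriting
\begin{equation}\notag
\mathcal{L}_{car}(\theta)=\sup_{(s,a)} d_{\mu_0}^{\pi_\theta}(s,a)\max_{s_\nu\in B_\epsilon(s)}\bigl|\mathcal{T}_{B}Q_\theta(s_\nu,a)-Q_\theta(s_\nu,a)\bigr|
\end{equation}
is legitimate: the inequality $\le$ is immediate since each $s_\nu$ is itself a valid element of $\mathcal{S}$, and $\ge$ follows because enlarging the feasible set from a single point to its $\epsilon$-ball can only increase the inner $\max$, while the outer $\sup$ remains over $\mathcal{S}\times\mathcal{A}$. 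Once this is in place, the rest of the argument is purely triangle inequality plus monotonicity of $\max$ and $\sup$, so no single step is technically hard; the obstacle is just bookkeeping to keep the three asymmetric objectives $\mathcal{L}_{car}$, $\mathcal{L}_{car}^{train}$, $\mathcal{L}_{car}^{diff}$ aligned on the same $(s,a,s_\nu)$ triple before taking suprema.
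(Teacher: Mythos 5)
Your proposal is correct and follows essentially the same route as the paper: the same pointwise triangle inequalities among $\mathcal{T}_{B}Q_\theta(s_\nu,a)$, $\mathcal{T}_{B}Q_\theta(s,a)$ and $Q_\theta(s_\nu,a)$, followed by monotonicity and subadditivity of $\max$ and $\sup$ under the nonnegative weight $d_{\mu_0}^{\pi_\theta}$. The only cosmetic difference is in the lower bound, where the paper applies the reverse triangle inequality once and then invokes its Lemma on $\left|\max(f+g)-\max f\right|\le\max|g|$ to pull the absolute value through the supremum, whereas you obtain the same conclusion by proving the two one-sided inequalities separately with forward triangle inequalities.
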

\begin{proof}
    On one hand, we have
    \begin{align}
        \mathcal{L}_{car}(\theta) 
        &= \sup_{(s,a)\in\mathcal{S}\times\mathcal{A}} d_{\mu_0}^{\pi_\theta}(s,a) \max_{s_\nu \in B_\epsilon(s)}  \left| \mathcal{T}_{B} Q_\theta(s_\nu,a) - Q_\theta(s_\nu,a) \right| \\
        &\le \sup_{(s,a)\in\mathcal{S}\times\mathcal{A}} d_{\mu_0}^{\pi_\theta}(s,a) \max_{s_\nu \in B_\epsilon(s)}  \left( \left| \mathcal{T}_{B} Q_\theta(s,a) - Q_\theta(s_\nu,a) \right| + \left| \mathcal{T}_{B} Q_\theta(s_\nu,a) - \mathcal{T}_{B}Q_\theta(s,a) \right| \right) \\
        &\le  \sup_{(s,a)\in\mathcal{S}\times\mathcal{A}} d_{\mu_0}^{\pi_\theta}(s,a) \max_{s_\nu \in B_\epsilon(s)} \left| \mathcal{T}_{B} Q_\theta(s_\nu,a) - \mathcal{T}_{B}Q_\theta(s,a) \right|  \\
        &\quad + \sup_{(s,a)\in\mathcal{S}\times\mathcal{A}} d_{\mu_0}^{\pi_\theta}(s,a) \max_{s_\nu \in B_\epsilon(s)} \left| \mathcal{T}_{B} Q_\theta(s,a) - Q_\theta(s_\nu,a) \right|. 
    \end{align}
    On the other hand, we have
    \begin{align}
        \mathcal{L}_{car}(\theta) &= \sup_{(s,a)\in\mathcal{S}\times\mathcal{A}} d_{\mu_0}^{\pi_\theta}(s,a) \max_{s_\nu \in B_\epsilon(s)}  \left| \mathcal{T}_{B} Q_\theta(s_\nu,a) - Q_\theta(s_\nu,a) \right| \\
        &\ge \sup_{(s,a)\in\mathcal{S}\times\mathcal{A}} d_{\mu_0}^{\pi_\theta}(s,a) \max_{s_\nu \in B_\epsilon(s)}  \left| \left| \mathcal{T}_{B} Q_\theta(s_\nu,a) - \mathcal{T}_{B}Q_\theta(s,a) \right| - \left| \mathcal{T}_{B} Q_\theta(s,a) - Q_\theta(s_\nu,a) \right| \right| \\
        &\ge \left| \sup_{(s,a)\in\mathcal{S}\times\mathcal{A}} d_{\mu_0}^{\pi_\theta}(s,a) \max_{s_\nu \in B_\epsilon(s)} \left| \mathcal{T}_{B} Q_\theta(s_\nu,a) - \mathcal{T}_{B}Q_\theta(s,a) \right|  \right.\\
        &\quad \left. - \sup_{(s,a)\in\mathcal{S}\times\mathcal{A}} d_{\mu_0}^{\pi_\theta}(s,a) \max_{s_\nu \in B_\epsilon(s)} \left| \mathcal{T}_{B} Q_\theta(s,a) - Q_\theta(s_\nu,a) \right| \right|,
    \end{align}
    where the second inequality comes from Lemma \ref{lem: basic ineq 2}.
\end{proof}

It is hard to calculate or estimate $\mathcal{L}_{car}^{diff}(\theta)$ in practice. Fortunately, we think $\mathcal{L}_{car}^{diff}(\theta)$ should be small in practice and we give a constant upper bound of $\mathcal{L}_{car}^{diff}(\theta)$ in the smooth environment.

\begin{lemma} \label{lem: uniform bound of bellman operator}
    Suppose $Q$ and $r$ are uniformly bounded, i.e. $\exists\ M_Q,M_r >0$ such that $\left|Q(s,a)\right| \le M_Q,\ \left|r(s,a)\right| \le M_r\ \forall s\in\mathcal{S}, a\in\mathcal{A}$. Then $\mathcal{T}_{B} Q(\cdot,a)$ is uniformly bounded, i.e.
    \begin{equation}
        \left|\mathcal{T}_{B} Q(s,a) \right| \le C_Q,\ \forall s\in\mathcal{S}, a\in\mathcal{A},
    \end{equation}
    where $C_Q = \max\left\{ M_Q, \frac{M_r}{1-\gamma} \right\}$. Further, for any $k\in\mathbb{N}$, $\mathcal{T}_{B}^{k} Q(\cdot,a)$ has the same uniform bound as $\mathcal{T}_{B} Q(\cdot,a)$, i.e.
    \begin{equation}\label{eq: uniform bound of bellman operator}
        \left|\mathcal{T}_{B}^{k} Q(s,a) \right| \le C_Q,\ \forall s\in\mathcal{S}, a\in\mathcal{A}.
    \end{equation}
\end{lemma}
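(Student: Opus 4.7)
The proof follows essentially the same template as Lemma \ref{lem: uniform bound} (for the CAR operator), with $\mathcal{T}_B$ in place of $\mathcal{T}_{car}$. The plan is to first establish the base bound by a direct triangle-inequality estimate and then extend to powers of $\mathcal{T}_B$ by induction, using the fact that $C_Q$ has been chosen precisely so that the recursion closes.

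First, I would expand $\mathcal{T}_B Q(s,a) = r(s,a) + \gamma\,\mathbb{E}_{s' \sim \mathbb{P}(\cdot\mid s,a)}[\max_{a'\in\mathcal{A}} Q(s',a')]$ and apply the triangle inequality, together with $|Q|\le M_Q$ and $|r|\le M_r$, to obtain the crude bound
\begin{equation}
    \left|\mathcal{T}_B Q(s,a)\right| \le M_r + \gamma M_Q.
\end{equation}
The next step is a case analysis to show $M_r + \gamma M_Q \le C_Q$. If $C_Q = M_Q$, then $M_Q \ge M_r/(1-\gamma)$, so $M_r \le (1-\gamma)M_Q$ and hence $M_r + \gamma M_Q \le M_Q = C_Q$. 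If instead $C_Q = M_r/(1-\gamma)$, then $M_Q \le M_r/(1-\gamma)$, and
\begin{equation}
    M_r + \gamma M_Q \le M_r + \frac{\gamma M_r}{1-\gamma} = \frac{M_r}{1-\gamma} = C_Q.
\end{equation}
This handles the base case $k=1$.

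For the iterated bound I would proceed by induction on $k$. Suppose $|\mathcal{T}_B^{n} Q(s,a)| \le C_Q$ for all $(s,a)$. Applying the same triangle-inequality estimate to $\mathcal{T}_B(\mathcal{T}_B^{n}Q)$ yields
\begin{equation}
    \left|\mathcal{T}_B^{n+1} Q(s,a)\right| \le M_r + \gamma C_Q.
\end{equation}
Since $C_Q \ge M_r/(1-\gamma)$ by its very definition, we have $(1-\gamma)C_Q \ge M_r$, i.e.\ $M_r + \gamma C_Q \le C_Q$. This closes the induction and gives $|\mathcal{T}_B^{k}Q(s,a)| \le C_Q$ for every $k\in\mathbb{N}$.

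There is no real obstacle in this argument; the proof is routine because $\mathcal{T}_B$ involves only a maximum and an expectation, both of which are non-expansive on sup-norm balls. The mild subtlety lies entirely in the choice $C_Q = \max\{M_Q, M_r/(1-\gamma)\}$: taking $M_Q$ alone would fail to absorb the reward contribution, while $M_r/(1-\gamma)$ alone might be smaller than $M_Q$; the maximum is chosen so that both the base case and the inductive step become tight.
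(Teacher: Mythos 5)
Your proposal is correct and follows essentially the same route as the paper's proof: a triangle-inequality bound giving $M_r+\gamma M_Q$ for the base case, followed by induction using $M_r+\gamma C_Q\le(1-\gamma)C_Q+\gamma C_Q=C_Q$. Your explicit case analysis verifying $M_r+\gamma M_Q\le C_Q$ is a small detail the paper leaves implicit, but the argument is otherwise identical.
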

\begin{proof}
    \begin{align}
        \left|\mathcal{T}_{B} Q(s,a) \right| &= \left|r(s,a)  + \gamma \mathbb{E}_{ s^\prime \sim \mathbb{P}(\cdot|s,a)} \left[ \max _{a^{\prime} \in \mathcal{A}} Q\left(s^{\prime}, a^{\prime}\right) \right]\right| \\
        &\le \left|r(s,a) \right| + \gamma\mathbb{E}_{ s^\prime \sim \mathbb{P}(\cdot|s,a)} \left|\max _{a^{\prime} \in \mathcal{A}} Q\left(s^{\prime}, a^{\prime}\right)\right| \\
        &\le M_r + \gamma M_Q \\
        &\le \max\left\{ M_Q, \frac{M_r}{1-\gamma} \right\}, \qquad \forall s\in\mathcal{S}, a\in\mathcal{A}.
    \end{align}

    Let $C_Q = \max\left\{ M_Q, \frac{M_r}{1-\gamma} \right\}$. Suppose the inequality (\ref{eq: uniform bound of bellman operator}) holds for $k=n$. Then, for $k=n+1$, we have
    \begin{align}
        \left|\mathcal{T}_{B}^{n+1} Q(s,a) \right| &= \left|r(s,a)  + \gamma \mathbb{E}_{ s^\prime \sim \mathbb{P}(\cdot|s,a)} \left[  \max _{a^{\prime} \in \mathcal{A}} \mathcal{T}_{B}^{n} Q\left(s^{\prime}, a^{\prime}\right)\right]\right| \\
        &\le \left|r(s,a) \right| + \gamma\mathbb{E}_{ s^\prime \sim \mathbb{P}(\cdot|s,a)}  \left| \max _{a^{\prime} \in \mathcal{A}} \mathcal{T}_{B}^{n} Q\left(s^{\prime}, a^{\prime}\right)\right| \\
        &\le M_r + \gamma C_Q \\
        &\le (1-\gamma) C_Q + \gamma C_Q \\
        &= C_Q.
    \end{align}
    By induction, we have $\left|\mathcal{T}_{B}^{k} Q(s,a) \right| \le C_Q,\ \forall s\in\mathcal{S}, a\in\mathcal{A}, k\in\mathbb{N}$.
\end{proof}

\begin{lemma} \label{lem: lip of bellman operator}
    Suppose the environment is $\left(L_r, L_{\mathbb{P}}\right)$-smooth and suppose $Q$ and $r$ are uniformly bounded, i.e. $\exists\ M_Q,M_r >0$ such that $\left|Q(s,a)\right| \le M_Q,\ \left|r(s,a)\right| \le M_r\ \forall s\in\mathcal{S}, a\in\mathcal{A}$. Then $\mathcal{T}_{B} Q(\cdot,a)$ is Lipschitz continuous, i.e.
        \begin{equation}
            \left| \mathcal{T}_{B} Q(s,a) - \mathcal{T}_{B} Q(s^\prime,a) \right| \le L_{\mathcal{T}_{B}} \|s - s^\prime\|,
        \end{equation}
        where $L_{\mathcal{T}_{B}} =  L_r + \gamma C_Q L_{\mathbb{P}}$ and $C_Q = \max\left\{ M_Q, \frac{M_r}{1-\gamma} \right\}$. Further, for any $k\in\mathbb{N}$, $\mathcal{T}_{B}^{k} Q(\cdot,a)$ is Lipschitz continuous and has the same Lipschitz constant as $\mathcal{T}_{B} Q(\cdot,a)$, i.e.
        \begin{equation}
            \left| \mathcal{T}_{B}^{k} Q(s,a) - \mathcal{T}_{B}^{k} Q(s^\prime,a) \right| \le L_{\mathcal{T}_{B}} \|s - s^\prime\|.
        \end{equation}
\end{lemma}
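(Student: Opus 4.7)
The plan is to mirror the proof of Lemma \ref{lem: lip} essentially verbatim, with the single arg-max inside the expectation replacing the min-max-arg-max composition of the CAR operator. The structural similarity is strong: the Bellman operator $\mathcal{T}_B$ differs from $\mathcal{T}_{car}$ only in the integrand against the transition kernel, and the bound we need to derive will only use (i) uniform boundedness of that integrand, and (ii) Lipschitz control of $r$ and $\mathbb{P}$ via the $(L_r, L_{\mathbb{P}})$-smoothness.

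First, for the one-step claim, I would fix $a \in \mathcal{A}$ and $s_1, s_2 \in \mathcal{S}$ and write
\begin{align*}
\mathcal{T}_B Q(s_1,a) - \mathcal{T}_B Q(s_2,a)
&= \bigl(r(s_1,a) - r(s_2,a)\bigr) \\
&\quad + \gamma \int_{\mathcal{S}} \bigl(\mathbb{P}(s'\mid s_1,a) - \mathbb{P}(s'\mid s_2,a)\bigr)\,\max_{a'\in\mathcal{A}} Q(s',a')\,ds'.
\end{align*}
Taking absolute values, applying the triangle inequality, using $|r(s_1,a) - r(s_2,a)| \le L_r \|s_1-s_2\|$, pulling $|\max_{a'} Q(s',a')| \le M_Q \le C_Q$ out of the integral, and invoking $\|\mathbb{P}(\cdot\mid s_1,a) - \mathbb{P}(\cdot\mid s_2,a)\|_{L^1(\mathcal{S})} \le L_{\mathbb{P}} \|s_1-s_2\|$ yields
\[
\bigl| \mathcal{T}_B Q(s_1,a) - \mathcal{T}_B Q(s_2,a) \bigr| \le \bigl(L_r + \gamma C_Q L_{\mathbb{P}}\bigr)\|s_1-s_2\|,
\]
which is exactly $L_{\mathcal{T}_B}\|s_1-s_2\|$.

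For the iterated claim, the key observation is that Lemma \ref{lem: uniform bound of bellman operator} guarantees $|\mathcal{T}_B^{k-1} Q(s,a)| \le C_Q$ for all $k \ge 1$, so $\mathcal{T}_B^{k-1} Q$ satisfies the same uniform bound as $Q$. Then I would apply the one-step argument above with $\mathcal{T}_B^{k-1} Q$ in place of $Q$: the reward term is unchanged, and the integrand $\max_{a'} \mathcal{T}_B^{k-1} Q(s',a')$ again has uniform bound $C_Q$. This gives the identical Lipschitz constant $L_{\mathcal{T}_B}$ for $\mathcal{T}_B^k Q(\cdot,a) = \mathcal{T}_B(\mathcal{T}_B^{k-1} Q)(\cdot,a)$, completing an induction on $k$.

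No step is really hard; the only subtlety worth flagging is the passage from a pointwise difference of densities to the $L^1$ bound on $\mathbb{P}$, which requires the integrand to be uniformly bounded independently of $s'$ — this is why pushing the $\max_{a'}$ inside the absolute value and then dominating it by $C_Q$ (rather than by a pointwise bound that could depend on $s'$) matters. The rest is a direct translation of the Lemma \ref{lem: lip} proof, with $\min_{s'_\nu \in B_\epsilon(s')} Q(s', \arg\max_{a} Q(s'_\nu, a))$ replaced by $\max_{a'} Q(s', a')$, both of which admit the same uniform bound $C_Q$.
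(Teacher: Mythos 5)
Your proposal is correct and follows essentially the same route as the paper: the same decomposition into the reward difference plus the integral of the kernel difference against $\max_{a'}Q(s',a')$, the same bounding of the integrand by $C_Q$ so the $L^1$ smoothness of $\mathbb{P}$ can be applied, and the same use of Lemma \ref{lem: uniform bound of bellman operator} to propagate the bound $C_Q$ (rather than $M_Q$) through the iterates so the Lipschitz constant is unchanged for every $k$. Nothing is missing.
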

\begin{proof}
    For all $s_1,s_2 \in \mathcal{S}$, we have
    \begin{align}
        &\quad \mathcal{T}_{B} Q(s_1,a) - \mathcal{T}_{B} Q(s_2,a) \\
        &= r(s_1,a)  + \gamma \mathbb{E}_{ s^\prime \sim \mathbb{P}(\cdot|s_1,a)} \left[ \max _{a^{\prime} \in \mathcal{A}} Q\left(s^{\prime}, a^{\prime}\right) \right] - r(s_2,a) - \gamma \mathbb{E}_{ s^\prime \sim \mathbb{P}(\cdot|s_2,a)} \left[ \max _{a^{\prime} \in \mathcal{A}} Q\left(s^{\prime}, a^{\prime}\right) \right] \\
        &= \left(r(s_1,a) - r(s_2,a)\right) + \gamma \int_{s^\prime} \left( \mathbb{P}(s^\prime|s_1,a) - \mathbb{P}(s^\prime|s_2,a)\right) \max _{a^{\prime} \in \mathcal{A}} Q\left(s^{\prime}, a^{\prime}\right) ds^\prime.
    \end{align}

    Then, we have 
    \begin{align}
        &\quad \left| \mathcal{T}_{B} Q(s_1,a) - \mathcal{T}_{B} Q(s_2,a)\right| \\
        &\le \left| \left(r(s_1,a) - r(s_2,a)\right) \right| + \left| \gamma \int_{s^\prime} \left( \mathbb{P}(s^\prime|s_1,a) - \mathbb{P}(s^\prime|s_2,a)\right) \max _{a^{\prime} \in \mathcal{A}} Q\left(s^{\prime}, a^{\prime}\right) ds^\prime \right| \\    
        &\le L_r \|s_1 - s_2\| + \gamma \int_{s^\prime} \left| \mathbb{P}(s^\prime|s_1,a) - \mathbb{P}(s^\prime|s_2,a)\right| \left| \max _{a^{\prime} \in \mathcal{A}} Q\left(s^{\prime}, a^{\prime}\right) \right| ds^\prime\\
        &\le  L_r \|s_1 - s_2\| + \gamma C_Q \int_{s^\prime} \left| \mathbb{P}(s^\prime|s_1,a) - \mathbb{P}(s^\prime|s_2,a)\right| ds^\prime\\
        &\le  L_r \|s_1 - s_2\| + \gamma C_Q L_{\mathbb{P}} \|s_1 - s_2\| \\
        &= \left( L_r + \gamma C_Q L_{\mathbb{P}} \right) \|s_1 - s_2\|.
    \end{align}
    The second inequality comes from the Lipschitz property of $r$. The third inequality comes from the uniform boundedness of $Q$ and the last inequality utilizes the Lipschitz property of $\mathbb{P}$.

    Note that the uniform boundedness used in the above proof is $C_Q$ rather than $M_Q$. Then, due to lemma \ref{lem: uniform bound of bellman operator}, we can extend the above proof to $\mathcal{T}_{B}^{k}$.
\end{proof}

\begin{theorem}
    Suppose the environment is $\left(L_r, L_{\mathbb{P}}\right)$-smooth and suppose $Q_\theta$ and $r$ are uniformly bounded, i.e. $\exists\ M_Q,M_r >0$ such that $\left|Q_\theta(s,a)\right| \le M_Q,\ \left|r(s,a)\right| \le M_r\ \forall s\in\mathcal{S}, a\in\mathcal{A}$. If $M:=\sup_{\theta,(s,a)\in\mathcal{S}\times\mathcal{A}} d_{\mu_0}^{\pi_\theta}(s,a) <\infty$, then we have
    \begin{equation}
        \mathcal{L}_{car}^{diff}(\theta) \le C_{\mathcal{T}_{B}} \epsilon,
    \end{equation}
    where $C_{\mathcal{T}_{B}}=L_{\mathcal{T}_{B}} M$, $L_{\mathcal{T}_{B}} =  L_r + \gamma C_Q L_{\mathbb{P}}$ and $C_Q = \max\left\{ M_Q, \frac{M_r}{1-\gamma} \right\}$. 
\end{theorem}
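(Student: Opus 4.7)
The plan is to apply the Lipschitz property of the Bellman optimality operator (already established in Lemma \ref{lem: lip of bellman operator}) directly to the inner term of $\mathcal{L}_{car}^{diff}(\theta)$, and then pull out the uniform bound $M$ on the state-action visitation density. Since all the nontrivial analytical work has been carried out in the preceding lemmas, the argument should reduce to a clean two-line estimate.

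Concretely, first I would verify the hypotheses of Lemma \ref{lem: lip of bellman operator} are met: the $\left(L_r,L_{\mathbb{P}}\right)$-smoothness is assumed, $r$ is uniformly bounded by $M_r$, and $Q_\theta$ is uniformly bounded by $M_Q$, so the constant $C_Q=\max\{M_Q, M_r/(1-\gamma)\}$ controls both $Q_\theta$ and, via Lemma \ref{lem: uniform bound of bellman operator}, $\mathcal{T}_B Q_\theta$. Lemma \ref{lem: lip of bellman operator} then gives, for every fixed $a\in\mathcal{A}$,
\begin{equation}\notag
\left|\mathcal{T}_B Q_\theta(s_\nu,a)-\mathcal{T}_B Q_\theta(s,a)\right|\le L_{\mathcal{T}_B}\|s_\nu-s\|,\qquad L_{\mathcal{T}_B}=L_r+\gamma C_Q L_{\mathbb{P}}.
\end{equation}

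Next, since $s_\nu\in B_\epsilon(s)$ forces $\|s_\nu-s\|\le\epsilon$, taking the maximum over $s_\nu\in B_\epsilon(s)$ yields the uniform bound $L_{\mathcal{T}_B}\epsilon$ on the inner maximum, independent of $(s,a)$. Multiplying by $d_{\mu_0}^{\pi_\theta}(s,a)\le M$ and taking the supremum over $(s,a)\in\mathcal{S}\times\mathcal{A}$ produces
\begin{equation}\notag
\mathcal{L}_{car}^{diff}(\theta)\le M\cdot L_{\mathcal{T}_B}\cdot\epsilon = C_{\mathcal{T}_B}\epsilon,
\end{equation}
which is precisely the claim.

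There is essentially no hard step here: the genuine obstacle was already absorbed into proving the Lipschitz continuity of $\mathcal{T}_B Q_\theta$ in its first argument, which in turn required the uniform boundedness argument of Lemma \ref{lem: uniform bound of bellman operator} (so that the bound on $\max_{a'}Q_\theta(s',a')$ used in the total-variation estimate against $L_{\mathbb{P}}$ is valid). The only subtlety I would flag is that the Lipschitz constant $L_{\mathcal{T}_B}$ depends on $Q_\theta$ only through $C_Q$, which is uniform in $\theta$ by assumption; this is what allows the constant $C_{\mathcal{T}_B}=L_{\mathcal{T}_B}M$ in the conclusion to be $\theta$-independent and consistent with the role of $\mathcal{L}_{car}^{diff}(\theta)$ as a vanishing-in-$\epsilon$ perturbation term in Theorem \ref{thm:bound car objective}.
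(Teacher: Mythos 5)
Your proposal is correct and follows exactly the paper's own argument: apply the Lipschitz bound $L_{\mathcal{T}_B}=L_r+\gamma C_Q L_{\mathbb{P}}$ from Lemma \ref{lem: lip of bellman operator} to the inner difference, bound $\|s_\nu-s\|$ by $\epsilon$ over $B_\epsilon(s)$, and absorb the visitation density via $M$. No gaps; the one subtlety you flag ($\theta$-uniformity of the constant through $C_Q$) is handled identically in the paper.
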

\begin{proof}
    \begin{align}
        \mathcal{L}_{car}^{diff}(\theta) &= \sup_{(s,a)\in\mathcal{S}\times\mathcal{A}} d_{\mu_0}^{\pi_\theta}(s,a) \max_{s_\nu \in B_\epsilon(s)} \left| \mathcal{T}_{B} Q_\theta(s_\nu,a) - \mathcal{T}_{B}Q_\theta(s,a) \right| \\
        & \le \sup_{(s,a)\in\mathcal{S}\times\mathcal{A}} d_{\mu_0}^{\pi_\theta}(s,a) \max_{s_\nu \in B_\epsilon(s)} \left( L_r + \gamma C_Q L_{\mathbb{P}} \right) \|s_\nu - s\| \\
        & \le \left( L_r + \gamma C_Q L_{\mathbb{P}} \right) \epsilon \sup_{(s,a)\in\mathcal{S}\times\mathcal{A}} d_{\mu_0}^{\pi_\theta}(s,a) \\
        & \le M \left( L_r + \gamma C_Q L_{\mathbb{P}} \right) \epsilon,
    \end{align}
    where the first inequality comes from Lemma \ref{lem: lip of bellman operator} and the last inequality comes from the uniform boundedness of $d_{\mu_0}^{\pi_\theta}$.
\end{proof}

\subsection{Approximate Objective}

\begin{lemma} \label{lem: soft}
    For any function $f: \Omega \rightarrow \mathbb{R}$ and $\lambda>0$, we have
    \begin{equation}
        \max_{p\in \Delta(\Omega)} \left[ \mathbb{E}_{\omega\sim p} f(\omega) - \lambda \operatorname{KL}(p\| p_0) \right]=\lambda \ln{\mathbb{E}_{\omega\sim p_0} \left[ e^{\frac{f(\omega)}{\lambda}} \right]},
    \end{equation}
    where $ \Delta(\Omega)$ denotes the set of probability distributions on $\Omega$. And the solution is achieved by the following distribution $q$:
    \begin{equation}
         q(\omega) = \frac{ p_0(\omega)e^{\frac{f(\omega)}{\lambda}}}{\int_{\omega\in\Omega} p_0(\omega) e^{\frac{f(\omega)}{\lambda}} d\mu(\omega)} = \frac{1}{C} p_0(\omega)e^{\frac{f(\omega)}{\lambda}}.
    \end{equation}
\end{lemma}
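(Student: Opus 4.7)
The plan is to use the classical change-of-measure trick (the Donsker--Varadhan / Gibbs variational identity). First, I would restrict attention to distributions $p \in \Delta(\Omega)$ with $p \ll p_0$, since otherwise $\operatorname{KL}(p\|p_0) = +\infty$ and the objective is $-\infty$, so such $p$ cannot be maximizers. Next, I would introduce the candidate optimizer
\begin{equation}
q(\omega) = \frac{1}{Z}\, p_0(\omega) e^{f(\omega)/\lambda}, \qquad Z := \mathbb{E}_{\omega \sim p_0}\!\left[e^{f(\omega)/\lambda}\right],
\end{equation}
which is a valid probability density on $\Omega$ whenever $Z < \infty$ (this is harmless in our applications where $f$ is bounded on a finite batch).

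The core step is a one-line algebraic identity. For any $p \ll p_0$, by the definition of $q$ one has $\ln \frac{q(\omega)}{p_0(\omega)} = \frac{f(\omega)}{\lambda} - \ln Z$, so
\begin{equation}
\operatorname{KL}(p \,\|\, p_0) = \mathbb{E}_p \ln\frac{p}{q} + \mathbb{E}_p \ln\frac{q}{p_0} = \operatorname{KL}(p \,\|\, q) + \frac{1}{\lambda}\mathbb{E}_p f - \ln Z.
\end{equation}
Rearranging yields the desired decomposition
\begin{equation}
\mathbb{E}_{\omega \sim p} f(\omega) - \lambda \operatorname{KL}(p \,\|\, p_0) = \lambda \ln Z - \lambda \operatorname{KL}(p \,\|\, q).
\end{equation}

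From here both claims fall out at once. Since $\operatorname{KL}(p\|q) \ge 0$ by Gibbs' inequality (a direct consequence of Jensen applied to the convex function $-\ln$), the RHS is bounded above by $\lambda \ln Z = \lambda \ln \mathbb{E}_{p_0}[e^{f/\lambda}]$, proving the ``$\le$'' direction. Equality holds exactly when $\operatorname{KL}(p\|q)=0$, i.e.\ $p = q$ $\mu$-a.e., which identifies $q$ as the unique maximizer and supplies the matching ``$\ge$'' direction by plugging $p = q$ into the LHS. The only subtlety worth mentioning is the standard requirement that $Z < \infty$ and $\mathbb{E}_p|f| < \infty$ so that the manipulations are well-defined; this is the main (but very mild) technical obstacle, and it is automatic under the boundedness assumptions present in the surrounding CAR-DQN derivation.
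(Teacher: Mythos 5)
Your proposal is correct and is essentially the same argument as the paper's: both introduce the tilted distribution $q \propto p_0 e^{f/\lambda}$ and rewrite the objective as $\lambda \ln Z - \lambda\,\mathrm{KL}(p\|q)$, concluding by nonnegativity of KL with equality iff $p=q$. The only difference is cosmetic (you decompose the KL term first, the paper manipulates the full objective), plus your explicit remarks on $p\ll p_0$ and $Z<\infty$, which the paper leaves implicit.
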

\begin{proof}
    Let $$C:=  \mathbb{E}_{\omega\sim p_0} \left[ e^{\frac{f(\omega)}{\lambda}} \right] = \int_{\omega\in\Omega} p_0(\omega) e^{\frac{f(\omega)}{\lambda}} d\mu(\omega), $$
    $$ q(\omega) = \frac{ p_0(\omega)e^{\frac{f(\omega)}{\lambda}}}{\int_{\omega\in\Omega} p_0(\omega) e^{\frac{f(\omega)}{\lambda}} d\mu(\omega)} = \frac{1}{C} p_0(\omega)e^{\frac{f(\omega)}{\lambda}} .$$
    Then, we have
    \begin{align}
        &\mathbb{E}_{\omega\sim p} f(\omega) - \lambda \operatorname{KL}(p\| p_0) \\
        =&\mathbb{E}_{\omega\sim p} \left[ \lambda \ln{e^{\frac{f(\omega)}{\lambda}}} - \lambda \ln{\frac{p(\omega)}{p_0(\omega)}} \right]\\
        =&\lambda \mathbb{E}_{\omega\sim p} \left[  \ln{ \frac{e^{\frac{f(\omega)}{\lambda}}p_0(\omega)}{p(\omega)}} \right]\\
        =& \lambda \mathbb{E}_{\omega\sim p} \left[  \ln{ \frac{C q(\omega)}{p(\omega)}} \right]\\
        =& \lambda \left[ \ln{C} - \operatorname{KL}(p \| q) \right]\\
        \le & \lambda \ln{C}\\
        =& \lambda \ln{\mathbb{E}_{\omega\sim p_0} \left[ e^{\frac{f(\omega)}{\lambda}} \right]}.
    \end{align}
    Note that the equal sign holds if and only if $p=q$. Thus, we get
    $$ q \in \arg\max_{p\in \Delta(\Omega)} \left[ \mathbb{E}_{\omega\sim p} f(\omega) - \lambda \operatorname{KL}(p\| p_0) \right] .  $$ 
\end{proof}

We get the following approximate objective of $\mathcal{L}_{car}^{train}(\theta)$:
\begin{equation}
    \mathcal{L}_{car}^{app}(\theta)=\max_{(s,a,r,s^\prime)\in \mathcal{B}} \frac{1}{\left| \mathcal{B} \right|} \max_{s_\nu \in B_\epsilon(s)} \left|r + \gamma \max_{a^\prime} Q_{\bar{\theta}}(s^\prime,a^\prime) - Q_{\theta}(s_\nu,a) \right|.
\end{equation}

Denote 
\begin{equation}
    f_i = f(s_i,a_i,r_i,s_i^\prime):= \max_{s_\nu \in B_\epsilon(s_i)} \left|r_i + \gamma \max_{a^\prime} Q_{\bar{\theta}}(s_i^\prime,a^\prime) - Q_{\theta}(s_\nu,a_i)   \right|.    
\end{equation}
To fully utilize each sample in the batch, we derive the soft version of the above objective:
\begin{align}
    \mathcal{L}_{car}^{train}(\theta)&=\max_{(s,a,r,s^\prime)\in \mathcal{B}} \frac{1}{\left| \mathcal{B} \right|} f(s,a,r,s^\prime) \\
    &= \frac{1}{\left| \mathcal{B} \right|} \max_{p\in\Delta\left(\mathcal{B}\right)} \sum_{(s_i,a_i,r_i,s_i^\prime)\in \mathcal{B}} p_if(s_i,a_i,r_i,s_i^\prime) \label{obj: max} \\
    &\ge \frac{1}{\left| \mathcal{B} \right|} \max_{p\in\Delta\left(\mathcal{B}\right)} \left( \sum_{(s_i,a_i,r_i,s_i^\prime)\in \mathcal{B}} p_i f(s_i,a_i,r_i,s_i^\prime) - \lambda \operatorname{KL}\left( p \| U\left(\mathcal{B}\right) \right) \right), \label{obj: soft max}
\end{align}
where $U\left(\mathcal{B}\right)$ represents the uniform distribution over $\mathcal{B}$. According to Lemma \ref{lem: soft}, the optimal solution of the maximization problem (\ref{obj: soft max}) is $p^*$:
\begin{equation}
    p_i^* = \frac{e^{\frac{1}{\lambda} f_i}}{\sum_{i\in \mathcal{\left|B\right|}} e^{\frac{1}{\lambda} f_i}}.
\end{equation}
The maximization problem (\ref{obj: soft max}) is the lower bound of the maximization problem (\ref{obj: max}) so $p^*$ is a proper approximation of the optimal solution of the maximization problem (\ref{obj: max}). Thus, we get the soft version of the CAR objective:
\begin{equation}
    \mathcal{L}_{car}^{soft}(\theta) = \frac{1}{\left| \mathcal{B} \right|} \sum_{(s_i,a_i,r_i,s_i^\prime)\in \mathcal{B}} \frac{e^{\frac{1}{\lambda} f_i}}{\sum_{i\in \mathcal{\left|B\right|}} e^{\frac{1}{\lambda} f_i}} \max_{s_\nu \in B_\epsilon(s_i)} \left|r_i + \gamma \max_{a^\prime} Q_{\bar{\theta}}(s_i^\prime,a^\prime) - Q_{\theta}(s_\nu,a_i)  \right|.
\end{equation}

\section{Examples of Intrinsic States}\label{app: instrinsic state}
In Figure \ref{app fig:intrinsic states pong}, \ref{app fig:intrinsic states free}, \ref{app fig:intrinsic states road}, we show some examples in 3 Atari games (Pong, Freeway, and RoadRunner) that the state observation $s$ and adversarial observation $s_\nu$ have the same intrinsic state.

\begin{figure}[htb]
    \centering
\includegraphics[width=0.7\textwidth]{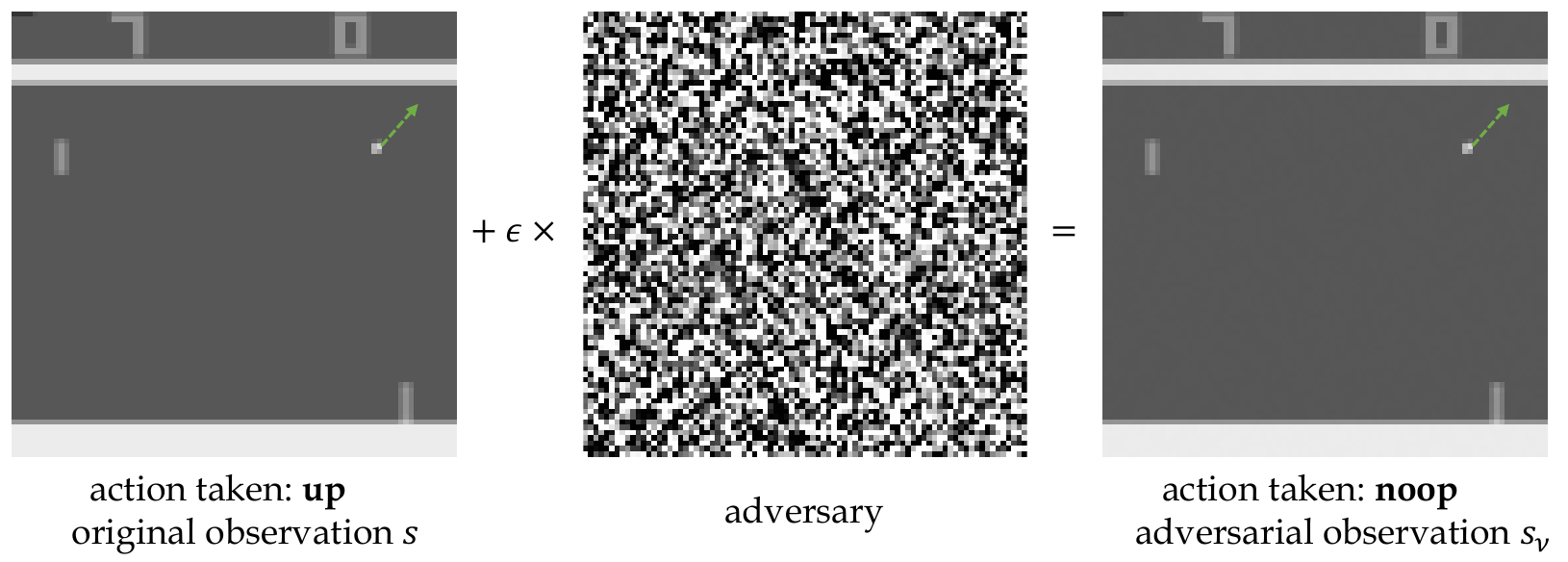} \includegraphics[width=0.7\textwidth]{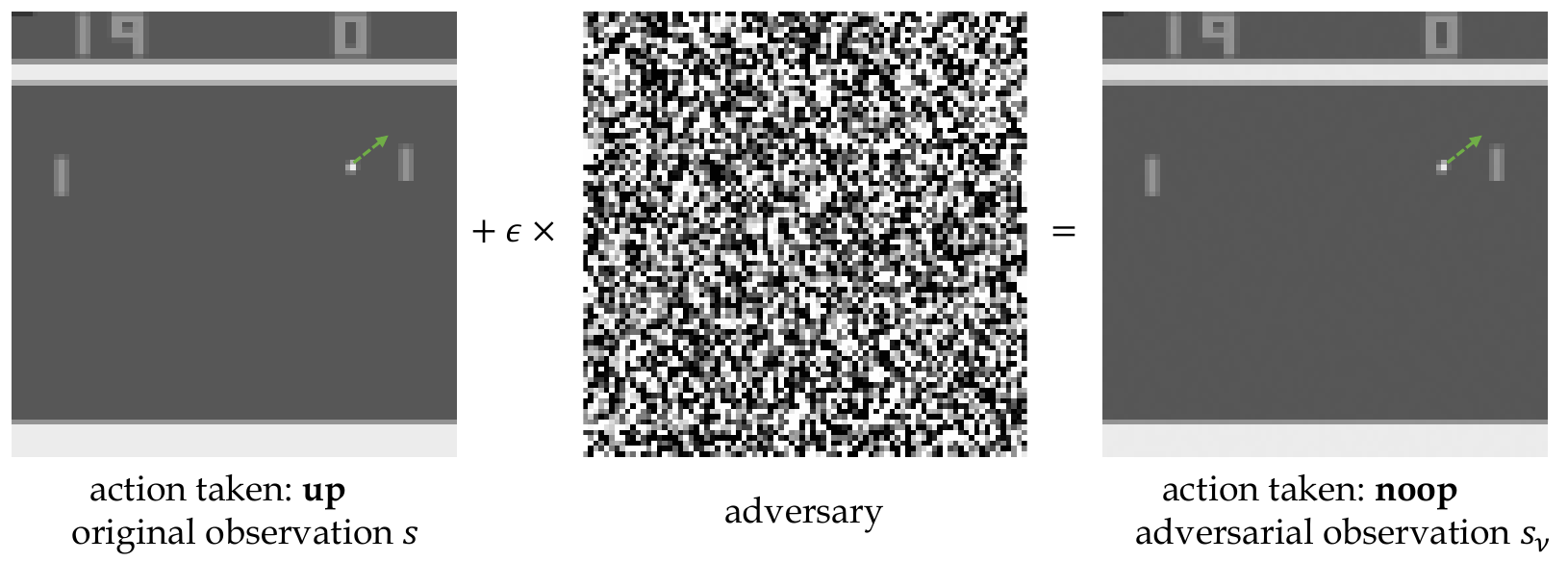}
\includegraphics[width=0.7\textwidth]{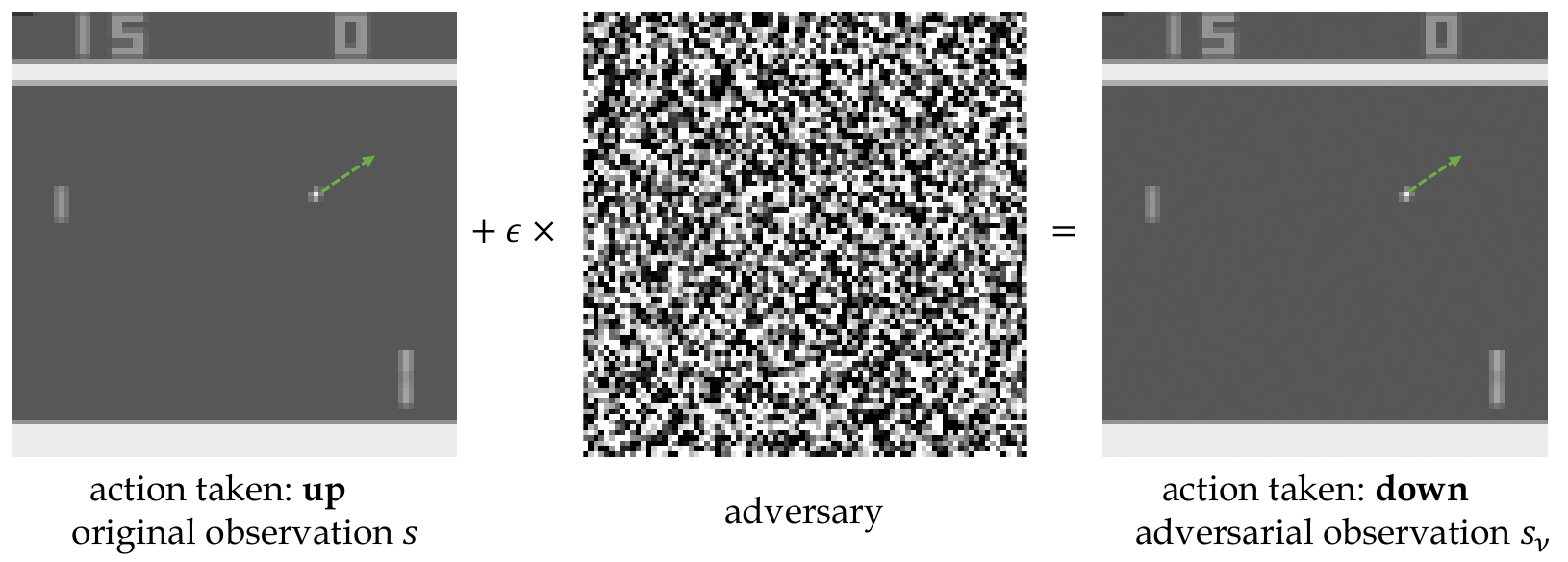}
\includegraphics[width=0.7\textwidth]{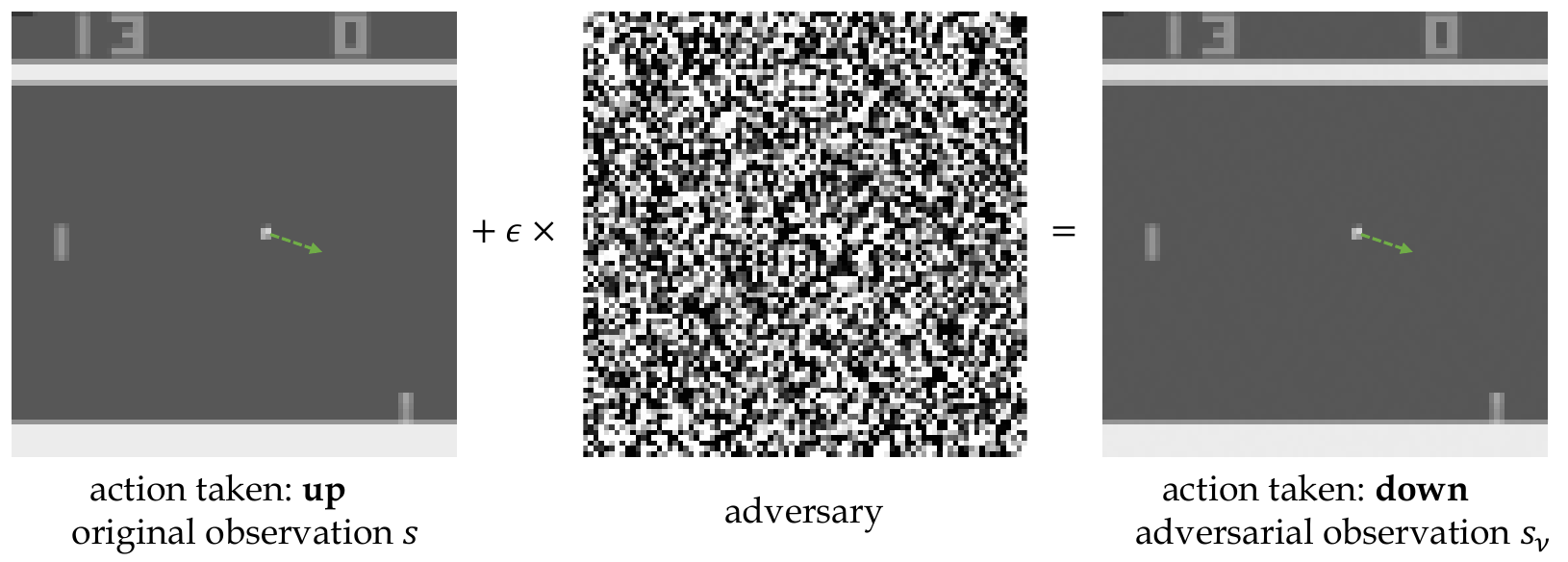}
\includegraphics[width=0.7\textwidth]{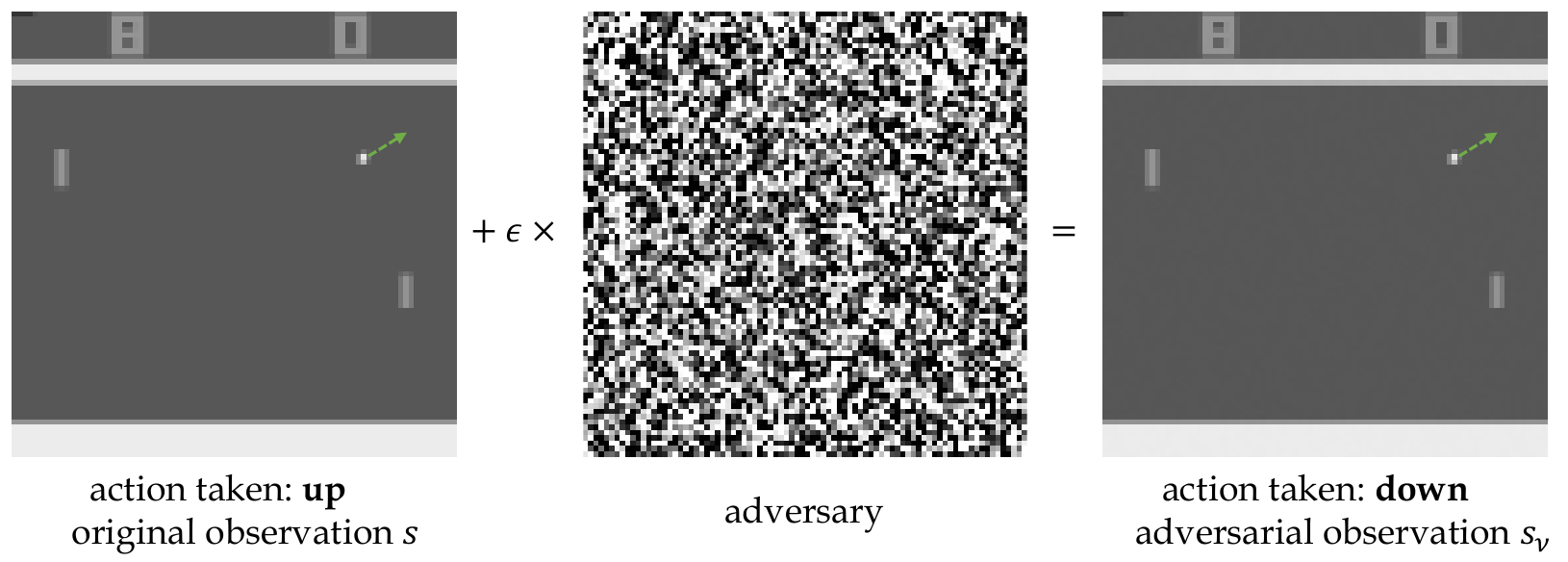}
    \caption{Examples of intrinsic states in Pong games. The direction of movement of the ball is marked.}
    \label{app fig:intrinsic states pong}
\end{figure}

\begin{figure}[htb]
    \centering
\includegraphics[width=0.7\textwidth]{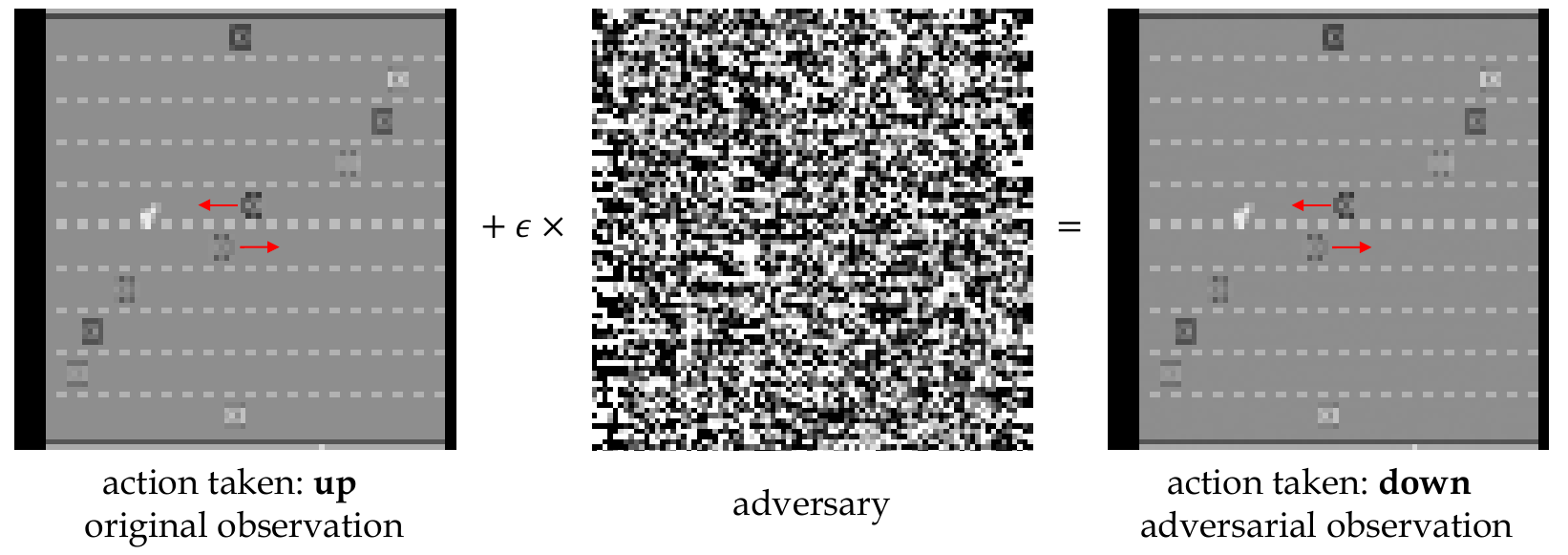}
\includegraphics[width=0.7\textwidth]{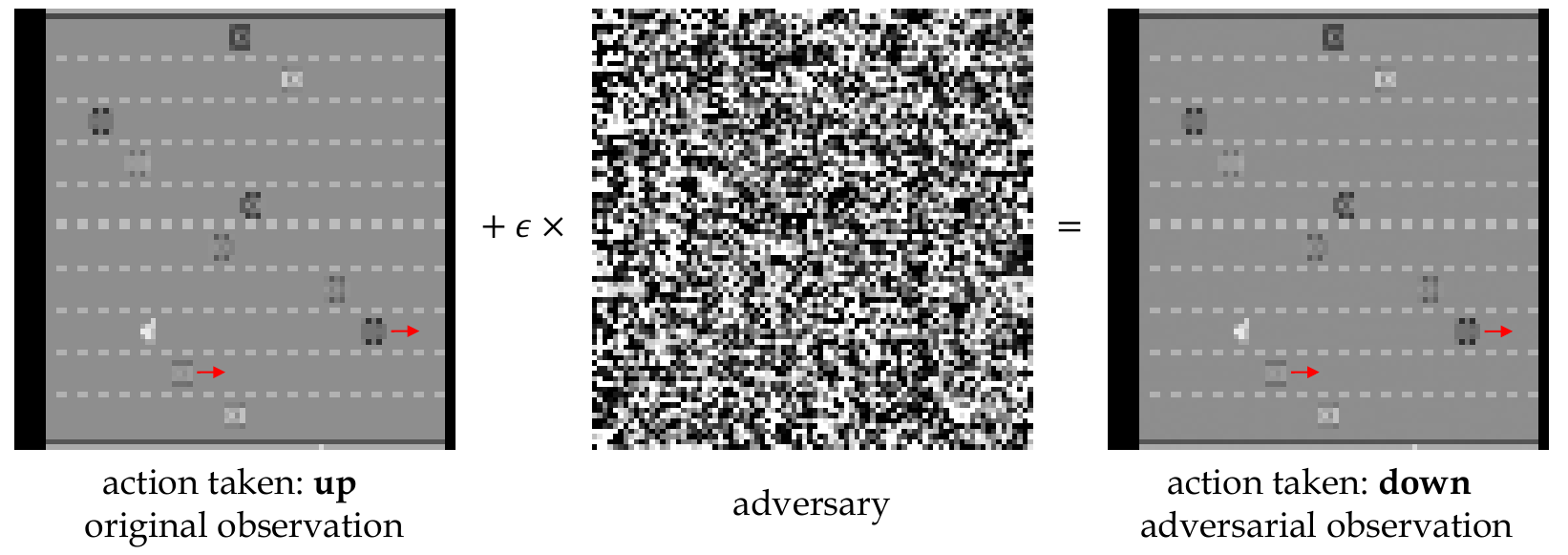}
\includegraphics[width=0.7\textwidth]{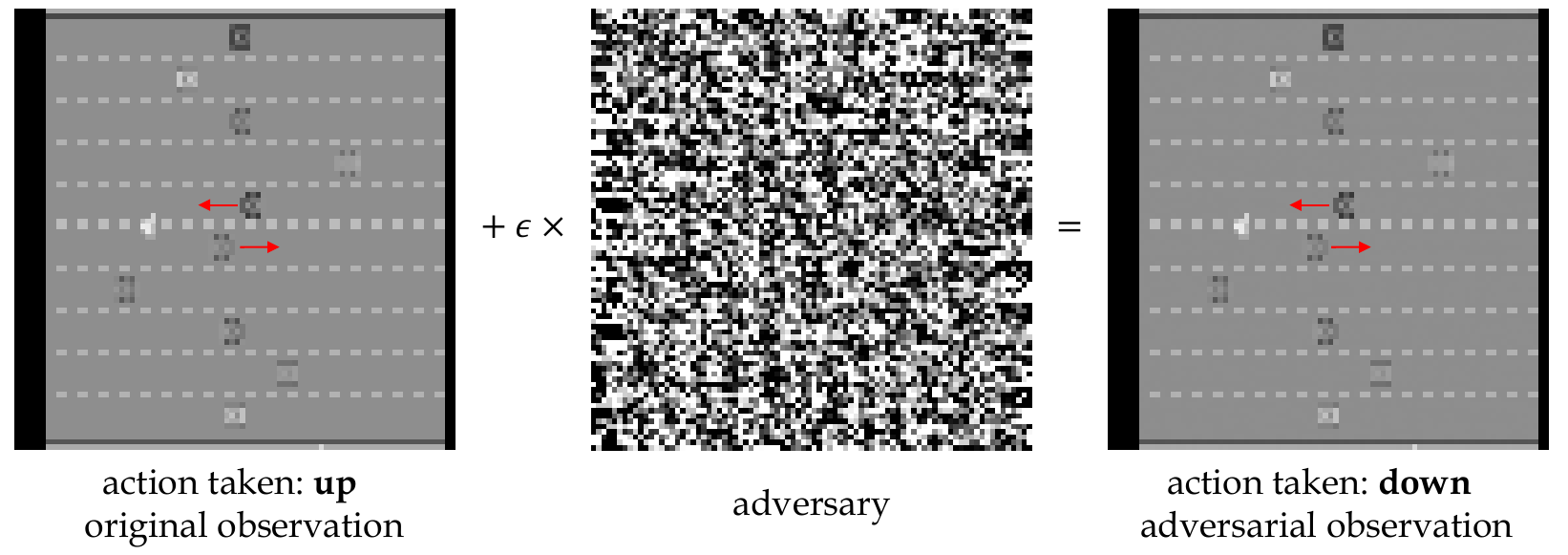}
\includegraphics[width=0.7\textwidth]{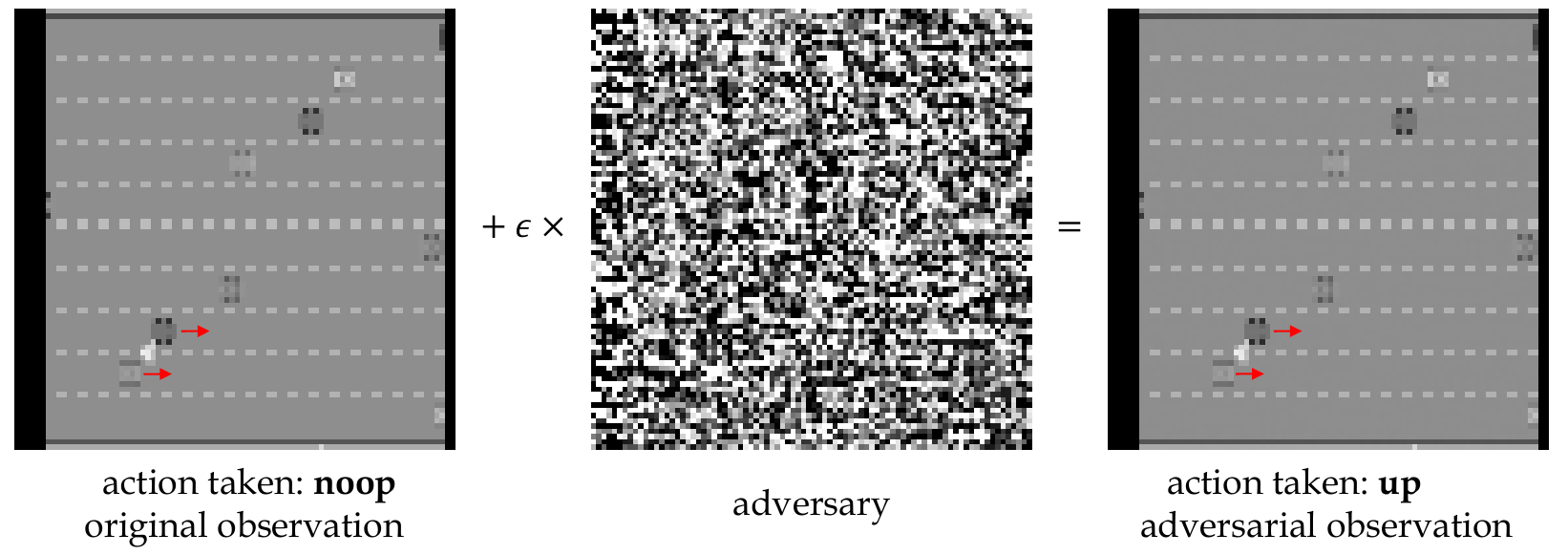}
\includegraphics[width=0.7\textwidth]{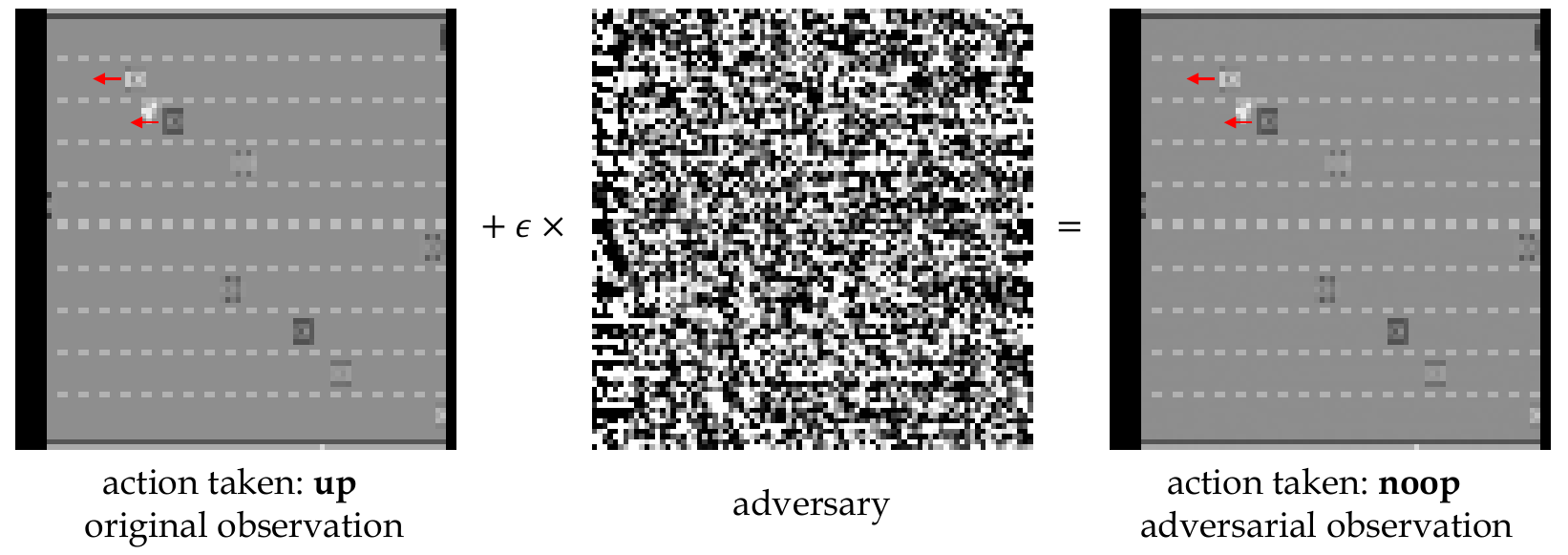}
    \caption{Examples of intrinsic states in Freeway games. The directions of movement of cars around the chicken are marked.}
    \label{app fig:intrinsic states free}
\end{figure}

\begin{figure}[htb]
    \centering
\includegraphics[width=0.7\textwidth]{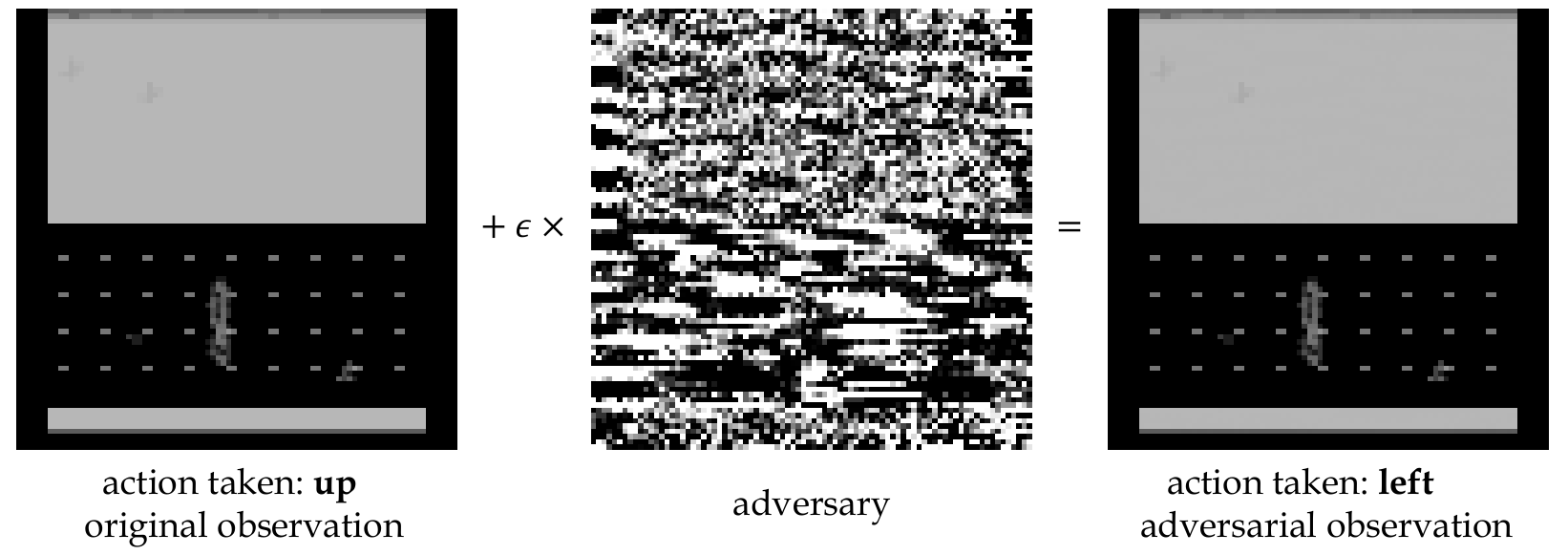}
\includegraphics[width=0.7\textwidth]{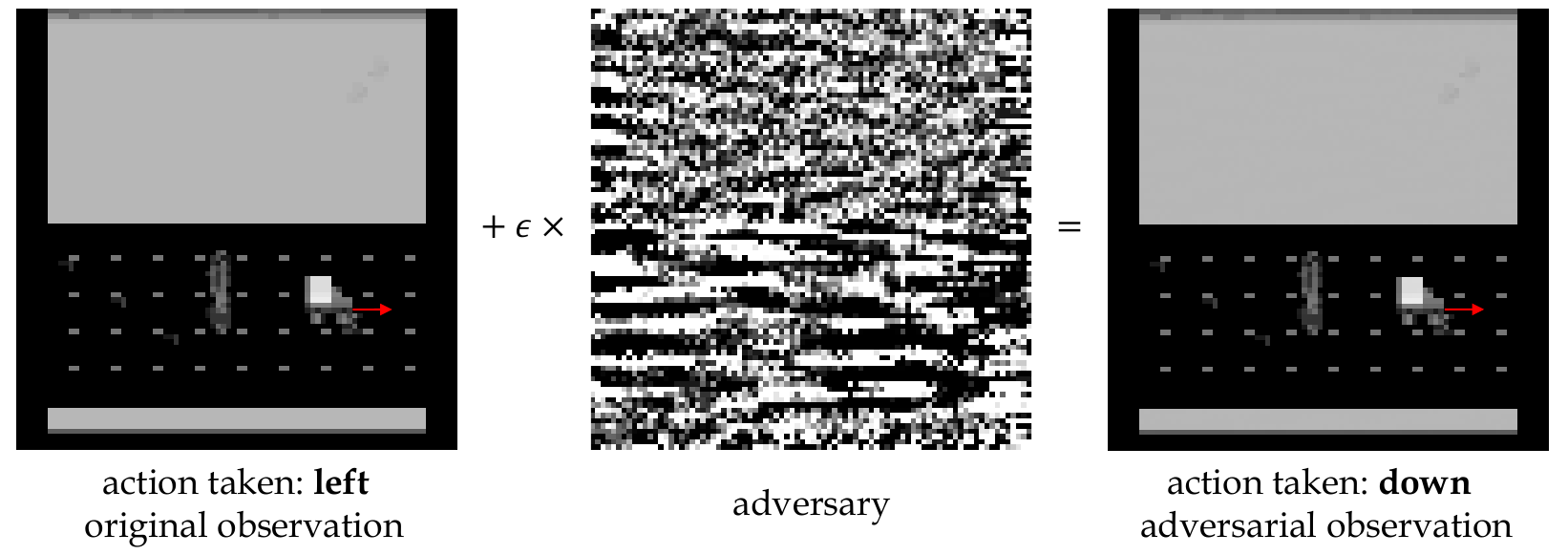}
\includegraphics[width=0.7\textwidth]{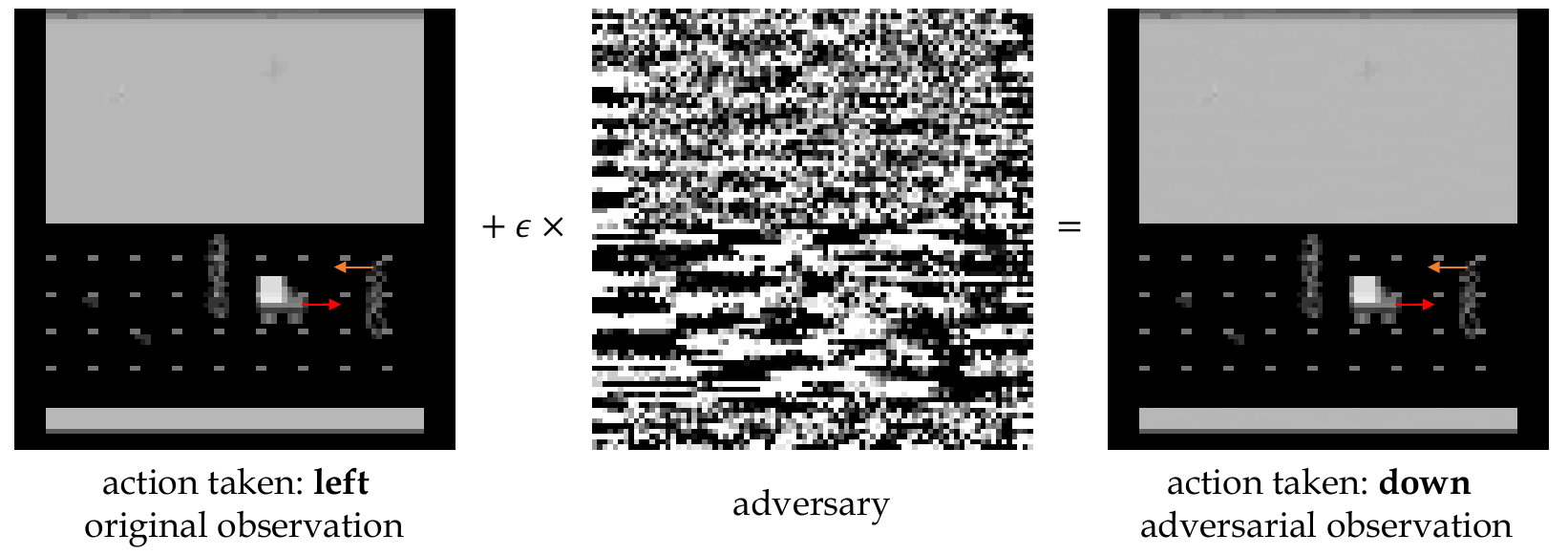}
\includegraphics[width=0.7\textwidth]{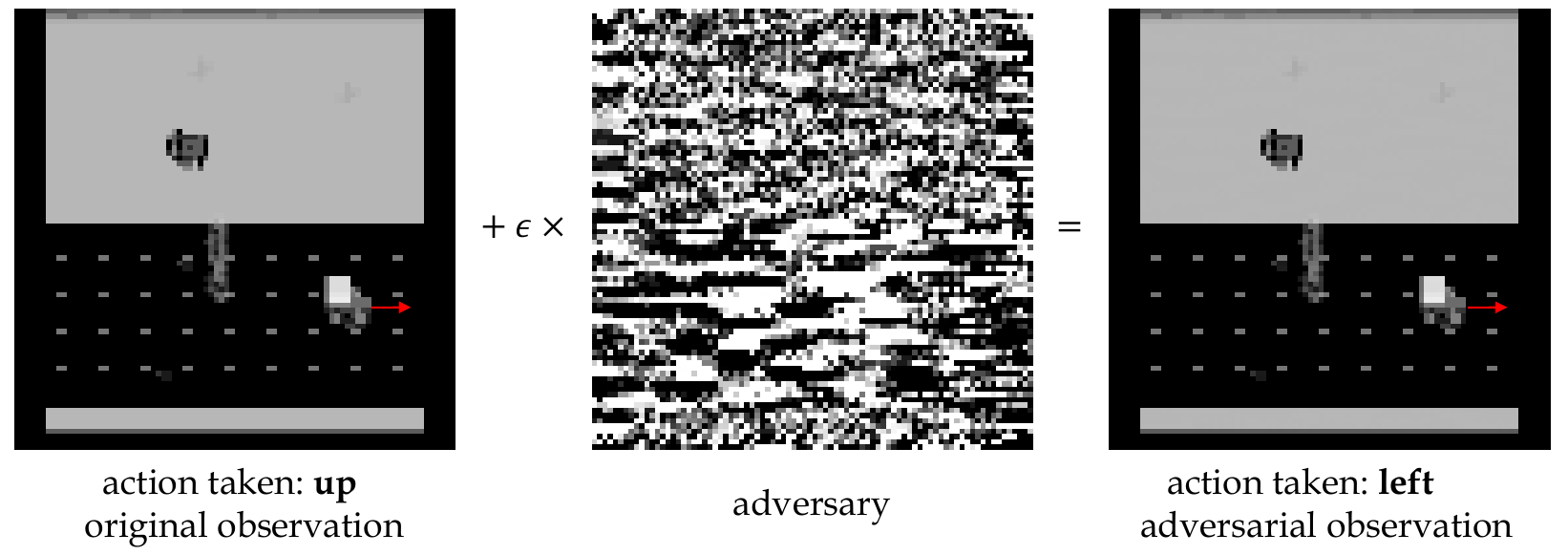}
\includegraphics[width=0.7\textwidth]{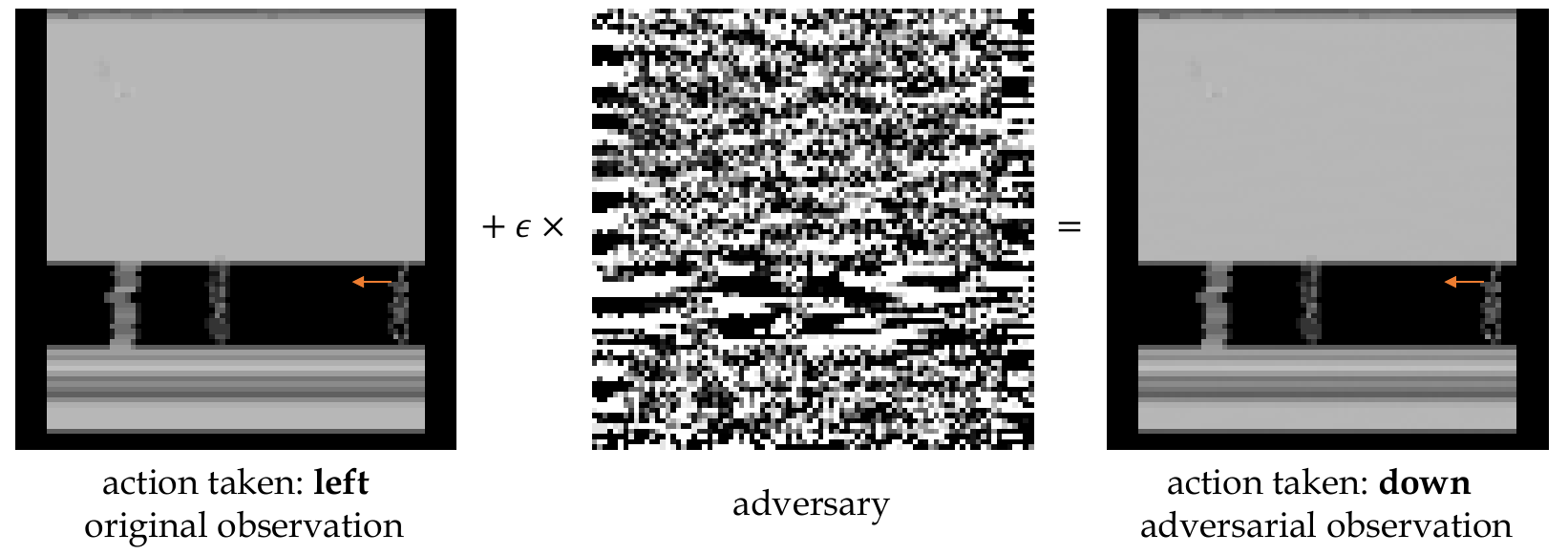}
    \caption{Examples of intrinsic states in Road Runner games. The directions of movement of trucks and the competitors are marked.}
    \label{app fig:intrinsic states road}
\end{figure}

\section{Additional Algorithm Details}

\textbf{Algorithm.} We present the CAR-DQN training algorithm in Algorithm \ref{alg:car-dqn}.

\begin{algorithm}[!tb]
    \caption{Consistent Adversarial Robust Deep Q-Learning (CAR-DQN).}
    \label{alg:car-dqn}
\begin{algorithmic}
    \STATE {\bfseries Input:} Number of iterations $T$, target network update frequency $M$, a schedule $\beta_t$ for the exploration probability $\beta$, a schedule $\epsilon_t$ for the perturbation radius $\epsilon$.
    \STATE Initialize current Q network $Q(s,a)$ with parameters $\theta$ and target Q network $Q^\prime(s,a)$ with parameters $\theta^\prime \leftarrow \theta$. 
    \STATE Initialize replay buffer $\mathcal{B}$.
    \FOR{$t=1$ {\bfseries to} $T$}
    \STATE With probability $\beta_t$ select random action $a_t$, otherwise select $a_t = \mathop{\arg\max}_a Q(s_t, a;\theta)$.
    \STATE Execute action $a_t$ in environment and observe reward $r_t$ and the next state $s_{t+1}$.
    \STATE Store transition pair $\left\{ s_t, a_t, r_t, s_{t+1} \right\}$ in $\mathcal{B}$.
    \STATE Randomly sample a minibatch of $N$ transition pairs $\left\{ s_i, a_i, r_i, s_{i+1} \right\}$ from $\mathcal{B}$.
    \STATE Set $y_i = r_i + \gamma Q^\prime(s_{i+1}, \mathop{\arg\max}_{a^\prime} Q(s_i, a^\prime;\theta);\theta^\prime)$ for non-terminal $s_i$, and $y_i = r_i$ for terminal $s_i$.
    \STATE Define $\mathcal{L}_{car}^{soft}(\theta)$:
    \begin{equation}\notag
        \mathcal{L}_{car}^{soft}(\theta):=\sum_{i\in \mathcal{\left|B\right|}} \alpha_i \max_{s_\nu \in B_{\epsilon_t}(s_i)} \left|y_i - Q(s_\nu,a_i;\theta)  \right|,
    \end{equation}
    where $ \alpha_i = \frac{e^{\frac{1}{\lambda} \max_{s_\nu } \left|y_i - Q(s_\nu,a_i;\theta)  \right|}}{\sum_{i\in \mathcal{\left|B\right|}} e^{\frac{1}{\lambda} \max_{s_\nu } \left|y_i - Q(s_\nu,a_i;\theta)  \right|}}$.
    \STATE Option 1: Use projected gradient descent (PGD) to solve $\mathcal{L}_{car}^{soft}(\theta)$.
    \STATE \qquad For every $i\in \mathcal{\left|B\right|}$, run PGD to solve: $$s_{i,\nu}=\mathop{\arg\max}_{s_\nu \in B_{\epsilon_t}(s_i)}\left|y_i - Q(s_\nu,a_i;\theta)  \right|. $$
    \STATE \qquad Compute the approximation of $\mathcal{L}_{car}^{soft}(\theta)$:
        $$\mathcal{L}_{car}(\theta) = \sum_{i\in \mathcal{\left|B\right|}} \alpha_i \left|y_i - Q(s_{i,\nu},a_i;\theta)  \right|,$$
        \qquad where $ \alpha_i = \frac{e^{\frac{1}{\lambda}  \left|y_i - Q(s_{i,\nu},a_i;\theta)  \right|}}{\sum_{i\in \mathcal{\left|B\right|}} e^{\frac{1}{\lambda}  \left|y_i - Q(s_{i,\nu},a_i;\theta)  \right|}}$.
    \STATE Option 2: Use convex relaxations of neural networks to solve a surrogate loss for $\mathcal{L}_{car}^{soft}(\theta)$.
    \STATE \qquad For every $i\in \mathcal{\left|B\right|}$, obtain upper and lower bounds on $Q(s, a^\prime;\theta)$ for all $s \in B_{\epsilon_t}(s_i)$:
        $$u_i(\theta)=\operatorname{ConvexRelaxUB}\left( Q(s, a^\prime;\theta), \theta,  s \in B_{\epsilon_t}(s_i)\right),$$
        $$l_i(\theta)=\operatorname{ConvexRelaxLB}\left( Q(s, a^\prime;\theta), \theta,  s \in B_{\epsilon_t}(s_i)\right).$$
    \STATE \qquad Compute the surrogate loss for $\mathcal{L}_{car}^{soft}(\theta)$:
        $$\mathcal{L}_{car}(\theta) = \sum_{i\in \mathcal{\left|B\right|}} \alpha_i \max\left\{ \left| y_i - u_i(\theta)\right|, \left| y_i - l_i(\theta)\right| \right\}, $$
        \qquad where $ \alpha_i = \frac{e^{\frac{1}{\lambda}  \max\left\{ \left| y_i - u_i(\theta)\right|, \left| y_i - l_i(\theta)\right| \right\}}}{\sum_{i\in \mathcal{\left|B\right|}} e^{\frac{1}{\lambda}  \max\left\{ \left| y_i - u_i(\theta)\right|, \left| y_i - l_i(\theta)\right| \right\}}}$.
    \STATE Update the Q network by performing a gradient descent step to minimize $\mathcal{L}_{car}(\theta)$.
    \STATE Update target Q network every $M$ steps: $\theta^\prime \leftarrow \theta$.
    \ENDFOR
\end{algorithmic}
    
\end{algorithm}

\textbf{DQN architecture.}
We implement Dueling network architectures~\cite{wang2016dueling} and the same architecture following \cite{zhang2020robust, oikarinen2021robust} which has 3 convolutional layers and a two-head fully connected layers. The first convolutional layer has $8\times 8$ kernel, stride 4, and 32 channels. The second convolutional layer has $4\times 4$ kernel, stride 2, and 64 channels. The third convolutional layer has $3\times 3$ kernel, stride 1, and 64 channels and is then flattened. The fully connected layers have 512 units for both heads wherein one head outputs a state value and the other outputs advantages of each action. Every middle layer is applied by the ReLU activation function.

\section{Comparation between RADIAL-DQN and CAR-DQN with Increasing Perturbation Radius} \label{app: exp with increasing perturbation radius}

The core at RADIAL-DQN is a heuristic robust regularization that minimizes the overlap between bounds of perturbed Q values of the current action and others:
$$
\mathcal{L}_{radial}\left(\theta\right)=\mathbb{E}_{\left(s, a, s^{\prime}, r\right)}\left[\sum_y Q_{\text {diff}}(s, y) \cdot {Ovl}(s, y, \epsilon)\right],
$$
where
$
Q_{\text {diff}}(s, y)=\max (0, Q(s, a)-Q(s, y)), \ O v l(s, y, \epsilon)=\max (0, \bar{Q}(s, y, \epsilon)-\underline{Q}(s, a, \epsilon)+\eta)
$ and $\eta=c\cdot Q_{\text {diff}}(s, y), c=0.5$. $Q_{\text {diff}}$ is treated as a constant for the optimization. 
We consider RADIAL-DQN could perform better than CAR-DQN with increasing perturbation radius since $\mathcal{L}_{radial}\left(\theta\right)$ is a stronger regularization to enhance robustness while compromising natural rewards. The stronger robust constraint is mainly reflected in two aspects:
\begin{itemize}
    \item The loose bounds. RADIAL-DQN uses the cheap but loose convex relaxation method (IBP) to estimate $\bar{Q}(s, y, \epsilon)$ and $\underline{Q}(s, a, \epsilon)$.
    \item The positive margin $\eta$.  
\end{itemize}
They both result in ${Ovl}(s, y, \epsilon)$ a stronger constraint for representing the overlap of perturbed Q values.
However, RADIAL-DQN has the following weaknesses:
\begin{itemize}
    \item $\mathcal{L}_{radial}\left(\theta\right)$ will harm natural rewards. As shown in Figure~\ref{fig: natural and robustness rewards during training}, the natural rewards curve of RADIAL-DQN on RoadRunner distinctly tends to decrease, especially around 0.5 million steps. In contrast, the natural curves of our CAR-DQN showcase more stable upward trends in all environments. Besides, as shown in Table~\ref{table: car dqn trained with larger radius}, RADIAL-DQN training with a larger radius attains lower natural rewards which also restricts robustness according to our theory, while CAR-DQN keeps a better and consistent natural and robust performance.
    \item Heuristic implementation lacks theoretical guarantees and introduces sensitive hyperparameter $c$. We conduct additional experiments on RoadRunner with different $c$ and observe the sensitivity of RADIAL-DQN to the choice of $c$. Larger $c$ could cause poor performance because the robustness constraint is too strict and thus the policy degrades to some simple policy with lower rewards. Smaller $c$ may result in much weaker robustness. By contrast, our CAR-DQN is developed based on the theory of optimal robust policy and stability. Although we also introduce a hyperparameter $\lambda$, our ablation studies in Figure~\ref{fig:soft roadrunner} show that our algorithm is insensitive to it. 

    \begin{table}[h]
    \caption{Performance of RADIAL-DQN sensitive to different positive margins $c\cdot Q_{\text {diff}}(s, y)$.}
    \vskip 0.15in
    \resizebox{\textwidth}{!}{%
    \begin{tabular}{c|c|ccc|ccc}
    \hline
    \multirow{2}{*}{Model} & \multirow{2}{*}{Natural Return} & \multicolumn{3}{c|}{PGD}                                & \multicolumn{3}{c}{MinBest}                            \\ \cline{3-8} 
                           &                                 & $\epsilon=1/255$ & $\epsilon=3/255$  & $\epsilon=5/255$ & $\epsilon=1/255$ & $\epsilon=3/255$  & $\epsilon=5/255$ \\ \hline
    RADIAL-DQN ($c=0.25$)  & $14678 \pm 329$                 & $ 14836\pm 314$  & $ 13670 \pm 466$  & $ 13512 \pm 617$ & $14712 \pm 309$  & $14804 \pm 457$   & $13226 \pm 351$  \\ \hline
    RADIAL-DQN ($c=0.5$)   & $\bf 46224 \pm 1133$                & $\bf 45990 \pm 1112$ & $\bf  42162 \pm 1147$ & $\bf  23248 \pm 499$ & $\bf 46082 \pm 1128$ & $\bf  42036 \pm 1048$ & $ \bf 25434 \pm 756$ \\ \hline
    RADIAL-DQN ($c=0.75$)  & $3992 \pm 482$                  & $3992 \pm 482$   & $3992 \pm 482$    & $3992 \pm 482$   & $3992 \pm 482$   & $3992 \pm 482$    & $3992 \pm 482$   \\ \hline
    \end{tabular}%
    }
    \end{table}
    \item Depending on the currently learned optimal action. $\mathcal{L}_{radial}\left(\theta\right)$ essentially takes the currently learned action as a robust label which may produce a wrong direction for robustness training if the learned action is not optimal. In contrast, our CAR-DQN seeks optimal robust policies with theoretical guarantees and does not utilize the learned action for robustness training, simultaneously improving natural and robust performance.
\end{itemize}

The main motivation of CAR-DQN based on our theory is to improve natural and robust performance concurrently which makes sense in real-world scenarios where strong adversarial attacks are relatively rare. 
Our training loss can guarantee robustness under the attack with the same perturbation radius as the training. We also think it is a very significant problem whether and how we can design an algorithm training with little epsilon and theoretically guarantee robustness for larger epsilon. However, this is beyond the scope of our paper and we will consider this problem in subsequent work. 

Moreover, as shown in Table~\ref{app table: compare}, CAR-DQN also achieves the top performance in larger perturbation radiuses on Pong and BankHeist and matches the RADIAL-DQN on Freeway. To show the superiority of CAR-DQN further, we also train CAR-DQN with a perturbation radius of 3/255 and 5/255 on RoadRunner for 4.5 million steps (see Table~\ref{table: car dqn trained with larger radius}).
\begin{table}[h]
\caption{Performance of CAR-DQN and RADIAL-DQN trained with different perturbation radiuses on the RoadRunner environment. The best results of the algorithm with the same training radius are highlighted in bold.}
\label{table: car dqn trained with larger radius}
\vskip 0.15in
\resizebox{\textwidth}{!}{%
\begin{tabular}{c|c|ccc|ccc}
\hline
\multirow{2}{*}{Model}        & \multirow{2}{*}{Natural Return} & \multicolumn{3}{c|}{PGD}                                  & \multicolumn{3}{c}{MinBest}                               \\ \cline{3-8} 
                              &                                 & $\epsilon=1/255$  & $\epsilon=3/255$  & $\epsilon=5/255$  & $\epsilon=1/255$   & $\epsilon=3/255$  & $\epsilon=5/255$  \\ \hline
RADIAL-DQN ($\epsilon=1/255$) & $46224 \pm 1133$                & $45990 \pm 1112$  & $\bf  42162 \pm 1147$ & $\bf  23248 \pm 499$  & $46082 \pm 1128$   & $ \bf 42036 \pm 1048$ & $\bf  25434 \pm 756$  \\ \hline
CAR-DQN ($\epsilon=1/255$)    & $\bf 49398 \pm 1106$               & $\bf  49456 \pm 992$  & $28588 \pm 1575$  & $15592 \pm 885$   & $\bf  47526 \pm 1132$  & $32878 \pm 1898$  & $21102 \pm 1427$  \\ \hline \hline
RADIAL-DQN ($\epsilon=3/255$) & $34656 \pm 1104$                & $35094 \pm 1277$  & $ 35082 \pm 948 $ & $\bf  32770 \pm 1062$ & $ 35096 \pm 1277 $ & $ 34374 \pm 996$  & $\bf  27926 \pm 881$  \\ \hline
CAR-DQN ($\epsilon=3/255$)    & $\bf 47348 \pm 1305$                & $\bf 46284 \pm 1114 $ & $\bf  43578 \pm 1315$ & $ 27060 \pm 1117$ & $\bf  46286 \pm 1122 $ & $\bf  42602 \pm 1336$ & $ 24862 \pm 1195$ \\ \hline \hline
RADIAL-DQN ($\epsilon=5/255$) & $35160 \pm 1157$                & $36158 \pm 1104$  & $36732 \pm 1076$  & $34826 \pm 913$   & $36158 \pm 1104$   & $36732 \pm 1076$  & $34592 \pm 913$   \\ \hline
CAR-DQN ($\epsilon=5/255$)    & $\bf 42545 \pm 2028$                & $\bf 43230 \pm 1468 $ & $\bf  37845 \pm 2344$ & $\bf  39235 \pm 1519$ & $\bf  43645 \pm 1531 $ & $\bf  37535 \pm 2112$ & $\bf  38150 \pm 1316$ \\ \hline
\end{tabular}%
}
\end{table}
We can see that CAR-DQN still attains superior natural and robust performance training with larger attack radiuses while RADIAL-DQN markedly degrades its natural performance due to the too-strong robustness constraint. CAR-DQN always has a higher robust return on the training radius than RADIAL-DQN.

\section{Additional Experiment Results}\label{app: add exp}

\textbf{Training stability.} We also observe that there are some instability phenomena in the training of CAR, RADIAL, and WocaR in Figure \ref{fig: natural and robustness rewards during training}. We conjecture that the occasional instability in CAR training comes from the unified loss combining natural and robustness objectives which may cause undesirable optimization direction under a batch of special samples. The instability of RADIAL is particularly evident in the robustness curve on the BankHeist environment and natural curve on the Freeway environment and it may be from the larger batch size (=128) setting during the RADIAL training while CAR, SA-DQN and WocaR set the batch size as 32 or 16. The worst-case estimation of WocaR may be inaccurate in some states and WocaR also uses a small batch of size 16. The combination of these two can lead to instability, especially in complex environments such as RoadRunner and BankHeist. Another possible reason is that CAR, RADIAL, and WocaR all use the cheap relaxation method leading to a loose bound while SA-DQN utilizes a tighter relaxation.

\begin{table*}[ht]
\caption{Average episode rewards $\pm$ standard error of the mean over 50 episodes on baselines and CAR-DQN. The best results of the algorithm with the same type of solver are highlighted in bold.}
\label{app table: compare}
\vskip 0.15in
\resizebox{\textwidth}{!}{%
\begin{tabular}{c|cc|c|ccc|ccc|ccc}
\hline \hline
\multirow{2}{*}{Environment} & \multicolumn{2}{c|}{\multirow{2}{*}{Model}}                                                                         & \multirow{2}{*}{Natural Return} & \multicolumn{3}{c|}{PGD}                                               & \multicolumn{3}{c|}{MinBest}                                                      & \multicolumn{3}{c}{ACR}                               \\
                             & \multicolumn{2}{c|}{}                                                                                               &                                 & $\epsilon=1/255$          & $\epsilon=3/255$          & $\epsilon=5/255$          & $\epsilon=1/255$          & $\epsilon=3/255$          & $\epsilon=5/255$          & $\epsilon=1/255$ & $\epsilon=3/255$ & $\epsilon=5/255$ \\ \hline
\multirow{7}{*}{\textbf{Pong}}        & \multicolumn{1}{c|}{Standard}                                                                      & DQN            & $21.0 \pm 0.0$                  & $-21.0 \pm 0.0$           & $-21.0 \pm 0.0$           & $-20.8 \pm 0.1$           & $-21.0 \pm 0.0$           & $-21.0 \pm 0.0$           & $-21.0 \pm 0.0$           & $0$              & $0$              & $0$              \\ \cline{2-13} 
                             & \multicolumn{1}{c|}{\multirow{2}{*}{PGD}}                                                          & SA-DQN         & \textbf{$\bf 21.0 \pm 0.0$}         & \textbf{$\bf 21.0 \pm 0.0$}   & $-19.4 \pm 0.3$           & $-21.0 \pm 0.0$           & \textbf{$\bf 21.0 \pm 0.0$}   & $-19.4 \pm 0.2$           & $-21.0 \pm 0.0$           & $0$              & $0$              & $0$              \\
                             & \multicolumn{1}{c|}{}                                                                              & CAR-DQN (Ours) & \textbf{$\bf 21.0 \pm 0.0$}         & \textbf{$\bf 21.0 \pm 0.0$}   & $\bf 16.8 \pm 0.7$            & $-21.0 \pm 0.0$           & \textbf{$\bf 21.0 \pm 0.0$}   & $\bf 20.7 \pm 0.1$            & $\bf -0.8 \pm 2.8$            & $0$              & $0$              & $0$              \\ \cline{2-13} 
                             & \multicolumn{1}{c|}{\multirow{4}{*}{\begin{tabular}[c]{@{}c@{}}Convex \\ Relaxation\end{tabular}}} & SA-DQN         & \textbf{$\bf 21.0 \pm 0.0$}         & \textbf{$\bf 21.0 \pm 0.0$}   & \textbf{$\bf 21.0 \pm 0.0$}   & $-19.6 \pm 0.1$           & \textbf{$\bf 21.0 \pm 0.0$}   & \textbf{$\bf 21.0 \pm 0.0$}   & $-9.5 \pm 1.3$            & $1.000$          & $0$              & $0$              \\
                             & \multicolumn{1}{c|}{}                                                                              & RADIAL-DQN     & \textbf{$\bf 21.0 \pm 0.0$}         & \textbf{$\bf 21.0 \pm 0.0$}   & \textbf{$\bf 21.0 \pm 0.0$}   & \textbf{$\bf 21.0 \pm 0.0$}   & \textbf{$\bf 21.0 \pm 0.0$}   & \textbf{$\bf 21.0 \pm 0.0$}   & $4.9 \pm 0.6$             & $0.898$          & $0$              & $0$              \\
                             & \multicolumn{1}{c|}{}                                                                              & WocaR-DQN      & \textbf{$\bf 21.0 \pm 0.0$}         & \textbf{$\bf 21.0 \pm 0.0$}   & $20.5 \pm 0.1$            & $20.6 \pm 0.1$            & \textbf{$\bf 21.0 \pm 0.0$}   & $20.7 \pm 0.1$            & $20.9 \pm 0.1$            & $0.979$          & $0$              & $0$              \\
                             & \multicolumn{1}{c|}{}                                                                              & CAR-DQN (Ours) & \textbf{$\bf 21.0 \pm 0.0$}         & \textbf{$\bf 21.0 \pm 0.0$}   & \textbf{$\bf 21.0 \pm 0.0$}   & \textbf{$\bf 21.0 \pm 0.0$}   & \textbf{$\bf 21.0 \pm 0.0$}   & \textbf{$\bf 21.0 \pm 0.0$}   & \textbf{$\bf 21.0 \pm 0.0$}   & $0.986$          & $0$              & $0$              \\ \hline \hline
\multirow{7}{*}{\textbf{Freeway}}     & \multicolumn{1}{c|}{Standard}                                                                      & DQN            & $33.9 \pm 0.0$                  & $0.0 \pm 0.0$             & $0.0 \pm 0.0$             & $0.0 \pm 0.0$             & $0.0 \pm 0.0$             & $0.0 \pm 0.0$             & $0.0 \pm 0.0$             & $0$              & $0$              & $0$              \\ \cline{2-13} 
                             & \multicolumn{1}{c|}{\multirow{2}{*}{PGD}}                                                          & SA-DQN         & $33.6 \pm 0.1$                  & $23.4 \pm 0.2$            & $20.6 \pm 0.3$            & \textbf{$\bf 7.6 \pm 0.3$}    & $21.1 \pm 0.2$            & $21.3 \pm 0.2$            & $21.8 \pm 0.3$            & $0.250$          & $0.275$          & $0.275$          \\
                             & \multicolumn{1}{c|}{}                                                                              & CAR-DQN (Ours) & \textbf{$\bf 34.0 \pm 0.0$}         & \textbf{$\bf 33.7 \pm 0.1$}   & \textbf{$\bf 25.8 \pm 0.2$}   & $3.8 \pm 0.2$             & \textbf{$\bf 33.7 \pm 0.1$}   & \textbf{$\bf 30.0 \pm 0.3$}   & \textbf{$\bf 26.2 \pm 0.2$}   & $0$              & $0$              & $0$              \\ \cline{2-13} 
                             & \multicolumn{1}{c|}{\multirow{4}{*}{\begin{tabular}[c]{@{}c@{}}Convex \\ Relaxation\end{tabular}}} & SA-DQN         & $30.0 \pm 0.0$                  & $30.0 \pm 0.0$            & $30.2 \pm 0.1$            & $27.7 \pm 0.1$            & $30.0 \pm 0.0$            & $30.0 \pm 0.0$            & $29.2 \pm 0.1$            & $1.000$          & $0.912$          & $0$              \\
                             & \multicolumn{1}{c|}{}                                                                              & RADIAL-DQN     & $33.1 \pm 0.1$                  & \textbf{$\bf 33.3 \pm 0.1$}   & \textbf{$\bf 33.3 \pm 0.1$}   & \textbf{$\bf 29.0 \pm 0.1$}   & \textbf{$\bf 33.3 \pm 0.1$}   & \textbf{$\bf 33.3 \pm 0.1$}   & \textbf{$\bf 31.2 \pm 0.2$}   & $0.998$          & $0$              & $0$              \\
                             & \multicolumn{1}{c|}{}                                                                              & WocaR-DQN      & $30.8 \pm 0.1$                  & $31.0 \pm 0.0$            & $30.6 \pm 0.1$            & $29.0 \pm 0.2$            & $31.0 \pm 0.0$            & $31.1 \pm 0.1$            & $29.0 \pm 0.2$            & $0.992$          & $0.150$          & $0$              \\
                             & \multicolumn{1}{c|}{}                                                                              & CAR-DQN (Ours) & \textbf{$\bf 33.2 \pm 0.1$}         & $33.2 \pm 0.1$            & $32.3 \pm 0.2$            & $27.6 \pm 0.3$            & $33.2 \pm 0.1$            & $32.8 \pm 0.2$            & $31.0 \pm 0.1$            & $0.981$          & $0$              & $0$              \\ \hline \hline
\multirow{7}{*}{\textbf{BankHeist}}   & \multicolumn{1}{c|}{Standard}                                                                      & DQN            & $1317.2 \pm 4.2$                & $22.2 \pm 1.9$            & $0.0 \pm 0.0$             & $0.0 \pm 0.0$             & $0.0 \pm 0.0$             & $0.0 \pm 0.0$             & $0.0 \pm 0.0$             & $0$              & $0$              & $0$              \\ \cline{2-13} 
                             & \multicolumn{1}{c|}{\multirow{2}{*}{PGD}}                                                          & SA-DQN         & $1248.8 \pm 1.4$                & $965.8 \pm 35.9$          & $35.6 \pm 3.4$            & $0.6 \pm 0.3$             & $1118.0 \pm 6.3$          & $50.8 \pm 2.5$            & $4.8 \pm 0.7$             & $0$              & $0$              & $0$              \\
                             & \multicolumn{1}{c|}{}                                                                              & CAR-DQN (Ours) & \textbf{$\bf 1307.0 \pm 6.1$}       & \textbf{$\bf 1243.2 \pm 7.4$} & \textbf{$\bf 908.2 \pm 17.0$} & \textbf{$\bf 83.0 \pm 2.2$}   & \textbf{$\bf 1242.6 \pm 8.4$} & \textbf{$\bf 970.8 \pm 9.6$}  & \textbf{$\bf 819.4 \pm 9.0$}  & $0$              & $0$              & $0$              \\ \cline{2-13} 
                             & \multicolumn{1}{c|}{\multirow{4}{*}{\begin{tabular}[c]{@{}c@{}}Convex \\ Relaxation\end{tabular}}} & SA-DQN         & $1236.0 \pm 1.4$                & $1232.2 \pm 2.5$          & $1208.8 \pm 1.7$          & $1029.8 \pm 34.6$         & $1232.2 \pm 2.5$          & $1214.8 \pm 2.6$          & $1051.0 \pm 35.5$         & $0.991$          & $0.409$          & $0$              \\
                             & \multicolumn{1}{c|}{}                                                                              & RADIAL-DQN     & $1341.8 \pm 3.8$                & $1341.8 \pm 3.8$          & \textbf{$\bf 1346.4 \pm 3.2$} & $1092.6 \pm 37.8$         & $1341.8 \pm 3.8$          & $1328.6 \pm 5.4$          & $732.6 \pm 11.5$          & $0.982$          & $0$              & $0$              \\
                             & \multicolumn{1}{c|}{}                                                                              & WocaR-DQN      & $1315.0 \pm 6.1$                & $1312.0 \pm 6.1$          & $1323.4 \pm 2.2$          & $1094.0 \pm 10.2$         & $1312.0 \pm 6.1$          & $1301.6 \pm 3.9$          & $1041.4 \pm 17.4$         & $0.987$          & $0.093$          & $0$              \\
                             & \multicolumn{1}{c|}{}                                                                              & CAR-DQN (Ours) & \textbf{$\bf 1349.6 \pm 3.0$}       & \textbf{$\bf 1347.6 \pm 3.6$} & $1332.0 \pm 7.3$          & \textbf{$\bf 1191.0 \pm 9.0$} & \textbf{$\bf 1347.4 \pm 3.6$} & \textbf{$\bf 1338.0 \pm 2.9$} & \textbf{$\bf 1233.6 \pm 5.0$} & $0.974$          & $0$              & $0$              \\ \hline \hline
\multirow{7}{*}{\textbf{RoadRunner}}  & \multicolumn{1}{c|}{Standard}                                                                      & DQN            & $41492 \pm 903$                 & $0 \pm 0$                 & $0 \pm 0$                 & $0 \pm 0$                 & $0 \pm 0$                 & $0 \pm 0$                 & $0 \pm 0$                 & $0$              & $0$              & $0$              \\ \cline{2-13} 
                             & \multicolumn{1}{c|}{\multirow{2}{*}{PGD}}                                                          & SA-DQN         & $33380 \pm 611$                 & $20482 \pm 1087$          & $0 \pm 0$                 & $0 \pm 0$                 & $24632 \pm 812$           & $614 \pm 72$              & $214 \pm 26$              & $0$              & $0$              & $0$              \\
                             & \multicolumn{1}{c|}{}                                                                              & CAR-DQN (Ours) & \textbf{$\bf 49700 \pm 1015$}       & \textbf{$\bf 43286 \pm 801$}  & \textbf{$\bf 25740 \pm 1468$} & \textbf{$\bf 2574 \pm 261$}   & \textbf{$\bf 48908 \pm 1107$} & \textbf{$\bf 35882 \pm 904$}  & \textbf{$\bf 23218 \pm 698$}  & $0$              & $0$              & $0$              \\ \cline{2-13} 
                             & \multicolumn{1}{c|}{\multirow{4}{*}{\begin{tabular}[c]{@{}c@{}}Convex \\ Relaxation\end{tabular}}} & SA-DQN         & $46372 \pm 882$                 & $44960\pm 1152$           & $20910 \pm 827$           & $3074 \pm 179$            & $45226 \pm 1102$          & $25548 \pm 737$           & $12324 \pm 529$           & $0.819$          & $0$              & $0$              \\
                             & \multicolumn{1}{c|}{}                                                                              & RADIAL-DQN     & $46224\pm 1133$                 & $45990 \pm 1112$          & \textbf{$\bf 42162 \pm 1147$} & \textbf{$\bf 23248 \pm 499$}  & $46082 \pm 1128$          & \textbf{$\bf 42036 \pm 1048$} & \textbf{$\bf 25434 \pm 756$}  & $0.994$          & $0$              & $0$              \\
                             & \multicolumn{1}{c|}{}                                                                              & WocaR-DQN      & $43686 \pm 1608$                & $45636 \pm 706$           & $19386 \pm 721$           & $6538 \pm 464$            & $45636 \pm 706$           & $21068 \pm 1026$          & $15050 \pm 683$           & $0.956$          & $0$              & $0$              \\
                             & \multicolumn{1}{c|}{}                                                                              & CAR-DQN (Ours) & \textbf{$\bf 49398 \pm 1106$}       & \textbf{$\bf 49456 \pm 992$}  & $28588 \pm 1575$          & $15592 \pm 885$           & \textbf{$\bf 47526 \pm 1132$} & $32878 \pm 1898$          & $21102 \pm 1427$          & $0.760$          & $0$              & $0$              \\ \hline
\end{tabular}%
}
\end{table*}

\begin{figure}[t]
    \centering
    \includegraphics[width=0.23\textwidth]{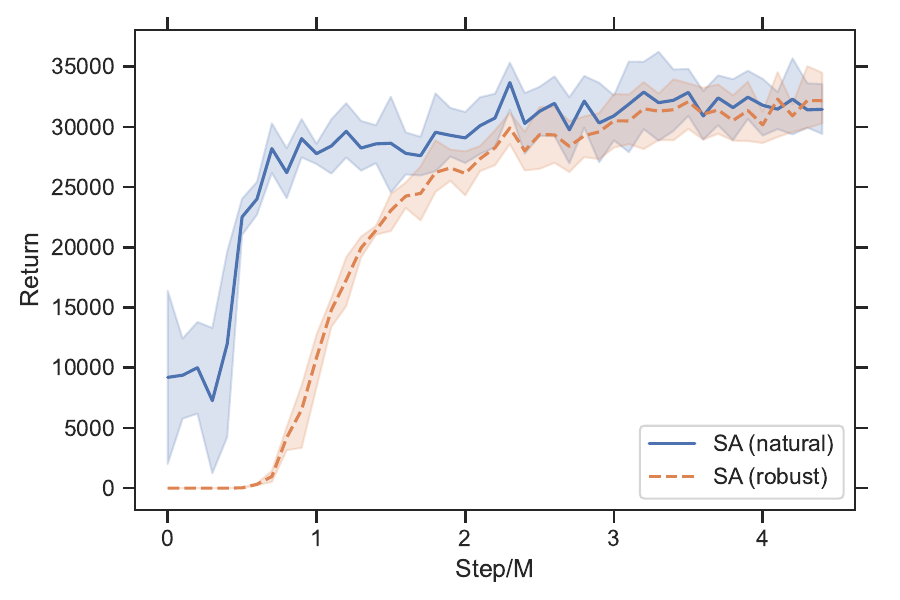}
    \includegraphics[width=0.23\textwidth]{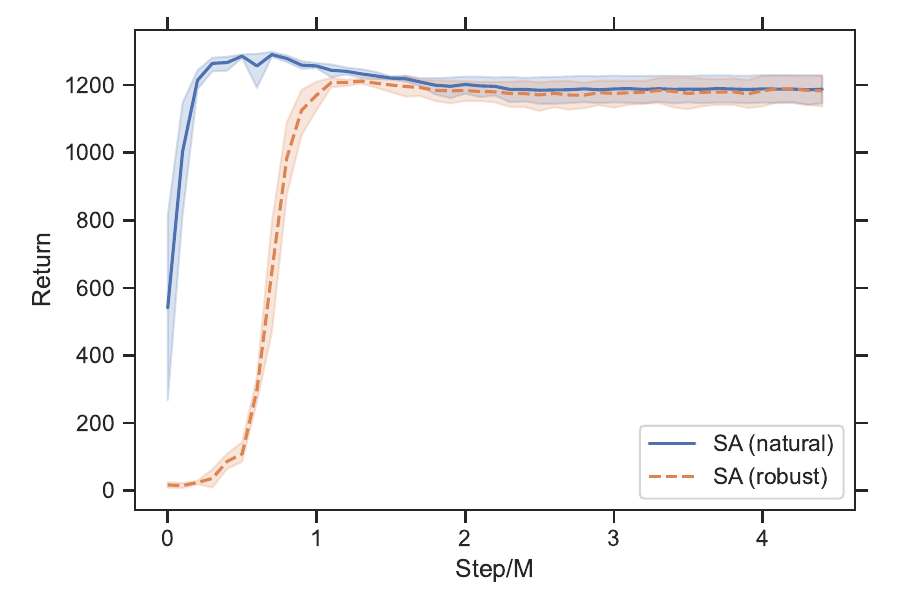}
    \includegraphics[width=0.23\textwidth]{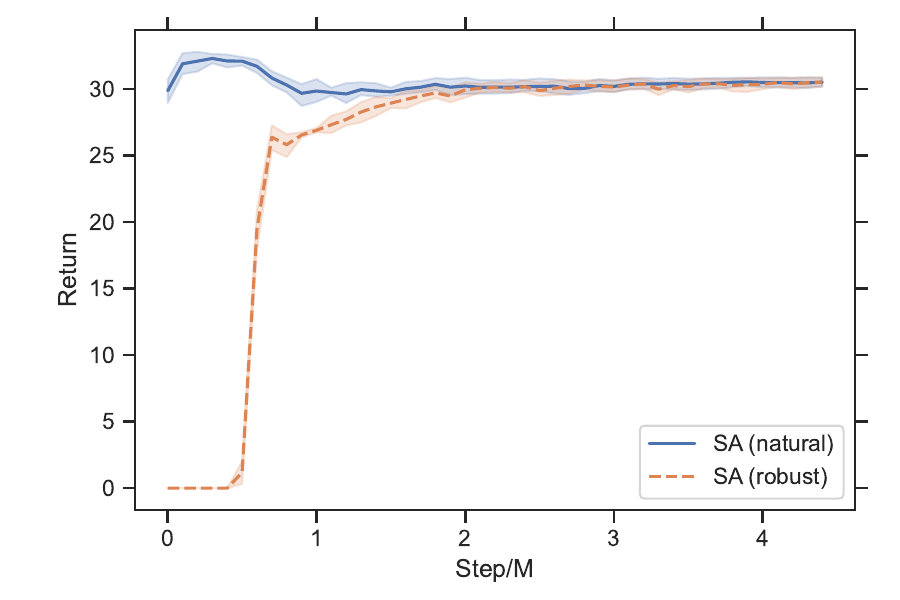}
    \includegraphics[width=0.23\textwidth]{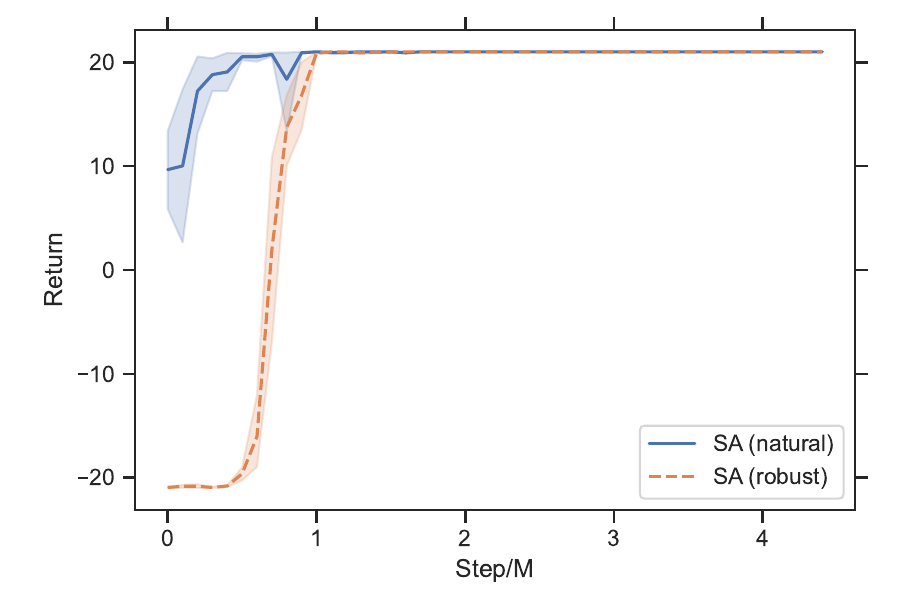}
    \caption{Natural and robustness performance exhibited
by SA-DQN agents during the training process on 4 Atari games.}
    \label{app fig:sa NRreturn}
\end{figure}

\begin{figure}[t]
    \centering
\includegraphics[width=0.23\textwidth]{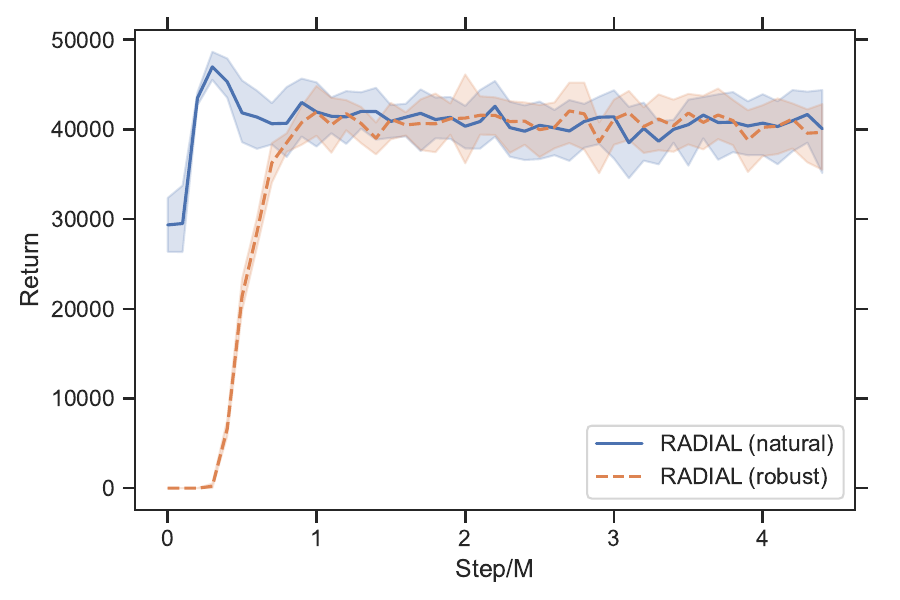}
\includegraphics[width=0.23\textwidth]{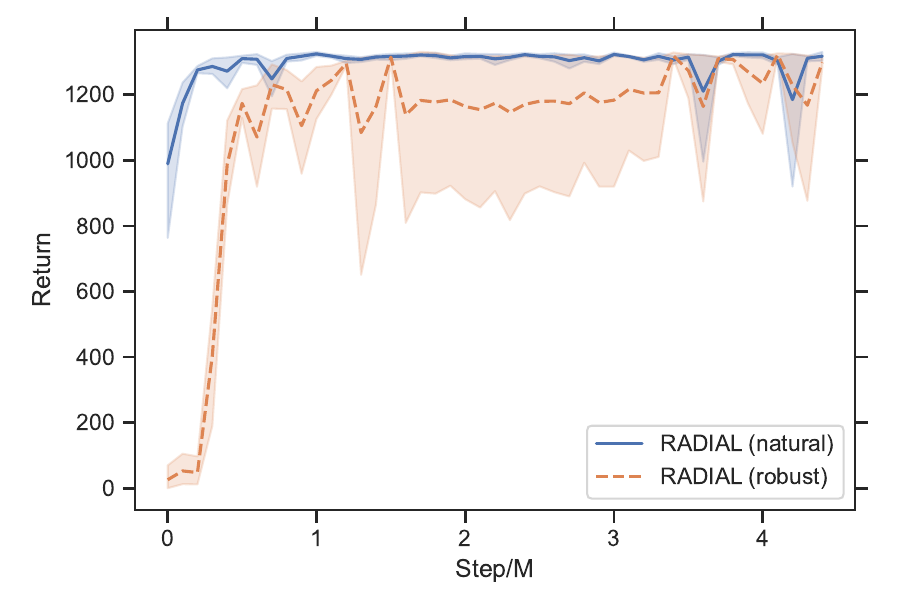}
\includegraphics[width=0.23\textwidth]{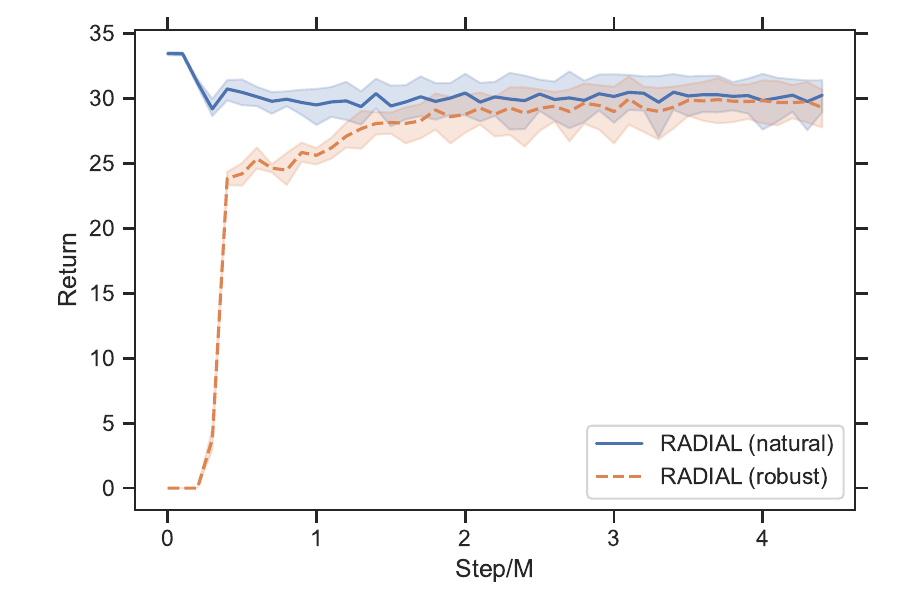}
\includegraphics[width=0.23\textwidth]{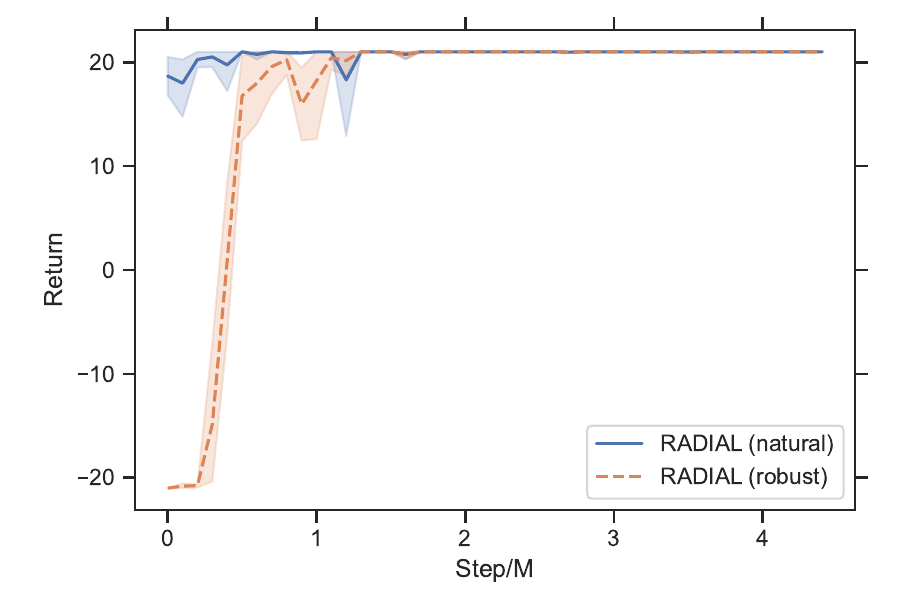}
    \caption{Natural and robustness performance exhibited
by RADIAL-DQN agents during the training process on 4 Atari games.}
    \label{app fig:ra NRreturn}
\end{figure}

\begin{figure}[t]
    \centering
\includegraphics[width=0.23\textwidth]{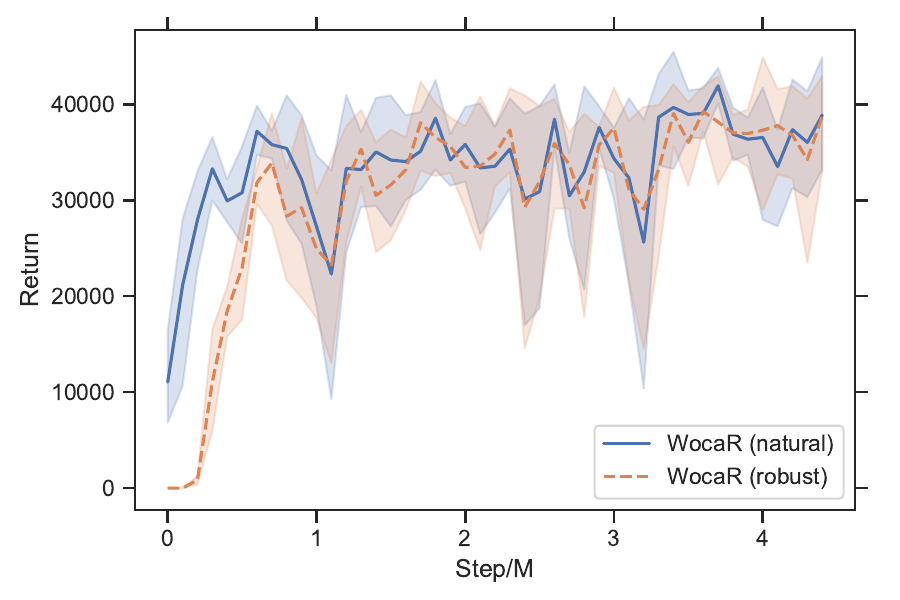}
\includegraphics[width=0.23\textwidth]{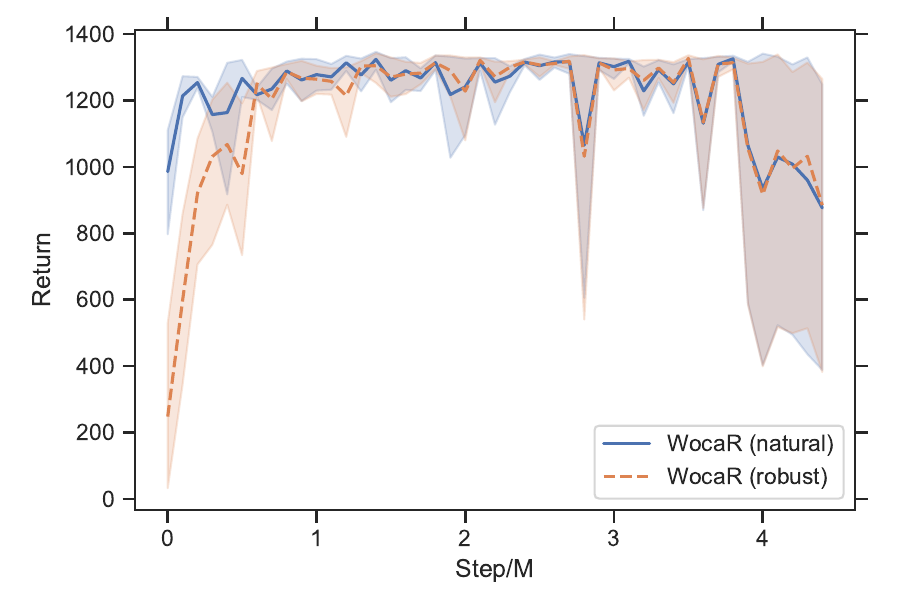}
\includegraphics[width=0.23\textwidth]{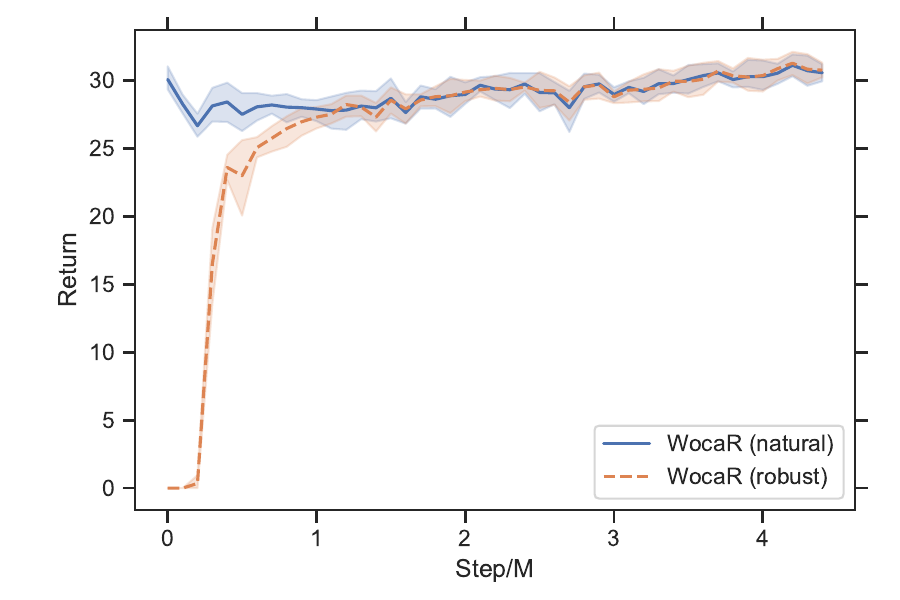}
\includegraphics[width=0.23\textwidth]{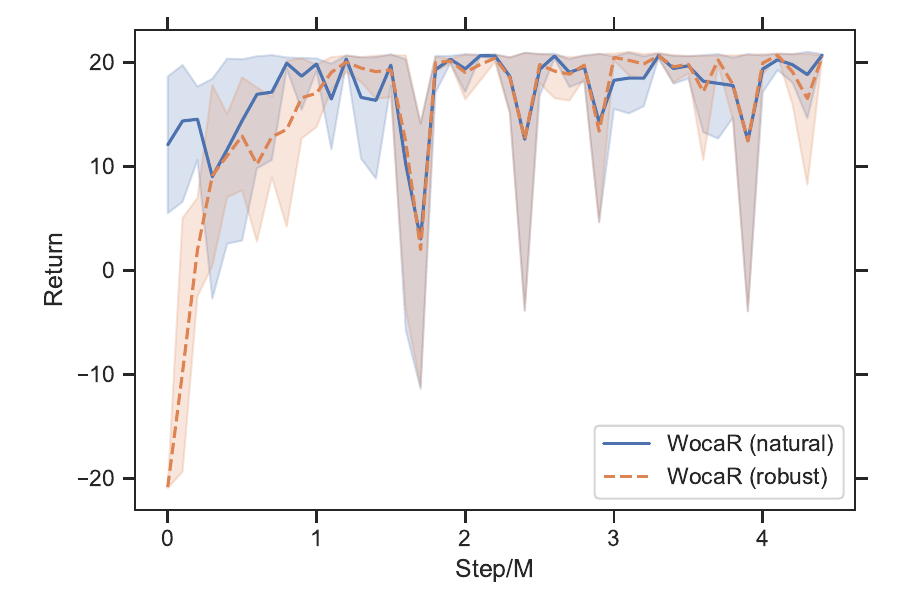}
    \caption{Natural and robustness performance exhibited
by WocaR-DQN agents during the training process on 4 Atari games.}
    \label{app fig:wocar NRreturn}
\end{figure}

\begin{figure}[t]
    \centering
\includegraphics[width=0.5\columnwidth]{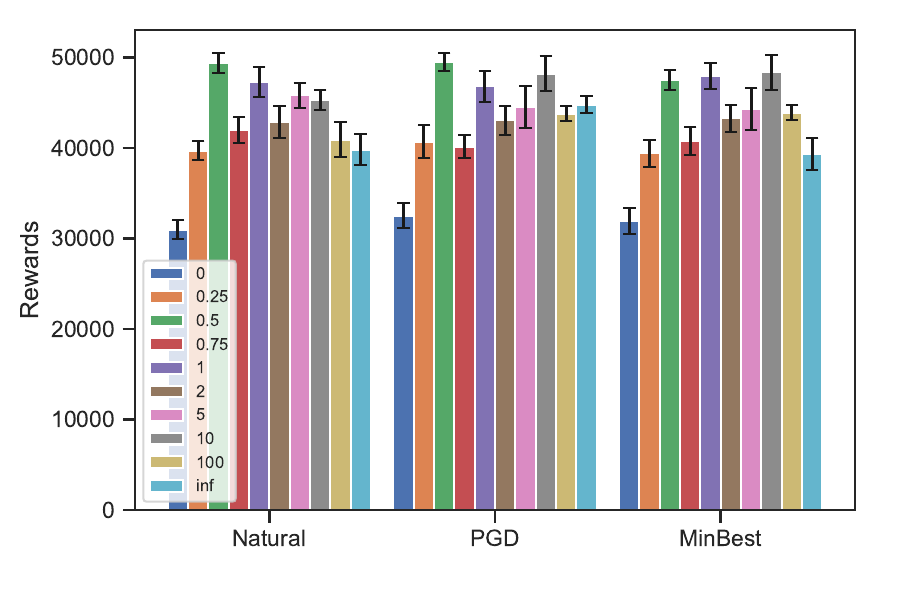}
    \vskip -0.1in
    \caption{Natural, PGD attack and MinBest attack rewards of CAR-DQN with different soft coefficients on RoadRunner.}
    \label{app fig:soft roadrunner}
\end{figure}

\begin{figure}[t]
    \centering
    \begin{minipage}{0.228\textwidth}
        \centering
        \setlength{\parindent}{0.5em}
        \quad \scriptsize{RoadRunner}
    \end{minipage}
    \begin{minipage}{0.228\textwidth}
        \centering
        \setlength{\parindent}{1em}
        \quad \scriptsize{BankHeist}
    \end{minipage}
    \\
    \rotatebox{90}{\tiny{\qquad \qquad \quad Robust}}
    \begin{subfigure}
        \centering
        \includegraphics[width=0.228\textwidth]{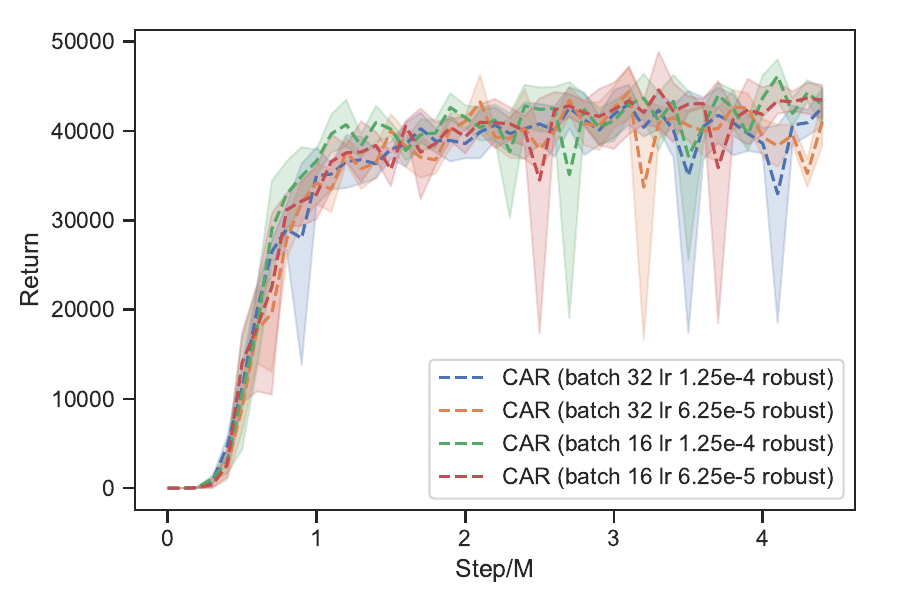}        
    \end{subfigure}
    \begin{subfigure}
        \centering
        \includegraphics[width=0.228\textwidth]{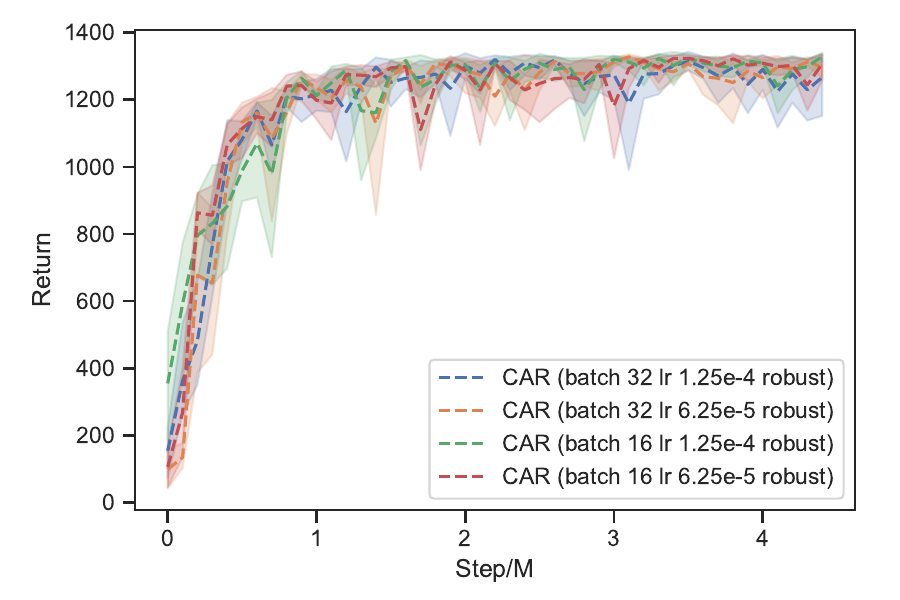}
    \end{subfigure}
    \\
    \rotatebox{90}{\tiny{\qquad \qquad \quad Natural}}
    \begin{subfigure}
        \centering
        \includegraphics[width=0.228\textwidth]{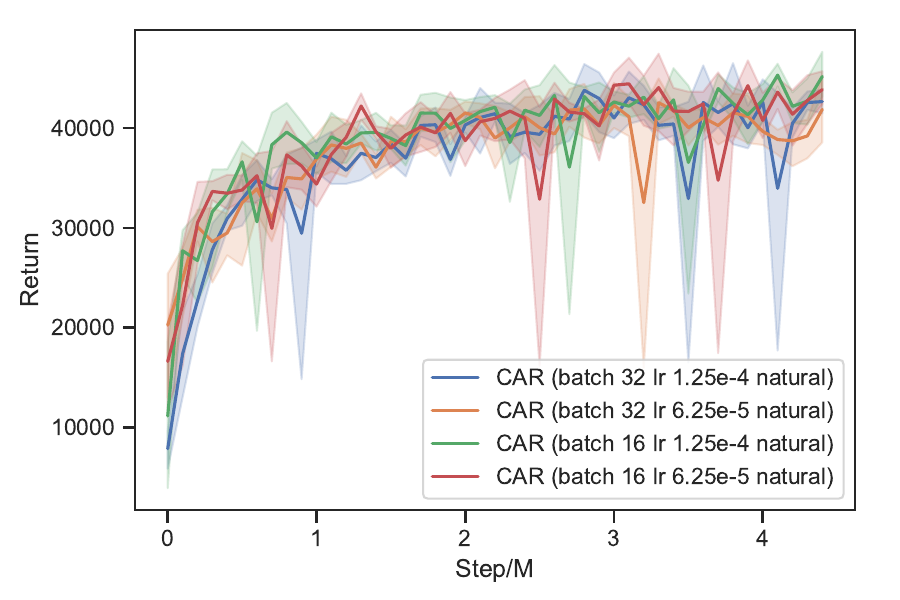}
    \end{subfigure}
    \begin{subfigure}
        \centering
        \includegraphics[width=0.228\textwidth]{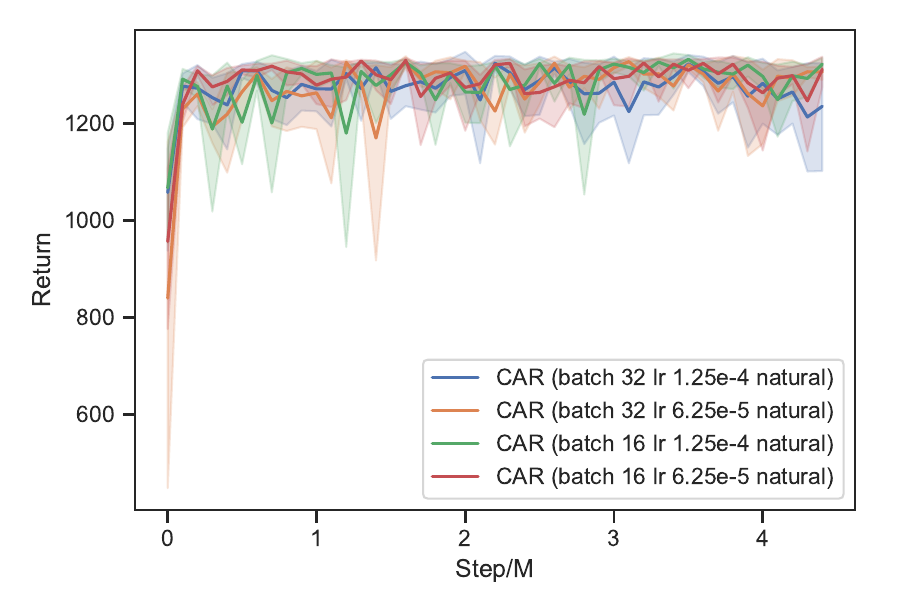}
    \end{subfigure}
    \caption{Episode rewards of CAR-DQN with different batch sizes and learning rates during training on RoadRunner and BankHeist with and without PGD attack.}
    \label{fig:batch and lr}
\end{figure}

\textbf{Insensitivity of learning rate and batch size.} 
We compare the performance of CAR-DQN with different small batch size $(16, 32)$ and learning rate $(1.25\times 10^{-4}, 6.25\times 10^{-5})$ which are respectively used by~\citet{zhang2020robust, liang2022efficient}. As shown in Figure \ref{fig:batch and lr}, we can see CAR-DQN is insensitive to these parameters.

\end{document}